\documentclass[runningheads]{llncs}

\usepackage{eccv}

\usepackage{eccvabbrv}
\usepackage{graphicx}
\usepackage{booktabs}
\usepackage{multirow}
\usepackage[accsupp]{axessibility}

\newcommand{\algoname}{\textsc{FDHO}}
\DeclareMathOperator{\sinc}{sinc}
\DeclareMathOperator{\sign}{sign}

\usepackage[export]{adjustbox}

\usepackage{soul}
\usepackage{hyperref}
\usepackage{orcidlink}

\usepackage{xr}
\begin{document}

\title{Spectral Gating via Damped Oscillations for Adaptive Implicit Neural Representations} 

\titlerunning{Spectral Gating via Damped Oscillations}

\author{
Alex Costanzino \inst{1} \orcidlink{0000-0001-9859-8482} 
\and
Pierluigi Zama Ramirez\thanks{Partially supported by the IRIDE-B program, Ca' Foscari University of Venice.} \inst{2} \orcidlink{0000-0001-7734-5064}
\and
Giuseppe Lisanti \inst{1} \orcidlink{0000-0002-0785-9972} 
\and
Luigi Di Stefano \inst{1} \orcidlink{0000-0001-6014-6421}
}

\authorrunning{A.~Costanzino et al.}

\institute{
CVLab, University of Bologna \and Ca' Foscari University of Venice
}

\maketitle

\begin{abstract}
    Implicit Neural Representations (INRs) have been proven successful in encoding continuous signals through coordinate-based networks, yet facing a spectral dilemma: periodic activations capture fine details but act as all-pass filters that memorise noise, while spatially compact activations regularise effectively but suffer from low-frequency bias.
    Existing attempts to resolve this trade-off introduce computational overhead or tuning frailty.
    We propose to model each neuron's activation as the steady-state response of a sinusoidally-forced damped harmonic oscillator, whose amplitude naturally governs the network's spectral selectivity during training.
    By jointly optimising the oscillator parameters alongside the network weights, our method adapts to the target signal's spectral content without explicit regularisation. 
    Initialised in the stopband, the network exhibits a coarse-to-fine learning curriculum that progressively expands its spectral gate, capturing low-frequency structures first and high-frequency details only when justified by the reconstruction objective.
    Comprehensive experiments show that our approach consistently achieves state-of-the-art or competitive results against established INRs, while requiring no task-specific tuning of any hyperparameters.
    
    Project Page available at \url{https://alex-costanzino.github.io/fdho/}.
    
    \keywords{Implicit Neural Representations \and Spectral Gating}

\end{abstract}

\section{Introduction}
\label{sec:introduction}

    Implicit Neural Representations (INRs) have emerged as a powerful paradigm for encoding continuous signals, such as audio, images, shapes, and physical fields, into the weights of coordinate-based neural networks~\cite{sitzmann2019siren, mildenhall2020nerf, park2019deepsdf}. 
    Their appeal lies in being resolution-independent, differentiable, and compact, making them natural fits for tasks ranging from novel view synthesis~\cite{barron2021mipnerf} to solving partial differential equations~\cite{sitzmann2019siren}. 
    At the heart of any INR lies the choice of the activation function, which governs the network's spectral expressivity, \ie, its ability to faithfully represent both smooth global structure and fine-grained local details.
    This choice confronts practitioners with what we term the spectral dilemma.
    Periodic activations, exemplified by SIREN~\cite{sitzmann2019siren}, endow the network with the capacity to represent arbitrarily high frequencies.
    However, their infinite bandwidth makes them susceptible to noise memorisation, treating random perturbations as legitimate signal content.
    Non-periodic alternatives, such as Gaussian activations~\cite{ramasinghe2022beyond}, provide spatial compactness and inherent regularisation but introduce a low-frequency bias that limits their ability to capture oscillatory or fine-grained content.
    Intermediate approaches exist -- Gabor wavelets in WIRE~\cite{saragadam2023wire} offer joint space-frequency localisation, FINER~\cite{liu2024finer} introduces variable-periodic functions, and Fourier Reparametrisation~\cite{shi2024improved} addresses spectral bias through weight decomposition -- but each comes with its own trade-offs in computational cost, sensitivity to hyperparameters, or restricted representational capacity. 
    More recently, MIRE~\cite{jayasundara2024mire} proposes learning per-layer activations selected from a dictionary of predefined functions, and IGA-INR~\cite{shi2024iga} adjusts gradient dynamics through inductive gradient correction, both methods highlighting that the interaction between activation design and training dynamics is central to INR performance.
    Despite this progress, no existing method provides a mechanism to adapt the bandwidth robustly with respect to noise.

    Inspired by classical mechanics, we model each neuron as the steady-state response of a sinusoidally-forced damped harmonic oscillator.
    The resulting activation takes the form of an amplitude-modulated sinusoidal carrier. 
    Such modulation depends on three physical quantities: the forcing frequency $\omega$, the natural frequency $\omega_n$, and the damping factor $\xi$.
    In particular, the amplitude is the transfer function of a second-order linear system, a well-characterised band-pass filter whose centre frequency, bandwidth, and roll-off are determined by $\omega_n$ and $\xi$.
    We refer to this amplitude-modulated bandwidth control as \emph{spectral gating}: the oscillator parameters determine a passband, and only frequency content from the forcing within this band receives non-negligible amplitude gain.
    When these parameters are made learnable, the network acquires the ability to adapt its bandwidth during training. 
    In particular, with our formulation, the optimisation cost to admit higher frequencies is lower for coherent signal content than for random noise (\cref{sec:gradient_analysis}, \cref{cor:gating}).
    Moreover, a coarse-to-fine learning curriculum emerges as we initialise the oscillator parameters in the stopband, so that the network begins with a narrow bandwidth.
    As training progresses and the loss landscape demands richer frequency content, the bandwidth widens, producing a natural coarse-to-fine learning progression without explicit scheduling (\cref{sec:multilayer_short}).
    Finally, the damping factor $\xi$ provides a continuous, learnable axis connecting distinct spectral regimes: as $\xi \to 0$, the activation recovers the unregulated bandwidth of a pure SIREN-style sinusoid, as it increases, the function smoothly transitions toward a more spatially compact representation.
    Hence, rather than committing to a fixed point in the INR's design space, the network can learn where to operate along this axis.

    We validate these claims through extensive experiments and comparisons with established INRs, achieving state-of-the-art or competitive results across all evaluated settings, while requiring no task-specific tuning of any hyperparameters.

\section{Related Work}
\label{sec:related_work}

    \paragraph{Activation Design for INRs.}
        The choice of activation function is central to INR performance.
        ReLU-based networks are known to suffer from spectral bias toward low frequencies~\cite{tancik2020fourfeat}.
        SIREN~\cite{sitzmann2019siren} addressed this with sinusoidal activations, enabling high-frequency representation but acting as an all-pass filter with no intrinsic spatial localisation or frequency regularisation, which leaves it prone to noise memorisation and sensitive to initialisation.
        These limitations motivated a line of activation-level remedies: Gaussian activations~\cite{ramasinghe2022beyond} provide smoothness and an implicit low-frequency bias, while WIRE~\cite{saragadam2023wire} combines spatial and frequency localisation through Gabor wavelets.
        FINER~\cite{liu2024finer} and MIRE~\cite{jayasundara2024mire} instead retain the sinusoidal form but add per-layer spectral control through variable-periodic activations and learned function selection, respectively.
        These methods address the spectral dilemma through activation shape, but none provide a formal mechanism explaining why certain frequencies are favoured or suppressed during training.
        Our approach differs by deriving the activation from a physical system whose transfer function yields provable spectral gating properties.
  
    \paragraph{Spectral Bias and Training Dynamics.}
        A parallel line of work addresses spectral limitations not through the activation itself but through the training procedure or input encoding.
        Fourier feature mappings~\cite{tancik2020fourfeat} lift coordinates into a high-dimensional sinusoidal basis, enabling ReLU-based networks to learn high frequencies. 
        Their NTK~\cite{jacot2018neural} analysis shows that the feature bandwidth directly controls the kernel's spectral support.
        Multiplicative filter networks~\cite{fathony2021multiplicative} achieve frequency control through element-wise products of Gabor filters across layers.
        BACON~\cite{lindell2021bacon} enforces a band-limited hierarchy via additive multi-scale outputs.
        TUNER~\cite{novello2025tuner} instead keeps the sinusoidal activation fixed and controls the spectrum at initialisation, bounding the network's frequency support to stabilise training and prevent overfitting.
        Fourier Reparameterisation~\cite{shi2024improved} decomposes network weights in the frequency domain to mitigate spectral bias, and IGA-INR~\cite{shi2024iga} corrects gradient dynamics through inductive gradient adjustment.
        Our work is complementary: rather than modifying the input encoding, weight parameterisation, or gradient update rule, we embed spectral control directly into the activation's differentiable structure, where it arises as an implicit consequence of optimisation.

    \paragraph{Theoretical analysis of INRs.} 
        Theoretical understanding of INRs remains limited: \cite{tancik2020fourfeat} analyses the NTK of Fourier features to explain their accelerated convergence; \cite{yuce2022structured} shows that SIREN layers compose via Bessel functions; \cite{novello2025tuner} develops an amplitude–phase expansion of sinusoidal MLPs, characterising how layer composition generates new frequencies and bounding the network's representable spectrum; \cite{ramasinghe2022beyond} derives Lipschitz bounds and NTK eigenspectra for Gaussian activations; \cite{saragadam2023wire} analyses Gabor wavelet support to motivate WIRE's frequency-space trade-off; and \cite{sitzmann2019siren} establishes the distribution of activations and gradients through depth for SIREN.
        However, these analyses characterise representational properties -- what each activation can express -- rather than optimisation dynamics -- how gradient descent navigates the parameter space.
        In particular, while TUNER governs the spectrum statically by bounding frequency support at initialisation, none of these works provides a gradient-level mechanism for noise rejection or adaptive bandwidth control.
        Conversely, our contribution features an explicit gating condition that allows us to reject noise while fitting genuine signal components as training progresses.

\section{Formulation}
\label{sec:formulation}

    We derive our activation function from the steady-state analysis of a forced damped harmonic oscillator (\cref{sec:activation_definition}), establish its formal properties (\cref{sec:properties}, proofs in \cref{sec:properties_full}), analyse how its learnable parameters create implicit spectral regularisation through gradient dynamics (\cref{sec:gradient_analysis}, proofs in \cref{sec:gradient_analysis_full}), characterise the multi-layer frequency composition via Bessel-function expansions (\cref{sec:multilayer_short} and \cref{sec:multilayer}), and discuss its neural tangent kernel structure (\cref{sec:ntk}).

    \subsection{Activation Definition}
        \label{sec:activation_definition}
        
        Consider a damped harmonic oscillator with natural frequency $\omega_n$ and damping factor $\xi$, driven by a sinusoidal forcing at frequency $\omega$.
        Its motion equation is $\ddot{x} + 2 \xi \omega_n \dot{x} + \omega_n^2 x = \omega_n^2 \sin(\omega t)$, and the steady-state solution has the form $x_{s}(t) = A(\omega, \omega_n, \xi) \sin(\omega t + \varphi)$, with the amplitude given by:
        \begin{equation}
            \label{eq:amplitude}
            A(\omega, \omega_n, \xi) = \frac{\omega_n^2}{\sqrt{(\omega_n^2 - \omega^2)^2 + (2 \xi \omega_n \omega)^2}}.
        \end{equation}
        \begin{definition}           
            \label{def:activation}
            Given the set of learnable parameters $\theta = (\omega, \omega_n, \xi, \varphi)$, with $\omega; \omega_n; \xi > 0$, our proposed activation function, $\sigma \colon \mathbb{R} \to \mathbb{R}$, is defined as:
            \begin{equation}
                \label{eq:activation}
                \sigma(z; \theta) = A(\omega, \omega_n, \xi) \sin(\omega z + \varphi).
            \end{equation}
        \end{definition}
        Each hidden layer uses a parameter set $\theta$ shared across all neurons.
        Hence, the activation is applied element-wise to the input $z = \mathbf{w}^\top \mathbf{x} + b$, yielding $\sigma(\mathbf{W} \mathbf{x} + \mathbf{b}; \theta)$.
        Parameters $\theta$ and weights $(\mathbf{W}, \mathbf{b})$ are jointly optimised via gradient descent.

        It is worth highlighting that for a given set of layer parameters, the amplitude gain $A$ is a scalar constant, not a spatially varying envelope. 
        Hence, the network does not filter different frequency components at inference time; rather, the landscape of $A(\omega, \omega_n, \xi)$ shapes the gradient flow during training, determining how readily the network can amplify or suppress content at different frequencies as the optimisation evolves.
        The inductive bias, therefore, operates through optimisation dynamics, not through the functional form of a single forward pass.

        One might consider achieving similar adaptivity by simply making the amplitude, frequency, and phase of SIREN's sinusoidal activation independently learnable. 
        However, when these quantities are decoupled, nothing prevents the network from assigning a large amplitude to any frequency, including those driven by noise. 
        The oscillator formulation is fundamentally different, as the amplitude $A(\omega, \omega_n, \xi)$ is not a free parameter but is constrained by the relationship between forcing frequency, natural frequency, and damping factor.
        This physical coupling ensures that the optimisation cost to admit higher frequencies is lower for coherent signal content than for random noise, as we formalise in \cref{sec:gradient_analysis}, and that the learned parameters converge to stable equilibria rather than drifting indefinitely, as we verify empirically by comparing our proposal to learnable SIREN activations in \cref{sec:siren_vs_oscillator}.
    
    \subsection{Basic Properties}
        \label{sec:properties}
        
        Before analysing the training dynamics of our proposed activation, dubbed \emph{Forced Damped Harmonic Oscillator} (\algoname{}), we establish three properties: smoothness, Lipschitz continuity, and output boundedness.
        Smoothness in both $z$ and $\theta$ guarantees that all parameters can be jointly optimised via gradient-based methods; Lipschitz continuity bounds the magnitudes of gradients flowing through the network during backpropagation; and the output bound quantifies how the parameters control the activation's dynamic range.
        All proofs are in \cref{sec:properties_full}.

        \begin{proposition}[Smoothness]
            \label{prop:smooth}
            For $\omega_n>0$ and $\xi>0$, the activation function $\sigma(z;\theta)$ is $C^\infty$ with respect to both $z$ and all components of~$\theta$.
        \end{proposition}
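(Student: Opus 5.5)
The plan is to write $\sigma(z;\theta)$ as a product of two factors and show each is $C^\infty$ on the open parameter domain $\{\omega>0,\ \omega_n>0,\ \xi>0\}\times\mathbb{R}$ (for $\varphi$) jointly with $z\in\mathbb{R}$. Since products and compositions of $C^\infty$ maps are $C^\infty$, smoothness of $\sigma$ then follows at once. The two factors are the amplitude $A(\omega,\omega_n,\xi)$ from \cref{eq:amplitude} and the carrier $(z,\omega,\varphi)\mapsto\sin(\omega z+\varphi)$.

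The carrier is immediate. The map $(z,\omega,\varphi)\mapsto \omega z+\varphi$ is a polynomial, and $\sin$ is entire, so their composition is $C^\infty$ in $(z,\omega,\varphi)$. It is trivially smooth in $(\omega_n,\xi)$, since it does not depend on them.

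For the amplitude, write $A=\omega_n^2\, D^{-1/2}$ with
\begin{equation*}
D(\omega,\omega_n,\xi)=(\omega_n^2-\omega^2)^2+4\xi^2\omega_n^2\omega^2 .
\end{equation*}
The numerator $\omega_n^2$ and the function $D$ are polynomials, hence $C^\infty$. The map $t\mapsto t^{-1/2}$ is $C^\infty$ on $(0,\infty)$. So the only thing to check, and the single step with any content, is that $D>0$ everywhere on the parameter domain; I take the domain to be the one in \cref{def:activation}. Indeed, with $\xi>0$, $\omega_n>0$ and $\omega>0$, the second summand $4\xi^2\omega_n^2\omega^2$ is strictly positive, and the first is nonnegative, so $D>0$. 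This holds even at resonance $\omega=\omega_n$, where the first term vanishes.

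This positivity is exactly where the hypothesis $\xi>0$ enters: without damping, $A$ blows up at $\omega=\omega_n$. I would note this explicitly, to explain why the statement requires $\xi>0$ and $\omega_n>0$.

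With $D>0$ established, $D^{-1/2}$ is a composition of smooth maps, so $A$ is $C^\infty$. The product $A\cdot\sin(\omega z+\varphi)$ is then $C^\infty$ in $(z,\omega,\omega_n,\xi,\varphi)$. In particular all mixed partial derivatives exist and are continuous, which is what joint gradient-based optimisation needs.

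One could remark that $\sigma$ is in fact real-analytic, since all the ingredients are analytic. The only potential obstacle is the boundary $\omega\to0$: there $D\to\omega_n^4>0$, so even the closure $\omega\ge0$ poses no problem.
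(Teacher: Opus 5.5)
Your proposal is correct and matches the paper's proof: you write $\sigma=\omega_n^2 D^{-1/2}\sin(\omega z+\varphi)$, note that $D$ is a polynomial, and check $D>0$ so that $t\mapsto t^{-1/2}$ composes smoothly. The one difference is cosmetic. You get $D>0$ from the damping term alone by using $\omega>0$, whereas the paper notes that the two nonnegative summands cannot vanish simultaneously, which gives $D>0$ for every real $\omega$; your remark that $D\to\omega_n^4>0$ as $\omega\to 0$ recovers essentially the same observation.
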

        \begin{remark}
            \label{rem:singularity}
            At $\xi = 0$ the denominator of $A(\omega, \omega_n, \xi)$ vanishes when $\omega = \omega_n$ (undamped resonance).
            In practice, we constrain $\xi \in (0, 1)$ via sigmoid reparameterisation, which is physically motivated as all real oscillatory systems exhibit non-zero damping.
        \end{remark}
        \begin{proposition}[Lipschitz Bound]
            \label{prop:lipschitz}
          For fixed $\theta$, the activation is Lipschitz in $z$ with a constant $L_z = |\omega|\,A(\omega,\omega_n,\xi)$.
        \end{proposition}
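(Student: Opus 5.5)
With $\theta$ fixed, the amplitude $A = A(\omega,\omega_n,\xi)$ is a finite positive constant. This uses $\omega_n>0$ and $\xi>0$, as in \cref{prop:smooth} and \cref{rem:singularity}: the denominator $\sqrt{(\omega_n^2-\omega^2)^2 + (2\xi\omega_n\omega)^2}$ is at least $2\xi\omega_n|\omega|>0$ whenever $\omega\neq 0$, and equals $\omega_n^2>0$ when $\omega=0$. The activation is therefore just a scaled, phase-shifted sinusoid, $\sigma(z;\theta)=A\sin(\omega z+\varphi)$, and the Lipschitz question reduces to that of $\sin$ composed with an affine map.

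I will argue directly with the mean value theorem. By \cref{prop:smooth}, $\sigma$ is differentiable in $z$, with
\begin{equation*}
\partial_z\sigma(z;\theta) = A\,\omega\cos(\omega z+\varphi),
\qquad
\sup_z|\partial_z\sigma(z;\theta)| \le |\omega|\,A .
\end{equation*}
For any $z_1,z_2$ there is some $\zeta$ between them with $\sigma(z_1;\theta)-\sigma(z_2;\theta)=\partial_z\sigma(\zeta;\theta)(z_1-z_2)$. Hence $|\sigma(z_1;\theta)-\sigma(z_2;\theta)|\le |\omega|A\,|z_1-z_2|$. Alternatively, one can use the elementary bound $|\sin a-\sin b|\le|a-b|$ with $a=\omega z_1+\varphi$ and $b=\omega z_2+\varphi$, which avoids derivatives altogether.

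I will also show that $L_z$ is the best constant, since the statement reads as identifying the constant and not merely giving an upper bound. If $\omega\neq 0$, pick $z_0$ with $\omega z_0+\varphi=0$. Then $\big(\sigma(z_0+h)-\sigma(z_0)\big)/h = A\sin(\omega h)/h \to A\omega$ as $h\to0$, so no smaller constant works. If $\omega=0$, the function is constant and $L_z=0$ holds trivially, although the definition assumes $\omega>0$ anyway.

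There is no real obstacle here. The only point needing care is justifying that $A$ is well defined and finite for the admissible parameters, which comes from the lower bound on the denominator noted at the start. The absolute value $|\omega|$ also matters if the sign constraint on $\omega$ is ever relaxed, for example through an unconstrained reparameterisation.
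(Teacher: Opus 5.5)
Your proof is correct and follows the same route as the paper, which computes $\partial_z\sigma = \omega A\cos(\omega z+\varphi)$ and takes $L_z = \sup_z|\partial_z\sigma| = |\omega|A$. Your mean-value step, the finiteness check on $A$, and the sharpness argument at $\omega z_0+\varphi=0$ simply make explicit what the paper leaves implicit.
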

        \begin{proposition}[Output Bound]
            \label{prop:bound}
            $|\sigma(z;\theta)| \leq A(\omega,\omega_n,\xi)$ uniformly in $z$ for all $ z \in \mathbb{R}$.
            The peak amplitude $A_{\textrm{peak}} = \bigl( 2 \xi \sqrt{1 - \xi^2}\bigr)^{-1}$ is attained at the resonant frequency $\omega_r = \omega_n \sqrt{1 - 2 \xi ^2}$ for $\xi < 1/\sqrt{2}$. For $\xi \ge 1/\sqrt{2}$ the amplitude decreases monotonically from $A(0) = 1$.
        \end{proposition}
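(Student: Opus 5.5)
The uniform bound $|\sigma(z;\theta)|\le A(\omega,\omega_n,\xi)$ is immediate. Since $A>0$ for $\omega_n>0$, $\xi>0$, we have $|\sigma(z;\theta)| = A\,|\sin(\omega z+\varphi)| \le A$ for every $z\in\mathbb{R}$, and equality holds whenever $\omega z+\varphi\in\pi/2+\pi\mathbb{Z}$. The substance of the statement is therefore the frequency-response analysis of $A$ as a function of the forcing frequency $\omega\ge 0$, with $\omega_n,\xi$ fixed.

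\textbf{Step 1: reduce to a polynomial in the dimensionless frequency.} Set $r=\omega/\omega_n\ge 0$ and divide numerator and denominator by $\omega_n^2$. This gives $A=\bigl((1-r^2)^2+4\xi^2r^2\bigr)^{-1/2}$. Because $t\mapsto t^{-1/2}$ is strictly decreasing on $(0,\infty)$, maximising $A$ is the same as minimising $D(r)=(1-r^2)^2+4\xi^2r^2$. Substituting $u=r^2\ge 0$ turns $D$ into the convex quadratic
\[
g(u)=(1-u)^2+4\xi^2u = u^2-2(1-2\xi^2)u+1 .
\]
Since $u=r^2$ is monotone for $r\ge 0$, monotonicity of $g$ in $u$ transfers directly to monotonicity of $D$ in $\omega$.

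\textbf{Step 2: case split on the vertex.} The vertex of $g$ is at $u^*=1-2\xi^2$.

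\emph{Case $\xi<1/\sqrt2$.} Here $u^*>0$ lies in the admissible region, so $g$ attains its minimum there, with value
\[
g(u^*)=1-(1-2\xi^2)^2=4\xi^2(1-\xi^2).
\]
Hence the maximiser is $\omega_r=\omega_n\sqrt{1-2\xi^2}$, and the peak value is $A_{\textrm{peak}}=\bigl(2\xi\sqrt{1-\xi^2}\bigr)^{-1}$. The square root is well defined because $\xi<1/\sqrt2<1$.

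\emph{Case $\xi\ge 1/\sqrt2$.} Here $u^*\le 0$, so $g'(u)=2(u-u^*)\ge 0$ on $[0,\infty)$, with equality at most at $u=0$. Thus $g$ is strictly increasing on $(0,\infty)$. It follows that $A$ is strictly decreasing in $\omega$, starting from $A(0)=\omega_n^2/\omega_n^2=1$.

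\textbf{Expected obstacle.} None of these steps is deep. The only point needing care is the bookkeeping in the change of variables. One must justify that optimising over $u=r^2$ on $[0,\infty)$ is equivalent to optimising over $\omega>0$. One must also handle the boundary case $\xi=1/\sqrt2$, where the vertex sits exactly at $u=0$ and the "peak" degenerates to the value $A(0)=1$, consistent with both formulas.

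A brief remark would also fit here. In the regime $\xi<1/\sqrt2$ we have $A_{\textrm{peak}}\ge 1$, because $4\xi^2(1-\xi^2)\le 1$, which is equivalent to $(1-2\xi^2)^2\ge 0$. Combined with the uniform bound, this gives the global estimate $|\sigma|\le\max\{1,A_{\textrm{peak}}\}$ over all $\omega$.
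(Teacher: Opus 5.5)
Your proof is correct and is essentially the paper's argument: both maximise $A$ by minimising the radicand $D$ over $\omega$ and split at $\xi = 1/\sqrt{2}$, and your vertex $u^* = 1-2\xi^2$ is exactly the nontrivial root of the paper's $\partial D/\partial\omega = 4\omega\bigl(\omega^2 - \omega_n^2(1-2\xi^2)\bigr)$.

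Your convex-quadratic packaging is tidier than the paper's in three ways:
\begin{itemize}
\item Convexity gives global optimality, whereas the paper only locates stationary points.
\item Your $g'(u) = 2(u-u^*)$ handles the boundary $\xi = 1/\sqrt{2}$ cleanly. There the paper asserts $2\xi^2\omega_n^2-\omega_n^2>0$, which is in fact zero at that point.
\item Your identity $g(u^*) = 1-(u^*)^2 = 4\xi^2(1-\xi^2)$ lands directly on the stated $A_{\textrm{peak}}$. The paper's longhand substitution instead mis-factors $4\xi^2-4\xi^4$ as $4\xi^2(1-2\xi^2)$ in its final lines, ending at $\bigl(2\xi\sqrt{1-2\xi^2}\bigr)^{-1}$, which disagrees with its own statement.
\end{itemize}
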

        \Cref{prop:lipschitz} reveals a key structural property: representing high-frequency content requires \emph{both} a large forcing frequency $\omega$ \emph{and} a non-negligible amplitude $A$ at that frequency, since the Lipschitz constant is their product.
        This multiplicative coupling is the source of the implicit regularisation that we formalise next.

    \subsection{Spectral Gating}
        \label{sec:gradient_analysis}
        The properties above confirm that our activation is well-behaved for optimisation; however, they do not yet explain why it should favour signal over noise.
        We now turn to the core theoretical question: \emph{how do the gradients of the reconstruction loss with respect to the parameters behave during training?}
        By decomposing the gradient with respect to the damping factor $\xi$, following the principle of analysing implicit biases of gradient descent~\cite{arora2019implicit, li2022lowrank}, we obtain a particularly interpretable form of spectral regularisation, requiring no explicit penalty in the loss.
        The gradient naturally separates into: 
        (i) a term that pushes $\xi$ upward, narrowing the bandwidth unconditionally; 
        (ii) a term that pushes $\xi$ downward, widening the bandwidth, but only when the target contains coherent energy at the carrier frequency.
        We refer to these competing dynamics as \emph{closing} and \emph{opening} the spectral gate, respectively.
        Theorems and corollaries are proven in \cref{sec:gradient_analysis_full}.
        \begin{theorem}[Amplitude is Monotone in $\xi$]
            \label{thm:monotonicity_xi}
            For $\omega; \omega_n; \xi > 0$ then $\frac{\partial A}{\partial \xi} < 0$.
        \end{theorem}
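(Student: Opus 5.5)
Since $\xi$ enters $A(\omega,\omega_n,\xi)$ in \cref{eq:amplitude} only through the denominator, we prove the claim by direct differentiation. Write
\begin{equation*}
    D(\omega,\omega_n,\xi) = (\omega_n^2-\omega^2)^2 + (2\xi\omega_n\omega)^2 ,
\end{equation*}
so that $A = \omega_n^2 D^{-1/2}$. Before differentiating, we record that $D>0$ whenever $\omega,\omega_n,\xi>0$. Indeed, the second summand $(2\xi\omega_n\omega)^2$ is strictly positive under these hypotheses, and the first summand is nonnegative. Consequently $A$ is well defined and smooth in $\xi$ on the whole domain, in agreement with \cref{prop:smooth}, and the differentiation below is legitimate.

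By the chain rule,
\begin{equation*}
    \frac{\partial A}{\partial \xi} = -\tfrac{1}{2}\,\omega_n^2\, D^{-3/2}\,\frac{\partial D}{\partial \xi},
    \qquad
    \frac{\partial D}{\partial \xi} = 8\,\xi\,\omega_n^2\,\omega^2 .
\end{equation*}
Combining the two gives
\begin{equation*}
    \frac{\partial A}{\partial \xi} = -\frac{4\,\xi\,\omega_n^4\,\omega^2}{\bigl((\omega_n^2-\omega^2)^2+(2\xi\omega_n\omega)^2\bigr)^{3/2}} .
\end{equation*}
In this expression the numerator is strictly positive because each of $\xi$, $\omega_n$ and $\omega$ is positive. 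The denominator is strictly positive by the observation about $D$ above. Hence $\partial A/\partial \xi<0$ everywhere on the stated domain.

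There is no substantial obstacle; the only care needed concerns strictness. The factor $\omega^2$ shows that the derivative would vanish at $\omega=0$. Likewise, if $\xi=0$ and $\omega=\omega_n$, then $D$ vanishes and $A$ is singular (cf.\ \cref{rem:singularity}). Both cases are excluded by the hypotheses $\omega,\xi>0$, so the inequality is strict throughout. It may also be worth remarking that the hypotheses actually used are only $\xi>0$, $\omega\neq0$ and $\omega_n\neq0$, because the sign of the derivative depends only on these quantities.
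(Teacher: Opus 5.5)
Your proof is correct and follows the same route as the paper. Both write $A=\omega_n^2 D^{-1/2}$, differentiate by the chain rule using $\partial D/\partial\xi = 8\xi\omega_n^2\omega^2$, and read off the sign from the positivity of the numerator and of $D$. Your constant $-4\xi\omega_n^4\omega^2 D^{-3/2}$ is in fact the correct one: the paper's displayed $-\omega_n^4\omega^2\xi D^{-3/2}$ drops the factor $\frac{1}{2}\cdot 8 = 4$, though this does not affect the sign.
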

        \Cref{thm:monotonicity_xi} establishes that increasing  $\xi$ always decreases the amplitude.
        The only route to amplifying the activation's output, thereby increasing the network's representational capacity at the carrier frequency, is to decrease $\xi$.
        Since gradient descent updates $\xi \leftarrow \xi-\eta\frac{\partial \mathcal{L}}{\partial \xi}$, the spectral gate opens only when $\frac{\partial \mathcal{L}}{\partial \xi} > 0$.
        To characterise when this condition holds, let us consider the mean squared error loss $\mathcal{L} = \frac{1}{N}\sum_{i=1}^N\bigl(\sigma(z_i;\theta)-y_i\bigr)^2$ for a single neuron with target values $\{y_i\}$.
        \begin{theorem}[Decomposition of the Damping Gradient]
            \label{thm:decomp}
            The gradient of $\mathcal{L}$ with respect to $\xi$ decomposes as:
            \begin{equation}
            \label{eq:decomp}
                \frac{\partial \mathcal{L}}{\partial \xi} = \underbrace{\frac{\partial A}{\partial \xi} \frac{2A}{N} \sum_{i} \sin^2(\omega z_i + \varphi)}_{T_{\mathrm{self}}} + \underbrace{\biggl(-\frac{\partial A}{\partial \xi}\biggr) \frac{2}{N} \sum_{i} y_i \sin(\omega z_i + \varphi)}_{T_{\mathrm{signal}}}.
            \end{equation}
        \end{theorem}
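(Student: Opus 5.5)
The plan is a direct chain-rule computation for the single-neuron mean squared error loss $\mathcal{L} = \frac{1}{N}\sum_{i}(\sigma(z_i;\theta)-y_i)^2$. The pre-activations $z_i$ and the parameters $\omega$ and $\varphi$ do not depend on $\xi$. The only $\xi$-dependence in $\sigma(z_i;\theta) = A(\omega,\omega_n,\xi)\sin(\omega z_i+\varphi)$ therefore enters through the scalar gain $A$. By \cref{prop:smooth}, $A$ is smooth in $\xi$ for $\omega_n,\xi>0$, so we may differentiate term by term. Writing $s_i := \sin(\omega z_i+\varphi)$, we obtain $\frac{\partial \sigma(z_i;\theta)}{\partial \xi} = \frac{\partial A}{\partial \xi}\, s_i$.

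Next we differentiate the loss:
\begin{equation*}
\frac{\partial \mathcal{L}}{\partial \xi} = \frac{2}{N}\sum_{i}\bigl(A s_i - y_i\bigr)\frac{\partial A}{\partial \xi} s_i .
\end{equation*}
Expanding the residual splits the sum into two pieces. The first is $\frac{\partial A}{\partial \xi}\frac{2A}{N}\sum_i s_i^2$; it depends only on the network's own output, so it is $T_{\mathrm{self}}$. The second is $-\frac{\partial A}{\partial \xi}\frac{2}{N}\sum_i y_i s_i$, which we regroup as $\bigl(-\frac{\partial A}{\partial \xi}\bigr)\frac{2}{N}\sum_i y_i s_i$. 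This is $T_{\mathrm{signal}}$, the projection of the target onto the carrier. The two pieces together give exactly \eqref{eq:decomp}.

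For completeness we can also record the closed form of $\frac{\partial A}{\partial \xi}$ from \eqref{eq:amplitude}:
\begin{equation*}
\frac{\partial A}{\partial \xi} = -\frac{4\xi\omega_n^4\omega^2}{\bigl((\omega_n^2-\omega^2)^2+(2\xi\omega_n\omega)^2\bigr)^{3/2}} .
\end{equation*}
This is negative, consistent with \cref{thm:monotonicity_xi}. It also fixes the signs of the decomposition: $T_{\mathrm{self}}\le 0$ always, since $A>0$ and $\sum_i s_i^2\ge 0$. The sign of $T_{\mathrm{signal}}$ follows the correlation $\sum_i y_i s_i$ between target and carrier. The decomposition itself does not need this sign information.

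There is no genuine obstacle here; the result is an algebraic identity. The only points needing care are, first, stating explicitly that $z_i$ is held fixed, so that the single-neuron setting excludes any upstream $\xi$-dependence through earlier layers. Second, the factor $2/N$ and the sign of $T_{\mathrm{signal}}$ must be kept consistent with the residual convention $\sigma - y$.
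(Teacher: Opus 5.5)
Your proposal is correct and follows the paper's proof: apply the chain rule to the MSE, use $\frac{\partial \sigma_i}{\partial \xi} = \frac{\partial A}{\partial \xi}\sin(\omega z_i+\varphi)$ because $\xi$ enters only through $A$, then expand the residual $A\sin(\omega z_i+\varphi) - y_i$ into $T_{\mathrm{self}}$ and $T_{\mathrm{signal}}$. Your closed form $\frac{\partial A}{\partial \xi} = -4\xi\omega_n^4\omega^2 D^{-3/2}$ is also right; the paper's proof of \cref{thm:monotonicity_xi} drops the factor $4$, which does not affect the sign.
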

        The self-interaction term satisfies $T_{\mathrm{self}} = \frac{\partial A}{\partial \xi} \cdot \frac{2A}{N} \sum_i \sin^2 (\omega z_i + \varphi)$.
        For dense, uniform sampling\footnote{We analyse the population gradient obtained in the limit $N \to \infty$, where empirical inner products converge to their expectations.} of $z$, the sum converges to $\frac{1}{2}$, yielding $T_{\mathrm{self}} \approx \frac{\partial A}{\partial \xi} A < 0$ by \cref{thm:monotonicity_xi}.
        This term always pushes $\xi$ to increase, acting as a regularising force that tends to close the spectral gate.
        The signal term satisfies $T_{\mathrm{signal}} = -2\frac{\partial A}{\partial \xi} \langle y,\sin(\omega \cdot +\varphi)\rangle_N$, where $\langle \cdot,\cdot \rangle_N$ denotes the empirical inner product.
        Since $-\frac{\partial A}{\partial \xi} > 0$ by \cref{thm:monotonicity_xi}, we have $\sign(T_{\mathrm{signal}}) = \sign\bigl(\langle y, \sin(\omega \cdot +\varphi)\rangle_N\bigr)$.
        That is, $T_{\mathrm{signal}}$ is positive precisely when the target has energy at the carrier frequency.
        \begin{corollary}[Spectral Gating Conditions]
            \label{cor:gating}
            The spectral gate opens ($\frac{\partial \mathcal{L}}{\partial \xi} > 0$) if and only if $T_{\mathrm{signal}} > |T_{\mathrm{self}}|$, which requires:
            \begin{equation}
                \label{eq:gating_condition}
                \langle y, \sin(\omega \cdot +\varphi)\rangle_N > \frac{A}{2}.
            \end{equation}
            Three cases arise:
            \begin{enumerate}
                \item[\emph{(a)}] \textbf{Coherent signal.} 
                    If the target contains a Fourier component $A_y \sin(\omega z + \psi)$ at the carrier frequency $\omega$, then $\langle y, \sin(\omega \cdot +\varphi) \rangle_N \approx \frac{A_y}{2} \cos(\varphi - \psi)$.
                    The gate opens when $A_y \cos(\varphi-\psi) > A$, \ie, when the neuron is underfitting the corresponding frequency component.
                    As $A$ increases toward $A_y$, the inequality saturates, and $\xi$ reaches equilibrium;
                \item[\emph{(b)}] \textbf{Random noise.} 
                    If $y$ is i.i.d.\ with zero mean and variance $\sigma_y^2$, then $\langle y, \sin(\omega \cdot +\varphi)\rangle_N = O_p(N^{-1/2})$ by the central limit theorem, while $A/2 = O(1)$.
                    For any reasonably sized sample, $|T_{\mathrm{self}}|$ dominates, and the gate remains closed;
                \item[\emph{(c)}] \textbf{Mismatched frequency.} 
                    If $y = A' \sin(\omega' z)$ with $\omega' \neq \omega$, the inner product $\langle y, \sin(\omega \cdot +\varphi)\rangle_N \to 0$ as $N \to \infty$ by the orthogonality of sinusoids at distinct frequencies.
                    The self-term dominates, correctly signalling that this neuron should not fit the mismatched component.
            \end{enumerate}
        \end{corollary}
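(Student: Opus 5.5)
The plan is to start from the decomposition of \cref{thm:decomp} and use \cref{thm:monotonicity_xi} to fix the signs of the two terms. Write $c := -\frac{\partial A}{\partial \xi} > 0$ and $S_N := \frac{1}{N}\sum_i \sin^2(\omega z_i + \varphi)$. Then $T_{\mathrm{self}} = -2cA\,S_N \le 0$ and $T_{\mathrm{signal}} = 2c\,\langle y, \sin(\omega\cdot+\varphi)\rangle_N$, so
\begin{equation*}
\frac{\partial \mathcal{L}}{\partial \xi} = 2c\bigl(\langle y,\sin(\omega\cdot+\varphi)\rangle_N - A\,S_N\bigr).
\end{equation*}
Since $c>0$, the sign of the gradient is the sign of the bracket. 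Positivity is therefore equivalent to $T_{\mathrm{signal}} > -T_{\mathrm{self}} = |T_{\mathrm{self}}|$, which gives the first claim exactly and with no approximation. Next I would pass to the population regime stated in the footnote: for dense uniform sampling over an interval, $S_N \to \frac{1}{2}$, because $\sin^2 = \frac{1}{2}(1-\cos(2\omega z + 2\varphi))$ and the averaged cosine vanishes. Substituting this limit turns the condition into \eqref{eq:gating_condition}, $\langle y,\sin(\omega\cdot+\varphi)\rangle_N > A/2$.

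For the three cases, I would evaluate the inner product in each setting using the same product-to-sum identities.
\begin{enumerate}
\item[(a)] Write $\sin(\omega z+\psi)\sin(\omega z+\varphi) = \frac{1}{2}\bigl(\cos(\varphi-\psi) - \cos(2\omega z+\varphi+\psi)\bigr)$. The oscillating part averages to zero, leaving $\frac{A_y}{2}\cos(\varphi-\psi)$. Other Fourier components of $y$ at frequencies different from $\omega$ contribute nothing in the limit, by the argument of case (c). Comparing with $A/2$ gives $A_y\cos(\varphi-\psi) > A$. The equilibrium statement follows because the bracket above vanishes exactly when $A = A_y\cos(\varphi-\psi)$. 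Since $A$ is decreasing in $\xi$ by \cref{thm:monotonicity_xi}, gradient descent pushes $\xi$ down while $A$ is below this value and up once $A$ exceeds it.
\item[(b)] Condition on the $z_i$. The summands $y_i\sin(\omega z_i+\varphi)$ are independent with zero mean and variance at most $\sigma_y^2$. Hence the average has variance $\le \sigma_y^2/N$, and by Chebyshev (or the CLT) it is $O_p(N^{-1/2})$. This falls below the fixed positive threshold $A/2$ with probability tending to one, so the gate stays closed.
\item[(c)] The product-to-sum formula gives two cosines at frequencies $\omega\pm\omega'$, both nonzero. Their averages over a sampling interval of length growing with the density, or over a common period, tend to zero. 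This gives a zero inner product, below $A/2$.
\end{enumerate}

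I expect the main obstacle to be making the limits $S_N \to \frac{1}{2}$ and the vanishing of the oscillatory averages rigorous. These hold exactly only over whole periods, or asymptotically as the sampling domain grows. Over a fixed bounded domain they carry an $O(1/(\omega L))$ error. I would handle this by stating the result in the population limit, as the footnote does, and noting that the strict inequality is robust to small perturbations of $S_N$ and of the inner product.
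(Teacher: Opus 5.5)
Your proposal is correct and follows essentially the same route as the paper's appendix proof: you factor the damping gradient from \cref{thm:decomp}, fix the signs with \cref{thm:monotonicity_xi}, replace the empirical average of $\sin^2$ by $\frac{1}{2}$, and treat the three cases with product-to-sum identities and an $O_p(N^{-1/2})$ bound. Your version is slightly more careful than the paper's in several places: you keep the factor $\frac{1}{2}$ in the product-to-sum identity (the paper's stated ``Werner's formula'' omits it), you make the equilibrium $A = A_y\cos(\varphi-\psi)$ explicit, you replace the CLT appeal in (b) with a Chebyshev bound, and you correctly note that the oscillatory averages vanish only over whole periods or growing domains, not exactly on a fixed interval.
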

        \cref{cor:gating}, therefore, establishes that the optimisation problem is inherently biased to fit the coherent components of the signal and reject random noise.
        \begin{remark}[Multi-neuron layers]
            \label{rem:multineuron}
            In a full layer with weight vectors $\{\mathbf{w}_j\}$ and biases $\{b_j\}$, each neuron receives $z_j = \mathbf{w}_j^\top \mathbf{x} + b_j$, creating different effective frequencies $\omega \|\mathbf{w}_j\|$ along different directions.
            The oscillator parameters $(\omega, \omega_n, \xi, \varphi)$, shared across the layer, control the envelope of representable frequencies, while the weight distribution determines how individual neurons tile the frequency space within that envelope.
        \end{remark}
    
    \subsection{Multi-Layer Frequency Composition}
        \label{sec:multilayer_short}

        Cascading single-neuron oscillator layers produce a frequency-modulated signal~\cite{carson1922notes} whose harmonic content is governed by the Jacobi-Anger expansion~\cite{arfken2012mathematical,watson1944treatise}.
        The effective modulation index at layer $\ell$ is $\beta_\ell = \omega_\ell w_\ell \prod_{k=1}^{\ell-1} A_k$, which controls the number of non-negligible harmonics, which are approximately $\lfloor \beta \rfloor + 1$, since $|J_n(\beta)| \ll 1$ for $|n| > \beta + O(\beta^{1/3})$~\cite{abramowitz1964handbook,olver2010nist}.
        Since each $A_k$ decreases with $\xi_k$ (\cref{thm:monotonicity_xi}), the damping factors collectively gate the network's harmonic bandwidth: larger $\xi_k$ narrows the spectrum, while smaller $\xi_k$ widens it.
        The product structure ensures geometric decay of $\beta_\ell$ at initialisation ($\beta_\ell \propto A_0^{\ell-1}$ with $A_0 < 1$), so that the deepest layers initially contribute near-zero frequency support and bandwidth expands progressively as the individual $A_k$ grow during training, producing a \emph{coarse-to-fine learning curriculum} without explicit scheduling.
        This extends the observation by~\cite{yuce2022structured} that composing sinusoidal layers in SIREN produces Bessel-governed harmonics. 
        The key difference is that our oscillator amplitude $A_k$ makes the modulation index learnable.
        Full formulation and derivations are provided in \cref{sec:multilayer} of the Supplemental Material.

\section{Experiments}
    \label{sec:experiments}

    We evaluate our method across several tasks organised by observation model: signal representation (\cref{sec:exp_representation}), recovery from corrupted observations (\cref{sec:exp_corrupted}), and recovery from incomplete observations (\cref{sec:exp_incomplete}).
    More tasks are reported in \cref{sec:additonal_experiments} of the Supplemental Materials.
    We compare against eight established INRs: SIREN~\cite{sitzmann2019siren}, Gauss~\cite{ramasinghe2022beyond}, WIRE~\cite{saragadam2023wire}, FINER~\cite{liu2024finer}, BACON~\cite{lindell2021bacon}, MFN~\cite{fathony2021multiplicative}, FR~\cite{shi2024improved}, and Fourier Features~\cite{tancik2020fourfeat}.
    For each competitor, we adopt the task-specific hyperparameters and architecture configuration recommended by the original authors when available. 
    When no configuration is provided for a given task, we manually tune the architecture and hyperparameters to obtain competitive results. 
    In contrast, \ul{our method uses the same architecture and hyperparameters across all tasks}, with no task-specific tuning, directly verifying the claim that the spectral gate adapts to the signal's complexity without manual intervention.
    All these details are in the Supplemental Materials.
    
    To ensure a fair and reproducible comparison, we have re-implemented all methods and tasks within a unified codebase under identical training conditions.
    We will publicly release this library, dubbed \texttt{INRs Playground}, as part of our contributions.
    We run each experiment 10 times and average the results. 
    For image-related tasks, we provide qualitative results of their best run in \cref{fig:qualitatives_transposed}.
    
    \paragraph{Initialisation}
        We initialise $\omega_0$ and $\omega_{n,0}$ such that the forcing frequency lies below the natural frequency (\eg, $\omega_0=40, \omega_{n,0}=45$), placing the system away from resonance with a suppressed initial amplitude. 
        Both frequencies are reparameterised via softplus to ensure positivity.
        We initialise the damping factor to $\xi_0=1/\sqrt{2}$, corresponding to a Butterworth (maximally-flat) magnitude response~\cite{butterworth1930filter}, which provides moderate amplitude without a resonance peak.
        We initialise $\varphi$ to the physical phase lag of the oscillator at the initial parameters: $\varphi_0 = -\arctan \bigl(\frac{2 \xi_0 \omega_{n,0} \omega_0}{\omega_{n,0}^2 - \omega_0^2}\bigr)$.
        This ensures physical consistency at the start of training; the phase is then free to evolve.
        Finally, we follow the SIREN initialisation scheme~\cite{sitzmann2019siren} but scaled by the initial natural frequency: the first layer composed of $m$ neurons draws $w \sim \mathcal{U}(-1/m, 1/m)$ and hidden layers draw $w \sim \mathcal{U} \bigl(-\sqrt{6/m}/\omega_n, \sqrt{6/m}/\omega_n\bigr)$, which maintains approximate unit variance of pre-activations across layers.

    \subsection{Signal Representation}
        \label{sec:exp_representation}
        
        We first evaluate the activation's ability to faithfully encode continuous signals under direct, complete supervision, \ie, fitting a function $y = f(\mathbf{x})$ from dense samples.
        This setting isolates expressivity: every sample point is observed without noise or missing data, so performance differences between activations reflect their capacity to represent diverse spectral content.
        In terms of our theoretical framework, these experiments validate the properties established in \cref{sec:properties}.
        The network must have sufficient spectral bandwidth to match the target and provide baseline conditions under which the spectral gate should open during training.

        \paragraph{1D Signal Fitting.}
            We fit a one-dimensional composite signal containing multiple frequency components.
            This controlled setting allows direct inspection of how the learned \algoname{} parameters align with the target's spectral content, providing the most transparent validation of the spectral gating mechanism described in \cref{sec:gradient_analysis}.
            Results are reported in \cref{tab:signal_fitting}.
            \begin{table}[t]
    \centering
    \caption{
        \textbf{Signal Fitting.} 
        Best results in \textbf{bold}, runner-ups \underline{underlined}.
        }
    \resizebox{\linewidth}{!}{
        \begin{tabular}{l cccccc cccccc}

            \toprule
        
            \multirow{3}{*}{\textbf{{INR}}} & 
            \multicolumn{6}{c}{\textbf{Square Wave}} & 
            \multicolumn{6}{c}{\textbf{Chirp}} \\

            \cmidrule(lr){2-7} \cmidrule(lr){8-13}
            
            & \multicolumn{2}{c}{100 Hz} & \multicolumn{2}{c}{250 Hz} & \multicolumn{2}{c}{500 Hz} 
            & \multicolumn{2}{c}{250 Hz} & \multicolumn{2}{c}{500 Hz} & \multicolumn{2}{c}{1000 Hz} \\

            \cmidrule(lr){2-3} \cmidrule(lr){4-5} \cmidrule(lr){6-7} \cmidrule(lr){8-9} \cmidrule(lr){10-11} \cmidrule(lr){12-13}
            
            & Final PSNR & Peak PSNR & Final PSNR & Peak PSNR & Final PSNR & Peak PSNR
            & Final PSNR & Peak PSNR & Final PSNR & Peak PSNR & Final PSNR & Peak PSNR \\

            \algoname{}                          & \textbf{150.76 $\pm$ 1.51} & \textbf{160.40 $\pm$ 7.32} & \textbf{135.95 $\pm$ 1.39} & \textbf{142.70 $\pm$ 1.22} & \textbf{126.83 $\pm$ 1.17} & \textbf{135.38 $\pm$ 0.69} & \textbf{117.69 $\pm$ 15.63} & 120.97 $\pm$ 17.13 & \textbf{122.38 $\pm$ 16.85} & \textbf{124.98 $\pm$ 18.65} & \textbf{86.39 $\pm$ 4.10} & 94.15 $\pm$ 2.60 \\
            SIREN \cite{sitzmann2019siren}       & 116.68 $\pm$ 1.46 & 148.39 $\pm$ 2.32 & 74.98 $\pm$ 3.06 & \underline{136.40 $\pm$ 0.33} & \underline{64.49 $\pm$ 39.68} & \underline{125.86 $\pm$ 0.80} & 70.43 $\pm$ 9.88 & 126.55 $\pm$ 2.20 & 57.65 $\pm$ 34.99 & 120.78 $\pm$ 3.15 & 61.05 $\pm$ 6.62 & \underline{105.05 $\pm$ 3.27} \\
            Gauss \cite{ramasinghe2022beyond}    & 123.83 $\pm$ 5.02 & 132.48 $\pm$ 0.16 & 91.60 $\pm$ 42.04 & 125.54 $\pm$ 0.07 & 26.63 $\pm$ 8.40 & 42.08 $\pm$ 4.32 & 104.32 $\pm$ 16.93 & 113.31 $\pm$ 15.97 & 49.10 $\pm$ 14.06 & 66.77 $\pm$ 4.61 & 16.70 $\pm$ 1.65 & 17.96 $\pm$ 1.61 \\ 
            WIRE \cite{saragadam2023wire}        & 95.64 $\pm$ 27.03 & 140.20 $\pm$ 1.98 & 54.83 $\pm$ 8.10 & 88.38 $\pm$ 14.79 & 34.41 $\pm$ 6.80 & 45.35 $\pm$ 3.07 & 96.81 $\pm$ 21.76 & 132.48 $\pm$ 3.72 & 51.09 $\pm$ 10.94 & 74.48 $\pm$ 5.82 & 30.92 $\pm$ 7.67 & 34.94 $\pm$ 6.57 \\
            BACON \cite{lindell2021bacon}        & 29.03 $\pm$ 0.00 & 29.03 $\pm$ 0.00 & 33.01 $\pm$ 0.00 & 33.01 $\pm$ 0.00 & 36.02 $\pm$ 0.00 & 36.02 $\pm$ 0.00 & 75.03 $\pm$ 2.71 & 107.59 $\pm$ 1.87 & \underline{79.23 $\pm$ 0.73} & 109.08 $\pm$ 3.24 & 73.28 $\pm$ 0.96 & 78.69 $\pm$ 2.56 \\
            FINER \cite{liu2024finer}            & 107.64 $\pm$ 25.40 & 107.64 $\pm$ 25.40 & \underline{101.85 $\pm$ 10.61} & 132.31 $\pm$ 0.81 & 53.70 $\pm$ 33.03 & 119.07 $\pm$ 0.07 & 82.68 $\pm$ 22.96 & \underline{136.74 $\pm$ 1.10} & 65.76 $\pm$ 6.98 & \underline{123.49 $\pm$ 0.63} & \underline{80.50 $\pm$ 30.38} & \textbf{112.57 $\pm$ 1.36} \\
            MFN \cite{fathony2021multiplicative} & 94.72 $\pm$ 13.13 & 126.92 $\pm$ 8.56 & 52.23 $\pm$ 4.76 & 63.77 $\pm$ 2.22 & 6.03 $\pm$ 0.00 & 6.04 $\pm$ 0.00 & 56.23 $\pm$ 4.40 & 81.54 $\pm$ 2.57 & 14.60 $\pm$ 0.21 & 14.62 $\pm$ 0.22 & 10.90 $\pm$ 0.14 & 10.91 $\pm$ 0.12 \\
            Fourier \cite{tancik2020fourfeat}    & \underline{150.33 $\pm$ 6.87} & \underline{155.47 $\pm$ 2.46} & 54.75 $\pm$ 8.71 & 125.31 $\pm$ 7.14 & 24.75 $\pm$ 13.57 & 34.00 $\pm$ 2.47 & \underline{104.82 $\pm$ 22.65} & \textbf{137.95 $\pm$ 1.28} & 43.50 $\pm$ 9.43 & 60.16 $\pm$ 4.76 & 19.64 $\pm$ 1.53 & 22.31 $\pm$ 1.62 \\
            FR \cite{shi2024improved}            & 6.02 $\pm$ 0.00 & 6.03 $\pm$ 0.00 & 6.02 $\pm$ 0.00 & 6.02 $\pm$ 0.00 & 6.02 $\pm$ 0.00 & 6.02 $\pm$ 0.00 & 9.40 $\pm$ 0.36 & 10.48 $\pm$ 0.20 & 9.47 $\pm$ 0.25 & 9.75 $\pm$ 0.11 & 9.24 $\pm$ 0.16 & 9.36 $\pm$ 0.12 \\
            
            \bottomrule
            
        \end{tabular}
        }
    \label{tab:signal_fitting}
\end{table}

            We evaluate on two signal types: a \textbf{Square Wave} (rich in odd harmonics) and a \textbf{Chirp} (continuously sweeping frequency), at increasing fundamental frequencies, and report both final and peak PSNR over training to distinguish converged reconstruction quality from the best performance achieved at any point during optimisation.
            
            The \algoname{} activation achieves the highest final PSNR across all six settings, demonstrating consistently superior converged reconstruction quality.
            It also attains the highest peak PSNR in four of six conditions, with the two exceptions being the \textbf{Chirp} at $250$ Hz (where Fourier Features reaches $137.95$ dB vs. our $120.97$ dB) and at $1000$ Hz (where FINER reaches $112.57$ dB vs. our $94.15$ dB).
            The fact that \algoname{} leads on final PSNR even in these two settings suggests that the spectral gating mechanism provides more stable long-term convergence: once the gate opens for a coherent frequency component, the equilibrium predicted by \cref{cor:gating}(a) prevents further drift, whereas methods without this mechanism may overshoot during training and settle at a lower endpoint.
            The most revealing pattern emerges at higher frequencies.
            Non-periodic and spatially compact activations, such as Gauss, WIRE, and MFN, degrade severely as frequency increases: on the $500$ Hz \textbf{Square Wave}, Gauss drops to $42$ dB, and MFN collapses to $6$ dB, consistent with their documented low-frequency bias.
            SIREN shows a characteristic pattern: modest final PSNR but competitive peak PSNR, reflecting its all-pass nature: it can match the signal's frequency content but lacks the spectral gate to stabilise convergence, resulting in a gap between peak and final performance.
            \algoname{} combines the strengths of both regimes: strong converged quality (highest final PSNR) with stable training dynamics (smallest gap between peak and final PSNR), consistent with the spectral gating mechanism and the coarse-to-fine curriculum of \cref{sec:multilayer_short}, preventing subsequent drift.
            The variance across runs also deserves attention.
            Several INRs show high instability at elevated frequencies: SIREN at $500$ Hz \textbf{Square Wave} ($\pm39.68$ dB), FINER at $1000$ Hz \textbf{Chirp} ($\pm30.38$ dB), and Gauss at $250$ Hz \textbf{Square Wave} ($\pm42.04$ dB), whereas \algoname{} exhibits a contained variance across all conditions, suggesting that the spectral gating mechanism provides a more robust optimisation landscape.
            
        \paragraph{Audio Fitting.}
            We encode raw audio waveforms as coordinate-based networks mapping time to amplitude, following the protocol of~\cite{sitzmann2019siren}.
            Audio signals are characterised by rich harmonic structure spanning several orders of magnitude in frequency, making them a demanding test of spectral expressivity.
            Results are reported in \cref{tab:audio_fitting}.
            \begin{table}[t]
    \centering
    \caption{
        \textbf{Audio Fitting.} Best results in \textbf{bold}, runner-ups \underline{underlined}.
        }
    \resizebox{\linewidth}{!}{
        \begin{tabular}{cc ccccccccc}

            \toprule
        
            {\textbf{Signal}} &
            {\textbf{Metric}} &
            \multicolumn{1}{c}{\textbf{\algoname{}}} &
            \multicolumn{1}{c}{\textbf{SIREN}} &
            \multicolumn{1}{c}{\textbf{Gauss}} &
            \multicolumn{1}{c}{\textbf{WIRE}} &
            \multicolumn{1}{c}{\textbf{BACON}} &
            \multicolumn{1}{c}{\textbf{FINER}} &
            \multicolumn{1}{c}{\textbf{MFN}} &
            \multicolumn{1}{c}{\textbf{Fourier}} &
            \multicolumn{1}{c}{\textbf{FR}} \\

            \midrule

            \multirow{2}{*}{\textbf{Bach}}
            & Final PSNR & \textbf{56.23 $\pm$ 0.21} & \underline{52.02 $\pm$ 2.14} & 19.72 $\pm$ 0.10 & 19.80 $\pm$ 0.06 & 19.50 $\pm$ 0.00 & 38.69 $\pm$ 1.64 & 7.38 $\pm$ 0.25 & 50.75 $\pm$ 1.14 & 17.98 $\pm$ 1.20 \\
            & Peak PSNR & \underline{56.32 $\pm$ 0.19} & \textbf{58.47 $\pm$ 0.16} & 20.25 $\pm$ 0.15 & 20.80 $\pm$ 0.17 & 19.56 $\pm$ 0.01 & 41.97 $\pm$ 0.19 & 7.38 $\pm$ 0.25 & 52.36 $\pm$ 0.64 & 20.63 $\pm$ 0.05 \\

            \midrule

            \multirow{2}{*}{\textbf{Counting}} 
            & Final PSNR & \textbf{43.17 $\pm$ 2.45} & \underline{41.64 $\pm$ 2.84} & 24.43 $\pm$ 0.01 & 24.83 $\pm$ 0.19 & 24.43 $\pm$ 0.00 & 34.02 $\pm$ 0.60 & 6.84 $\pm$ 0.25 & 36.20 $\pm$ 0.65 & 24.29 $\pm$ 0.18 \\
            & Peak PSNR & \textbf{45.83 $\pm$ 4.20} & \underline{45.22 $\pm$ 0.66} & 24.72 $\pm$ 0.03 & 25.11 $\pm$ 0.23 & 24.43 $\pm$ 0.00 & 36.20 $\pm$ 1.17 & 6.84 $\pm$ 0.25 & 36.55 $\pm$ 0.40 & 24.64 $\pm$ 0.04 \\

            \bottomrule
            
        \end{tabular}
        }
    \label{tab:audio_fitting}
\end{table}

            We evaluate on two clips: \textbf{Bach} (tonal, harmonically structured) and \textbf{Counting} (speech, broadband and transient-rich), again reporting both final and peak PSNR.
            
            \algoname{} achieves the highest final PSNR on both signals ($56.23$ dB on \textbf{Bach}, $43.17$ dB on \textbf{Counting}), confirming that the spectral gating mechanism yields consistently superior converged representations even on spectrally demanding targets.
            On peak PSNR, \algoname{} ranks first on \textbf{Counting} ($45.83$ dB vs. SIREN's $45.22$ dB) and second on \textbf{Bach} (56.32 dB vs. SIREN's $58.47$ dB).
            The peak gap on \textbf{Bach} is informative: SIREN's all-pass, purely periodic activation is well matched to the sustained harmonic structure of instrumental music, allowing it to transiently achieve high fidelity, but it does not maintain this performance at convergence (final PSNR drops to $52.02$ dB, over $4$ dB below our activation).
            This pattern is consistent with SIREN's lack of a stabilising mechanism: without the equilibrium condition of \cref{cor:gating}(a), the network continues to adjust frequencies after they have been adequately captured, leading to a lower endpoint despite a higher peak.
            The broader comparison reveals a stark divide.
            Non-periodic activations, such as Gauss, WIRE, BACON, and FR, plateau around $20$ to $25$ dB on both clips, and MFN fails entirely (below $8$ dB), demonstrating that spatially compact activations lack the spectral bandwidth required for audio-frequency content.
            FINER provides a meaningful improvement ($42$ dB peak on \textbf{Bach}, $36$ dB on \textbf{Counting}) but remains well below \algoname{} and SIREN, indicating that variable-periodic modulation alone is insufficient without a mechanism to coordinate spectral bandwidth across the full frequency range.
            Fourier Features performs competitively on \textbf{Bach} ($52.36$ dB peak) but drops on \textbf{Counting} ($36.55$ dB), suggesting that its fixed random frequency basis struggles with the non-stationary spectral content of speech.
            
        \paragraph{Image Fitting.}
            We fit natural images by learning a mapping from pixel coordinates to RGB values, following the standard protocol established by~\cite{sitzmann2019siren} and adopted across the INR literature~\cite{ramasinghe2022beyond,saragadam2023wire,liu2024finer,shi2024improved,jayasundara2024mire}.
            Natural images exhibit a broad, spatially varying frequency spectrum, ranging from smooth regions to fine textures and sharp edges, requiring the activation to accommodate a wide range of spectral content.
            Results on five diverse images are reported in \cref{tab:image_fitting}.
            \begin{table}[t]
    \centering
    \caption{
        \textbf{Image Fitting.} 
        Best results in \textbf{bold}, runner-ups \underline{underlined}.
        }
    \resizebox{\linewidth}{!}{
        \begin{tabular}{l cccccccccc}

            \toprule
            
            \multirow{3}{*}{\textbf{{INR}}} & 
            \multicolumn{2}{c}{\textbf{Tiger}} & 
            \multicolumn{2}{c}{\textbf{Tiles}} & 
            \multicolumn{2}{c}{\textbf{Bikers}} & 
            \multicolumn{2}{c}{\textbf{Butterfly}} & 
            \multicolumn{2}{c}{\textbf{Knot}} \\ %

            \cmidrule(lr){2-3} \cmidrule(lr){4-5} \cmidrule(lr){6-7} \cmidrule(lr){8-9} \cmidrule(lr){10-11} %
            
            & Final PSNR & Peak PSNR & Final PSNR & Peak PSNR & Final PSNR & Peak PSNR
            & Final PSNR & Peak PSNR & Final PSNR & Peak PSNR \\ %

            \cmidrule(lr){2-3} \cmidrule(lr){4-5} \cmidrule(lr){6-7} \cmidrule(lr){8-9} \cmidrule(lr){10-11} %
            
            \algoname{}                          & \textbf{63.79 $\pm$ 0.23} & \textbf{63.79 $\pm$ 0.23} & \underline{52.24 $\pm$ 5.82} & \underline{57.09 $\pm$ 1.31} & \textbf{57.06 $\pm$ 0.97} & \textbf{57.25 $\pm$ 0.78} & \textbf{57.85 $\pm$ 0.71} & \textbf{57.96 $\pm$ 0.61} & \textbf{59.21 $\pm$ 7.84} & \textbf{64.05 $\pm$ 3.68} \\
            SIREN \cite{sitzmann2019siren}       & \underline{53.69 $\pm$ 1.77} & \underline{58.52 $\pm$ 0.25} & 52.05 $\pm$ 1.76 & \textbf{58.36 $\pm$ 0.21} & 43.11 $\pm$ 11.9 & \underline{55.92 $\pm$ 0.21} & 48.81 $\pm$ 1.51 & \underline{55.83 $\pm$ 0.48} & 51.90 $\pm$ 0.53 & 59.72 $\pm$ 0.78 \\
            Gauss \cite{ramasinghe2022beyond}    & 44.10 $\pm$ 0.62 & 46.46 $\pm$ 0.21 & 42.34 $\pm$ 1.09 & 44.77 $\pm$ 0.19 & 41.45 $\pm$ 0.74 & 42.59 $\pm$ 0.08 & 43.19 $\pm$ 0.62 & 45.01 $\pm$ 0.44 & 43.81 $\pm$ 1.19 & 47.08 $\pm$ 0.24 \\
            WIRE \cite{saragadam2023wire}        & 38.64 $\pm$ 0.19 & 39.20 $\pm$ 0.06 & 40.92 $\pm$ 0.45 & 42.06 $\pm$ 0.21 & 36.71 $\pm$ 0.15 & 36.99 $\pm$ 0.26 & 36.78 $\pm$ 0.33 & 37.10 $\pm$ 0.26 & 40.32 $\pm$ 0.27 & 40.80 $\pm$ 0.11 \\
            BACON \cite{lindell2021bacon}        & 29.88 $\pm$ 0.03 & 29.90 $\pm$ 0.04 & 32.20 $\pm$ 0.04 & 32.30 $\pm$ 0.01 & 27.45 $\pm$ 0.05 & 27.45 $\pm$ 0.05 & 25.01 $\pm$ 0.10 & 25.01 $\pm$ 0.11 & 31.22 $\pm$ 0.03 & 31.25 $\pm$ 0.02 \\
            FINER \cite{liu2024finer}            & 56.58 $\pm$ 0.12 & 57.09 $\pm$ 0.14 & \textbf{54.09 $\pm$ 0.97} & 55.31 $\pm$ 0.27 & \underline{47.46 $\pm$ 3.13} & 53.53 $\pm$ 0.17 & \underline{53.65 $\pm$ 0.86} & 55.10 $\pm$ 0.08 & \underline{53.39 $\pm$ 1.88} & \underline{59.72 $\pm$ 0.20} \\
            MFN \cite{fathony2021multiplicative} & 46.97 $\pm$ 0.99 & 47.01 $\pm$ 0.97 & 41.92 $\pm$ 3.32 & 42.59 $\pm$ 2.60 & 38.11 $\pm$ 0.53 & 38.11 $\pm$ 0.53 & 44.12 $\pm$ 0.74 & 44.25 $\pm$ 0.72 & 48.71 $\pm$ 1.95 & 50.59 $\pm$ 1.02 \\
            Fourier \cite{tancik2020fourfeat}    & 51.79 $\pm$ 0.66 & 52.77 $\pm$ 0.26 & 46.29 $\pm$ 1.45 & 50.79 $\pm$ 0.16 & 45.35 $\pm$ 1.09 & 47.80 $\pm$ 0.12 & 45.30 $\pm$ 3.40 & 51.58 $\pm$ 0.40 & 51.49 $\pm$ 0.52 & 53.02 $\pm$ 0.56 \\
            FR \cite{shi2024improved}            & 23.01 $\pm$ 0.22 & 23.01 $\pm$ 0.22 & 19.79 $\pm$ 0.02 & 19.79 $\pm$ 0.02 & 19.89 $\pm$ 0.05 & 19.89 $\pm$ 0.05 & 18.68 $\pm$ 0.03 & 18.68 $\pm$ 0.03 & 17.35 $\pm$ 0.12 & 17.35 $\pm$ 0.12 \\
            
            \bottomrule
            
        \end{tabular}
        }
    \label{tab:image_fitting}
\end{table}

            \algoname{} achieves the highest final and peak PSNR on four of five images (\textbf{Tiger}, \textbf{Bikers}, \textbf{Butterfly}, \textbf{Knot}), with margins of up to $10$ dB over the closest competitor on final quality (\textbf{Bikers}: $57.06$ dB vs. FINER's $47.46$ dB).
            The sole exception is \textbf{Tiles}, where FINER reaches $54.09$ dB vs. \algoname{}'s $52.24$ dB on final PSNR, and SIREN reaches $58.36$ dB vs. \algoname{}'s $57.09$ dB on peak PSNR.
            \textbf{Tiles} is a highly regular, repetitive texture, precisely the setting where FINER's variable-periodic activations are best suited, as the spectral content is concentrated at a few dominant spatial frequencies with little need for adaptive bandwidth control.
            The most evident pattern is the gap between the final and peak PSNR for SIREN.
            On \textbf{Bikers}, SIREN peaks at $55.92$ dB but converges to only $43.11$ dB, a degradation of nearly $13$ dB; similar drops appear on \textbf{Tiger} ($-4.83$ dB), \textbf{Butterfly} ($-7.02$ dB), and \textbf{Tiles} ($-6.31$ dB).
            This is the overshoot-then-decay behaviour expected from an all-pass activation: SIREN reaches the target frequencies but, lacking a stabilising mechanism, continues to adjust its parameters and drifts away from optimal solutions.
            \algoname{} shows no such degradation: on \textbf{Tiger}, final and peak PSNR are identical ($63.79$ dB), and across all images the gap never exceeds $5$ dB (with the highest being on \textbf{Knot}), consistent with the equilibrium condition of \cref{cor:gating}(a) that stabilises $\xi$ once the neuron adequately represents its frequency component.
            Among the remaining INRs, FINER and SIREN are the strongest overall competitors.
            Gauss, WIRE, and MFN cluster between $37$ and $50$ dB, reflecting the low-frequency ceiling of non-periodic and spatially compact activations on images with significant high-frequency content.
            FR and BACON remain below $32$ dB across all images.
            The variance of \algoname{} is moderate across conditions, while several INRs exhibit notable instability (\ie, on \textbf{Bikers}: SIREN $\pm11.9$ dB vs. \algoname{} $\pm0.97$ dB).
            \begin{figure}[t]
    \centering
    \resizebox{\linewidth}{!}{
    \begin{tabular}{lcc @{\hspace{0.75em}} ccccccccc}        

        & {\tiny Supervision} 
        & {\tiny GT} 
        & {\tiny \algoname{}} 
        & {\tiny SIREN} 
        & {\tiny Gauss} 
        & {\tiny WIRE} 
        & {\tiny BACON} 
        & {\tiny FINER} 
        & {\tiny MFN} 
        & {\tiny Fourier} 
        & {\tiny FR} \\
        
        \rotatebox{90}{\tiny \hspace{0.025em} \emph{Fitting}} &
        \includegraphics[width=0.09\linewidth]{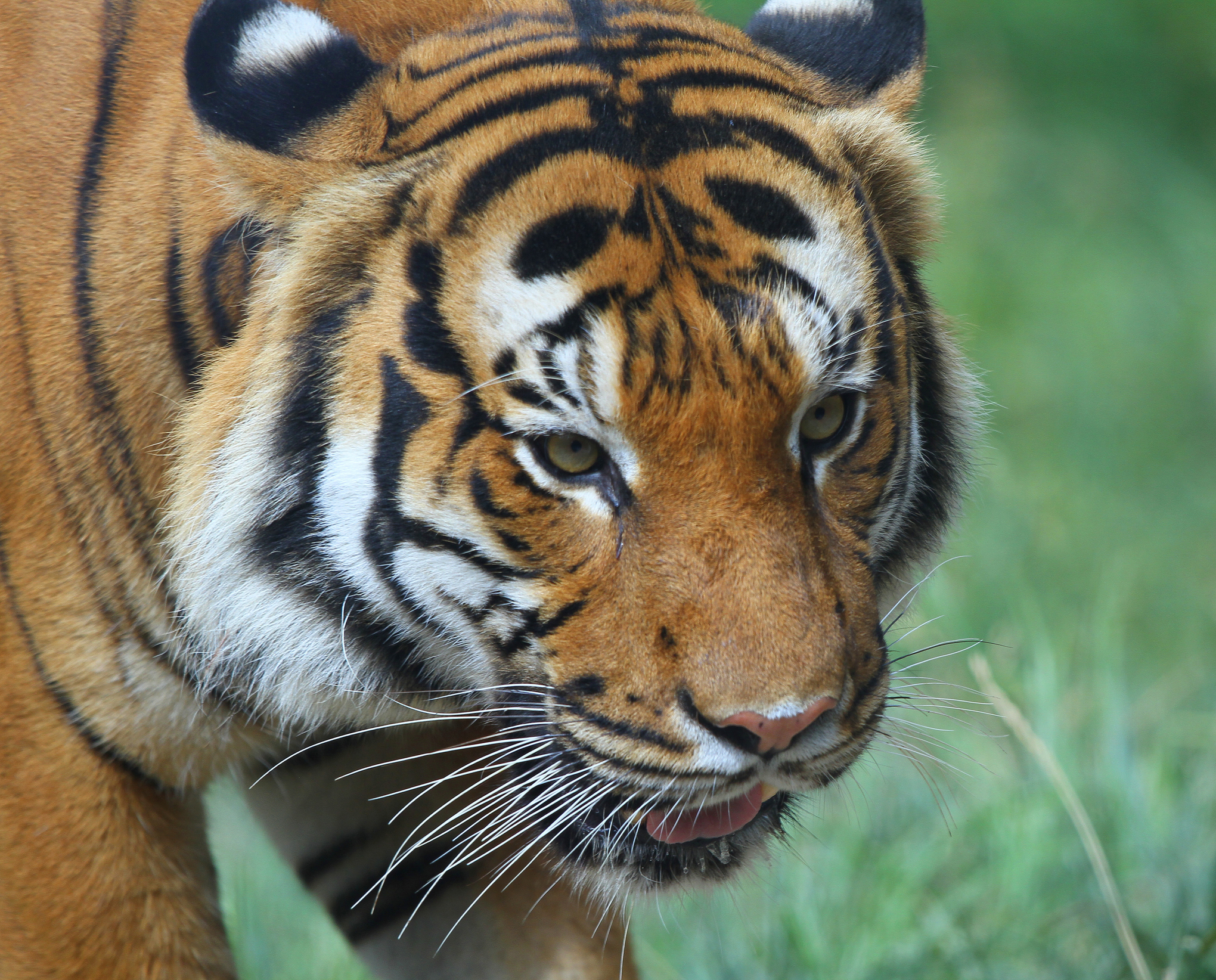} & 
        \includegraphics[width=0.09\linewidth]{figures/qualitatives/supervision/tiger_fitting.png} & 
        \includegraphics[width=0.09\linewidth]{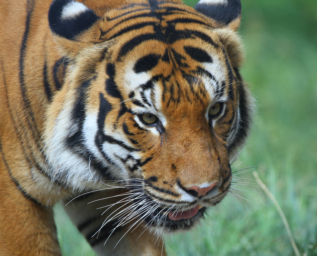} & 
        \includegraphics[width=0.09\linewidth]{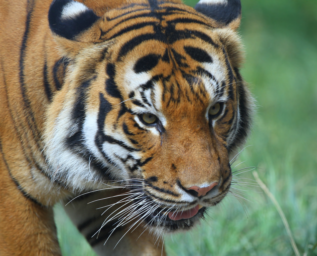} & 
        \includegraphics[width=0.09\linewidth]{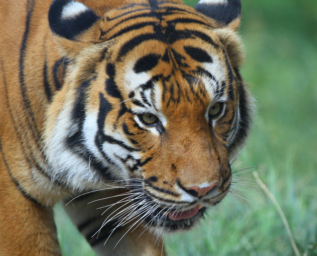} & 
        \includegraphics[width=0.09\linewidth]{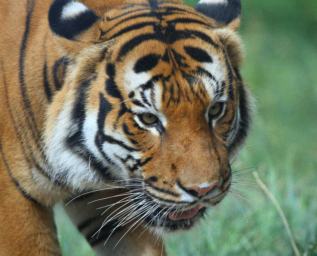} & 
        \includegraphics[width=0.09\linewidth]{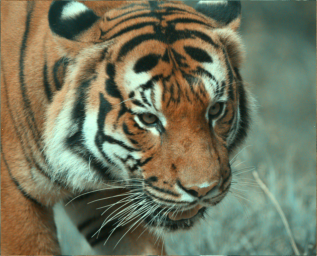} & 
        \includegraphics[width=0.09\linewidth]{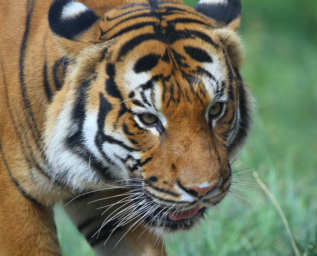} & 
        \includegraphics[width=0.09\linewidth]{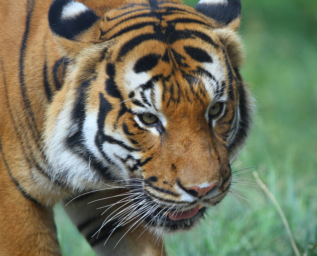} & 
        \includegraphics[width=0.09\linewidth]{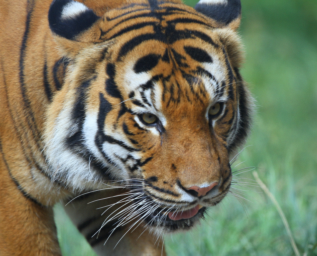} &
        \includegraphics[width=0.09\linewidth]{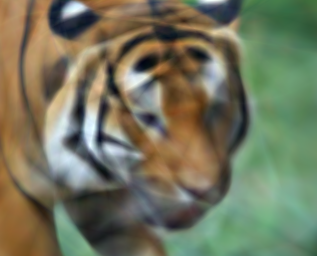} \\

        \rotatebox{90}{\tiny \hspace{0.25em} \emph{Denoising}} &
        \includegraphics[width=0.09\linewidth]{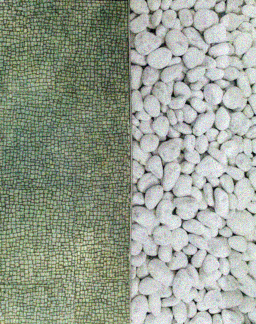} & 
        \includegraphics[width=0.09\linewidth]{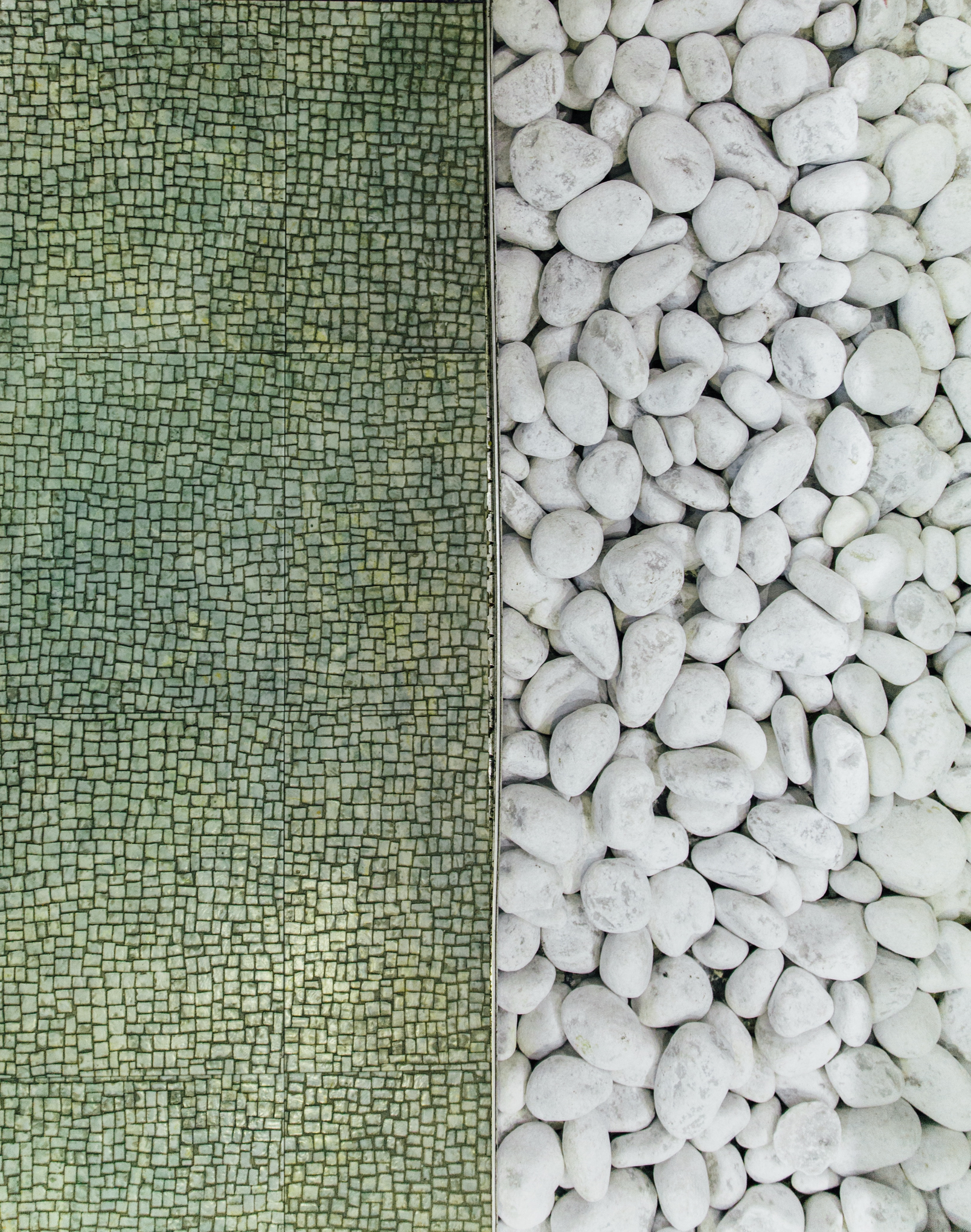} & 
        \includegraphics[width=0.09\linewidth]{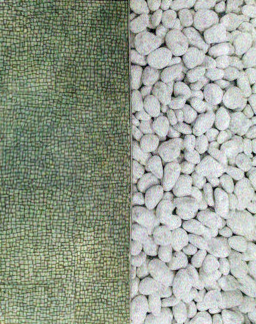} & 
        \includegraphics[width=0.09\linewidth]{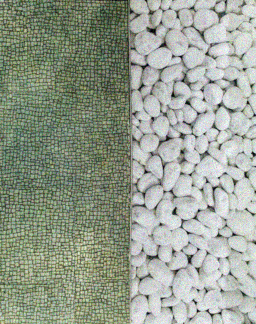} & 
        \includegraphics[width=0.09\linewidth]{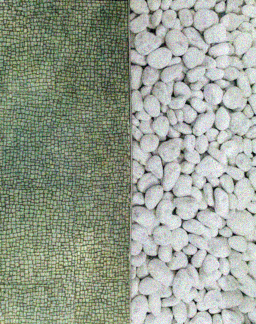} & 
        \includegraphics[width=0.09\linewidth]{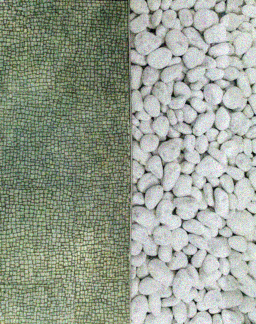} & 
        \includegraphics[width=0.09\linewidth]{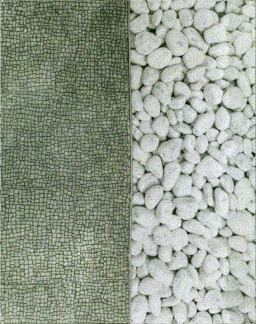} & 
        \includegraphics[width=0.09\linewidth]{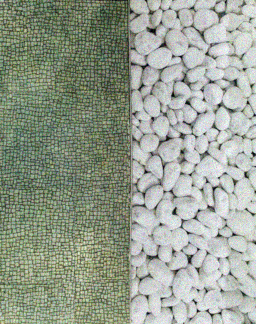} & 
        \includegraphics[width=0.09\linewidth]{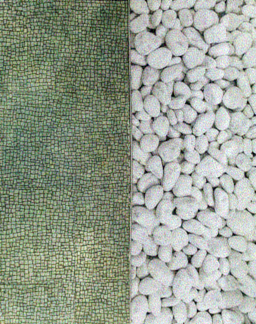} & 
        \includegraphics[width=0.09\linewidth]{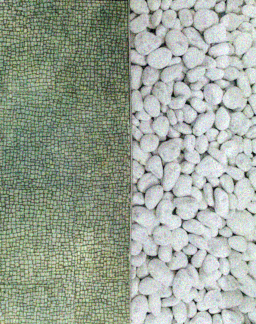} &
        \includegraphics[width=0.09\linewidth]{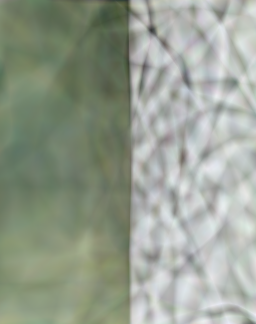} \\

        \rotatebox{90}{\tiny \hspace{0.5em} \emph{CT}} &
        \includegraphics[width=0.09\linewidth]{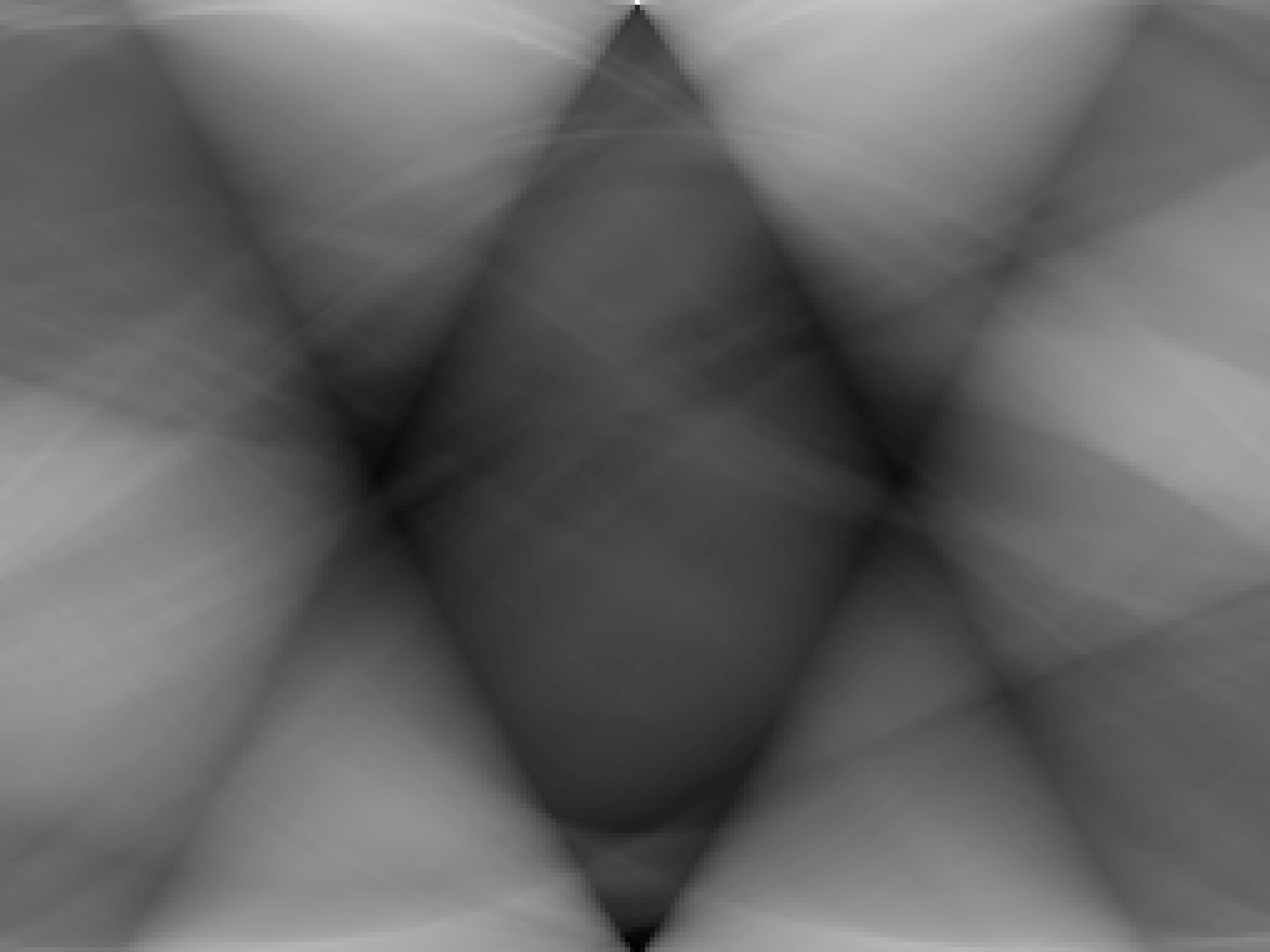} & 
        \includegraphics[width=0.09\linewidth]{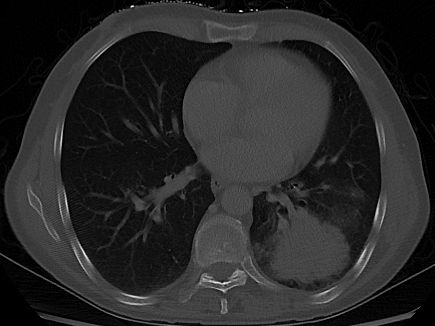} & 
        \includegraphics[width=0.09\linewidth]{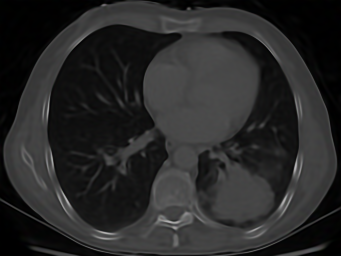} & 
        \includegraphics[width=0.09\linewidth]{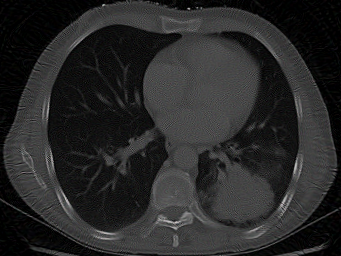} & 
        \includegraphics[width=0.09\linewidth]{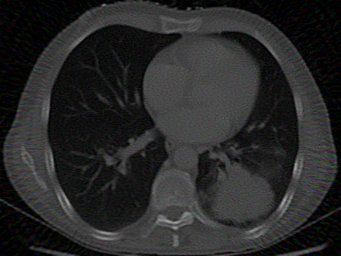} & 
        \includegraphics[width=0.09\linewidth]{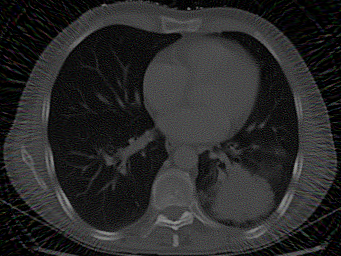} & 
        \includegraphics[width=0.09\linewidth]{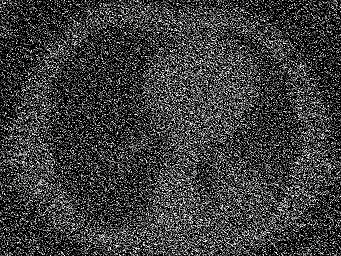} & 
        \includegraphics[width=0.09\linewidth]{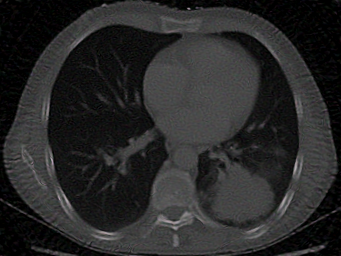} & 
        \includegraphics[width=0.09\linewidth]{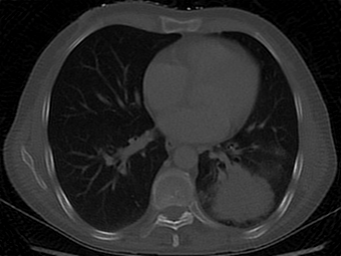} & 
        \includegraphics[width=0.09\linewidth]{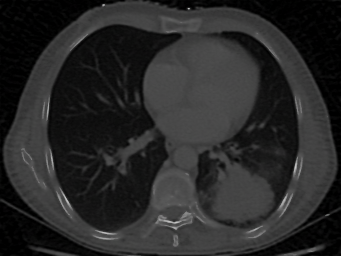} &
        \includegraphics[width=0.09\linewidth]{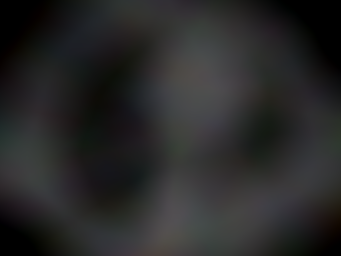} \\

        \rotatebox{90}{\tiny \hspace{0.4em} \emph{Inp.}} &
        \includegraphics[width=0.09\linewidth]{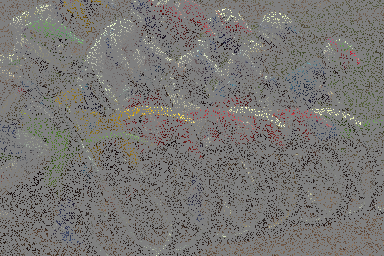} & 
        \includegraphics[width=0.09\linewidth]{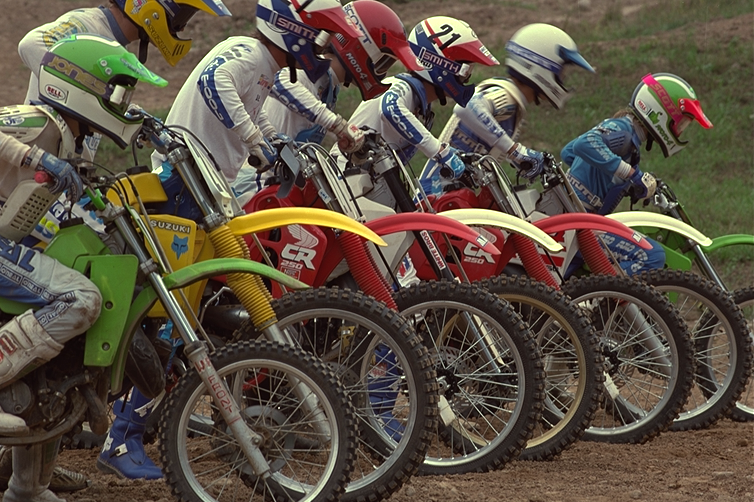} & 
        \includegraphics[width=0.09\linewidth]{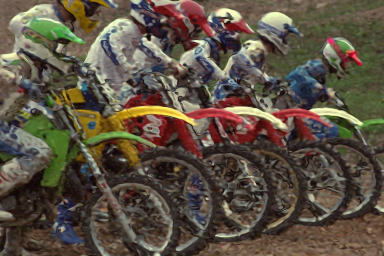} & 
        \includegraphics[width=0.09\linewidth]{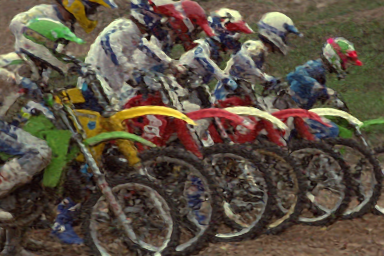} & 
        \includegraphics[width=0.09\linewidth]{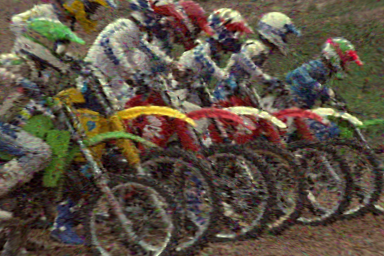} & 
        \includegraphics[width=0.09\linewidth]{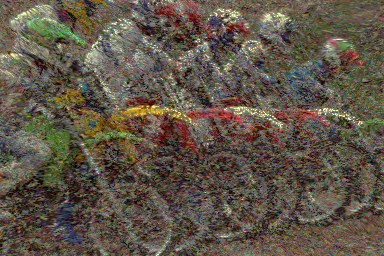} & 
        \includegraphics[width=0.09\linewidth]{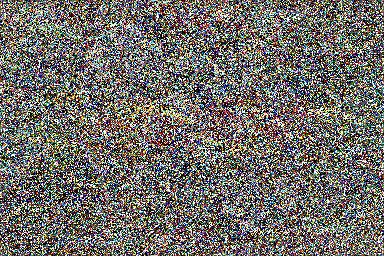} & 
        \includegraphics[width=0.09\linewidth]{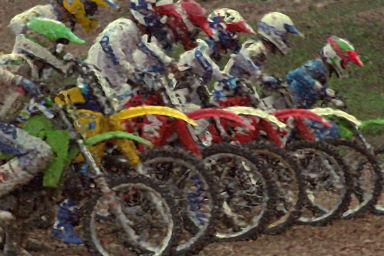} & 
        \includegraphics[width=0.09\linewidth]{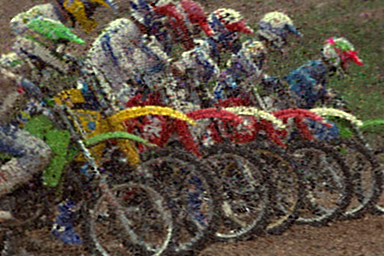} & 
        \includegraphics[width=0.09\linewidth]{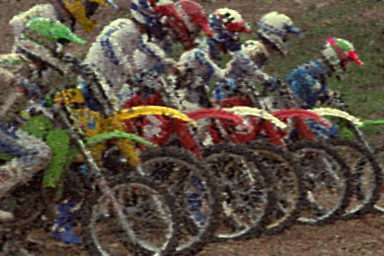} &
        \includegraphics[width=0.09\linewidth]{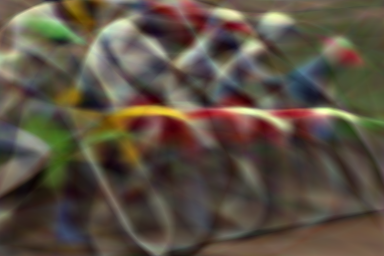} \\

        \rotatebox{90}{\tiny \emph{SR}} &
        \makebox[0.09\linewidth][c]{\includegraphics[width=0.05\linewidth, valign=m]{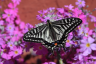}} &
        \includegraphics[width=0.09\linewidth, valign=m]{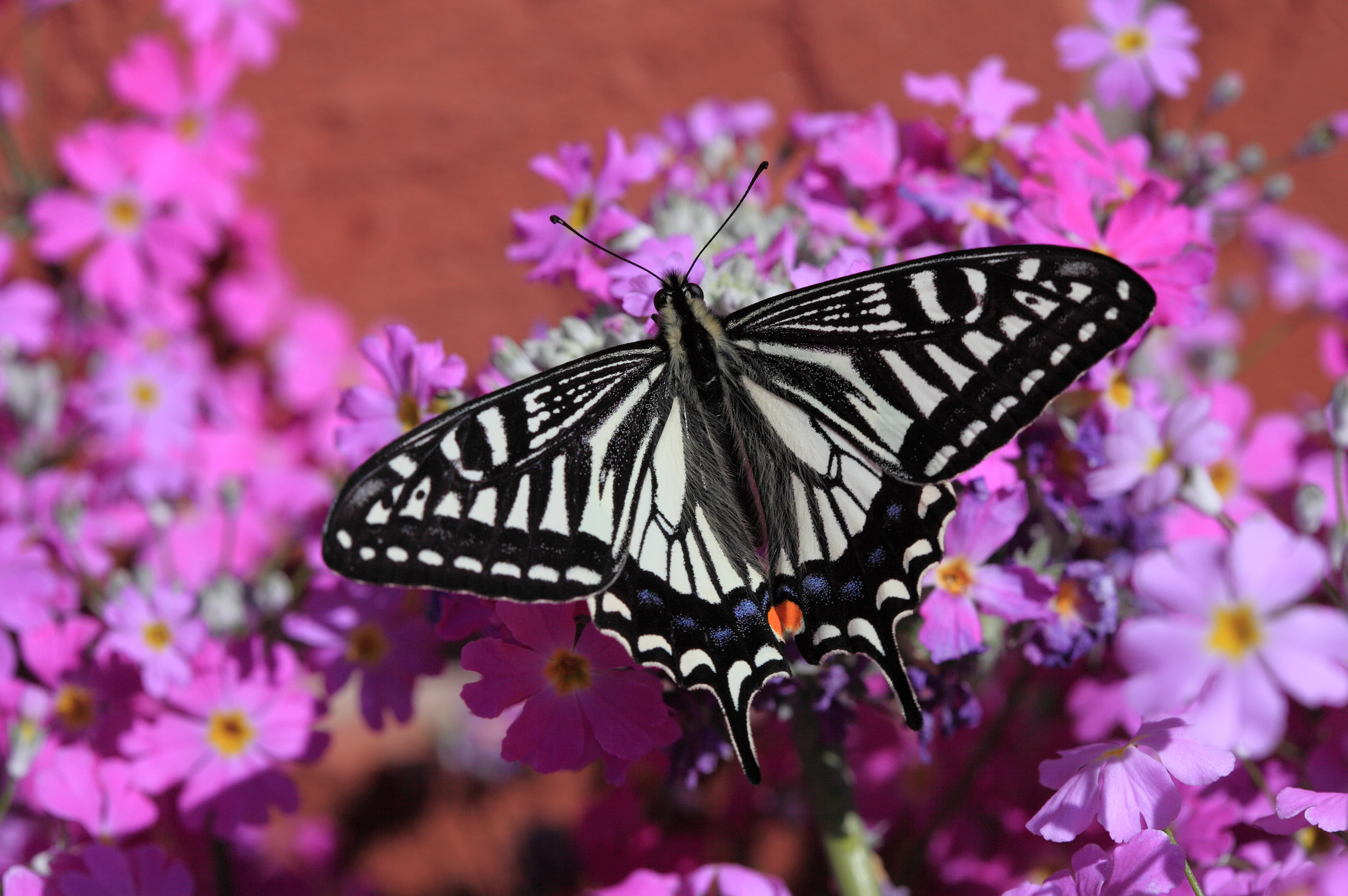} & 
        \includegraphics[width=0.09\linewidth, valign=m]{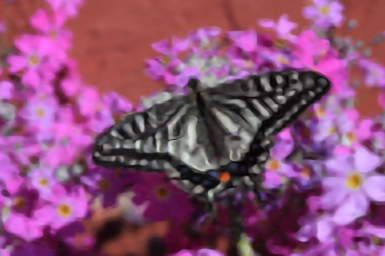} & 
        \includegraphics[width=0.09\linewidth, valign=m]{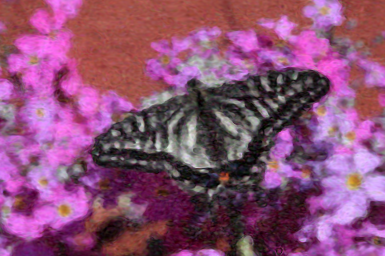} & 
        \includegraphics[width=0.09\linewidth, valign=m]{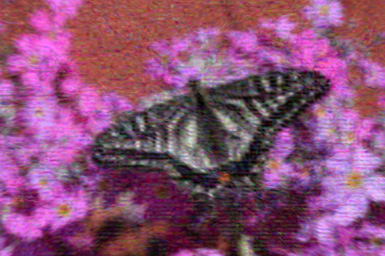} & 
        \includegraphics[width=0.09\linewidth, valign=m]{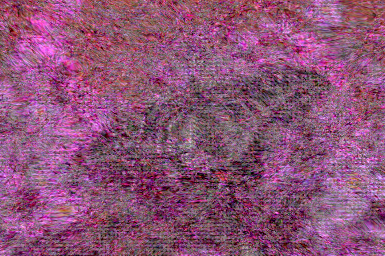} & 
        \includegraphics[width=0.09\linewidth, valign=m]{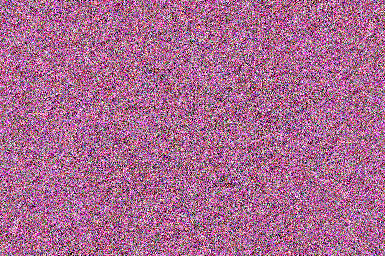} & 
        \includegraphics[width=0.09\linewidth, valign=m]{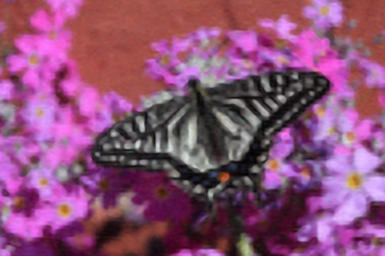} & 
        \includegraphics[width=0.09\linewidth, valign=m]{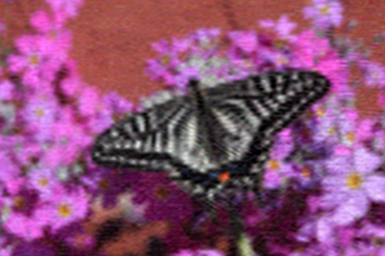} & 
        \includegraphics[width=0.09\linewidth, valign=m]{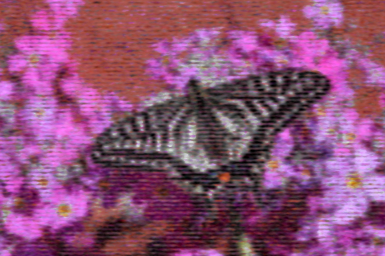} &
        \includegraphics[width=0.09\linewidth, valign=m]{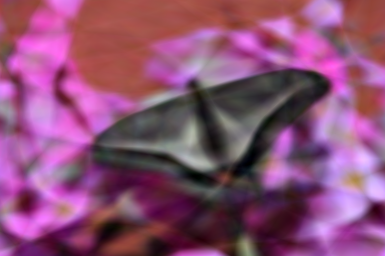} \\[1.75ex]
        
        \rotatebox{90}{\tiny \hspace{0.1em} \emph{Poisson}} &
        \includegraphics[width=0.09\linewidth]{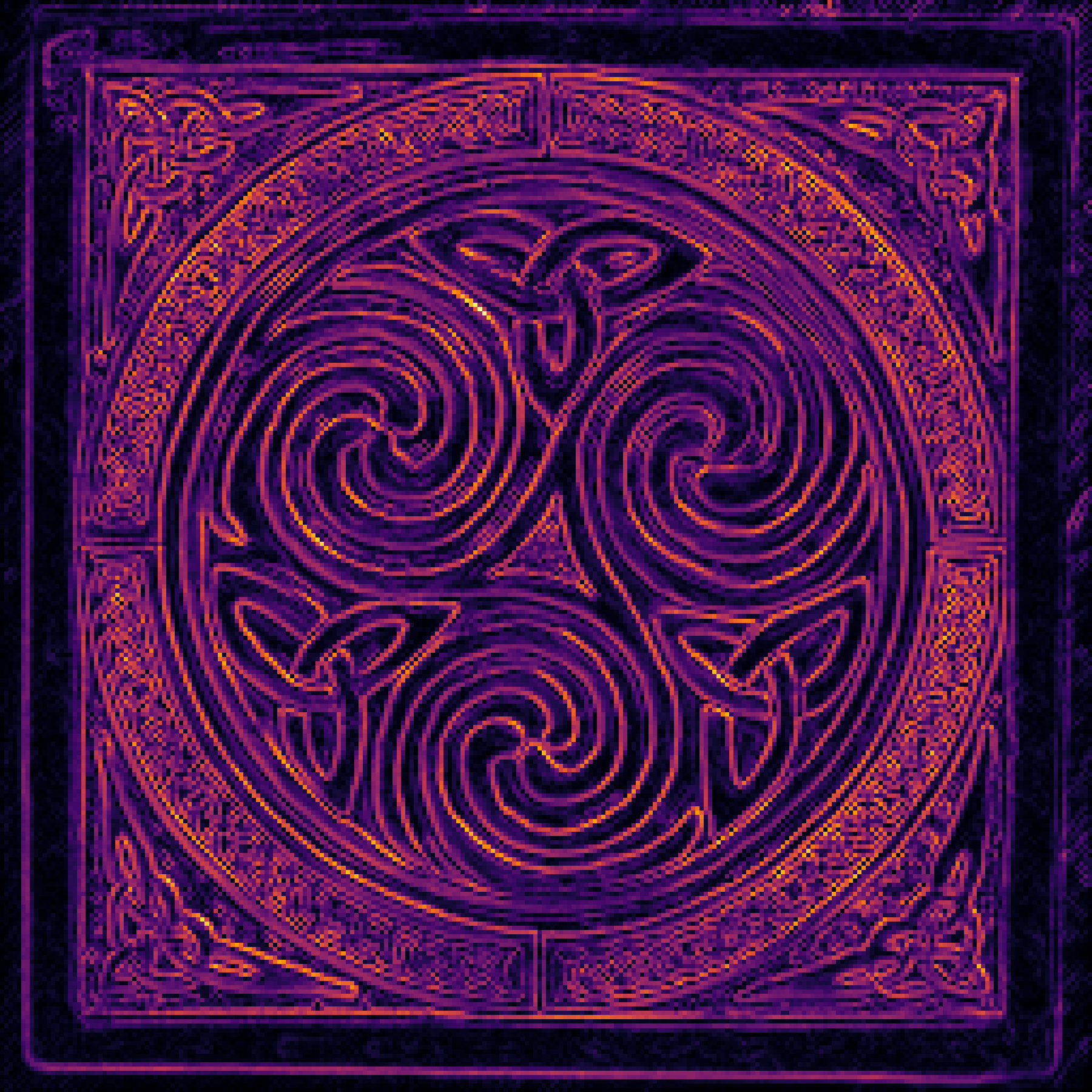} & 
        \includegraphics[width=0.09\linewidth]{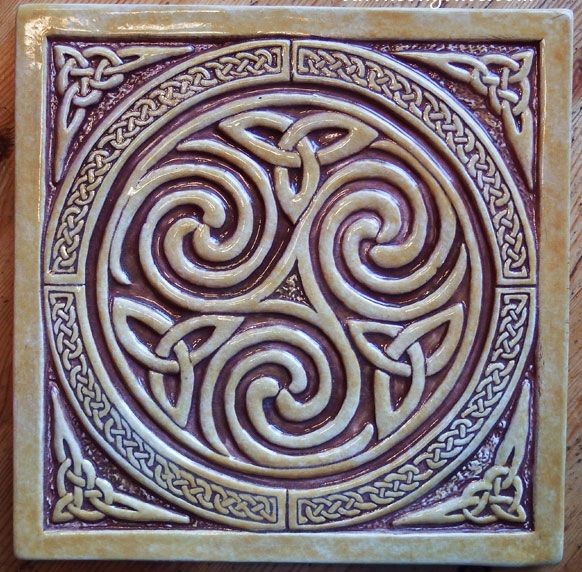} & 
        \includegraphics[width=0.09\linewidth]{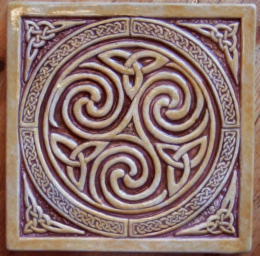} & 
        \includegraphics[width=0.09\linewidth]{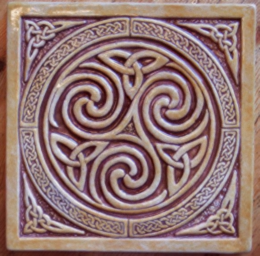} & 
        \includegraphics[width=0.09\linewidth]{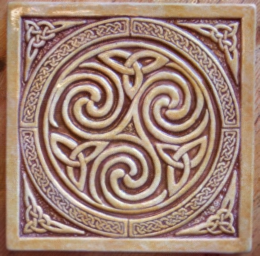} & 
        \includegraphics[width=0.09\linewidth]{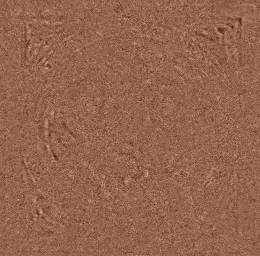} & 
        \includegraphics[width=0.09\linewidth]{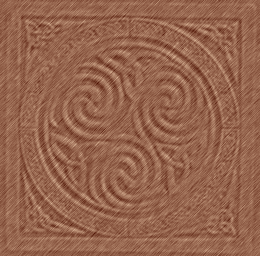} & 
        \includegraphics[width=0.09\linewidth]{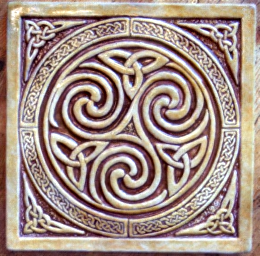} & 
        \includegraphics[width=0.09\linewidth]{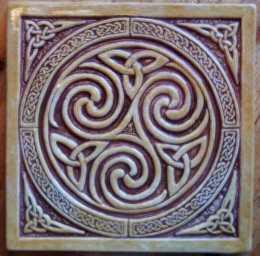} & 
        \includegraphics[width=0.09\linewidth]{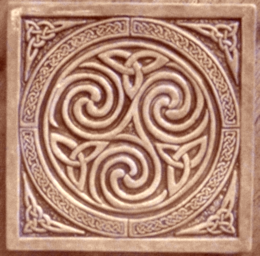} &
        \includegraphics[width=0.09\linewidth]{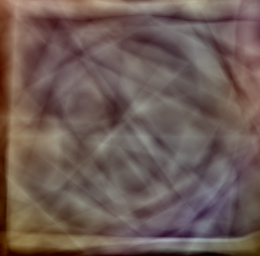} \\

    \end{tabular}
    }
    \caption{
    \textbf{Qualitative results.}
    Please zoom to appreciate the differences.
    }
    \label{fig:qualitatives_transposed}
\end{figure}

    \subsection{Signal Recovery from Corrupted Observations}
        \label{sec:exp_corrupted}
        
        We next evaluate settings where the observed data is degraded, following the model $y = f(\mathbf{x}) + \varepsilon$, where $\varepsilon$ represents observation noise.
        These experiments directly test the noise rejection mechanism formalised in \cref{sec:gradient_analysis}. 
        According to \cref{cor:gating}, the spectral gate should remain closed for frequency components driven by incoherent noise while opening for those supported by genuine signal content, validating that the implicit regularisation arising from the $T_{\mathrm{self}}$ / $T_{\mathrm{signal}}$ decomposition (\cref{thm:decomp}) translates into practical robustness.

        \paragraph{Image Denoising.}
            We fit a noisy image  (with $\sigma=0.1$) directly, without any explicit denoising loss or regularisation, relying entirely on the INR's inductive bias to separate signal from noise.
            This is the purest test of \cref{cor:gating}(b): the noise component $\varepsilon$ is i.i.d.\ and produces $O_p(N^{-1/2})$ correlation with the carrier, so the gate should remain closed for noise-driven frequencies while opening for the underlying clean signal.
            We follow the evaluation protocol described in~\cite{saragadam2023wire,liu2024finer} and report only the final PSNR against the clean ground truth in \cref{tab:image_denoising}, since peak PSNR is not meaningful when the training target itself is corrupted.
            
            \algoname{} achieves the highest PSNR on all five images.
            This is a direct empirical confirmation of the noise rejection mechanism: without any architectural modification, early stopping, or regularisation term, the spectral gate autonomously prevents the network from fitting the noise, as it adapts the bandwidth to the signal content rather than imposing a fixed spectral ceiling.
            The runner-up across all images is MFN, which is notable, as in the clean image fitting experiments (\cref{tab:image_fitting}), MFN ranked among the weaker methods.
            Its relative strength here stems from its multiplicative structure imposing a strong low-frequency bias that acts as implicit regularisation in the presence of noise, though at the cost of severely limited representational capacity.
            SIREN, which led or rivalled \algoname{} on peak PSNR in clean fitting, drops to $26.2$ dB here, nearly identical to Gauss and WIRE, confirming that its all-pass nature offers no protection against noise memorisation.
            FINER similarly falls to the level of the non-periodic activations, indicating that its variable-periodic mechanism does not provide noise rejection.
            FR performs worst overall (below $23$ dB on four of five images), suggesting that its Fourier reparametrisation amplifies rather than suppresses noise-driven frequency content.
            The variance of our activation is slightly higher than that of the other baselines (e.g., $\pm1.25$ dB on \textbf{Tiger} vs. $\pm 0.02$ dB for SIREN), which reflects the fact that \algoname{} is actively making non-trivial spectral decisions, \ie, determining which frequencies to fit and which to suppress, whereas the baselines converge to a similar worse solution regardless of the run.

        \paragraph{Computed Tomography (CT) Reconstruction.}
            We reconstruct a 2D image from a sparse set of Radon projections, and observations are corrupted by measurement noise.
            This task combines aspects of both indirect and incomplete observation models: the signal is accessed only indirectly through line integrals, and the limited number of projection angles makes the inverse problem severely ill-posed.
            The INR must leverage its inductive bias to regularise the reconstruction, making this a joint test of noise robustness and the coarse-to-fine learning curriculum described in \cref{sec:multilayer}.
            We follow the setup of~\cite{saragadam2023wire,sun2021coil} and report results in the rightmost column of \cref{tab:image_denoising}.
            \begin{table}[t]
    \centering
    \caption{
        \textbf{Image Denoising and CT Scan Reconstruction.}
        Best results in \textbf{bold}, runner-ups \underline{underlined}.
        }
    \resizebox{\linewidth}{!}{
        \begin{tabular}{l ccccc c}

            \toprule
            
            \textbf{INR} & 
            \textbf{Tiger} & 
            \textbf{Tiles} & 
            \textbf{Bikers} & 
            \textbf{Butterfly} & 
            \textbf{Knot} &
            \textbf{CT Scan} \\ %

            \cmidrule(lr){2-6} \cmidrule(lr){7-7}
            
            \algoname{}                          & \textbf{30.81 $\pm$ 1.25} & \textbf{28.57 $\pm$ 0.29} & \textbf{29.35 $\pm$ 0.32} & \textbf{31.86 $\pm$ 0.69} & \textbf{28.67 $\pm$ 0.83} & \textbf{39.97 $\pm$ 0.25} \\
            SIREN \cite{sitzmann2019siren}       & 26.22 $\pm$ 0.02 & 26.21 $\pm$ 0.02 & 26.26 $\pm$ 0.08 & 26.40 $\pm$ 0.03 & 26.21 $\pm$ 0.01 & 29.81 $\pm$ 1.81 \\
            Gauss \cite{ramasinghe2022beyond}    & 26.44 $\pm$ 0.03 & 26.46 $\pm$ 0.03 & 26.77 $\pm$ 0.06 & 27.05 $\pm$ 0.06 & 26.25 $\pm$ 0.03 & 31.22 $\pm$ 0.06 \\
            WIRE \cite{saragadam2023wire}        & 26.36 $\pm$ 0.02 & 26.69 $\pm$ 0.02 & 26.72 $\pm$ 0.02 & 26.69 $\pm$ 0.05 & 26.29 $\pm$ 0.03 & 27.15 $\pm$ 0.19 \\
            BACON \cite{lindell2021bacon}        & 25.87 $\pm$ 0.02 & 27.30 $\pm$ 0.06 & 25.48 $\pm$ 0.04 & 23.60 $\pm$ 0.04 & 26.01 $\pm$ 0.01 & 11.11 $\pm$ 0.12 \\
            FINER \cite{liu2024finer}            & 26.23 $\pm$ 0.01 & 26.20 $\pm$ 0.03 & 26.39 $\pm$ 0.01 & 26.54 $\pm$ 0.02 & 26.21 $\pm$ 0.01 & 28.42 $\pm$ 6.06 \\
            MFN \cite{fathony2021multiplicative} & \underline{27.65 $\pm$ 0.19} & \underline{27.62 $\pm$ 0.04} & \underline{28.21 $\pm$ 0.10} & \underline{28.18 $\pm$ 0.69} & \underline{26.89 $\pm$ 0.10} & \underline{37.19 $\pm$ 0.07} \\
            Fourier \cite{tancik2020fourfeat}    & 26.59 $\pm$ 0.01 & 26.13 $\pm$ 0.44 & 26.85 $\pm$ 0.10 & 27.36 $\pm$ 0.06 & 26.08 $\pm$ 0.25 & 34.92 $\pm$ 0.19 \\
            FR \cite{shi2024improved}            & 22.87 $\pm$ 0.15 & 19.77 $\pm$ 0.02 & 19.85 $\pm$ 0.07 & 18.62 $\pm$ 0.04 & 17.29 $\pm$ 0.16 & 22.84 $\pm$ 0.62 \\
            
            \bottomrule
            
        \end{tabular}
        }
    \label{tab:image_denoising}
\end{table}

            \algoname{} achieves $39.97$ dB, leading the second-best method (MFN, $37.19$ dB) by nearly $2.8$ dB, the largest absolute margin among all experiments in this section.
            This result is particularly significant because CT reconstruction compounds two challenges that the spectral gate is designed to address: noise rejection (the projections are noisy) and progressive frequency recovery (sparse angular sampling provides incomplete spectral coverage, the network must reconstruct missing frequency content in a principled order).
            The ranking mirrors the denoising results: MFN is the runner-up, benefiting from its conservative spectral bias, followed by Fourier Features ($34.92$ dB) and Gauss ($31.22$ dB).
            SIREN drops to $29.81$ dB, confirming that its all-pass nature is a liability when the observations are both indirect and noisy: without spectral gating, SIREN fits noise in the sinogram, which propagates into streak artefacts in the reconstructed image (\cref{fig:qualitatives_transposed}).
            FINER shows instability in this setting ($\pm6.06$ dB), suggesting that its variable-periodic activations are sensitive to the conditioning of the Radon inverse problem.
            BACON ($11.11$ dB) and FR ($22.84$ dB) fail to produce useful reconstructions, indicating that their spectral structures are incompatible with the frequency-domain characteristics of tomographic data.

    \subsection{Signal Recovery from Incomplete Observations}
        \label{sec:exp_incomplete}
        
        Finally, we evaluate settings where the signal is observed only partially or indirectly, following the model $y = \Phi(f(\mathbf{x}))$ with $\Phi$ a non-invertible forward operator that discards information.
        Unlike the corrupted setting, the challenge here is not noise but ill-posedness: infinitely many signals are consistent with the observations, and the network must rely on its inductive bias to select a plausible reconstruction.
        These experiments test whether the coarse-to-fine learning curriculum (\cref{sec:multilayer_short} and \cref{sec:multilayer}) and the progressive bandwidth expansion (\cref{sec:ntk}) provide useful structural priors for underdetermined inverse problems.
    
        \paragraph{Image Inpainting.}
            We reconstruct an image from a subset of observed pixels, with the forward operator $\Phi$ being a binary mask that retains only a fraction ($20$\%) of the pixel locations. 
            Thus, the network must interpolate coherently across missing regions.
            We follow the protocol described in~\cite{saragadam2023wire,liu2024finer} and report results in \cref{tab:image_inpainting_and_super_resolution}.
            \begin{table}[t]
    \centering
    \caption{
        \textbf{Image Inpainting and Super-resolution.} 
        Best results in \textbf{bold}, runner-ups \underline{underlined}. 
        }
    \resizebox{\linewidth}{!}{
        \begin{tabular}{l ccccc ccccc}

            \toprule

            \multirow{3}{*}{\textbf{{INR}}} & \multicolumn{5}{c}{\textbf{Image Inpainting}} & \multicolumn{5}{c}{\textbf{Image Super-resolution}} \\

            \cmidrule(lr){2-6} \cmidrule(lr){7-11}
            
            & 
            \textbf{Tiger} & 
            \textbf{Tiles} & 
            \textbf{Bikers} & 
            \textbf{Butterfly} & 
            \textbf{Knot} & 
            \textbf{Tiger} & 
            \textbf{Tiles} & 
            \textbf{Bikers} & 
            \textbf{Butterfly} & 
            \textbf{Knot} \\

            \cmidrule(lr){2-6} \cmidrule(lr){7-11}
            
            \algoname{}                          & \underline{26.86 $\pm$ 0.39} & 20.47 $\pm$ 0.52 & \underline{23.05 $\pm$ 0.10} & \textbf{24.65 $\pm$ 0.10} & \textbf{21.11 $\pm$ 0.09} & \textbf{23.92 $\pm$ 0.06} & 19.60 $\pm$ 0.81 & \textbf{21.45 $\pm$ 0.02} & \textbf{20.81 $\pm$ 0.04} & \textbf{17.84 $\pm$ 0.03} \\
            SIREN \cite{sitzmann2019siren}       & \textbf{27.35 $\pm$ 0.09} & \underline{20.70 $\pm$ 0.06} & \textbf{23.33 $\pm$ 0.08} & 24.15 $\pm$ 0.17 & \underline{20.91 $\pm$ 0.04} & \underline{23.59 $\pm$ 0.02} & 20.27 $\pm$ 0.28 & 20.69 $\pm$ 0.04 & 20.24 $\pm$ 0.26 & 17.40 $\pm$ 0.17 \\
            Gauss \cite{ramasinghe2022beyond}    & 24.69 $\pm$ 0.04 & 20.66 $\pm$ 0.06 & 22.38 $\pm$ 0.03 & 22.97 $\pm$ 0.07 & 19.74 $\pm$ 0.01 & 20.93 $\pm$ 0.04 & 19.41 $\pm$ 0.03 & 19.79 $\pm$ 0.03 & 18.97 $\pm$ 0.02 & 16.62 $\pm$ 0.03 \\
            WIRE \cite{saragadam2023wire}        & 17.32 $\pm$ 0.14 & 16.86 $\pm$ 0.21 & 17.51 $\pm$ 0.10 & 15.66 $\pm$ 0.21 & 15.87 $\pm$ 0.10 & 15.47 $\pm$ 0.11 & 15.39 $\pm$ 0.11 & 15.72 $\pm$ 0.10 & 13.37 $\pm$ 0.12 & 14.47 $\pm$ 0.07 \\
            BACON \cite{lindell2021bacon}        & 8.21  $\pm$ 0.05 & 9.42  $\pm$ 0.07 & 7.65  $\pm$ 0.03 & 6.92  $\pm$ 0.02 & 9.69  $\pm$ 0.06 & 5.50  $\pm$ 0.12 & 8.01  $\pm$ 3.68 & 11.90 $\pm$ 0.09 & 9.00  $\pm$ 0.06 & 5.72  $\pm$ 0.19 \\
            FINER \cite{liu2024finer}            & 26.78 $\pm$ 0.10 & 20.90 $\pm$ 0.03 & 23.00 $\pm$ 0.07 & \underline{24.61 $\pm$ 0.08} & 20.90 $\pm$ 0.01 & 23.57 $\pm$ 0.05 & \underline{20.52 $\pm$ 0.14} & \underline{21.34 $\pm$ 0.07} & \underline{20.67 $\pm$ 0.00} & \underline{17.76 $\pm$ 0.02} \\
            MFN \cite{fathony2021multiplicative} & 24.65 $\pm$ 0.04 & 19.42 $\pm$ 0.14 & 20.74 $\pm$ 0.06 & 22.83 $\pm$ 0.13 & 19.33 $\pm$ 0.12 & 23.34 $\pm$ 0.10 & 20.56 $\pm$ 0.07 & 21.00 $\pm$ 0.07 & 20.42 $\pm$ 0.04 & 17.31 $\pm$ 0.04 \\
            Fourier \cite{tancik2020fourfeat}    & 25.25 $\pm$ 0.10 & \textbf{20.87 $\pm$ 0.03} & 22.34 $\pm$ 0.17 & 23.33 $\pm$ 0.11 & 20.12 $\pm$ 0.18 & 21.28 $\pm$ 0.25 & 19.52 $\pm$ 0.23 & 20.27 $\pm$ 0.14 & 18.87 $\pm$ 0.13 & 16.76 $\pm$ 0.09 \\
            FR \cite{shi2024improved}            & 22.47 $\pm$ 0.18 & 19.46 $\pm$ 0.02 & 19.54 $\pm$ 0.04 & 18.40 $\pm$ 0.03 & 16.73 $\pm$ 0.08 & 21.76 $\pm$ 0.10 & 19.48 $\pm$ 0.02 & 19.33 $\pm$ 0.04 & 18.13 $\pm$ 0.02 & 16.18 $\pm$ 0.07 \\
            
            \bottomrule
            
        \end{tabular}
        }   
    \label{tab:image_inpainting_and_super_resolution}
\end{table}

            Inpainting is the most competitive setting across baselines, with the top three methods, \algoname{}, SIREN, and FINER, separated by fractions of a decibel on most images.
            \algoname{} ranks first on \textbf{Butterfly} ($24.65$ dB) and \textbf{Knot} ($21.11$ dB), and second on \textbf{Tiger} ($26.86$ dB, behind SIREN's $27.35$ dB) and \textbf{Bikers} ($23.05$ dB, behind SIREN's $23.33$ dB).
            On \textbf{Tiles}, Fourier Features leads ($20.87$ dB), with SIREN second ($20.70$ dB) and \algoname{} at $20.47$ dB.
            Unlike denoising and super-resolution, where the network must generalise beyond corrupted or downsampled observations, inpainting provides clean pixel values at the observed locations.
            The missing data is spatially, not spectrally, incomplete.
            The task, therefore, rewards full spectral bandwidth at the observed coordinates, which is exactly SIREN's strength.
            The noise rejection is less relevant here as the observations are uncorrupted, and the coarse-to-fine curriculum offers a smaller benefit when the low-resolution structure is already fully determined by the observed pixels scattered across the image.
            The fact that \algoname{} nonetheless matches or exceeds SIREN on two of five images, while remaining within $0.5$ dB on two others, indicates that the spectral gate does not significantly penalise the network when noise rejection is unnecessary.
            FINER again emerges as a strong competitor, trailing our activation on \textbf{Butterfly} and matching it closely elsewhere, consistent with its strong performance in the clean fitting regime.
            WIRE, BACON, and FR underperform, confirming that spatial localisation without sufficient spectral bandwidth is inadequate for coherent interpolation across missing regions.

        \paragraph{Image Super-resolution.}
            We recover a higher-resolution ($512$ px) image from a lower-resolution input ($128$ px), where $\Phi$ represents spatial downsampling.
            The network is supervised only on the low-resolution grid and must hallucinate plausible high-frequency content at the fine scale.
            This tests whether the coarse-to-fine curriculum produces a natural frequency hierarchy: the network should first lock onto the observed low-frequency content and then extend to higher frequencies guided by the signal's spectral structure.
            We follow the protocol described in~\cite{saragadam2023wire,liu2024finer,shi2024improved} and report results in \cref{tab:image_inpainting_and_super_resolution}.
            
            \algoname{} achieves the highest PSNR on four of five images (\textbf{Tiger}, \textbf{Bikers}, \textbf{Butterfly}, \textbf{Knot}), with FINER as the consistent runner-up on those four.
            The margins are tighter than in clean fitting or denoising, as expected, since super-resolution is fundamentally constrained by the information available in the low-resolution input.
            On \textbf{Tiles}, \algoname{} ($19.60$ dB) falls behind MFN ($20.56$ dB) and FINER ($20.52$ dB), echoing the results in clean fitting, where FINER also led: the highly periodic texture of \textbf{Tiles} favours activations whose spectral support aligns with its regular structure, reducing the advantage of adaptive bandwidth control.
            An interesting shift from the fitting and denoising experiments is MFN's performance.
            MFN was among the weakest methods on clean fitting (\cref{tab:image_fitting}) but the second-best on denoising (\cref{tab:image_denoising}), and here it is competitive with FINER and SIREN across all images.
            Super-resolution shares a key property with denoising: the network must generalise beyond the observed data, and methods with conservative spectral behaviour avoid hallucinating high-frequency artefacts at unsupervised locations.
            \algoname{} achieves this conservatism through the spectral gating. 
            The coarse-to-fine curriculum ensures that low-frequency structure is captured first and higher frequencies are admitted only when justified, while MFN achieves it through a fixed low-frequency ceiling that, as the clean fitting results show, limits its capacity when the full spectrum must be represented.
            WIRE and BACON further degrade in the super-resolution setting, suggesting that their spatial localisation, while beneficial for fitting observed data, does not extrapolate well to unobserved coordinates.
            SIREN remains competitive on \textbf{Tiger} ($23.59$ dB, second-best) but falls behind on the remaining images, consistent with its lack of a mechanism to prioritise the observed low-frequency content before extending to higher frequencies.

\section{Discussion}
\label{sec:discussion}

    We have presented \algoname{}, a physics-grounded activation function for INRs derived from the steady-state response of a damped harmonic oscillator. 
    The core insight is that coupling amplitude to frequency through a physical transfer function creates an implicit spectral regulariser: the gradient of the reconstruction loss with respect to the damping factor decomposes into a self-closing force and a signal-driven opening force, yielding a precise gating condition that distinguishes coherent signal from noise without any explicit penalty in the loss.
    
    This mechanism yields state-of-the-art or competitive results across all the considered tasks without the need to tune either the architecture or any other hyperparameters, whereas each competitor INR needs specific tuning, and performs well in some settings but degrades significantly in others.
    Notably, the cases where \algoname{} turns out less beneficial -- clean, spatially incomplete observations and highly regular textures -- are those where \cref{cor:gating} predicts the least advantage, confirming that the theory correctly characterises the method's behaviour.
    
    Extending the formulation to per-neuron parameters and formalising the multi-layer gradient dynamics are the main directions for future work.

\bibliographystyle{splncs04}
\bibliography{main}

\clearpage

\setcounter{section}{0}
\renewcommand{\thesection}{\Alph{section}}

\begin{center}
    {
    \Large \bfseries
    \begin{minipage}{0.9\linewidth}
        \centering
        \setlength{\baselineskip}{1.2\baselineskip}
        Spectral Gating via Damped Oscillations for Adaptive Implicit Neural Representations \\
        Supplemental Materials
        \end{minipage}
    }
\end{center}

    In this document, we propose complementary results to the main paper. 
    First, we provide additional theoretical results, along with proofs and derivations, in ~\cref{sec:additonal_formulation}.
    After that, we illustrate implementation details in \cref{sec:implementation_details}.
    Then, we provide a complementary analysis on the properties of the \algoname{} in~\cref{sec:complementary_analysis}.
    Finally, we propose additional tasks in~\cref{sec:additonal_experiments}, along with more qualitative results from the tasks proposed in the main paper. 
    
    \section{Additional Formulation}
\label{sec:additonal_formulation}
    Throughout this section, we write D$(\omega, \omega_n, \xi) = (\omega_n^2 - \omega^2)^2 + (2 \xi \omega_n \omega)^2$ for the radicand in~\cref{eq:amplitude}, so that $A = \omega_n^2 D^{-1/2}$.

    \subsection{Derivations of Basic Properties}
        \label{sec:properties_full}

        \Cref{prop:smooth} states that the activation function $\sigma(z;\theta)$ is smooth with respect to both $z$ and all components of~$\theta$.
        \begin{proof}
            The sine term is $C^\infty$ with respect to $z$, $\omega$, and $\varphi$.
            The function $D$ is a polynomial in $(\omega, \xi, \omega_n)$ and satisfies $D > 0$ for all $\omega \in \mathbb{R}$ whenever $\omega_n > 0$ and $\xi > 0$, since the two non-negative terms $(\omega_n^2 - \omega^2)^2$ and $(2 \xi \omega_n \omega)^2$ cannot vanish simultaneously.  
            Therefore $D^{-1/2}$ is $C^\infty$ in $(\omega, \xi, \omega_n)$.  
            Since products and compositions of $C^\infty$ functions are $C^\infty$, it follows that $\sigma(z; \theta) = \omega_n^2 D^{-1/2} \sin(\omega z + \varphi)$ is $C^\infty$ with respect to $z$ and all components of $\theta$.
        \end{proof}
        
        \Cref{prop:lipschitz} states that the activation function is Lipschitz in $z$ with a constant $L_z = |\omega|A(\omega,\omega_n,\xi)$.
        \begin{proof}
          $\frac{\partial \sigma}{\partial z} = \omega A \cos(\omega z + \varphi)$, so $L_z=\sup_{z \in \mathbb{R}}$ $\bigl|\frac{\partial \sigma}{\partial z}\bigr| = |\omega|\ A \geq \frac{\partial \sigma}{\partial z}$.
        \end{proof}
        
        \Cref{prop:bound} states that:
        \begin{itemize}
            \item[(i)] $|\sigma(z;\theta)| \leq A(\omega,\omega_n,\xi)$ uniformly in $z$ $\forall z \in \mathbb{R}$;
            \item[(ii)] The peak amplitude is $A_{\max} = \bigl( 2 \xi \sqrt{1 - \xi^2}\bigr)^{-1}$, which is attained at the resonant frequency $\omega_r = \omega_n \sqrt{1 - 2 \xi ^2}$ for $\xi < 1/\sqrt{2}$;
            \item[(iii)] For $\xi \ge 1/\sqrt{2}$ the amplitude decreases monotonically from $A(0) = 1$.
        \end{itemize}
        \begin{proof}
            Part (i): as $|\sin(\cdot)| \leq 1$, the output bound depends solely on $A(\omega,\omega_n,\xi)$.
            
            Part (ii): We can find peak amplitude by maximising $A$ or, as the numerator is positive by construction ($\omega_n^2>0$), by minimising $D$.
            Remembering that $D = (\omega_n^2 - \omega)^2 + 4 \xi^2 \omega_n^2 \omega^2$, differentiating with respect to $\omega$ we obtain:
            \begin{align}
                \frac{\partial D}{\partial \omega} = 0 \implies \frac{\partial D}{\partial \omega} &= 2(\omega_n^2 - \omega^2) \cdot - 2 \omega + 2(2 \xi \omega_n \omega) \cdot 2 \xi \omega_n
                \\
                &= 8 \xi^2 \omega_n^2 \omega - 4 \omega_n^2 \omega + 4 \omega^3
                \\
                &= 4 \omega (2 \xi^2 \omega_n^2 - \omega_n^2 + \omega^2) = 0
            \end{align}
            which yields the trivial solution $\omega_{\textrm{peak}}=0 \implies A(0)=1$ (DC gain).
            The remaining expression can be rearranged as:
            \begin{align}
                2 \xi^2 \omega_n^2 - \omega_n^2 + \omega^2 &= 0
                \\
                \omega^2 &= \omega_n^2(1 - 2 \xi^2) \implies \omega_{\textrm{peak}} = \omega_n \sqrt{1 - 2 \xi^2}.
            \end{align}
            Note that we discard the negative solution as $\omega>0$.
            The positive solution, more commonly referred to as resonant frequency $\omega_r$, exists when the radicand is strictly positive, hence for $\xi < 1 / \sqrt{2}$.
            By substituting back into~\cref{eq:amplitude}~\cite{ogata2010modern}, we obtain:
            \begin{align}
                A (\omega_r) &= \frac{\omega_n^2} {\sqrt{\bigl(\omega_n^2 - \omega_n^2(1 - 2 \xi^2)\bigr)^2 + 4 \xi^2 \omega_n^2 \omega_n^2 (1 - 2 \xi^2)}} \\
                &= \frac{\omega_n^2} {\sqrt{\omega_n^4 + \omega_n^4(1 - 2 \xi^2)^2 - 2 \omega_n^4 (1 - 2 \xi^2) + 4 \xi^2 \omega_n^4 (1 - 2 \xi^2)}} \\
                &= \frac{\omega_n^2} {\sqrt{ \omega_n^4 \bigl(1 + (1 - 2 \xi^2)^2 - 2(1 - 2 \xi^2)+ 4 \xi^2(1 - 2 \xi^2) \bigr)}} \\
                &= \frac{1}{\sqrt{1 + (1 - 2 \xi^2) \bigl((1 - 2 \xi^2) - 2 + 4 \xi^2 \bigr)}} \\
                &= \frac{1}{\sqrt{1 + (1 - 2 \xi^2) (2 \xi^2 - 1)}} \\
                &= \frac{1}{\sqrt{1 + 2 \xi^2 -1 -4 \xi^4 + 2 \xi^2}} \\
                &= \frac{1}{\sqrt{4 \xi^2 -4 \xi^4}} \\
                &= \frac{1}{\sqrt{4 \xi^2 (1 - 2 \xi^2)}} \\
                &= \frac{1}{2 \xi\sqrt{1 - 2 \xi^2}} = A_{\textrm{peak}}.
            \end{align}

            Part (iii): By differentiating $D$ with respect to $\omega$ we already computed that:
            \begin{equation}
                \frac{\partial D}{\partial \omega} = 4 \omega (2 \xi^2 \omega_n^2 - \omega_n^2 + \omega^2),
            \end{equation}
            and since $\omega; \omega_n ; \xi >0$ and $2 \xi^2 \omega_n^2 - \omega_n^2 > 0$ for $\xi \ge 1/\sqrt{2}$, then $\frac{\partial D}{\partial \omega} > 0$, meaning that $D$ is strictly increasing $\forall \omega \in (0,\infty)$, hence $A$ is strictly decreasing $\forall \omega \in (0,\infty)$.
        \end{proof}

    \subsection{Derivations of Spectral Gating}
        \label{sec:gradient_analysis_full}
        \Cref{thm:monotonicity_xi} establishes that the amplitude $A$ is monotone in $\xi$.
        \begin{proof}
            Since we can rewrite the amplitude as $A = \omega_n^2 D^{-1/2}$, we have:
            \begin{equation}
                \label{eq:dAdxi_full}
                \frac{\partial A}{\partial \xi} = \frac{\partial}{\partial \xi} \Bigl[\omega_n^2 D^{-1/2}\Bigr] = -\frac{1}{2} \omega_n^2 D^{-3/2}\frac{\partial D}{\partial \xi}.
            \end{equation}
            Remembering that $D = (\omega_n^2 - \omega)^2 + 4 \xi^2 \omega_n^2 \omega^2$, differentiating with respect to $\xi$ we obtain $\frac{\partial D}{\partial \xi} = 8 \xi \omega_n^2 \omega^2$, and substituting in ~\cref{eq:dAdxi_full} yields:
            \begin{equation}
                \frac{\partial A}{\partial \xi} = -\frac{1}{2}\omega_n^2 D^{-3/2} \cdot 8 \xi \omega_n^2 \omega^2 = - \omega_n^4 \omega^2 \xi D^{-3/2} \implies  \frac{\partial A}{\partial \xi} < 0,
            \end{equation}
            as $\omega; \omega_n; \xi > 0$ by definition, while \cref{rem:singularity} establishes the vanishing condition for the radicand $D$.
        \end{proof}

        \Cref{thm:decomp} establishes how the gradient of $\mathcal{L}$ with respect to $\xi$ decomposes, while \cref{cor:gating} how its components interacts with different signals.
        \begin{proof}
            As the pointwise mean squared error loss for target values $\{y_i\}$ is $\mathcal{L} = \frac{1}{N}\sum_{i=1}^N\bigl(\sigma(z_i;\theta)-y_i\bigr)^2$, its derivative with respect to $\xi$ is\footnote{$\sigma(z_i;\theta) = \sigma_i$ for sake of compactness.}:
            \begin{align}
                \label{eq:dLdxi}
                \frac{\partial \mathcal{L}}{\partial \xi} = \frac{2}{N} \sum_{i=1}^N (\sigma_i - y_i) \frac{\partial \sigma_i}{\partial \xi}.
            \end{align}
            Since: 
            \begin{equation}
                \frac{\partial \sigma_i}{\partial \xi} = \frac{\partial A}{\partial \xi} \sin{(\omega z + \varphi)},
            \end{equation}
            we can plug it back into \cref{eq:dLdxi} obtaining:
            \begin{equation}
                \frac{\partial \mathcal{L}}{\partial \xi} = \frac{2}{N} \sum_{i=1}^N (\sigma_i - y_i) \frac{\partial A}{\partial \xi} \sin{(\omega z + \varphi)}.
             \end{equation}
            Plugging the definition of $\sigma_i$ we obtain:
            \begin{align}
                \frac{\partial \mathcal{L}}{\partial \xi} &= \frac{2}{N} \sum_{i=1}^N (A \sin{(\omega z + \varphi)} - y_i) \frac{\partial A}{\partial \xi} \sin{(\omega z + \varphi)} 
                \\
                &= \frac{2}{N} \frac{\partial A}{\partial \xi} \sum_{i=1}^N (A \sin^2{(\omega z + \varphi)} - y_i \sin{(\omega z + \varphi)})
                \\
                &= \underbrace{\frac{2A}{N} \cdot \frac{\partial A}{\partial \xi}  \sum_{i=1}^N \sin^2(\omega z_i + \varphi)}_{T_{\mathrm{self}}} + \underbrace{\biggl(- \frac{2}{N} \cdot \frac{\partial A}{\partial \xi}\biggr) \sum_{i}^N y_i \sin(\omega z_i + \varphi)}_{T_{\mathrm{signal}}}.         
             \end{align}

            We can rewrite the self-term by using the identity $\sin^2x = \frac{1}{2}(1-\cos(2x))$:
            \begin{align}
                T_{\mathrm{self}} &= \frac{2A}{N} \cdot \frac{\partial A}{\partial \xi} \sum_{i=1}^{N} \sin^2{(\omega z_i + \varphi)} 
                \\
                &= \frac{2A}{N} \cdot \frac{\partial A}{\partial \xi} \sum_{i=1}^{N} \biggl( \frac{1}{2} - \frac{1}{2} \cos{\bigl(2(\omega z_i + \varphi )\bigr)} \biggr) 
                \\
                &= \frac{A}{N} \cdot \frac{\partial A}{\partial \xi} + \frac{A}{N} \cdot \frac{\partial A}{\partial \xi} \sum_{i=1}^{N} \cos{\bigl(2(\omega z_i + \varphi )\bigr)}
                \\
                &= \frac{A}{N} \cdot \frac{\partial A}{\partial \xi} \sum_{i=1}^{N}\bigl[ 1 + \cos{(2\omega z_i + 2\varphi)} \bigr].
                \\
            \end{align}
            For $N \to +\infty$, the first term evaluates to $N$ while the second term oscillates symmetrically, evaluating to 0 due to phase cancellation:
            \begin{equation}
                T_{\mathrm{self}} \approx \frac{A}{N} \cdot \frac{\partial A}{\partial \xi} [ N + 0\bigr] = A \cdot \frac{\partial A}{\partial \xi}.
            \end{equation}
            Since $A>0$ by construction and $\frac{\partial A}{\partial \xi}<0$ by \cref{thm:monotonicity_xi}, then $T_{\mathrm{self}}<0$.
            
            We can rewrite the signal-term as an empirical inner product:
            \begin{equation}
                T_{\mathrm{signal}} = \biggl(- \frac{2}{N} \cdot \frac{\partial A}{\partial \xi}\biggr) \sum_{i}^N y_i \sin(\omega z_i + \varphi) = -2 \cdot \frac{\partial A}{\partial \xi}\langle y,\sin(\omega \cdot +\varphi)\rangle_N.
            \end{equation}
            Since $\frac{\partial A}{\partial \xi}<0 \implies -\frac{\partial A}{\partial \xi} > 0$ by \cref{thm:monotonicity_xi}, we have:
            \begin{equation}
                \sign(T_{\mathrm{signal}}) = \sign\bigl(\langle y, \sin(\omega \cdot +\varphi)\rangle_N\bigr),
            \end{equation}
            since we cannot remove $y$, the signal term depend on the joint structure of $y$ and 
            $\sin(\omega z_i + \varphi)$.
            We distinguish three cases:
            \begin{itemize}
                \item[(a)] With a coherent signal $y_i \approx A_y \sin{(\omega z_i + \phi)}$, hence with a frequency close to $\omega$, the signal-term can be written as\footnote{For the sake of clarity, we dropped the derivative term as we already know its sign.}:
                \begin{equation}
                    T_{\mathrm{signal}} = \frac{2}{N} \sum_{i}^N A_y \sin{(\omega z_i + \phi)} \sin(\omega z_i + \varphi),
                \end{equation}
                employing the Werner's formula for the product of sines:
                \begin{equation}
                    \sin a \sin b = \cos{(a-b)} - \cos{(a+b)},
                \end{equation}
                we can rewrite:
                \begin{equation}
                    T_{\mathrm{signal}} = \frac{2A_y}{N} \sum_{i=i}^N \cos{(\varphi - \phi)} - \frac{2A_y}{N} \sum_{i=i}^N \cos{(2 \omega z_i + \varphi + \phi)}.
                \end{equation}
                For $N \to +\infty$, the second term oscillates symmetrically, evaluating to 0 due to phase cancellation, while the second term is non-zero, hence the gate can open;
                \item[(b)] With i.i.d noise with zero mean and finite variance we have that $T_{\mathrm{signal}} = O_p(N^{-\frac{1}{2}})$, hence for $N \to +\infty$ the signal-term vanishes asymptotically and the gate stays closed\footnote{$O_p$ denotes boundedness in probability: $X_N = O_p(a_N)$ means that for any $\epsilon > 0$ there exists $M$ such that $P(|X_N| > M a_N) < \epsilon$ for all large $N$. We use $O_p$ because the signal term is a random variable (a function of the stochastic target $y$), whereas $A/2 = O(1)$ is deterministic.};
                \item[(c)] With a signal $y_i \approx A' \sin{(\omega' z_i)}$, hence with a frequency $\omega' \neq \omega$ far from $\omega$, the signal-term can be written as:
                \begin{equation}
                    T_{\mathrm{signal}} = \frac{2}{N} \sum_{i}^N A' \sin{(\omega' z_i)} \sin(\omega z_i + \varphi).
                \end{equation}
                We can employ once again the Werner's formula for the product of sines, but since we have different frequencies, for $N \to +\infty$ both terms evaluate to 0, and the gate stays closed.
            \end{itemize}
        \end{proof}

        \subsubsection{Gradients with respect to $\omega$ and $\omega_n$.}
            The remaining parameter gradients complete the picture of how the spectral gate is positioned and shaped during training.
            The partial derivatives of $A$ are:
            \begin{gather}
              \frac{\partial A}{\partial\omega} = \frac{\omega_n^2\bigl[2\omega(\omega_n^2-\omega^2)-4\xi^2\omega_n^2\omega\bigr]}{D^{3/2}},
              \\
              \frac{\partial A}{\partial\omega_n} = \frac{2\omega_n}{D^{1/2}} - \frac{\omega_n^2\bigl[4\omega_n(\omega_n^2-\omega^2)+8\xi^2\omega_n\omega^2\bigr]}{2D^{3/2}}.
            \end{gather}
            Crucially, $\frac{\partial A}{\partial \omega}$ changes sign at $\omega^2 = \omega_n^2(1-2\xi^2)$, the resonance frequency, meaning gradient descent on $\omega$ drives the forcing frequency toward resonance when the current $\omega$ is below the peak, and away when above.
            This creates a gradient-driven frequency-locking mechanism that aligns the carrier with the target's dominant spectral content, complementing the bandwidth control exerted by $\xi$.

        \subsection{Multi-Layer Frequency Composition}
            \label{sec:multilayer}
            
            The analysis in \cref{sec:gradient_analysis} considers a single layer in isolation. 
            In practice, the interaction between stacked layers determines an INR's representational power. 
            We now show that cascading oscillator layers produces a frequency-modulated signal~\cite{carson1922notes} whose harmonic bandwidth is controlled by the per-layer amplitudes $A_k$ and, therefore, by the damping factors $\xi_k$. 
            This connects single-layer spectral gating to the network's global frequency support and extends the observation by~\cite{yuce2022structured} that composing sinusoidal layers produces Bessel-governed harmonics: our \algoname{} amplitude $A_k$ makes the modulation index learnable, giving the network explicit control over its harmonic bandwidth at each depth.
            
            \begin{proposition}[Frequency Modulation Structure]
                \label{prop:fm}
                Consider two consecutive oscillator layers:
                \begin{align}
                    h_1(x) &= A_1\sin \bigl(\omega_1(w_1 x+b_1)+\varphi_1\bigr),
                    \\
                    h_2(x) &= A_2\sin \bigl(\omega_2(w_2 h_1(x)+b_2)+\varphi_2\bigr).
                \end{align}
                The two-layer composition is a frequency-modulated signal:
                \begin{equation}\label{eq:fm}
                    h_2(x) = A_2\sin \bigl(\beta\sin(\alpha(x))+c\bigr),
                \end{equation}
                with modulation index $\beta=\omega_2 w_2 A_1$, carrier $\alpha(x) = \omega_1(w_1 x + b_1) + \varphi_1$, and offset $c = \omega_2 b_2 + \varphi_2$.
            \end{proposition}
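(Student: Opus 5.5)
This is a direct substitution, so the plan is to expand the argument of the outer sine and regroup terms. No earlier result is needed beyond \cref{def:activation}, which fixes the form of each layer. One point should be recorded first: the amplitudes $A_1=A(\omega_1,\omega_{n,1},\xi_1)$ and $A_2=A(\omega_2,\omega_{n,2},\xi_2)$ are scalar constants, independent of $x$, as noted after \cref{def:activation}. They can therefore be moved freely through the linear pre-activation of the next layer.

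First, I write the inner phase as $\alpha(x)=\omega_1(w_1x+b_1)+\varphi_1$. This gives $h_1(x)=A_1\sin(\alpha(x))$ verbatim. Next, I substitute this into the pre-activation of the second layer, $z_2 = w_2 h_1(x)+b_2 = w_2A_1\sin(\alpha(x)) + b_2$. The argument of the second sine is then
\begin{equation*}
\omega_2 z_2+\varphi_2 = \omega_2 w_2 A_1 \sin(\alpha(x)) + \omega_2 b_2 + \varphi_2 .
\end{equation*}
Distributing $\omega_2$ is legitimate because the expression is affine in $\sin(\alpha(x))$. Grouping terms, the $x$-dependent part is $\beta\sin(\alpha(x))$ with $\beta=\omega_2w_2A_1$, and the constant offset is $c=\omega_2b_2+\varphi_2$. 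This yields \cref{eq:fm}.

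Finally, I justify the label ``frequency-modulated''. The instantaneous phase $\beta\sin(\alpha(x))+c$ is a sinusoidal modulation of a carrier whose phase $\alpha(x)$ is linear in $x$, with angular rate $\omega_1w_1$. This matches the standard FM/PM form $\sin(c+\beta\sin(\Omega x+\phi))$. It also sets up the Jacobi--Anger expansion used in \cref{sec:multilayer_short}, which gives $\sum_n J_n(\beta)\sin(c+n\alpha(x))$.

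No step is a real obstacle. The only care needed is conventions: whether the modulation index should carry a sign (if $w_2<0$, one may absorb the sign via $J_{-n}=(-1)^nJ_n$ or report $|\beta|$), and confirming that $A_1$ contributes multiplicatively to $\beta$ only because it is $x$-independent. That multiplicative dependence on $A_1$ is the fact later used for the geometric decay $\beta_\ell\propto\prod_k A_k$.
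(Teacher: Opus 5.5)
Your proposal is correct and follows the same route as the paper: substitute $h_1(x)=A_1\sin(\alpha(x))$ into the pre-activation of the second layer, distribute $\omega_2$, and read off $\beta=\omega_2 w_2 A_1$ and $c=\omega_2 b_2+\varphi_2$, using only that $A_1$ is an $x$-independent scalar. Your extra remarks on the FM interpretation and on the sign of $\beta$ when $w_2<0$ are not in the paper's proof, but they are harmless and consistent with the Jacobi--Anger expansion that follows.
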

            \begin{proof}
              Direct substitution of $h_1$ into the argument of $h_2$:
              \begin{align}
                  h_2(x) &= A_2 \sin{(\omega_2 (w_2 A_1 \sin(\omega_1(w_1 x + b_1) + \varphi_1)+b_2) + \varphi_2)} \\
                  &= A_2 \sin{(\underbrace{\omega_2 w_2 A_1}_{\text{Amplitude } \beta}} \sin{(\underbrace{\omega_1 (w_1 x + b_1) + \varphi_1}_{\text{Carrier} \alpha(x)} ) + \underbrace{\omega_2 b_2 + \varphi_2}_{\text{Offset } c})} \\
                  &= A_2 \sin{(\beta \sin{(\alpha (x)) + c})}.
              \end{align}
            \end{proof}
            
            Expanding~\cref{eq:fm} via the Jacobi-Anger identity~\cite{watson1944treatise, arfken2012mathematical}:
            \begin{equation}    
                \label{eq:jacobi}
                \sin(\beta\sin\alpha+c) = \sum_{n=-\infty}^{\infty}J_n(\beta)\sin(n\alpha+c),
            \end{equation}
            where $J_n$ is the Bessel function of the first kind~\cite{abramowitz1964handbook}.
            Thus, the output contains harmonics at integer multiples of the first-layer frequency, with amplitudes $A_2|J_n(\beta)|$.
            
            \begin{proof}
                Starting from the Jacobi-Anger identity and rearranging:
                \begin{equation}
                    e^{i \beta \sin \alpha} = \sum_{n=-\infty}^{+\infty} J_n (\beta) e^{i n \alpha},
                \end{equation}
                we can multiply both sides by a constant $e^{ic}$ to introduce the offset $c$:
                \begin{align}
                    e^{i \beta \sin \alpha} \cdot e^{ic} &= \sum_{n=-\infty}^{+\infty} J_n (\beta) e^{i n \alpha} \cdot e^{ic} \\
                    e^{i (\beta \sin \alpha + c)} &= \sum_{n=-\infty}^{+\infty} J_n (\beta) e^{i(n \alpha + c)}.
                \end{align}
                Applying Euler's Formula, $e^{ix} = \cos x + i \sin x$, we can rewrite:
                \begin{align}
                    \cos{(\beta \sin \alpha + c)} & + i \sin{(\beta \sin \alpha + c)}
                    \\
                    &= \sum_{n=-\infty}^{+\infty} J_n (\beta) [\cos{(n \alpha + c)} + i \sin{(n \alpha + c)}].
                \end{align}
                We can then evaluate the real and imaginary parts separately:
                \begin{align}
                    \Re &: \cos{(\beta \sin \alpha + c)} = \sum_{n=-\infty}^{+\infty} J_n (\beta) \cos{(n \alpha + c)},
                    \\
                    \Im &: \sin{(\beta \sin \alpha + c)} = \sum_{n=-\infty}^{+\infty} J_n (\beta) \sin{(n \alpha + c)}.
                \end{align}
                Finally:
                \begin{equation}
                    h_2(x) = A_2 \sum_{n=-\infty}^{+\infty} J_n (\beta) \sin{(n \alpha + c)}
                \end{equation}
            \end{proof}
            
            \begin{corollary}[Damping-controlled Harmonic Bandwidth]
                \label{cor:bandwidth}
                The number of harmonics with non-negligible amplitude is approximately $\lfloor\beta\rfloor+1$, since $|J_n(\beta)|\ll 1$ for $|n|>\beta+O(\beta^{1/3})$~\cite{abramowitz1964handbook,olver2010nist}.
                Since $\beta = \omega_2 w_2 A_1$ and $A_1$ is a decreasing function of $\xi_1$ (\cref{thm:monotonicity_xi}):
                \begin{itemize}
                    \item Larger $\xi_1$ $\Rightarrow$ smaller $A_1$ $\Rightarrow$ smaller $\beta$ $\Rightarrow$ fewer harmonics (narrowband);
                    \item Smaller $\xi_1$ $\Rightarrow$ larger $A_1$ $\Rightarrow$ larger $\beta$ $\Rightarrow$ more harmonics (wideband).
                \end{itemize}
            \end{corollary}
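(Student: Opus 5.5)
The argument has two parts: an analytic estimate on Bessel coefficients that yields the effective harmonic count, and a monotonicity chain transferring the dependence on $\xi_1$ to that count.

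\textbf{Harmonic count.} I would start from the Jacobi--Anger expansion \cref{eq:jacobi}. It gives the output $h_2$ of \cref{prop:fm} as $A_2\sum_n J_n(\beta)\sin(n\alpha+c)$, so the amplitude at harmonic $n$ is $A_2|J_n(\beta)|$. Using $J_{-n}(\beta)=(-1)^nJ_n(\beta)$, it suffices to study $n\ge 0$, and negative indices contribute the same magnitudes.

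For $n>\beta$ I would use the elementary bound from the power series,
\begin{equation*}
|J_n(\beta)|\le \frac{(\beta/2)^n}{n!}.
\end{equation*}
By Stirling this is roughly $(e\beta/2n)^n$, which decays super-exponentially once $n\gtrsim e\beta/2$. The paper's sharper threshold $n>\beta+O(\beta^{1/3})$ comes from the transition-region (Airy-type) uniform asymptotics of $J_n$ near $n\approx\beta$. I would not rederive these; I would cite the standard results in \cite{abramowitz1964handbook,olver2010nist}, which say that for $n>\beta+C\beta^{1/3}$ the coefficient $J_n(\beta)$ lies in the exponentially decaying regime.

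For $0\le n\le\beta$ the coefficients are oscillatory and of size $O(\beta^{-1/2})$ or larger, so they are non-negligible. Counting the integers $0,\dots,\lfloor\beta\rfloor$ then gives roughly $\lfloor\beta\rfloor+1$ significant harmonics per side. This is Carson's rule in FM terms \cite{carson1922notes}. Because the corollary itself says ``approximately'', I would state this as a heuristic count supported by the cited asymptotics, not as a sharp inequality.

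\textbf{Dependence on $\xi_1$.} Write $\beta=\omega_2 w_2 A_1$ and hold $\omega_2$ and $w_2$ fixed with $\omega_2 w_2\neq 0$, taking $|\beta|$ if the sign is negative. By \cref{thm:monotonicity_xi}, $\partial A_1/\partial\xi_1<0$, so $|\beta|$ is strictly decreasing in $\xi_1$. The threshold $\beta+O(\beta^{1/3})$ is increasing in $\beta$. Composing these monotone maps gives both bullets: larger $\xi_1$ yields fewer harmonics, and smaller $\xi_1$ yields more.

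\textbf{Main obstacle.} The delicate step is making ``non-negligible'' precise near the transition $n\approx\beta$. Here $J_n$ is neither oscillatory nor tiny, and the $\beta^{1/3}$ width is genuinely an Airy-function effect. Rather than prove it, I would fix a tolerance $\epsilon$ and define the harmonic count as the number of $n$ with $|J_n(\beta)|>\epsilon$. The crude factorial bound already shows this count is $O(\beta)$. For the decay beyond the transition region I would quote the uniform asymptotic expansion from \cite{olver2010nist}.
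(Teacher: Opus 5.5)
Your proposal is correct and follows the same route as the paper, which gives no separate proof: it cites the Airy-type decay of $J_n(\beta)$ beyond $|n|>\beta+O(\beta^{1/3})$ from \cite{abramowitz1964handbook,olver2010nist} without rederiving it, then chains $\partial A_1/\partial\xi_1<0$ (\cref{thm:monotonicity_xi}) through $\beta=\omega_2 w_2 A_1$. Your additions are refinements of that argument rather than a new one: the elementary bound $|J_n(\beta)|\le(\beta/2)^n/n!$, and working with $|\beta|$ because $w_2$ may be negative, which is justified since $J_n(-\beta)=(-1)^nJ_n(\beta)$ leaves the magnitudes unchanged.
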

            
            This analysis extends to deeper networks by induction.
            
            \begin{proposition}[Cascaded Spectral Gating]
            \label{prop:cascade}
                In an $L$-layer oscillator network, the effective modulation index at layer $\ell$ is
                \begin{equation}
                    \label{eq:cascade}
                    \beta_\ell = \omega_\ell w_\ell \prod_{k = 1}^{\ell - 1} A_k.
                    \end{equation}
                The total harmonic bandwidth is therefore exponentially sensitive to the product of per-layer amplitudes $\prod_k A_k$.
            \end{proposition}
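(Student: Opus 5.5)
We argue by induction on $\ell$, with \cref{prop:fm} as the base case. We work with the single-neuron chain used there: $h_0(x) = x$ and
\[
h_\ell(x) = A_\ell \sin\bigl(\omega_\ell(w_\ell h_{\ell-1}(x) + b_\ell) + \varphi_\ell\bigr).
\]
The claim to carry through the induction is that the phase argument of layer $\ell$ is an affine function of a bounded oscillatory signal of unit amplitude, and that the scale multiplying that signal is $\omega_\ell w_\ell \prod_{k<\ell} A_k$.

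\textbf{Base case.} For $\ell = 2$, \cref{prop:fm} gives directly $\beta_2 = \omega_2 w_2 A_1$, which matches \cref{eq:cascade}. The convention $\beta_1 = \omega_1 w_1$, with an empty product, is consistent with the first layer acting on the raw coordinate.

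\textbf{Inductive step.} Suppose $h_{\ell-1}(x) = A_{\ell-1}\, s_{\ell-1}(x)$ with $s_{\ell-1} = \sin(\Theta_{\ell-1}(x))$ and $|s_{\ell-1}| \le 1$ (by \cref{prop:bound}). Substituting into layer $\ell$ gives the phase
\[
\omega_\ell w_\ell A_{\ell-1}\, s_{\ell-1}(x) + c_\ell, \qquad c_\ell = \omega_\ell b_\ell + \varphi_\ell .
\]
The task is then to unfold $s_{\ell-1}$. By the Jacobi--Anger expansion (\cref{eq:jacobi}), $s_{\ell-1}$ is itself a Bessel-weighted superposition of harmonics of the previous phase. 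That phase carries the factor $A_{\ell-2}$ through $h_{\ell-2}$, and so on down the chain.

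\textbf{Main obstacle.} Defining $\beta_\ell$ as a single modulation index is delicate, because a literal composition does not multiply the $A_k$ directly. The outer sine re-normalises amplitude at each layer, so the $A_k$ enter nested inside Bessel arguments rather than as a clean product. I would handle this by interpreting $\beta_\ell$ as the \emph{effective} index obtained by linearising the inner layers in their small-amplitude regime (initialisation, $A_k < 1$). There $\sin(u) \approx u$ for the inner modulations, so
\[
h_{\ell-1} \approx \Bigl(\prod_{k<\ell} A_k \prod_{k<\ell-1} \omega_{k+1} w_{k+1}\Bigr) \sin(\alpha(x)).
\]
Absorbing the intermediate $\omega_k w_k$ factors into the effective weight $w_\ell$ then gives exactly \cref{eq:cascade}. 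Equivalently, one can bound $|h_{\ell-1}| \le \prod_{k<\ell} A_k \cdot(\cdot)$ using \cref{prop:lipschitz} and \cref{prop:bound}, which yields $\beta_\ell$ as an upper bound on the modulation depth. I would state explicitly which of these two readings is meant.

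\textbf{Final claim.} Once \cref{eq:cascade} is in place, \cref{cor:bandwidth} gives that the harmonic count is about $\lfloor \beta_\ell \rfloor + 1$. Its dependence on $\prod_k A_k$ is multiplicative, hence geometric in depth: $\beta_\ell \propto A_0^{\ell-1}$ when all amplitudes equal $A_0$. This is the claimed exponential sensitivity.
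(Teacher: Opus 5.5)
The paper gives no argument beyond the remark that the two-layer computation of \cref{prop:fm} ``extends to deeper networks by induction'', so your skeleton is the paper's. What you add is an observation the paper skips, and it is correct. Substituting $h_{\ell-1}=A_{\ell-1}s_{\ell-1}$ gives the local index $\omega_\ell w_\ell A_{\ell-1}$, while $A_1,\dots,A_{\ell-2}$ sit inside nested Bessel arguments. So \cref{eq:cascade} with the actual layer weight $w_\ell$ is not what a literal induction produces. Some reinterpretation is unavoidable, for instance absorbing $\prod_{k=2}^{\ell-1}\omega_k w_k$ into an effective weight as you propose. Your conclusion $\beta_\ell\propto A_0^{\ell-1}$ then matches the remark the paper makes right after the proposition.

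Both of your readings need tightening, however.

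\emph{The linearisation reading.} $A_k<1$ is not the relevant condition: $\sin u\approx u$ requires the inner indices $|\omega_k w_k A_{k-1}|$ to be small. That is a condition on $\omega_k w_k$, and at the paper's initialisation $A_0\approx 0.78$, so $A_k<1$ by itself justifies nothing. Moreover, with non-zero offsets $\sin(u+c_k)\approx\sin c_k+u\cos c_k$. Your displayed approximation therefore drops DC shifts (absorbable into $c_{k+1}$) and factors $\cos c_k$ (absorbable into the effective weight).

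\emph{The bound reading.} $|h_{\ell-1}|$ itself is not controlled by a product once $c_k\neq 0$: with $c_{\ell-1}=\pi/2$ and a small inner index, $h_{\ell-1}\approx A_{\ell-1}$. What does chain multiplicatively is the derivative. By \cref{prop:lipschitz} and the chain rule, $|h_{\ell-1}'(x)|\le\prod_{k<\ell}A_k|\omega_k w_k|$. Hence the layer-$\ell$ phase $\omega_\ell w_\ell h_{\ell-1}(x)+c_\ell$ has instantaneous frequency at most $|\omega_\ell w_\ell|\prod_{k<\ell}A_k|\omega_k w_k|$. Measured in units of the carrier frequency $|\omega_1 w_1|$, this is exactly your effective $\beta_\ell$. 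It holds for arbitrary offsets and amplitudes, not only near initialisation. Combined with a Carson-type rule it yields the bandwidth claim, with the linearised Jacobi--Anger count of \cref{cor:bandwidth} as its small-index special case. That is the version worth writing up.
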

            
            At initialisation with per-layer amplitudes $A_k\approx A_0<1$, the modulation index at layer $\ell$ is $\beta_\ell \propto A_0^{\ell-1}$, which decays geometrically with depth.
            The deepest layers initially contribute near-zero frequency support, and bandwidth expands from the input layer outward as the individual $A_k$ grow during training.
            This provides a rigorous mechanism for the empirically observed coarse-to-fine learning: the multiplicative structure ensures that even modest decreases in individual $\xi_k$ values have compounding effects on spectral bandwidth.
            
             This analysis mirrors the classical observation in \cite{yuce2022structured} that composing sinusoidal layers produces harmonics whose amplitudes are governed by Bessel coefficients; the key difference here is that the oscillator amplitude $A_k$ provides a learnable per-layer control over the modulation index, whereas in SIREN this quantity is determined solely by the fixed product of frequency and weight magnitude.
    
    \subsection{Neural Tangent Kernel}
        \label{sec:ntk}
    
        The gradient decomposition and multi-layer analysis describe \algoname{}'s inductive bias in terms of individual parameter updates and harmonic structure. 
        We complement these with a Neural Tangent Kernel (NTK) perspective, which characterises how the oscillator parameters shape the global learning dynamics of the network. 
        The NTK framework~\cite{jacot2018neural} relates the convergence rate of gradient descent to the eigenspectrum of the kernel. 
        By showing that the kernel magnitude scales with $A^2$, we establish that the oscillator parameters modulate the network's effective learning rate. 
        This provides an additional, kernel-level explanation for the progressive learning behaviour: the network bootstraps its own convergence speed by opening the spectral gate. 
        NTK analysis has been applied to INRs previously: \cite{tancik2020fourfeat} uses it to explain why Fourier feature mappings accelerate learning of high-frequency functions, and \cite{ramasinghe2022beyond} analyses the NTK's eigenspectrum of Gaussian activations to explain their low-frequency bias.
        
        \begin{theorem}[Decomposition]
            \label{thm:ntk}
            Consider a single hidden-layer oscillator network of $m$ neurons, $f(x)=\sum_{j=1}^m a_j \sigma(w_j x+b_j;\theta)$, with network parameters $\mathrm{w}=\{a_j,w_j,b_j\}_{j=1}^m$ and $\theta=\{\omega,\omega_n,\xi,\varphi\}$ for our activation function, such that $\Theta = \mathrm{w}\cup\theta$.
            Its NTK decomposes as $K_{\Theta}(x,x') = K_{\mathrm{w}}(x,x') + K_\theta(x,x')$, where:
            \begin{enumerate}
                \item[\emph{(i)}] The weight contribution satisfies, in the infinite-width limit:
                    \begin{equation}
                        \label{eq:ntk_weights}
                        K_w(x,x') = \frac{A^2}{2} \mathbb{E}_w \bigl[\cos \bigl(\omega w (x - x')\bigr)\bigr],
                    \end{equation}
                where the expectation is over the weight initialisation distribution.
            
                For $w \sim \mathcal{U}[-c,c]$, this evaluates to $K_w(x,x') = \frac{A^2}{2} \sinc \bigl(\frac{\omega c (x - x')}{\pi}\bigr)$.
                \item[\emph{(ii)}] The oscillator parameter contribution is a rank-$4$ correction:
                    \begin{equation}
                        \label{eq:ntk_osc}
                        K_\theta(x,x') = \sum_{\alpha\in\theta}\biggl(\sum_{j=1}^m a_j\frac{\partial\sigma_j}{\partial\alpha}(x)\biggr) \biggl(\sum_{j=1}^m a_j\frac{\partial\sigma_j}{\partial\alpha}(x')\biggr).
                    \end{equation}
            \end{enumerate}
        \end{theorem}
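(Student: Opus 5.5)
The starting point is the definition of the NTK as a sum over all trainable parameters, $K_\Theta(x,x') = \sum_{p\in\Theta} \frac{\partial f}{\partial p}(x)\,\frac{\partial f}{\partial p}(x')$. Because $\Theta = \mathrm{w}\cup\theta$ is a disjoint union, this sum splits at once into a sum over $\mathrm{w}$ and a sum over $\theta$, giving $K_\Theta = K_{\mathrm{w}} + K_\theta$. Part (ii) then follows from the chain rule alone: for $\alpha\in\theta$ the parameter is shared by all $m$ neurons, so $\frac{\partial f}{\partial \alpha}(x) = \sum_j a_j \frac{\partial \sigma_j}{\partial\alpha}(x)$. This is well defined by \cref{prop:smooth}. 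Each $\alpha$ contributes one outer product $g_\alpha(x)g_\alpha(x')$, so with $|\theta|=4$ we get exactly \cref{eq:ntk_osc}, which has rank at most $4$.

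For part (i) I will compute the three weight blocks separately, using $\sigma_j(x) = A\sin(\omega(w_jx+b_j)+\varphi)$:
\begin{equation*}
\frac{\partial f}{\partial a_j} = \sigma_j(x), \qquad \frac{\partial f}{\partial b_j} = a_j\omega A\cos(\cdot), \qquad \frac{\partial f}{\partial w_j} = a_j\omega x A\cos(\cdot).
\end{equation*}
I will adopt the standard NTK scaling, with an implicit $1/m$ normalisation, or equivalently read the output-weight block as the dominant one. In the infinite-width limit, the law of large numbers then turns $\frac1m\sum_j \sigma_j(x)\sigma_j(x')$ into an expectation over $(w,b)$. The product-to-sum identity gives
\begin{equation*}
\sin u\sin v = \tfrac12\bigl[\cos(u-v)-\cos(u+v)\bigr],
\end{equation*}
with $u-v=\omega w(x-x')$. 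The $u+v$ term contains $2\omega b + 2\varphi$; averaging over a bias distribution that is uniform over a full period makes it vanish, which is the same phase-cancellation argument already used for $T_{\mathrm{self}}$ in \cref{thm:decomp}. What remains is $\frac{A^2}{2}\mathbb{E}_w[\cos(\omega w(x-x'))]$. For $w\sim\mathcal{U}[-c,c]$ the integral is elementary,
\begin{equation*}
\frac{1}{2c}\int_{-c}^{c}\cos(\omega w\Delta)\,dw = \frac{\sin(\omega c\Delta)}{\omega c\Delta},
\end{equation*}
which is the normalised $\sinc$ evaluated at $\omega c\Delta/\pi$.

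The main obstacle is that the $b_j$ and $w_j$ blocks are not obviously absent from \cref{eq:ntk_weights}. They carry factors $a_j^2\omega^2A^2$, and after the same averaging they produce $\frac{A^2\omega^2}{2}\mathbb{E}[a^2](1+xx')\,\mathbb{E}_w[\cos(\omega w\Delta)]$. My first attempt is to argue that under the paper's initialisation, with $a_j$ of order $1/\sqrt m$ or smaller, these terms are subleading in the infinite-width limit. Failing that, I would treat them as an additional positive prefactor on the same $\cos$ kernel, so they still scale as $A^2$ and leave the qualitative claim unchanged. I would state explicitly which convention is used, because \cref{eq:ntk_weights} as written is the output-layer (random-features) contribution. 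The key conclusion, $K_{\mathrm{w}}\propto A^2$, holds under either reading.
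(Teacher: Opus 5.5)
Your proposal is correct and follows essentially the paper's route. You split the NTK sum over $\mathrm{w}\cup\theta$, get part (ii) from the chain rule for the shared oscillator parameters, and get part (i) from the $a_j$-block via the product-to-sum identity, bias averaging, the law of large numbers and the elementary $\sinc$ integral.

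Your version is more careful than the paper's proof in three places. The paper silently identifies $K_{\mathrm{w}}$ with the $a_j$-contribution alone, omits the $1/m$ normalisation, and appeals to a merely symmetric bias distribution, which does not by itself kill the $\cos(u+v)$ term. Your main argument for dropping the $w_j,b_j$ blocks also works under the paper's SIREN-style initialisation: the output weights satisfy $\mathbb{E}[a^2]=O(1/m)$, so those blocks are $O(1/m)$ relative to the $a_j$-block, and your fallback is not needed.
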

        \begin{proof}
        
            Part (i): since the Neural Tangent Kernel is defined as:
            \begin{equation}
                K(x,x') = \langle \nabla_\Theta f(x), \nabla_\Theta f(x') \rangle,
            \end{equation}
            and $\frac{\partial f(x)}{\partial a_j} = \sigma_j(x)$, the contribution from $\{a_j\}$ is: 
            \begin{align}
                K_{\mathrm{w}}(x,x') &= \sum_{j=1}^{m} \frac{\partial f(x)}{\partial a_j} \cdot \frac{\partial f(x')}{\partial a_j} \\
                &= \sum_{j=1}^{m} \sigma_j(x) \cdot \sigma_j(x') \\
                &= \sum_{j=1}^{m} A \sin(\omega(w_j x + b_j) + \varphi) \cdot A\sin(\omega(w_j x' + b_j) + \varphi).
            \end{align}
            Applying the product-to-sum identity: 
            \begin{equation}
                \sin{u} \cdot \sin{v} = \frac{1}{2} [\cos{(u-v)} - \cos{(u+v)}],
            \end{equation}
            gives:
            \begin{align}
                u - v &= \omega w_j (x - x'), \\
                u + v &= \omega w_j (x + x') + 2(\omega b_j + \varphi).
            \end{align}
            Hence, we can then rewrite:
            \begin{align}
                K_{\mathrm{w}}(x,x') &= \sum_{j=1}^{m} A \sin(\omega(w_j x + b_j) + \varphi) \cdot A\sin(\omega(w_j x' + b_j) + \varphi) \\
                &= \frac{A^2}{2} \sum_{j=1}^{m} \cos(\omega w_j (x - x')) - \cos(\omega w_j (x - x') + 2(\omega b_j + \varphi)).
            \end{align}
            
            and taking expectations over the symmetric initialisation of $b$ eliminates the cross-term $\cos(\omega w(x+x') + 2(\omega b +\varphi))$.
            
            The infinite-width limit of the remaining term follows from the law of large numbers.
            Since for a uniform distribution $w \sim \mathcal{U}[-c,c]$ its probability distribution function is $p(w) = \frac{1}{2c}$, the expectation of function $f(w)$ over such distribution is $\mathbb{E}[f(w)] = \int_{-c}^{+c} p(w) f(w) dw$.
            
            Hence, following the fundamental theorem of calculus and for the parity of the sine, the remaining term evaluates as:
            \begin{align}
                \mathbb{E}_w [\cos(\omega w (x - x'))] &= \int_{-c}^{+c} \frac{1}{2c} \cos(\omega w (x - x')) 
                \\
                &= \frac{1}{2c} \Biggl[\frac{\sin{(\omega w (x - x'))}}{\omega (x - x')}\Biggr]_{-c}^{+c} 
                \\
                &= \frac{1}{2c} \Biggl[\frac{\sin{(\omega c (x - x'))}}{\omega (x - x')} + \frac{\sin{(-\omega c (x - x'))}}{\omega (x - x')}\Biggr],
                \\
                &= \frac{1}{2c} \cdot 2 \cdot \frac{\sin{(\omega c (x - x'))}}{\omega (x - x')} 
                \\
                &= \frac{\sin{(\omega c (x - x'))}}{\omega c (x - x')} 
                \\
                &= \sinc \Biggl( \frac{\omega c (x - x')}{\pi} \Biggr).
            \end{align}

            Finally:
            \begin{equation}
                K_w(x,x') = \frac{A^2}{2} \sinc \Biggl( \frac{\omega c (x - x')}{\pi} \Biggr).
            \end{equation}
                        
            Part (ii): each oscillator parameter 
            $\alpha\in\{\omega,\omega_n,\xi,\varphi\}$
            is shared across all neurons, so $\frac{\partial f}{\partial \alpha} = \sum_j a_j \frac{\partial \sigma_j}{\partial \alpha}$.
            The NTK contribution is $\frac{\partial f}{\partial\alpha}(x)\frac{\partial f}{\partial\alpha}(x')$, summed over the four parameters.
        \end{proof}
        
        \begin{corollary}[Amplitude-modulated Learning Dynamics]
            \label{cor:ntk_amplitude}
            The kernel magnitude in~\cref{eq:ntk_weights} scales as $A^2$.
            As the NTK eigenvalues govern convergence rates under gradient flow~\cite{jacot2018neural,arora2019fine}, the oscillator parameters modulate the effective learning rate:
            \begin{itemize}
                \item At initialisation with $A \approx A_0$, all eigenvalues are scaled by $A_0^2$, yielding slow initial learning when $A_0 < 1$;
                \item As $\xi$ decreases during training and $A$ grows, the kernel magnitude increases, accelerating convergence.
            \end{itemize}
            The network thus bootstraps its convergence rate by progressively opening the spectral gate.
        \end{corollary}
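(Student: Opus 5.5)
The approach is to read the corollary off \cref{thm:ntk}(i) and then pass to the convergence picture through standard linearised gradient-flow theory. Since $A$ depends only on the shared parameters $(\omega,\omega_n,\xi)$ and not on the neuron index, it factors out of the infinite-width weight kernel: $K_w(x,x') = A^2\,\kappa(x-x')$, with $\kappa(\Delta) = \frac12\,\mathbb{E}_w[\cos(\omega w\Delta)]$, or $\frac12\sinc(\omega c\Delta/\pi)$ for uniform initialisation. On any finite training set $\{x_i\}$, the Gram matrix therefore factors as $\mathbf{K}_w = A^2\mathbf{G}$, where $\mathbf{G}_{ik}=\kappa(x_i-x_k)$ has no $\xi$-dependence. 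Hence every eigenvalue satisfies $\lambda_k(\mathbf{K}_w) = A^2\lambda_k(\mathbf{G})$ and the eigenvectors are unchanged. This establishes the scaling claim.

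Next, I would invoke the linearised dynamics of \cite{jacot2018neural,arora2019fine}. Under MSE gradient flow, the residual evolves as $\dot{\mathbf r} = -\eta\mathbf K\mathbf r$. Holding the kernel approximately fixed over a short window, the component of $\mathbf r$ along the $k$-th eigenvector decays like $\exp(-\eta A^2\lambda_k(\mathbf G)t)$. So $A^2$ acts as a multiplicative factor on the learning rate $\eta$.

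\textbf{Initialisation.} With $A=A_0<1$ (the stopband initialisation of the paper), all rates are uniformly shrunk by $A_0^2$, which gives the first bullet.

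\textbf{During training.} The second bullet follows by combining the chain rule with \cref{thm:monotonicity_xi}. Since $\frac{d}{dt}A^2 = 2A\,\frac{\partial A}{\partial\xi}\,\dot\xi$ and $\frac{\partial A}{\partial\xi}<0$, a decreasing $\xi$ (that is, $\dot\xi<0$) gives $\frac{d}{dt}A^2>0$. By \cref{cor:gating}, $\dot\xi<0$ exactly when the gate is opening. Together these give the bootstrapping statement.

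The step I expect to be the main obstacle is handling the full kernel $K_\Theta = K_w + K_\theta$ together with the time-dependence of the kernel. Two issues arise.

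The first is the low-rank correction $K_\theta$. I would bound it separately: since it has rank at most $4$, Weyl's interlacing shows it can shift at most four eigenvalues upward. Moreover, each $\frac{\partial\sigma_j}{\partial\alpha}$ carries a factor of $A$ or $\frac{\partial A}{\partial\alpha}$, so it does not alter the $A^2$ scaling of the bulk of the spectrum.

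The second is that the kernel is not frozen, because $\xi$ moves by design. Here I would frame the corollary as a statement about instantaneous rates under a quasi-static approximation. Specifically, I would argue that $\theta$ is only four scalars and changes slowly compared with the residual on each eigendirection. The claim then holds piecewise in time rather than as an exact lazy-training result.

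I would also remark that the hidden-layer weights $w_j,b_j$ contribute terms proportional to $a_j^2A^2\omega^2$, which carry the same $A^2$ factor. This keeps the scaling claim intact when the remaining parts of $K_w$ are included.
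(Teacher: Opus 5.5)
Your proposal is correct and follows the paper's route. The paper gives no separate proof: it reads the corollary straight off $K_w=\frac{A^2}{2}\sinc(\cdot)$ in Theorem~\ref{thm:ntk}(i), combined with the cited NTK eigenvalue--convergence link and Theorem~\ref{thm:monotonicity_xi}, and your factorisation $\mathbf{K}_w=A^2\mathbf{G}$ with $\mathbf{G}$ independent of $\xi$ (plus your correct remark that the $w_j,b_j$ blocks also carry $A^2\omega^2$) just makes this explicit.

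Your add-ons are not needed for the statement, which concerns only (\ref{eq:ntk_weights}), and they are slightly loose as written. First, a PSD rank-$4$ term $\mathbf{K}_\theta$ can raise every eigenvalue: interlacing only gives $\lambda_k(\mathbf{K}_w)\le\lambda_k(\mathbf{K}_w+\mathbf{K}_\theta)\le\lambda_{k-4}(\mathbf{K}_w)$. Nor does $\mathbf{K}_\theta$ scale uniformly as $A^2$; for instance, its $\xi$-block carries $(\partial_\xi A)^2\propto A^6$, since $\partial_\xi A=-A^3\omega^2\xi/\omega_n^2$. Second, your chain rule for $A^2$ tacitly freezes $\omega$ and $\omega_n$, which also move $A$. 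That is why the corollary simply assumes that $A$ grows rather than deriving it.
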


\section{Implementation Details}
\label{sec:implementation_details}

    All experiments are implemented within a unified codebase under identical training conditions (optimiser, loss function, evaluation protocol) to ensure a fair comparison.
    For each INR, we adopt the architecture and hyperparameters recommended by the original authors when available for a given task; when no configuration is provided, we manually tune to obtain competitive results.
    Our proposed activation, \algoname{}, uses the same oscillator parameters across all tasks, as described in \cref{sec:experiments} of the main paper.
    Below, we report the per-task configuration for each baseline.
    In all tables, $m$ denotes the hidden feature dimension, $L$ the number of hidden layers, and $\eta$ the learning rate.

    \paragraph{\algoname{} Configuration.}
        Our activation uses a single configuration across all tasks: $m = 256$, $L = 5$, $\omega_0 = 45$, $\omega_{n,0} = 50$, $\xi_0 = 1/\sqrt{2}$, and $\varphi_0 = - 1.423$.
        The network weights are optimised with a learning rate of $\eta_{\text{w}} = 10^{-4}$, while the oscillator parameters $(\omega, \omega_n, \xi, \varphi)$ use a separate learning rate of $\eta_{\theta} = 10^{-2}$.
        No task-specific tuning of any parameter is performed.
    
    \paragraph{Optimisation.}
        All INRs use Adam for all tasks.
        Three methods employ learning rate schedulers.
        FDHO uses \texttt{ReduceLROnPlateau} (factor $0.1$, patience $500$, minimum learning rate $10^{-6}$), which monitors the training loss and reduces the learning rate only when progress stalls.
        WIRE uses a linear warmup schedule that scales the learning rate by $0.1 \cdot \min(t / T, 1)$, where $T$ is the total number of training steps.
        FR uses a step decay schedule (\texttt{StepLR}) that multiplies the learning rate by $0.1$ every $3000$ steps.
        All remaining INRs use a constant learning rate throughout training.
        
    \paragraph{1D Signal Fitting and Audio Fitting.}
        All methods use a 1D input ($d_{\text{in}} = 1$) and are trained for 10000 steps.
        The square waves at 100 Hz, 250 Hz and 500 Hz are sampled at 400, 1000, and 2000 points, respectively.
        The chirp signals at 250 Hz, 500 Hz and 1000 Hz are sampled at 550, 1100, and 2200 points, respectively.
        Audio waveforms are sampled with their respective sampling rate as done in \cite{sitzmann2019siren}.
        Configurations are reported in \cref{tab:config_signal}.
        \begin{table}[ht]
            \centering
            \caption{\textbf{Configurations for 1D Signal Fitting and Audio Fitting.}}
            \label{tab:config_signal}
            \resizebox{0.9\linewidth}{!}{
            \begin{tabular}{lcccl}
                \toprule
                \textbf{INR} & $m$ & $L$ & $\eta$ & Task-specific parameters \\
                \midrule
                SIREN~\cite{sitzmann2019siren}       & 256 & 5 & $10^{-4}$ & $\omega_0 = 30$ \\
                Gauss~\cite{ramasinghe2022beyond}    & 256 & 5 & $10^{-4}$ & $\sigma = 0.1$ \\
                WIRE~\cite{saragadam2023wire}        & 256 & 2 & $5 \times 10^{-3}$ & $\omega_0 = 20$, $s = 30$ \\
                BACON~\cite{lindell2021bacon}        & 128 & 4 & $10^{-4}$ & freq.\ limit $= 2048$, quantised \\
                FINER~\cite{liu2024finer}            & 256 & 3 & $10^{-4}$ & $\omega_0 = 30$ (first and hidden) \\
                MFN~\cite{fathony2021multiplicative} & 256 & 3 & $10^{-2}$ & weight scale $= 1.0$, $\alpha = 6.0$ \\
                Fourier~\cite{tancik2020fourfeat}    & 256 & 4 & $10^{-4}$ & scale $= 10$ \\
                FR~\cite{shi2024improved}            & 256 & 5 & $10^{-4}$ & high-freq.\ components $= 256$ \\
                \bottomrule
            \end{tabular}}
        \end{table}
    
    \paragraph{Image Fitting, Image Denoising, Image Inpainting, and Image Super-resolution.}
        All methods use a 2D input ($d_{\text{in}} = 2$) mapping pixel coordinates to RGB values, and are trained for 10000 steps on $512 \times 512$ images.
        For denoising, Gaussian noise with $\sigma = 0.1$ is added to the target image.
        For inpainting, a binary mask retains 20\% of the pixel locations.
        For super-resolution, the network is supervised on a $128 \times 128$ low-resolution grid and evaluated at $512 \times 512$.
        Configurations are reported in \cref{tab:config_image}.
        \begin{table}[ht]
            \centering
            \caption{\textbf{Configurations for Image Fitting, Image Denoising, Image Inpainting, and Image Super-resolution.}}
            \label{tab:config_image}
            \resizebox{0.9\linewidth}{!}{
            \begin{tabular}{lcccl}
                \toprule
                \textbf{INR} & $m$ & $L$ & $\eta$ & Task-specific parameters \\
                \midrule
                SIREN~\cite{sitzmann2019siren}       & 256 & 5 & $10^{-4}$ & $\omega_0 = 30$ \\
                Gauss~\cite{ramasinghe2022beyond}    & 256 & 5 & $10^{-4}$ & $\sigma = 0.1$ \\
                WIRE~\cite{saragadam2023wire}        & 256 & 2 & $5 \times 10^{-3}$ & $\omega_0 = 20$, $s = 30$ ($\omega_0 = 4$, $s = 4$ for denoising) \\
                BACON~\cite{lindell2021bacon}        & 256 & 8 & $5 \times 10^{-4}$ & freq.\ limit $= 2048$, quantised \\
                FINER~\cite{liu2024finer}            & 256 & 3 & $10^{-4}$ & $\omega_0 = 30$ (first and hidden) \\
                MFN~\cite{fathony2021multiplicative} & 256 & 3 & $10^{-2}$ & weight scale $= 16$, $\alpha = 2.5$ \\
                Fourier~\cite{tancik2020fourfeat}    & 256 & 4 & $10^{-3}$ & scale $= 10$ \\
                FR~\cite{shi2024improved}            & 256 & 3 & $10^{-4}$ & high-freq.\ components $= 128$ \\
                \bottomrule
            \end{tabular}}
        \end{table}
    
    \paragraph{CT Reconstruction.}
        All methods use a 2D input ($d_{\text{in}} = 2$) and are trained for 50000 steps.
        The reconstruction target is a $512 \times 512$ image observed through 100 projection angles.
        Configurations are reported in \cref{tab:config_ct}.
        \begin{table}[ht]
            \centering
            \caption{\textbf{Configurations for CT Reconstruction.}}
            \label{tab:config_ct}
            \resizebox{0.9\linewidth}{!}{
            \begin{tabular}{lcccl}
                \toprule
                \textbf{INR} & $m$ & $L$ & $\eta$ & Task-specific parameters \\
                \midrule
                SIREN~\cite{sitzmann2019siren}       & 256 & 5 & $10^{-4}$ & $\omega_0 = 30$ \\
                Gauss~\cite{ramasinghe2022beyond}    & 256 & 5 & $10^{-4}$ & $\sigma = 0.1$ \\
                WIRE~\cite{saragadam2023wire}        & 256 & 2 & $5 \times 10^{-3}$ & $\omega_0 = 20$, $s = 30$ \\
                BACON~\cite{lindell2021bacon}        & 32  & 4 & $5 \times 10^{-4}$ & freq.\ limit $= 2048$, quantised \\
                FINER~\cite{liu2024finer}            & 256 & 3 & $10^{-4}$ & $\omega_0 = 30$ (first and hidden) \\
                MFN~\cite{fathony2021multiplicative} & 256 & 3 & $10^{-2}$ & weight scale $= 16$, $\alpha = 2.5$ \\
                Fourier~\cite{tancik2020fourfeat}    & 256 & 4 & $10^{-3}$ & scale $= 10$ \\
                FR~\cite{shi2024improved}            & 128 & 3 & $10^{-6}$ & high-freq.\ components $= 128$ \\
                \bottomrule
            \end{tabular}}
        \end{table}
    
    \paragraph{SDF Fitting.}
        All methods use a 3D input ($d_{\text{in}} = 3$) and are trained for 50000 steps with gradient accumulation over 2 steps.
        We sample 125000 on-surface points per batch and evaluate on a $512^3$ grid.
        Configurations are reported in \cref{tab:config_sdf}.
        \begin{table}[ht]
            \centering
            \caption{\textbf{Configurations for SDF Fitting.}}
            \label{tab:config_sdf}
            \resizebox{0.9\linewidth}{!}{
            \begin{tabular}{lcccl}
                \toprule
                \textbf{INR} & $m$ & $L$ & $\eta$ & Task-specific parameters \\
                \midrule
                SIREN~\cite{sitzmann2019siren}       & 256 & 5 & $10^{-4}$ & $\omega_0 = 30$ \\
                Gauss~\cite{ramasinghe2022beyond}    & 256 & 5 & $10^{-4}$ & $\sigma = 0.1$ \\
                WIRE~\cite{saragadam2023wire}        & 256 & 2 & $5 \times 10^{-3}$ & $\omega_0 = 10$, $s = 40$ \\
                BACON~\cite{lindell2021bacon}        & 128 & 6 & $10^{-4}$ & freq.\ limit $= 2048$, quantised \\
                FINER~\cite{liu2024finer}            & 256 & 3 & $10^{-4}$ & $\omega_0 = 30$ (first and hidden) \\
                MFN~\cite{fathony2021multiplicative} & 256 & 3 & $10^{-2}$ & weight scale $= 16$, $\alpha = 2.5$ \\
                Fourier~\cite{tancik2020fourfeat}    & 256 & 4 & $10^{-3}$ & scale $= 10$ \\
                FR~\cite{shi2024improved}            & 256 & 2 & $5 \times 10^{-3}$ & high-freq.\ components $= 256$ \\
                \bottomrule
            \end{tabular}}
        \end{table}
    
    \paragraph{Poisson Image Reconstruction.}
        All methods use a 2D input ($d_{\text{in}} = 2$) and are trained for 50000 steps on $512 \times 512$ images.
        Supervision is applied to the network's Laplacian rather than to pixel values directly, using the top-left ground-truth pixel as an anchor.
        Configurations are reported in \cref{tab:config_poisson}.
        \begin{table}[ht]
            \centering
            \caption{\textbf{Configurations for Poisson Image Reconstruction.}}
            \label{tab:config_poisson}
            \resizebox{0.9\linewidth}{!}{
            \begin{tabular}{lcccl}
                \toprule
                \textbf{INR} & $m$ & $L$ & $\eta$ & Task-specific parameters \\
                \midrule
                SIREN~\cite{sitzmann2019siren}       & 256 & 5 & $10^{-4}$ & $\omega_0 = 30$ \\
                Gauss~\cite{ramasinghe2022beyond}    & 256 & 5 & $10^{-4}$ & $\sigma = 0.1$ \\
                WIRE~\cite{saragadam2023wire}        & 256 & 2 & $5 \times 10^{-3}$ & $\omega_0 = 20$, $s = 30$ \\
                BACON~\cite{lindell2021bacon}        & 128 & 8 & $5 \times 10^{-4}$ & freq.\ limit $= 2048$, quantised \\
                FINER~\cite{liu2024finer}            & 256 & 3 & $10^{-4}$ & $\omega_0 = 30$ (first and hidden) \\
                MFN~\cite{fathony2021multiplicative} & 256 & 3 & $10^{-2}$ & weight scale $= 16$, $\alpha = 2.5$ \\
                Fourier~\cite{tancik2020fourfeat}    & 256 & 4 & $10^{-3}$ & scale $= 10$ \\
                FR~\cite{shi2024improved}            & 256 & 3 & $10^{-4}$ & high-freq.\ components $= 128$ \\
                \bottomrule
            \end{tabular}}
        \end{table}

    \section{Complementary Analysis}
\label{sec:complementary_analysis}

    \subsection{Adaptive SIREN vs. \algoname{}}
        \label{sec:siren_vs_oscillator}

        As anticipated in \cref{sec:introduction}, one might consider achieving the \algoname{}'s adaptivity by simply making the amplitude $a$, frequency $\omega$, and phase $\varphi$ of a sinusoidal activation $\sigma(z; \theta) = a \sin(\omega z + \varphi)$ independently learnable, effectively extending SIREN with a per-layer gain parameter.
        We test this directly by initialising $a_0 = 1.0$, $\omega_0 = 30.0$, and $\varphi_0 = 0.0$, making the initial activation identical to standard SIREN, and comparing against our \algoname{} under fixed architecture and training protocol on a controlled toy signal: the sum of four pure tones at $\omega_1 = 1.0$, $\omega_2 = 10.0$, $\omega_3 = 50.0$, and $\omega_4 = 100.0$.
        \algoname{} achieves a final PSNR of $148.98 \pm 0.59$ dB and a peak of $155.20 \pm 2.08$ dB, while the adaptive SIREN reaches a comparable peak $154.74 \pm 1.34$ dB, but a substantially lower final PSNR $72.26 \pm 11.79$ dB.
        The nearly identical peaks confirm that both formulations have sufficient capacity to represent the target, as they both can adapt their frequency. 
        The 77 dB gap at convergence and the tenfold increase in variance reveal that decoupled parameters lack the stabilising mechanism provided by the \algoname{}'s physical coupling.
        The training dynamics in \cref{fig:training_dynamics} illustrate this concretely: the adaptive SIREN's unconstrained amplitude and frequency drift after reaching good solutions, whereas the \algoname{}'s parameters settle into the equilibrium predicted by \cref{cor:gating}(a). 
        \begin{figure}
    \centering
        \includegraphics[width=\linewidth]{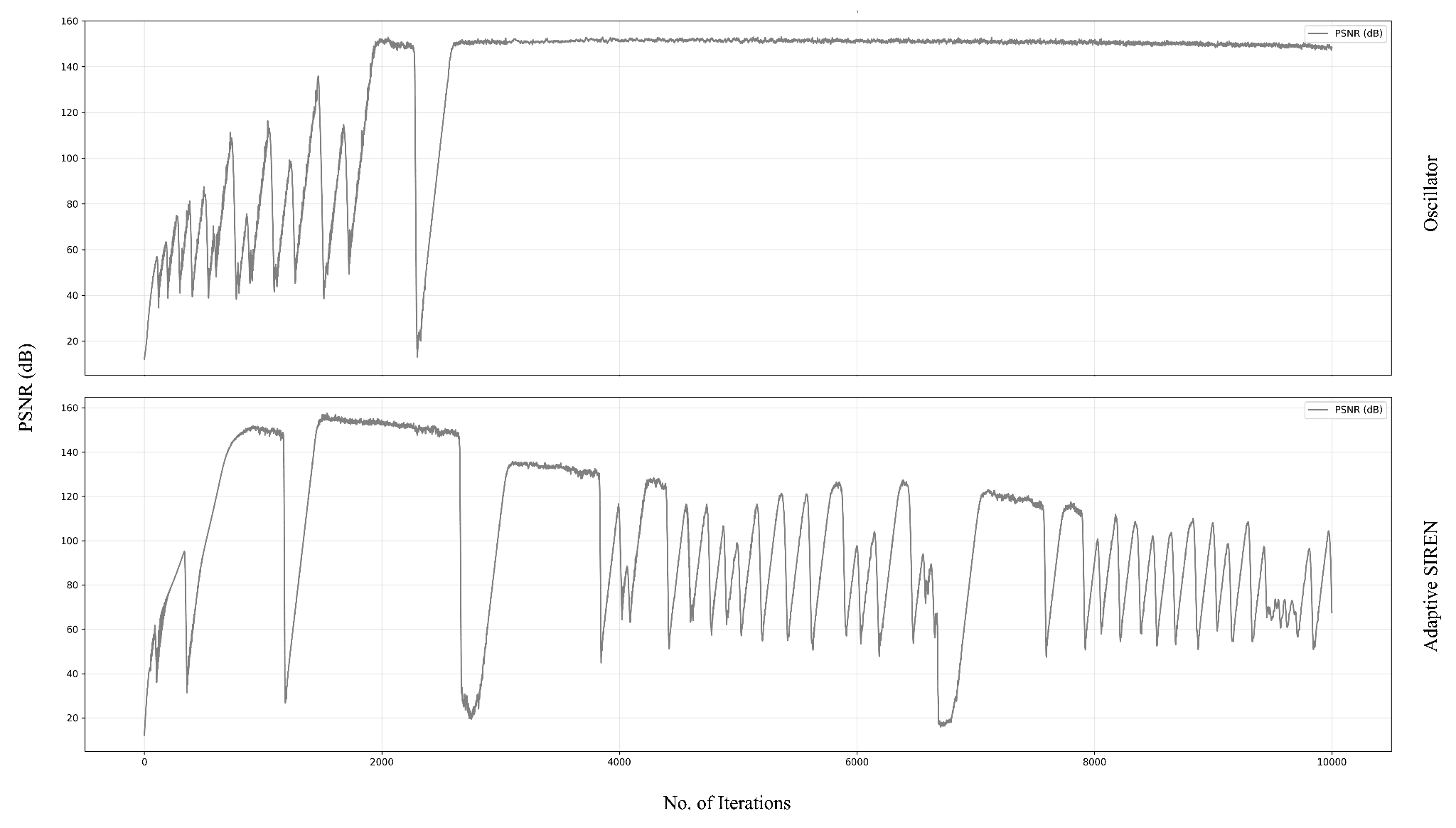}
    \caption{\textbf{Training Dynamics.}
    }
    \label{fig:training_dynamics}
\end{figure}

        \Cref{fig:oscillator_evolution} and \cref{fig:sirenator_evolution} compare the parameter evolution across layers.
        The oscillator's parameters converge smoothly to stable equilibria within the first few thousand steps: frequencies settle, $\xi$ decreases monotonically and plateaus, and phases lock, consistent with the equilibrium condition of \cref{cor:gating}(a).
        The adaptive SIREN exhibits qualitatively different behaviour: all three parameters oscillate erratically throughout training with no sign of convergence, even in layers that have already contributed to a high peak PSNR.
        This instability arises because the decoupled parametrisation admits degenerate solutions: many combinations of $(a, \omega, \varphi)$ yield similar outputs, creating flat directions in the loss landscape along which gradient descent drifts indefinitely.
        The oscillator's transfer function eliminates this degeneracy by coupling amplitude to frequency through $\xi$, yielding a unique equilibrium for each frequency component and the stable convergence observed in practice.
        \begin{figure}[t]
    \centering
    \begin{minipage}[t]{0.48\linewidth}
        \centering
        \includegraphics[width=\linewidth]{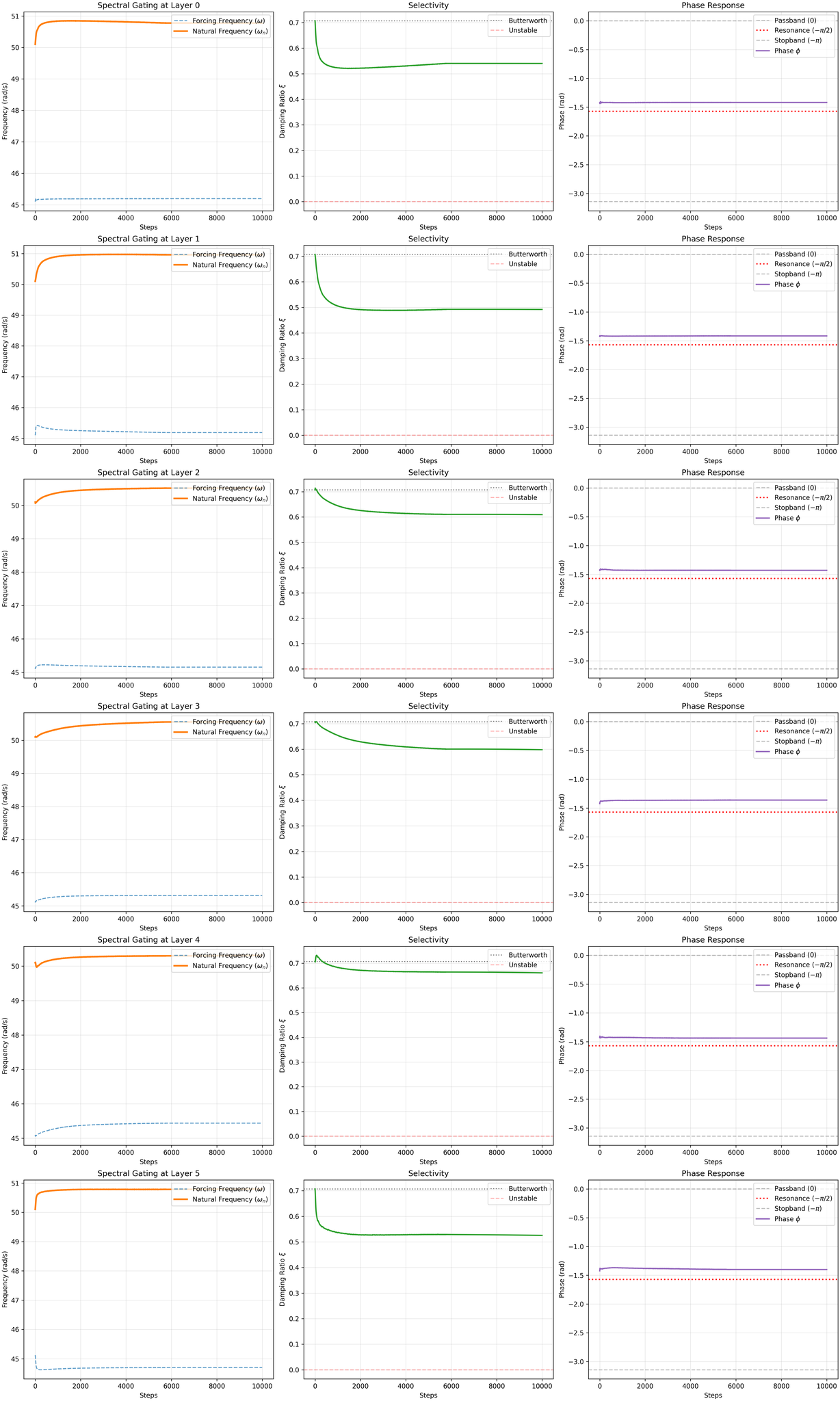}
        \caption{\textbf{\algoname{}'s parameters evolution}}
        \label{fig:oscillator_evolution}
    \end{minipage}
    \hfill
    \begin{minipage}[t]{0.48\linewidth}
        \centering
        \includegraphics[width=\linewidth]{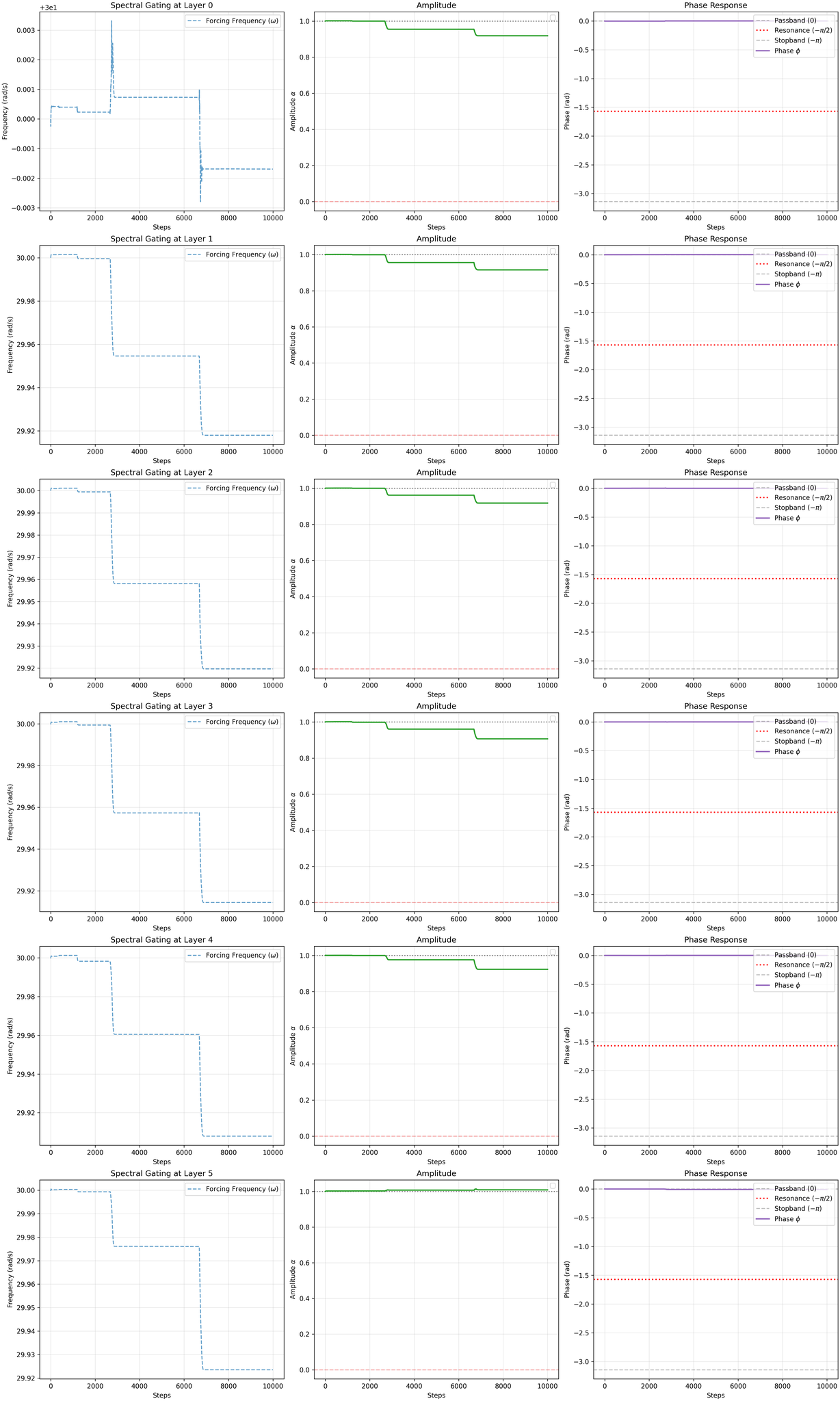}
        \caption{\textbf{Adaptive SIREN's parameters evolution}}
        \label{fig:sirenator_evolution}
    \end{minipage}
\end{figure}

    \subsection{Control Group Study}
        We run \algoname{}, Adaptive SIREN and SIREN using the same architecture ($m = 256$, $L = 5$) and optimiser (Adam with $\eta = 10^{-4}$ for both weights and activation parameters, no scheduler) for the Image Fitting on \textbf{Tiger} and for CT Reconstruction, reporting the results in \cref{tab:ctrl_study}. 
        \algoname{} yields the best results, proving that the improvements come from the formulation of the activation function itself rather than the training dynamics
        \begin{table}[ht]
            \centering
            \caption{
            \textbf{Control Group Study.}
            }
            \label{tab:ctrl_study}
            \resizebox{0.9\linewidth}{!}{
            \begin{tabular}{l cccc}
    
                \toprule
                
                \multirow{3}{*}{\textbf{{INR}}} & 
                \multicolumn{2}{c}{\textbf{Image Fitting (Tiger)}} & 
                \textbf{CT Reconstruction} \\
    
                \cmidrule(lr){2-3} \cmidrule(lr){4-4}
                
                & Final PSNR & Peak PSNR & PSNR \\
    
                \cmidrule(lr){2-3} \cmidrule(lr){4-4}
                
                FDHO & \textbf{63.27 $\pm$ 1.30} & \textbf{63.27 $\pm$ 1.30} & \textbf{40.16 $\pm$ 0.46} \\
                Adaptive SIREN & 54.65 $\pm$ 3.97 & 58.21 $\pm$ 0.17 & 32.44 $\pm$ 1.00 \\
                SIREN & 53.37 $\pm$ 1.52 & 58.19 $\pm$ 0.51 & 33.28 $\pm$ 1.06 \\
                
                \bottomrule
                
            \end{tabular}}
        \end{table}

    \section{Additional Experiments}
\label{sec:additonal_experiments}

    \subsection{Computational Cost Comparison}
    \label{sec:computational_cost}
        We report the computational overhead of \algoname{} relative to the INRs in \cref{tab:cost}, averaged across the five images used in the image fitting task ($512 \times 512$, 10000 steps, on a single NVIDIA GeForce RTX 4090).
        
        \begin{table}[ht]
            \centering
            \caption{
            \textbf{Computational cost comparison on Image Fitting.}
            Averaged over five images and 10 runs. 
            Training time and memory are measured end-to-end, including optimiser state.}
            \label{tab:cost}
            \resizebox{\linewidth}{!}{
            \begin{tabular}{lcccc}
                \toprule
                \textbf{INR} & No. Params & Model Size (MB) & Train Time (s) & Train Memory (MB) \\
                \cmidrule(lr){2-5}
                \algoname{}                          & 330523 & 1.26 & 191 & 2908 \\
                SIREN~\cite{sitzmann2019siren}       & 330499 & 1.26 & 134 & 1205 \\
                Gauss~\cite{ramasinghe2022beyond}    & 330499 & 1.26 & 196 & 1285 \\
                WIRE~\cite{saragadam2023wire}        & 66979  & 0.51 & 212 & 1265 \\
                BACON~\cite{lindell2021bacon}        & 538904 & 2.06 & 287 & 2816 \\
                FINER~\cite{liu2024finer}            & 198915 & 0.76 & 122 & 1202 \\
                MFN~\cite{fathony2021multiplicative} & 204291 & 0.78 & 257 & 2379 \\
                Fourier~\cite{tancik2020fourfeat}    & 395779 & 1.51 & 122 & 877 \\
                FR~\cite{shi2024improved}            & 12585219 & 48.01 & 126 & 790 \\
                \bottomrule
            \end{tabular}}
        \end{table}
        
        \algoname{} adds only 24 scalar parameters over the SIREN backbone (four oscillator parameters per layer across five hidden layers), bringing the total from 330499 to 330523, a negligible increase of 0.007\%.
        Model size and inference cost are therefore effectively identical to SIREN.
        
        The main overhead is in training.
        \algoname{} requires approximately 1.4$\times$ the wall-clock time of SIREN (191 s vs. 134 s), attributable to the two-group optimiser and the \texttt{ReduceLROnPlateau} scheduler rather than the activation itself.
        This overhead is comparable to Gaussian and WIRE, and substantially lower than BACON (287 s) and MFN (257 s).
        Training memory is approximately 2.4$\times$ that of SIREN (2908 MB vs. 1205 MB), which reflects the additional gradient computation for the shared oscillator parameters through all neurons in each layer.
        This cost is comparable to BACON (2816 MB) and remains well below FR, which, despite low training memory per step, requires 48 MB for the model alone due to its 12.6 M parameters.
        
        In summary, the computational cost of \algoname{} is moderate and falls within the range of existing methods.
        The activation itself introduces negligible parameter and inference overhead. 
        The training cost, while higher than the simplest baselines, is the price of the adaptive spectral gating mechanism, which, as the experimental results demonstrate, yields substantial quality gains across all tasks.

    \subsection{Additional Tasks}
        
        \paragraph{Signed Distance Function (SDF) Fitting.}
            We represent 3D shapes by learning a continuous signed distance function from spatial coordinates, following~\cite{park2019deepsdf,sitzmann2019siren}.
            SDFs require the network to simultaneously capture smooth surfaces and sharp geometric features such as edges and corners, testing the activation's ability to represent signals with localised high-frequency content in three dimensions. We follow the protocol described
            in~\cite{sitzmann2019siren, liu2024finer} and evaluate on four meshes of increasing geometric complexity. 
            We report Chamfer Distance (CD, lower is better), which measures the average nearest-neighbour distance between the predicted and ground-truth point clouds, and Normal Consistency (NC, higher is better), which quantifies the alignment of surface normals.
            Results are in \cref{tab:sdf_fitting} and qualitative comparisons in \cref{fig:qual_sdf_fitting}.
            Unlike other proposed tasks, SDF Fitting experiments were run only once due to the extended computation time required by a single run.
            
            \begin{table}[t]
    \centering
    \caption{
        \textbf{Signed Distance Function (SDF) Fitting.} 
        }
    \resizebox{\linewidth}{!}{
        \begin{tabular}{l cc cc cc cc}

            \toprule
            
            \multirow{3}{*}{\textbf{{INR}}} & 
            \multicolumn{2}{c}{\textbf{Armadillo}} & 
            \multicolumn{2}{c}{\textbf{Dragon}} & 
            \multicolumn{2}{c}{\textbf{Thai Statue}} &
            \multicolumn{2}{c}{\textbf{Lucy Statue}} \\ 

            \cmidrule(lr){2-3} \cmidrule(lr){4-5} \cmidrule(lr){6-7} \cmidrule(lr){8-9}

            &
            CD & NC &
            CD & NC &
            CD & NC &
            CD & NC \\
            
            \cmidrule(lr){2-3} \cmidrule(lr){4-5} \cmidrule(lr){6-7} \cmidrule(lr){8-9}
            
            \algoname{}                          & $0.038 \times 10^{-3}$ & 0.9915 & $0.024 \times 10^{-3}$ & 0.9688 & $0.049 \times 10^{-3}$ & 0.9877 & $0.025 \times 10^{-3}$ & 0.9637 \\
            SIREN \cite{sitzmann2019siren}       & $0.041 \times 10^{-3}$ & 0.9915 & $0.024 \times 10^{-3}$ & 0.9727 & $0.061 \times 10^{-3}$ & 0.9882 & $0.113 \times 10^{-3}$ & 0.9439 \\
            Gauss \cite{ramasinghe2022beyond}    & $0.039 \times 10^{-3}$ & 0.9922 & $0.025 \times 10^{-3}$ & 0.9729 & $0.181 \times 10^{-3}$ & 0.8468 & $0.098 \times 10^{-3}$ & 0.8490 \\
            WIRE \cite{saragadam2023wire}        & $0.042 \times 10^{-3}$ & 0.9911 & $0.025 \times 10^{-3}$ & 0.9724 & $0.169 \times 10^{-3}$ & 0.8694 & $0.079 \times 10^{-3}$ & 0.9021 \\
            BACON \cite{lindell2021bacon}        & $0.038 \times 10^{-3}$ & 0.9081 & $0.066 \times 10^{-3}$ & 0.4865 & $0.121 \times 10^{-3}$ & 0.9339 & $0.032 \times 10^{-3}$ & 0.9652 \\
            FINER \cite{liu2024finer}            & $0.038 \times 10^{-3}$ & 0.9920 & $0.024 \times 10^{-3}$ & 0.9767 & $0.046 \times 10^{-3}$ & 1.0000 & $0.027 \times 10^{-3}$ & 0.9820 \\
            MFN \cite{fathony2021multiplicative} & $0.038 \times 10^{-3}$ & 0.9910 & $0.025 \times 10^{-3}$ & 0.9673 & $0.298 \times 10^{-3}$ & 0.8936 & $0.182 \times 10^{-3}$ & 0.8564 \\
            Fourier \cite{tancik2020fourfeat}    & $0.049 \times 10^{-3}$ & 0.8503 & $0.039 \times 10^{-3}$ & 0.7891 & $0.286 \times 10^{-3}$ & 0.8035 & $0.131 \times 10^{-3}$ & 0.7650 \\
            FR \cite{shi2024improved}            & $0.087 \times 10^{-3}$ & 0.9395 & $0.104 \times 10^{-3}$ & 0.8770 & $0.316 \times 10^{-3}$ & 0.7661 & $0.176 \times 10^{-3}$ & 0.8304 \\
            
            \bottomrule
            
        \end{tabular}
        }
    \label{tab:sdf_fitting}
\end{table}

            \begin{figure}[t]
    \centering
    \resizebox{\linewidth}{!}{
    \begin{tabular}{c cccc c cccc}

        & \textbf{Armadillo} & \textbf{Dragon} & \textbf{Thai Statue} & \textbf{Lucy Statue} & 
        & \textbf{Armadillo} & \textbf{Dragon} & \textbf{Thai Statue} & \textbf{Lucy Statue} \\

        \rotatebox{90}{\tiny \hspace{0.5em} Ground-truth} &
        \includegraphics[width=0.16\linewidth]{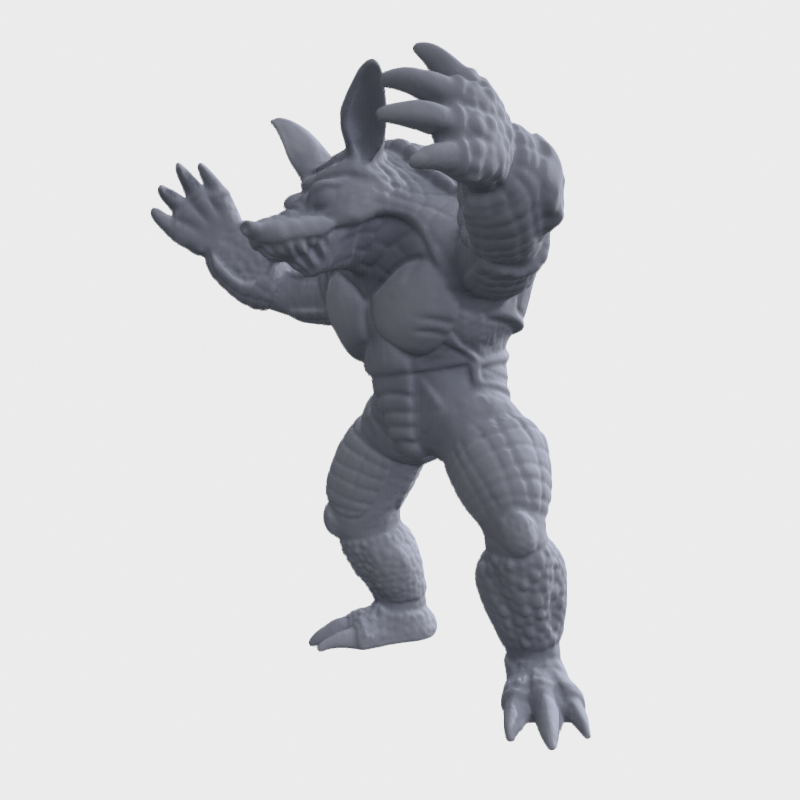} &
        \includegraphics[width=0.16\linewidth]{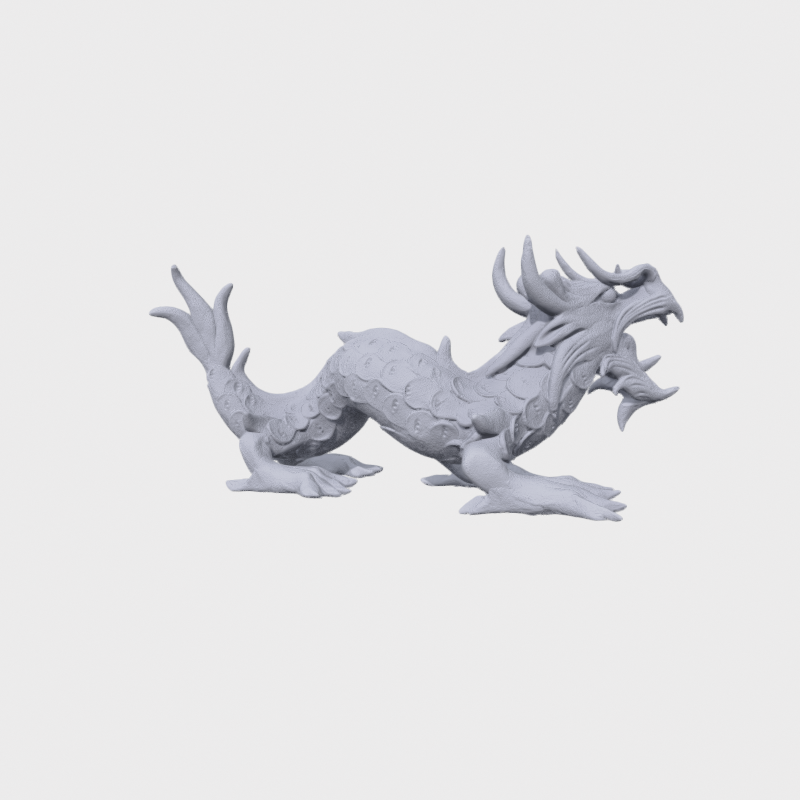} & 
        \includegraphics[width=0.16\linewidth]{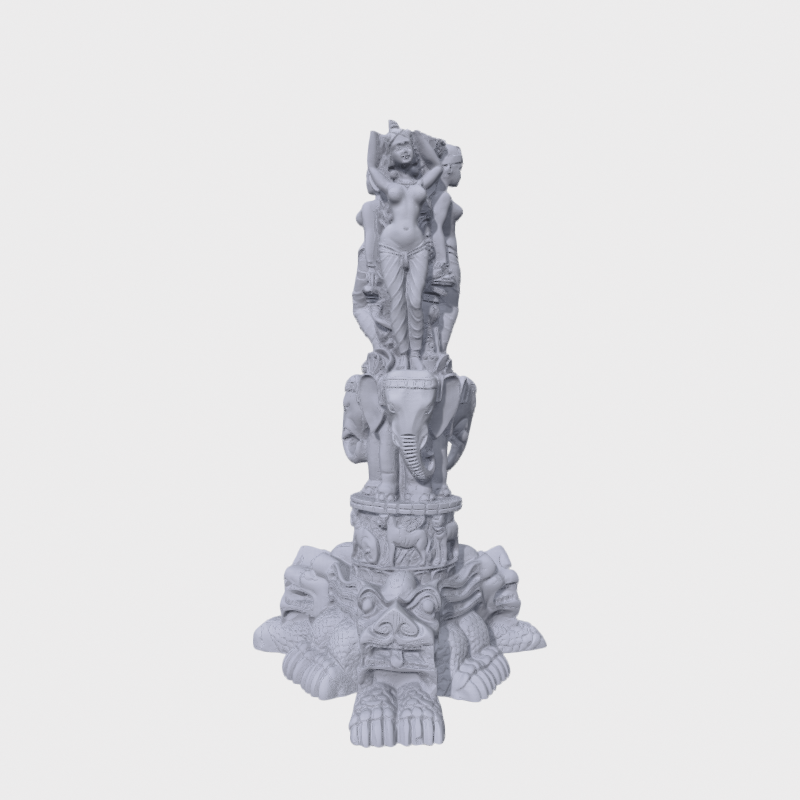} &
        \includegraphics[width=0.16\linewidth]{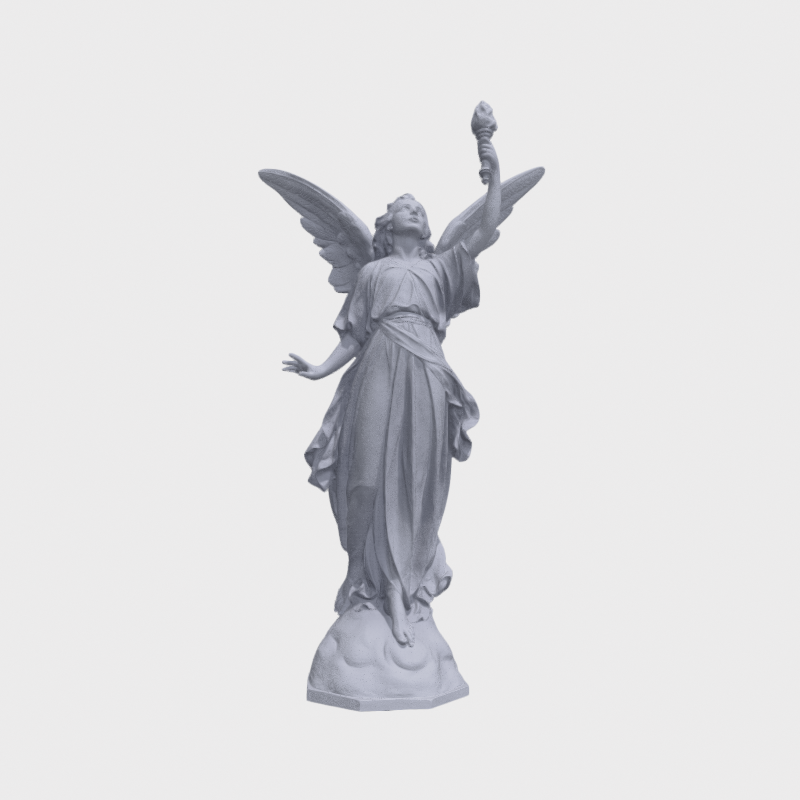} &

        \rotatebox{90}{\tiny \hspace{2em} \algoname{}} &
        \includegraphics[width=0.16\linewidth]{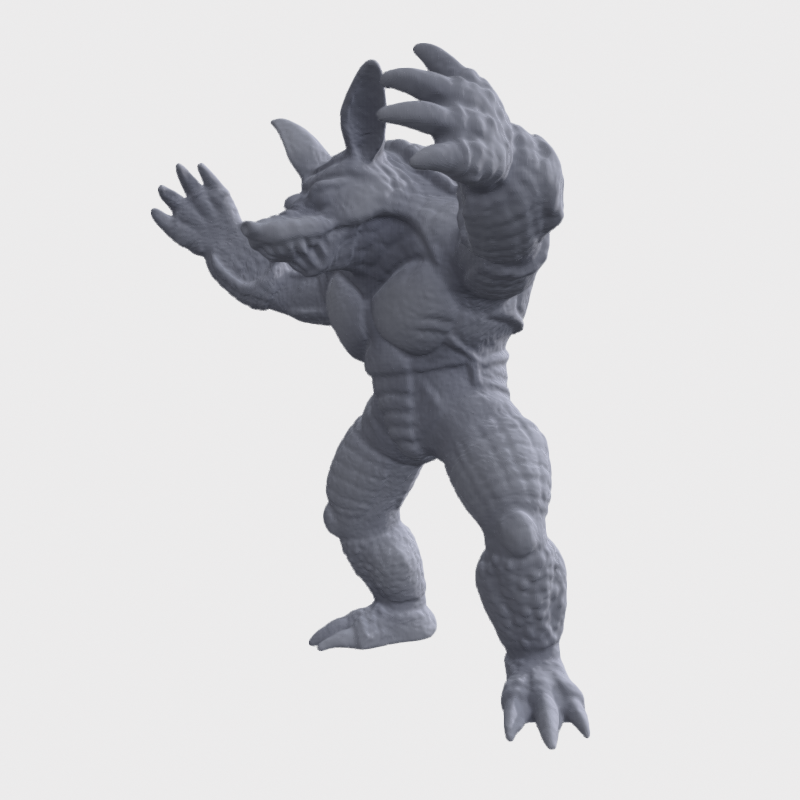} &
        \includegraphics[width=0.16\linewidth]{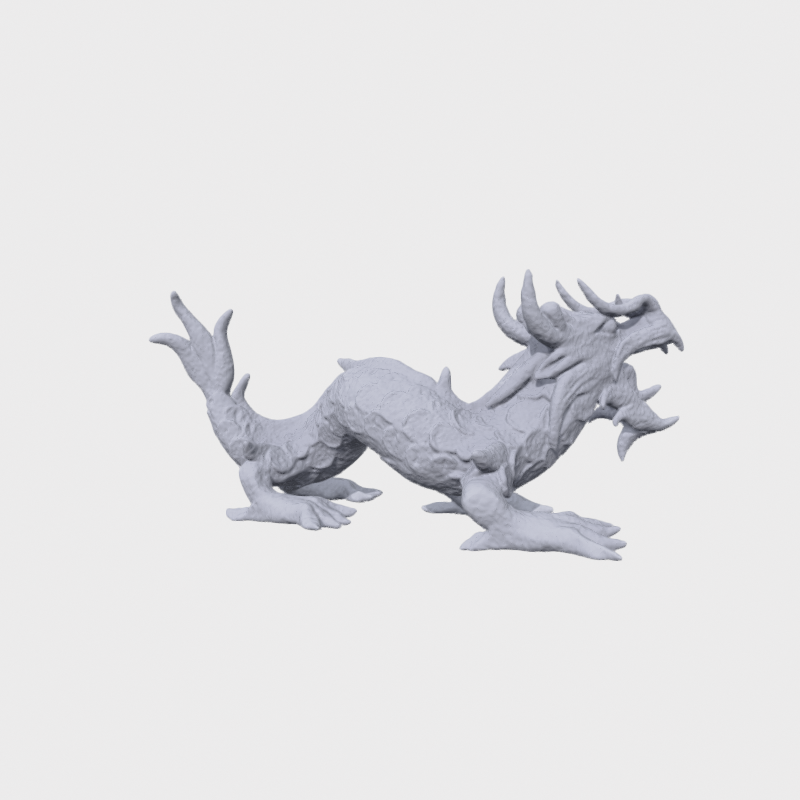} & 
        \includegraphics[width=0.16\linewidth]{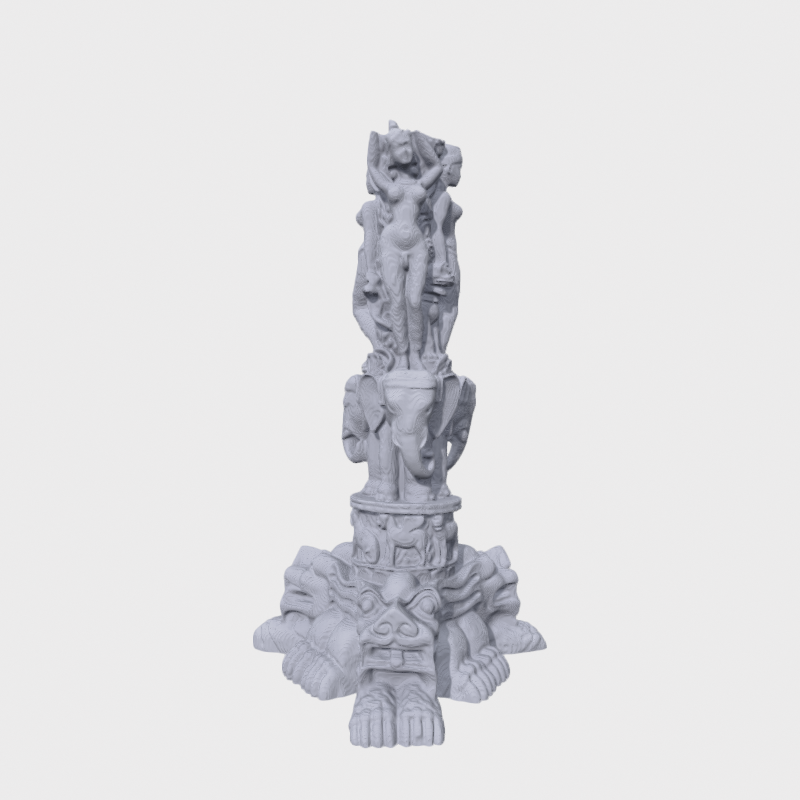} &
        \includegraphics[width=0.16\linewidth]{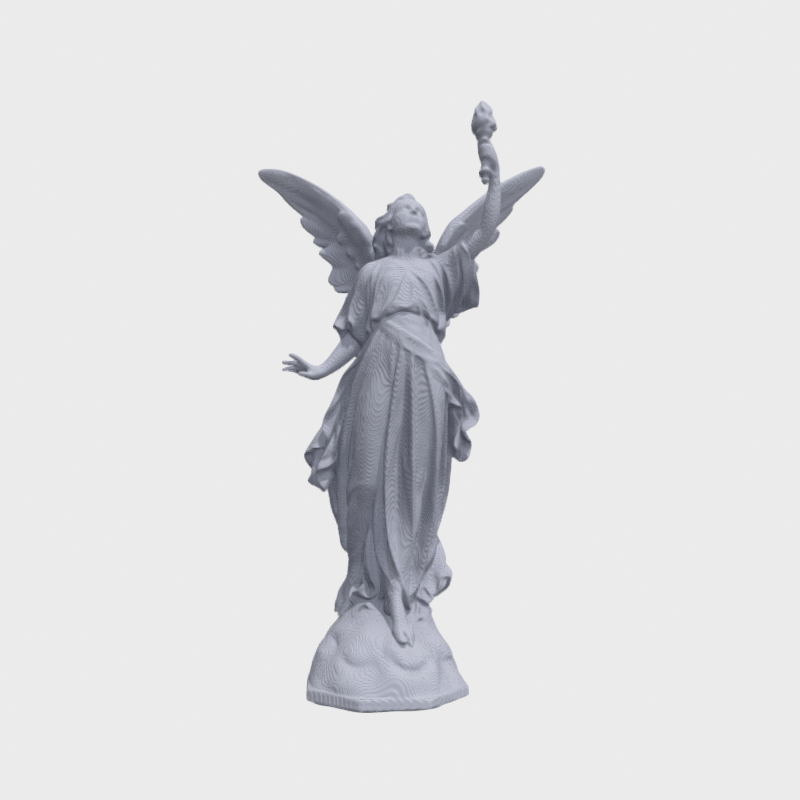} \\

        \rotatebox{90}{\tiny \hspace{2em} SIREN} &
        \includegraphics[width=0.16\linewidth]{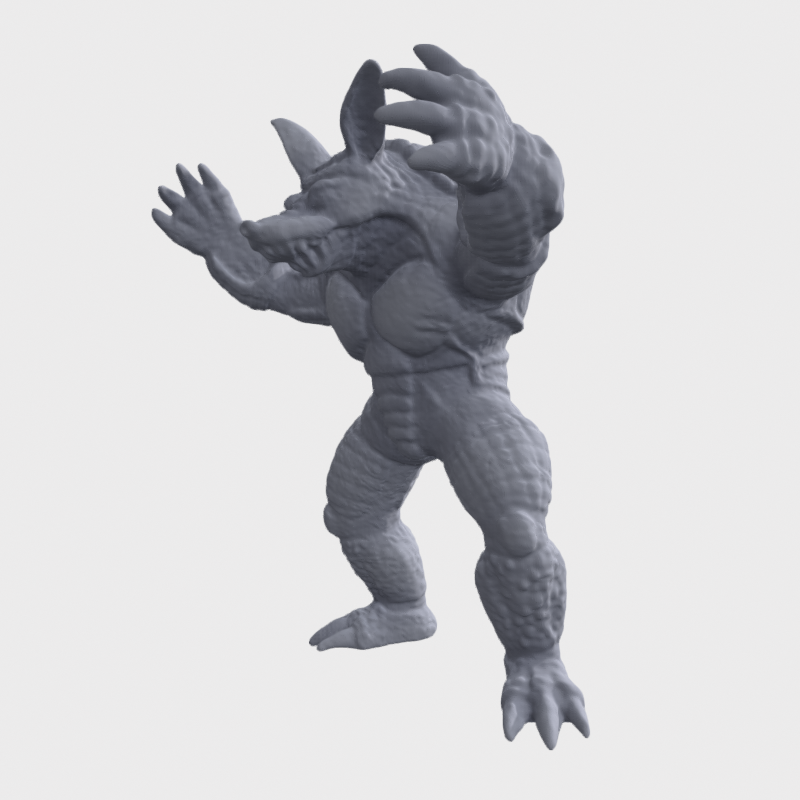} &
        \includegraphics[width=0.16\linewidth]{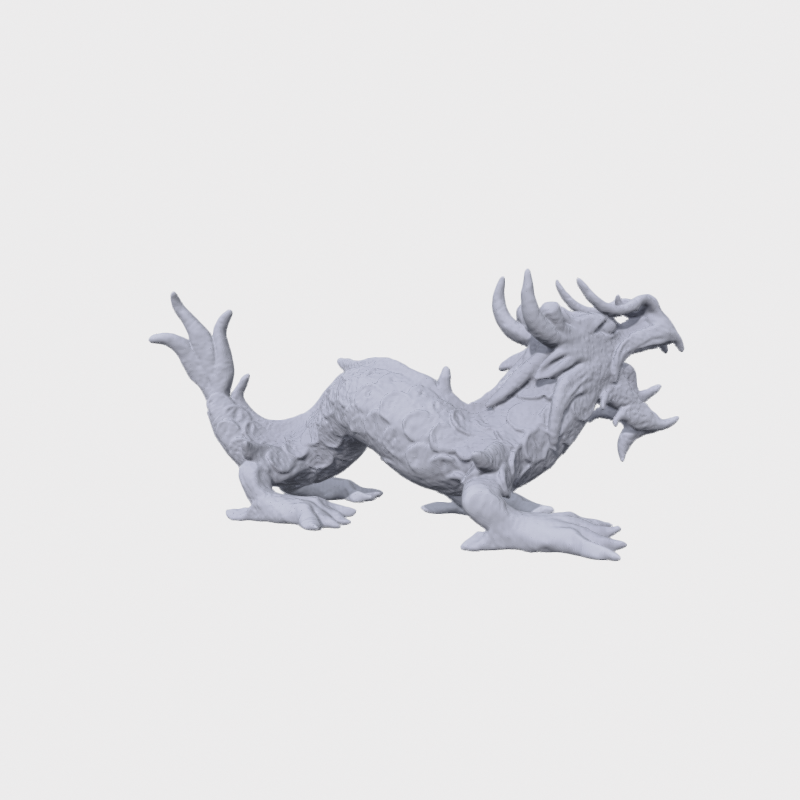} & 
        \includegraphics[width=0.16\linewidth]{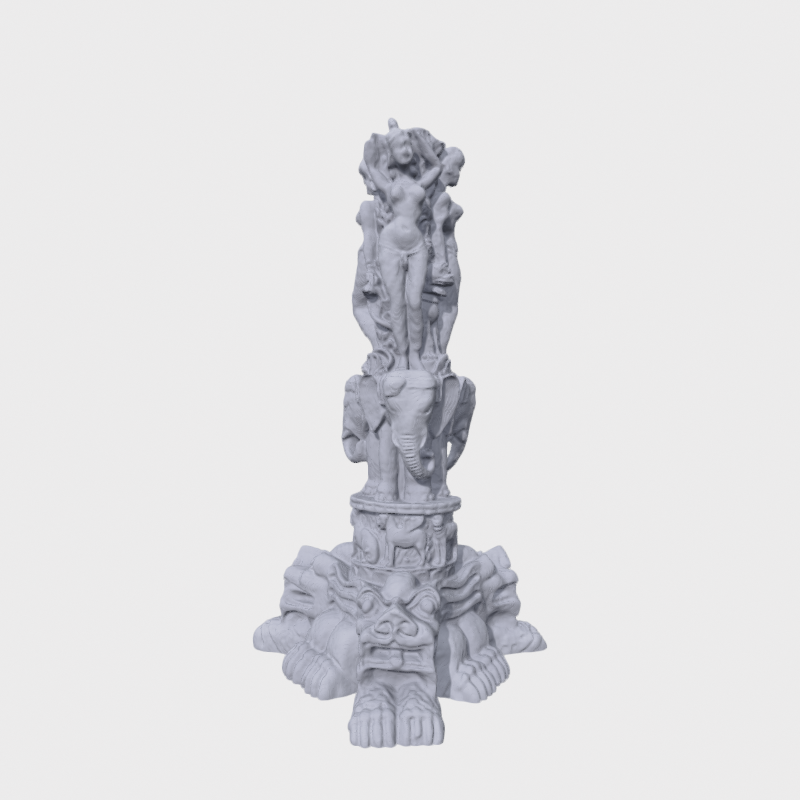} &
        \includegraphics[width=0.16\linewidth]{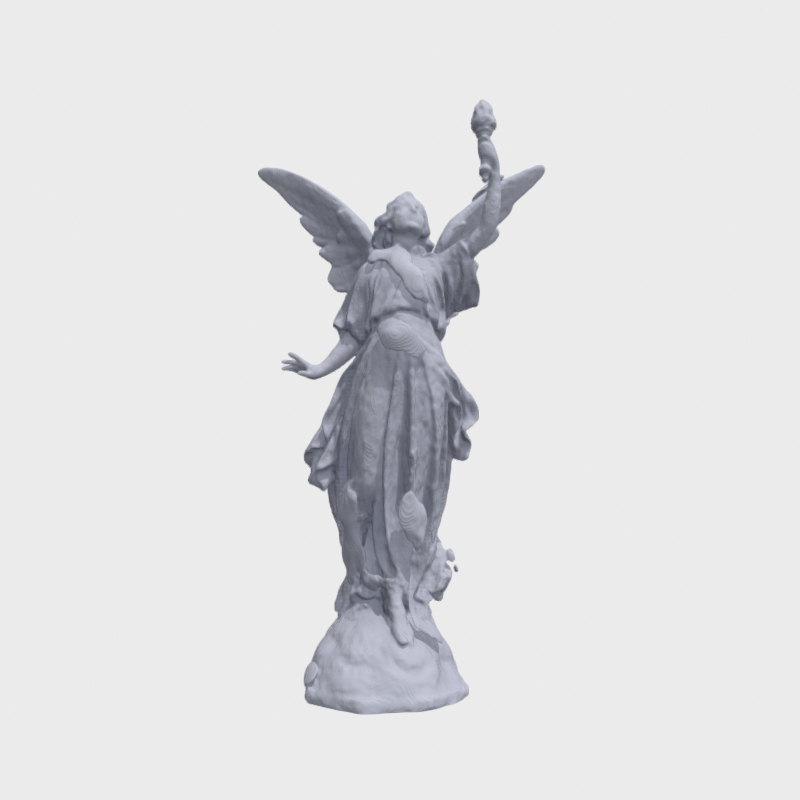} &

        \rotatebox{90}{\tiny \hspace{2em} Gauss} &
        \includegraphics[width=0.16\linewidth]{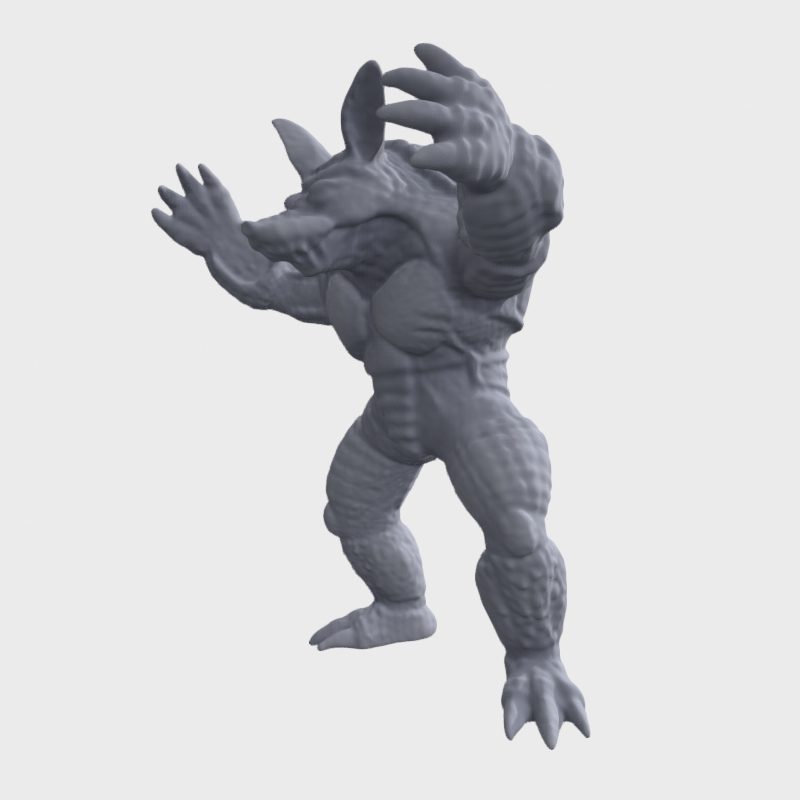} &
        \includegraphics[width=0.16\linewidth]{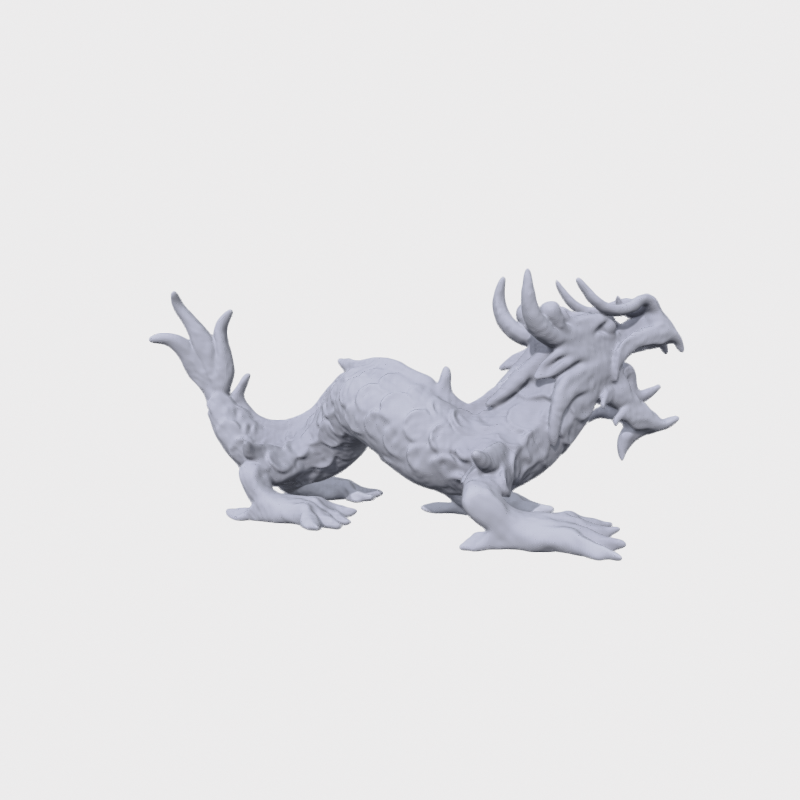} &
        \includegraphics[width=0.16\linewidth]{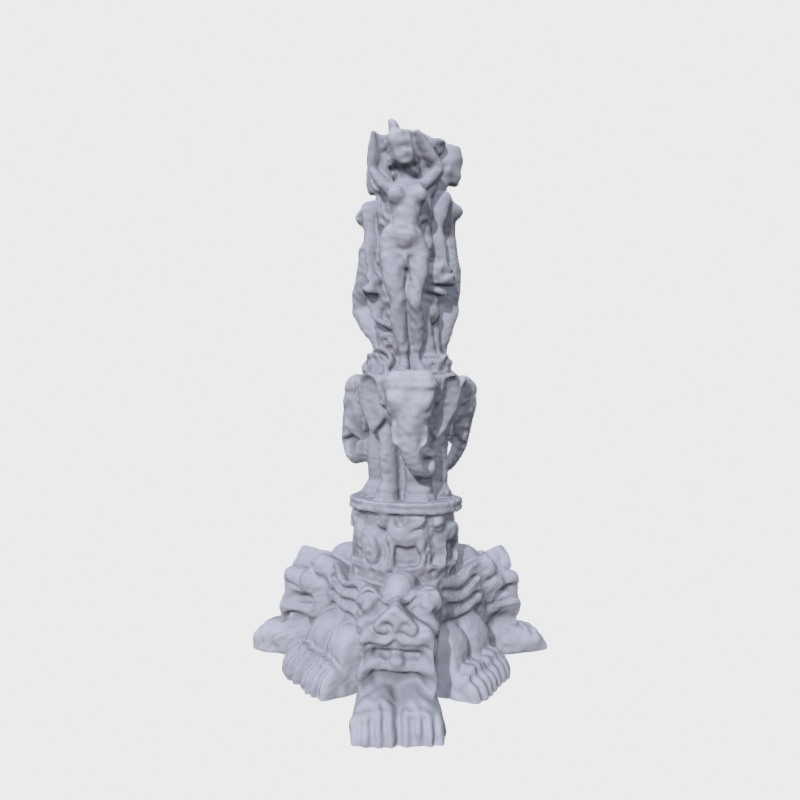} &
        \includegraphics[width=0.16\linewidth]{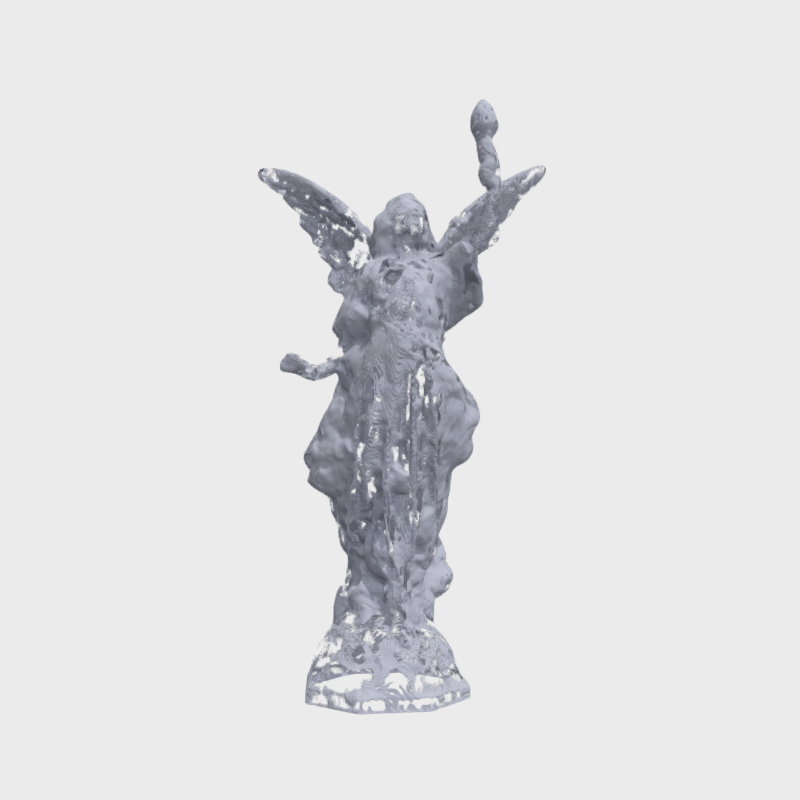} \\

        \rotatebox{90}{\tiny \hspace{2em} WIRE} &
        \includegraphics[width=0.16\linewidth]{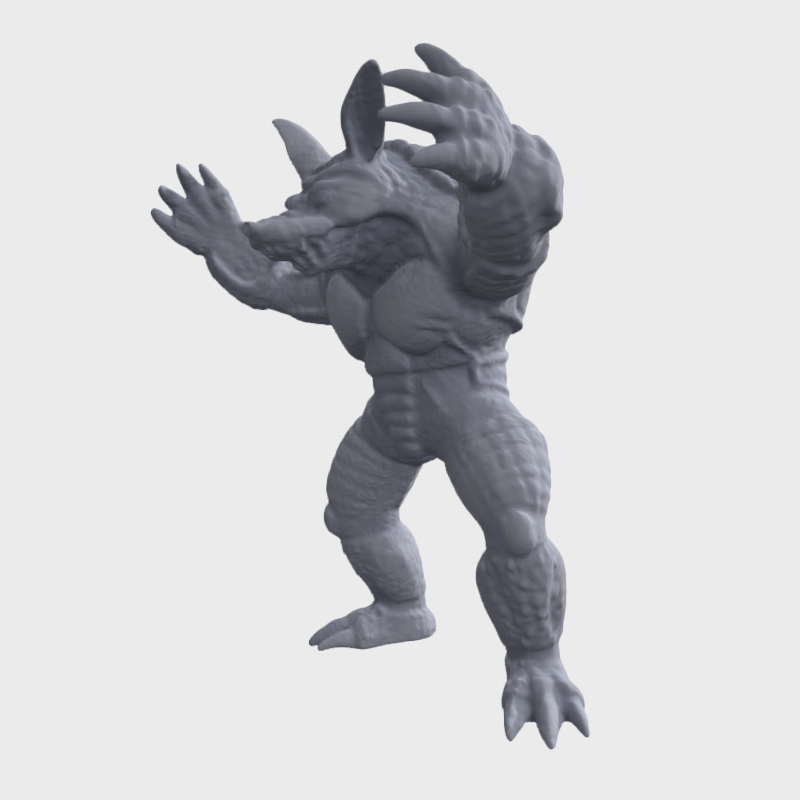} &
        \includegraphics[width=0.16\linewidth]{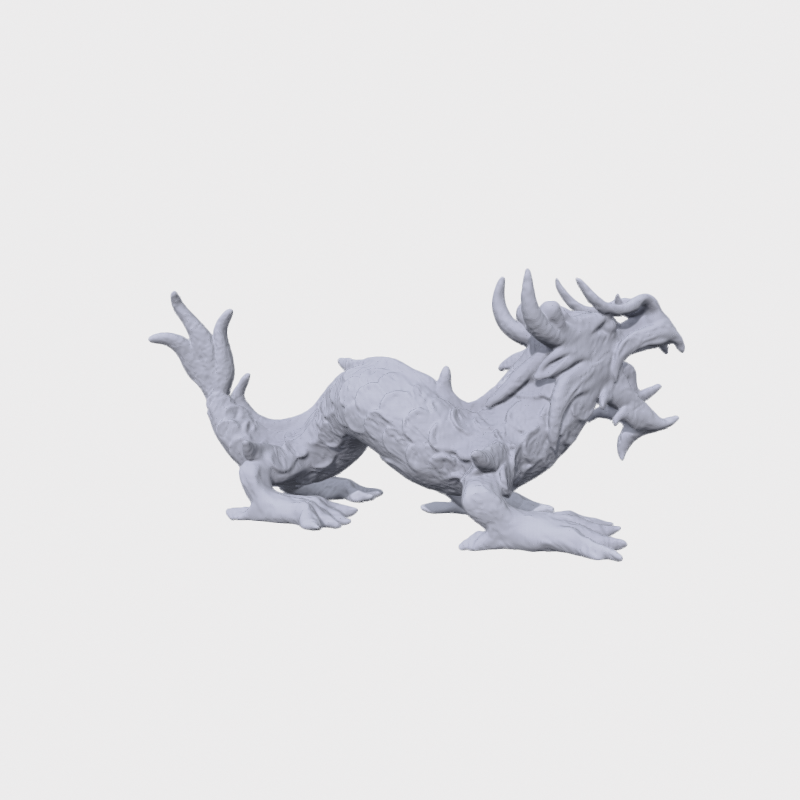} &
        \includegraphics[width=0.16\linewidth]{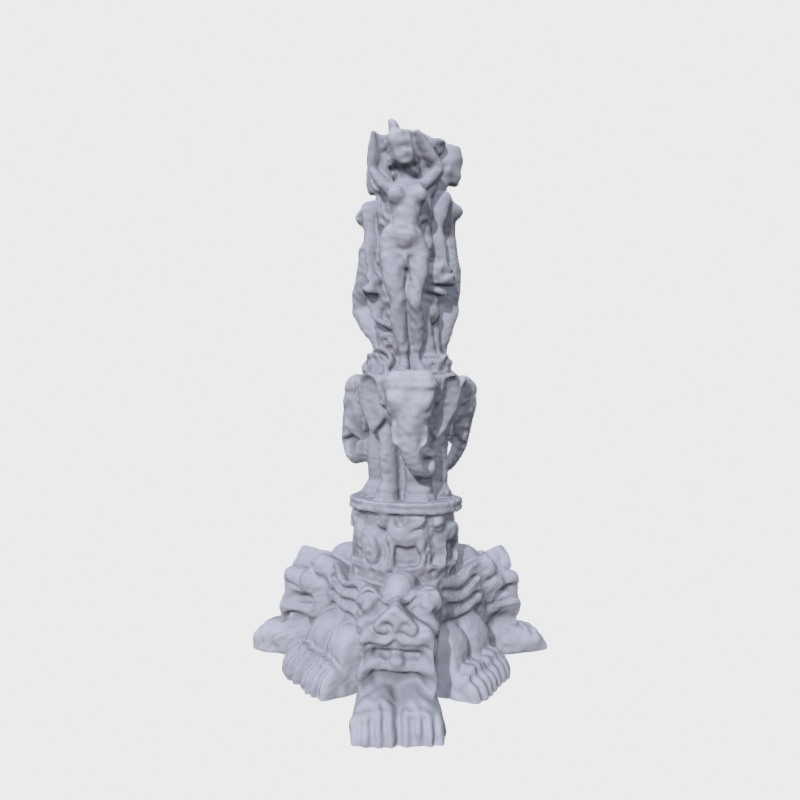} &
        \includegraphics[width=0.16\linewidth]{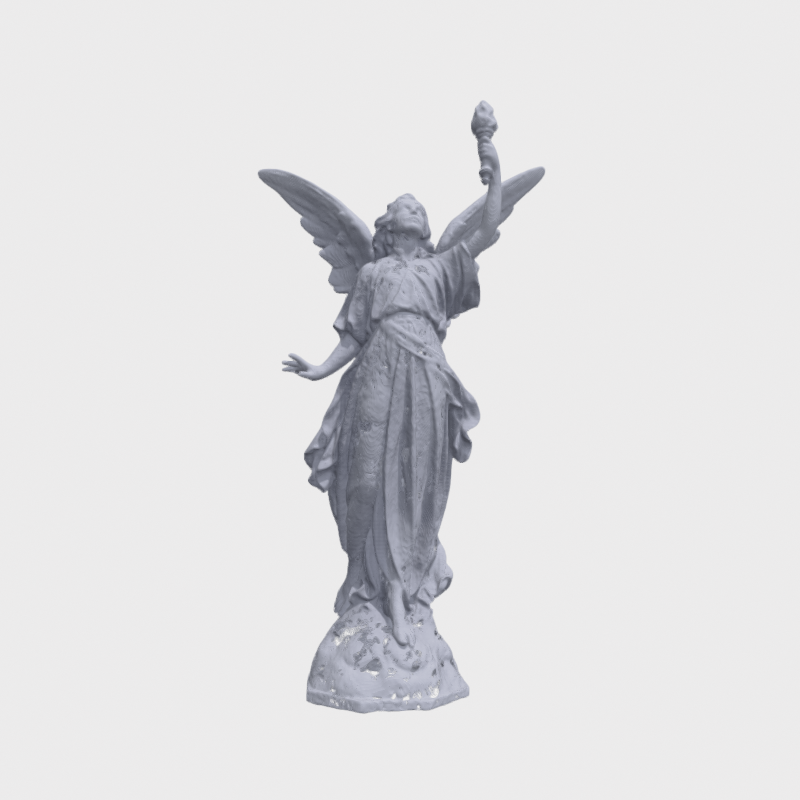} &

        \rotatebox{90}{\tiny \hspace{1.5em} BACON} &
        \includegraphics[width=0.16\linewidth]{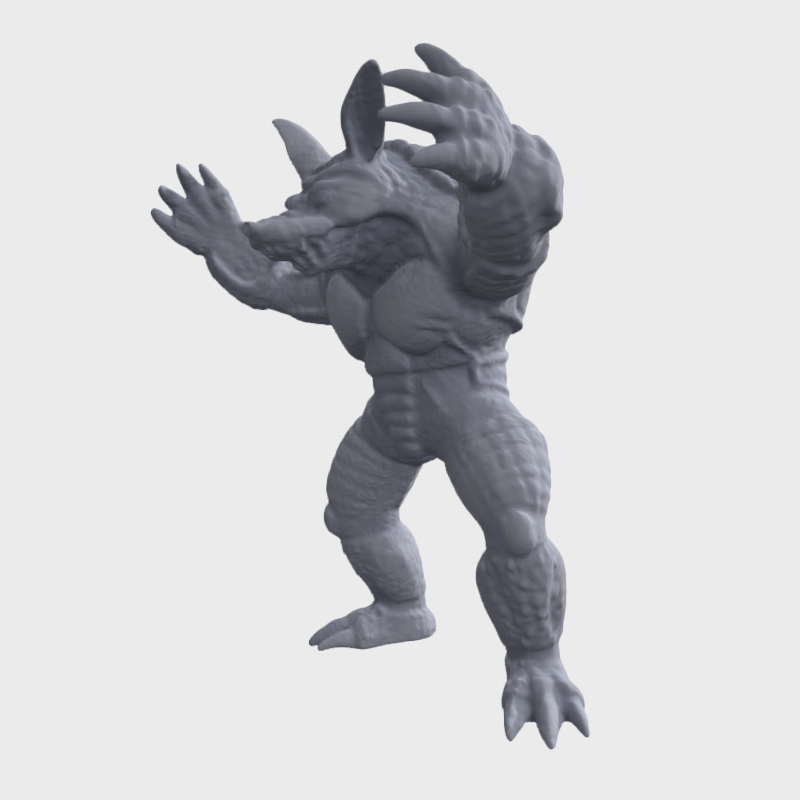} &
        \includegraphics[width=0.16\linewidth]{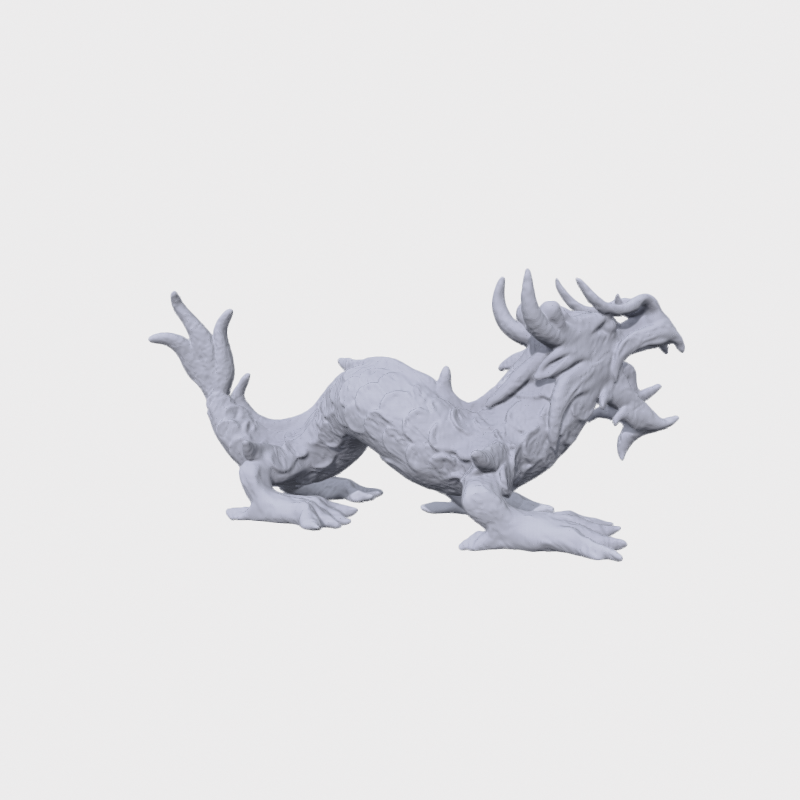} &
        \includegraphics[width=0.16\linewidth]{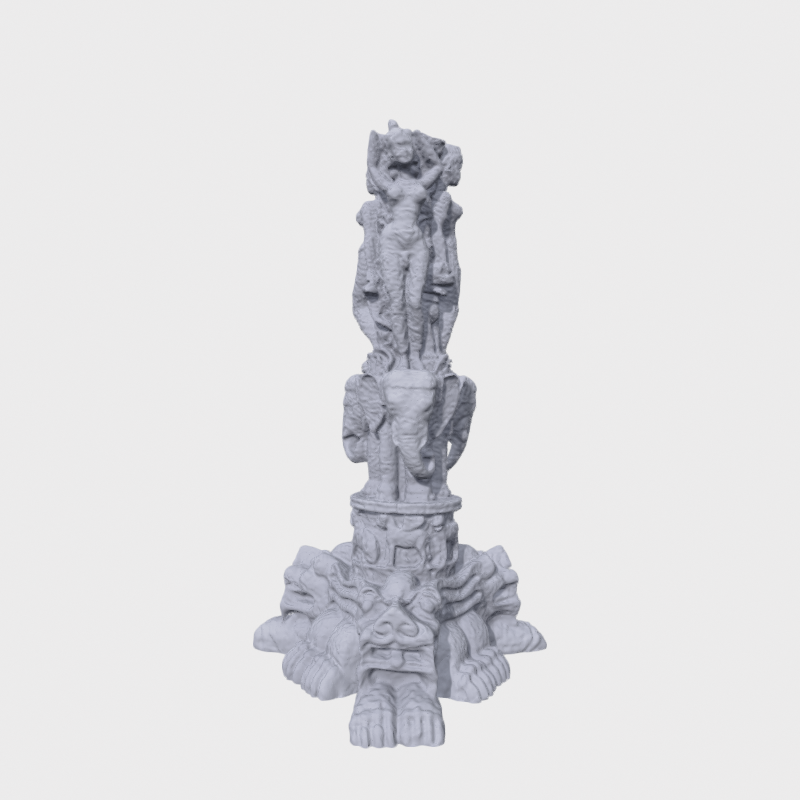}&
        \includegraphics[width=0.16\linewidth]{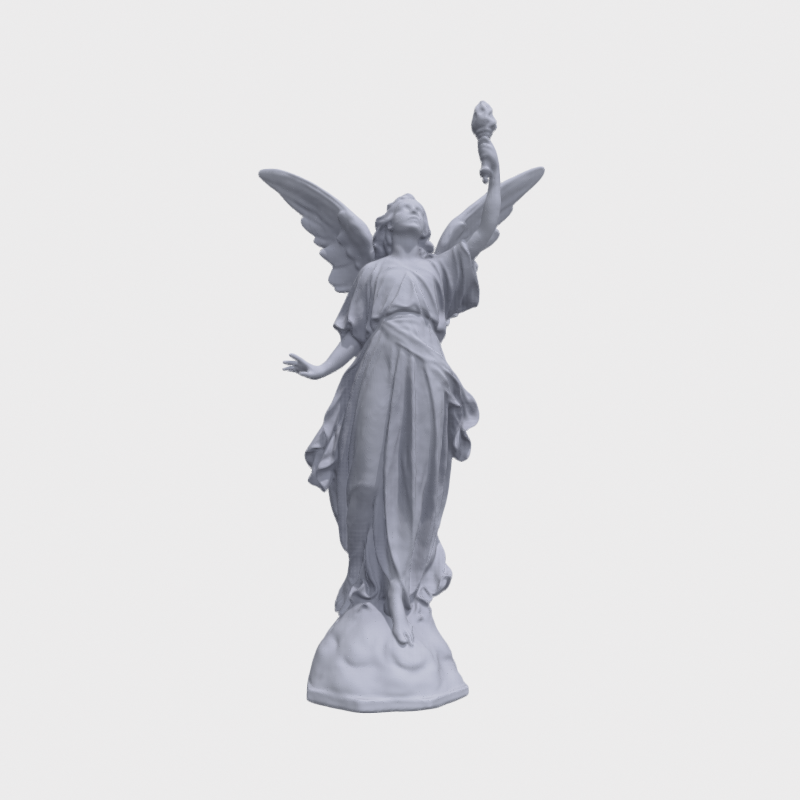} \\

        \rotatebox{90}{\tiny \hspace{2em} FINER} &
        \includegraphics[width=0.16\linewidth]{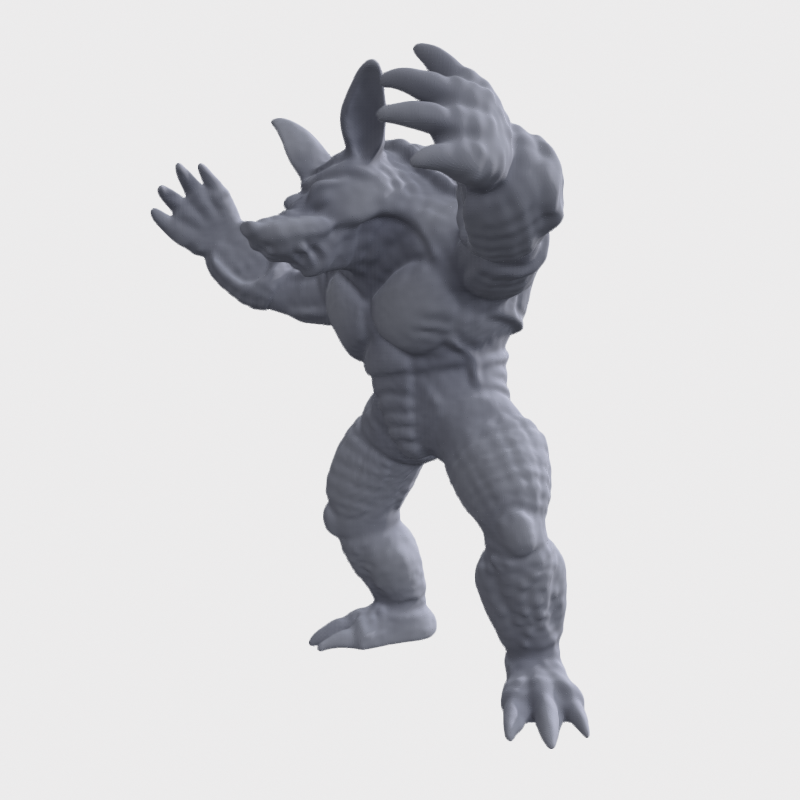} &
        \includegraphics[width=0.16\linewidth]{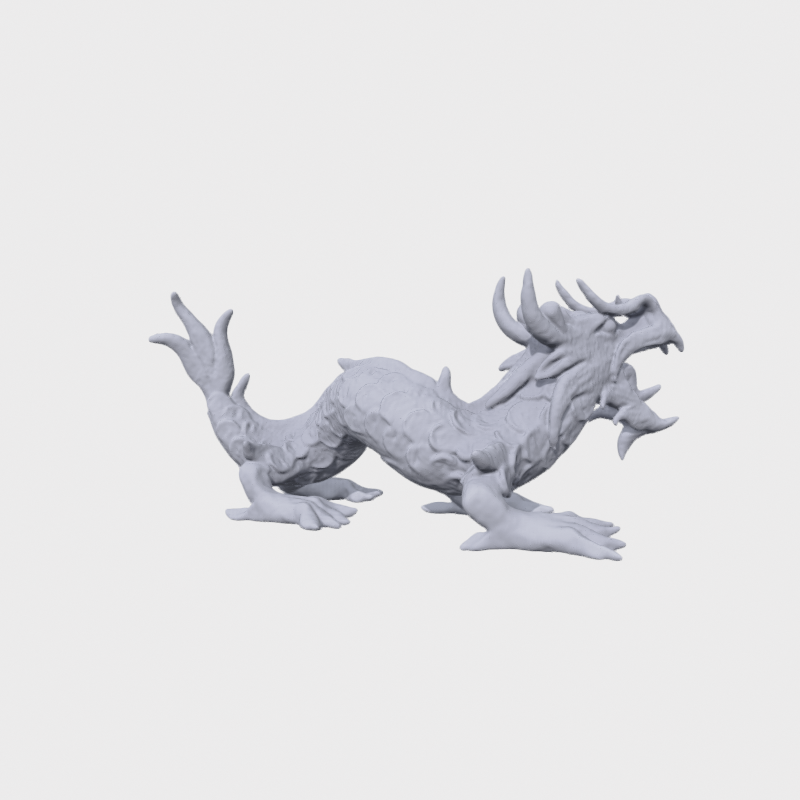} &
        \includegraphics[width=0.16\linewidth]{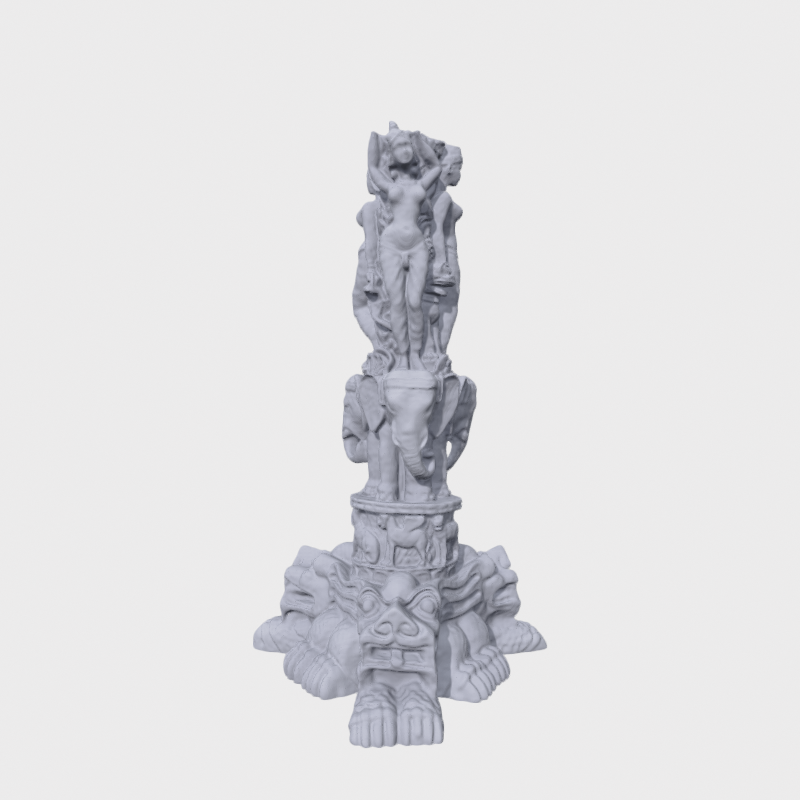} &
        \includegraphics[width=0.16\linewidth]{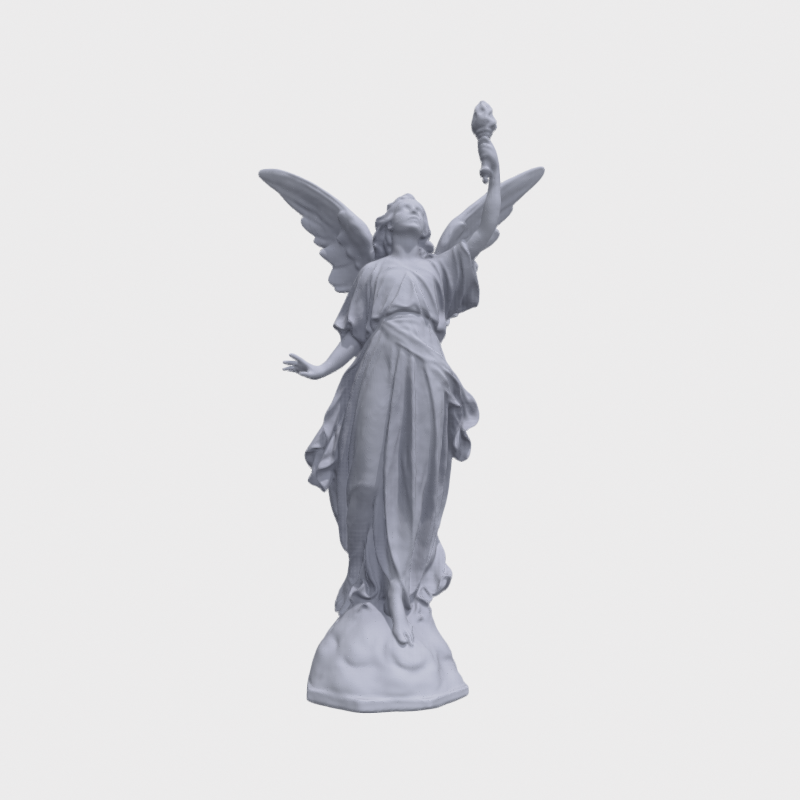} &

        \rotatebox{90}{\tiny \hspace{2.5em} MFN} &
        \includegraphics[width=0.16\linewidth]{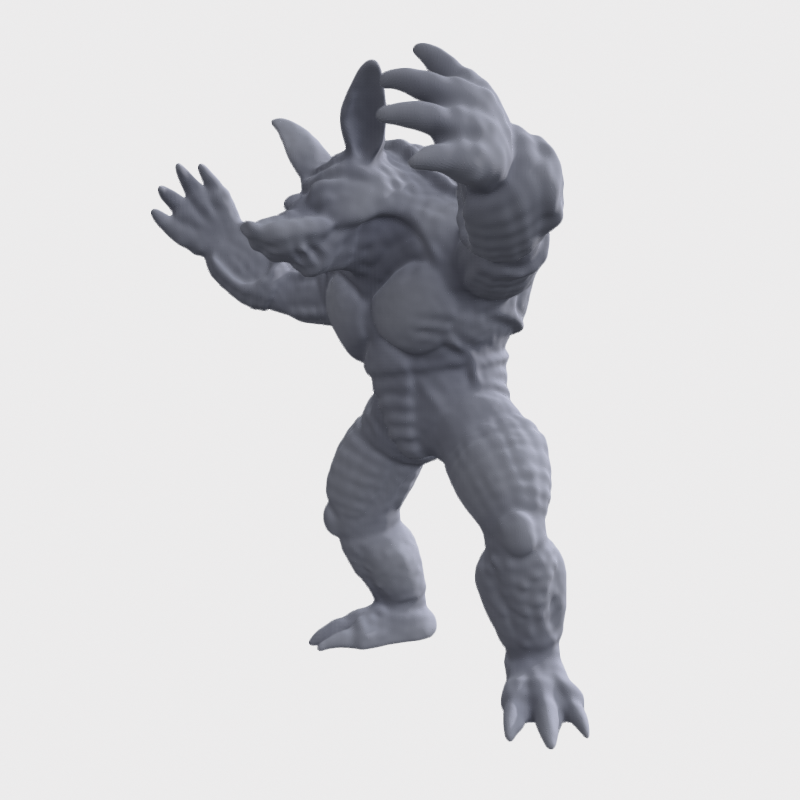} &
        \includegraphics[width=0.16\linewidth]{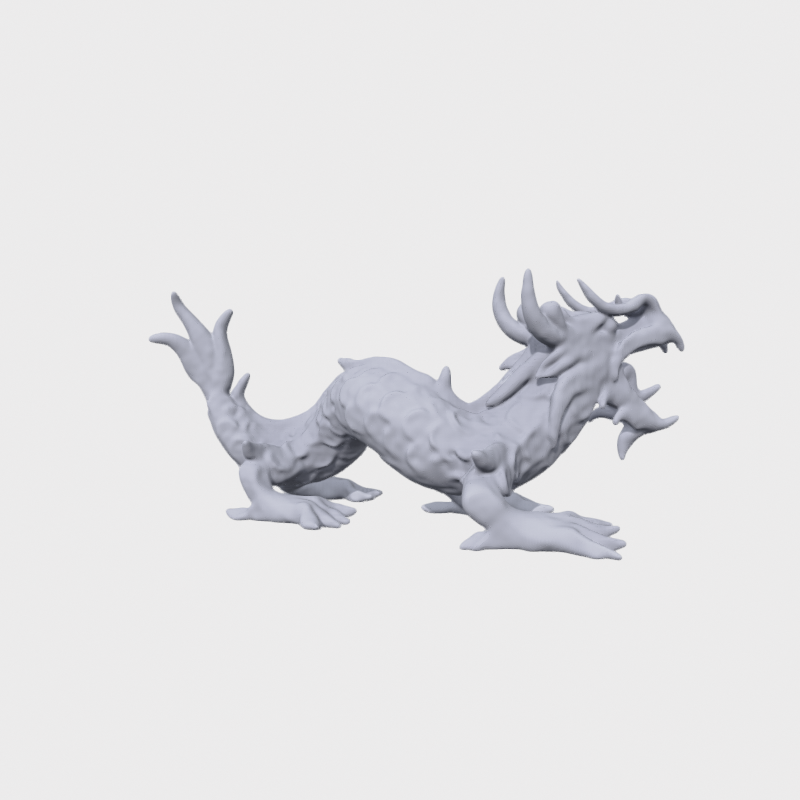} &
        \includegraphics[width=0.16\linewidth]{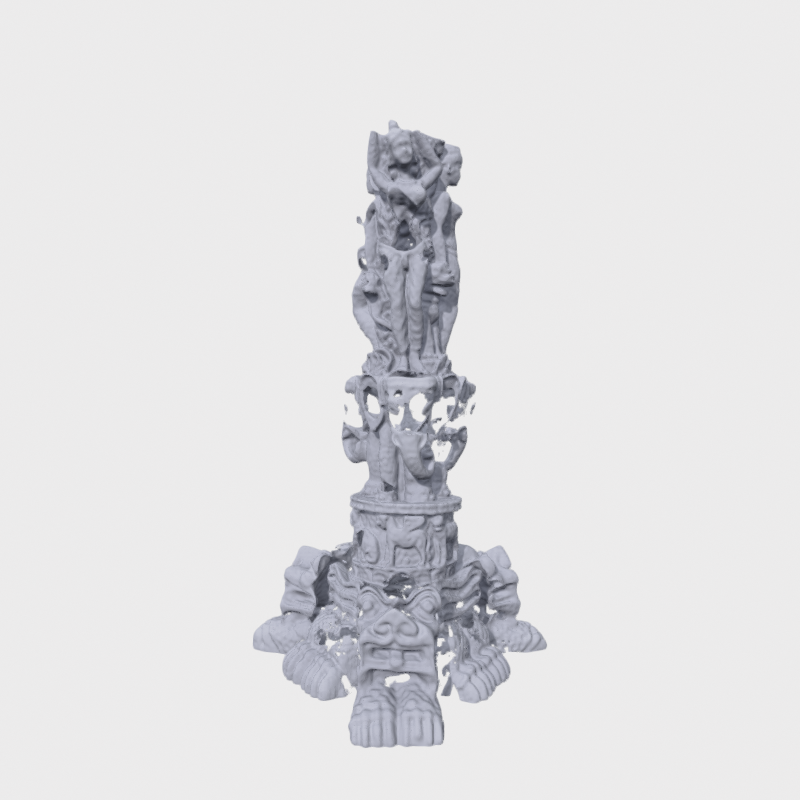} &
        \includegraphics[width=0.16\linewidth]{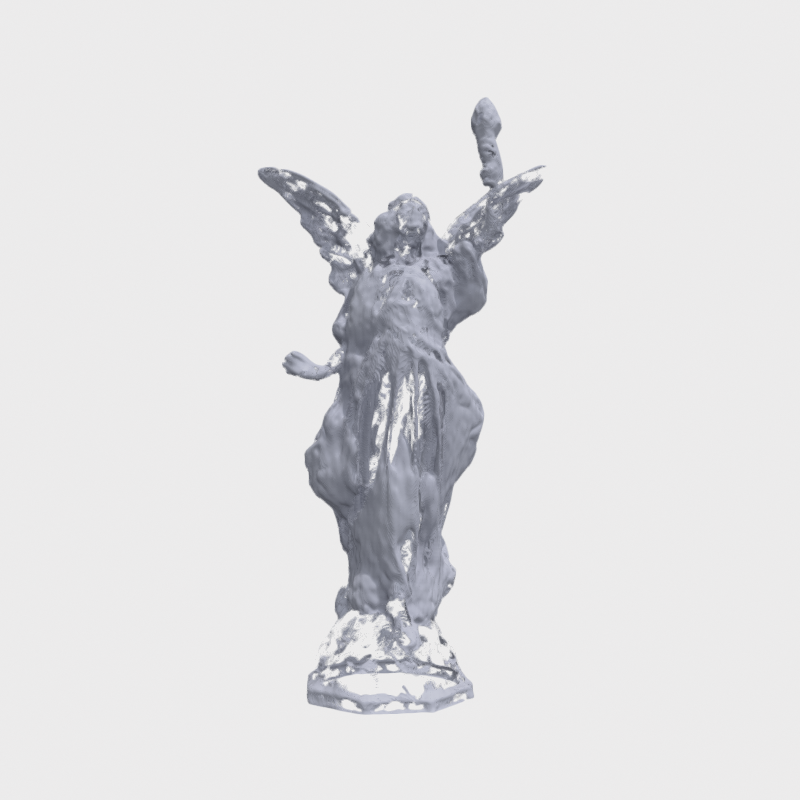} \\

        \rotatebox{90}{\tiny \hspace{2em} Fourier} &
        \includegraphics[width=0.16\linewidth]{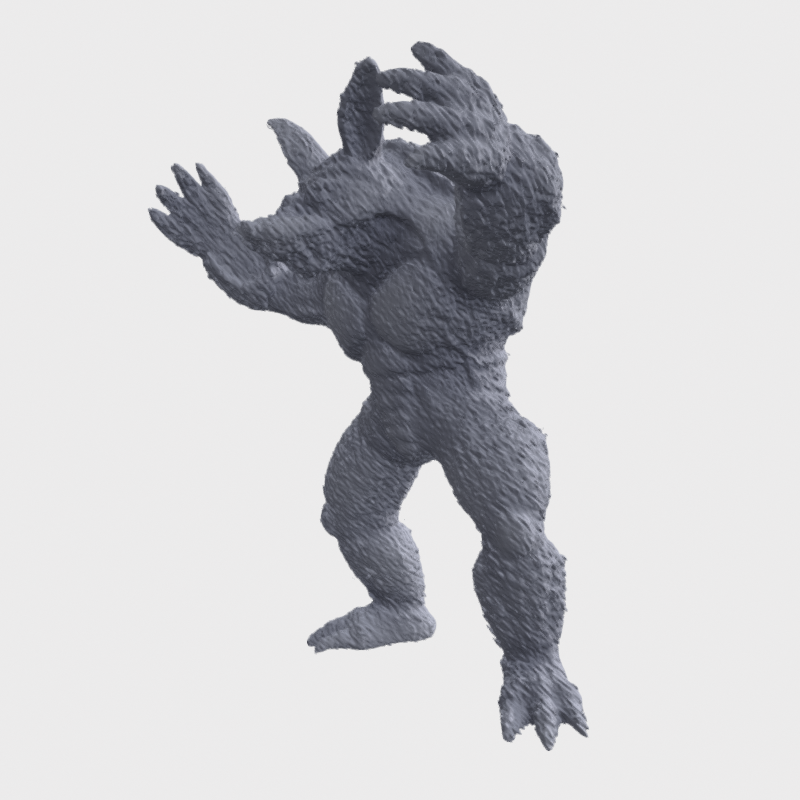} &
        \includegraphics[width=0.16\linewidth]{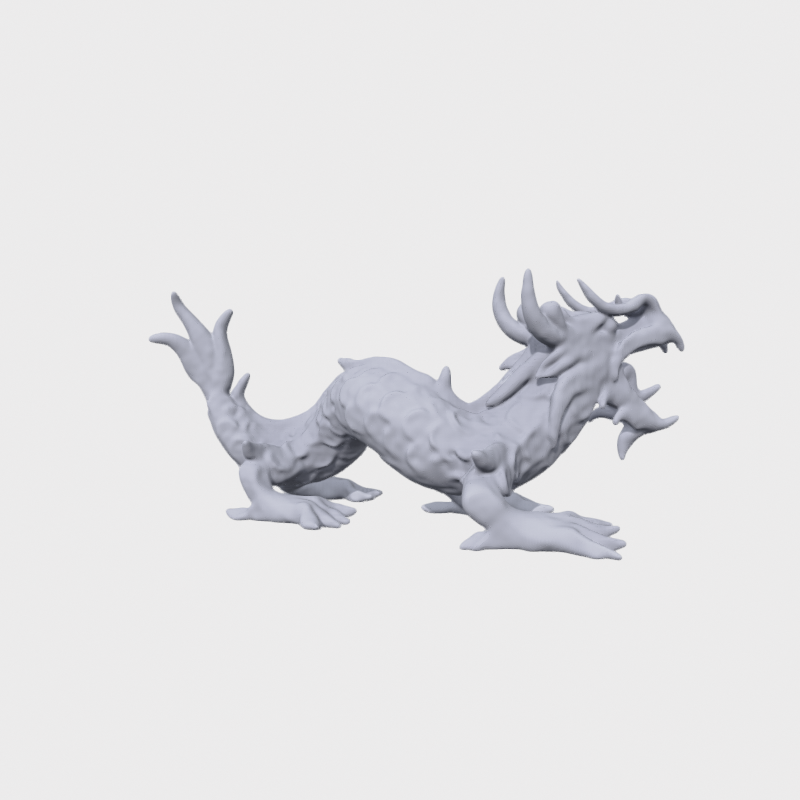} &
        \includegraphics[width=0.16\linewidth]{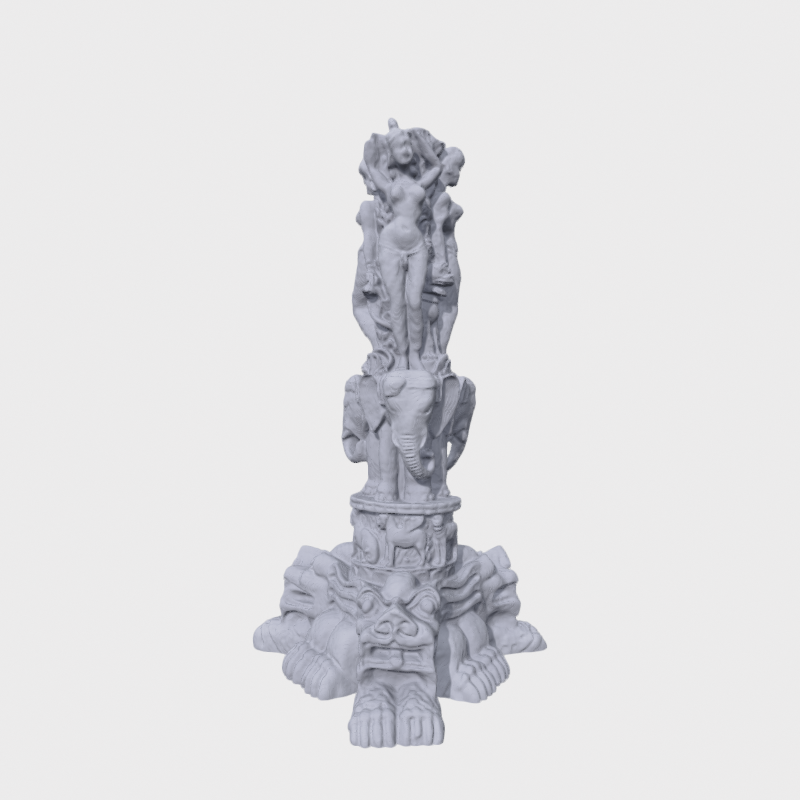} &
        \includegraphics[width=0.16\linewidth]{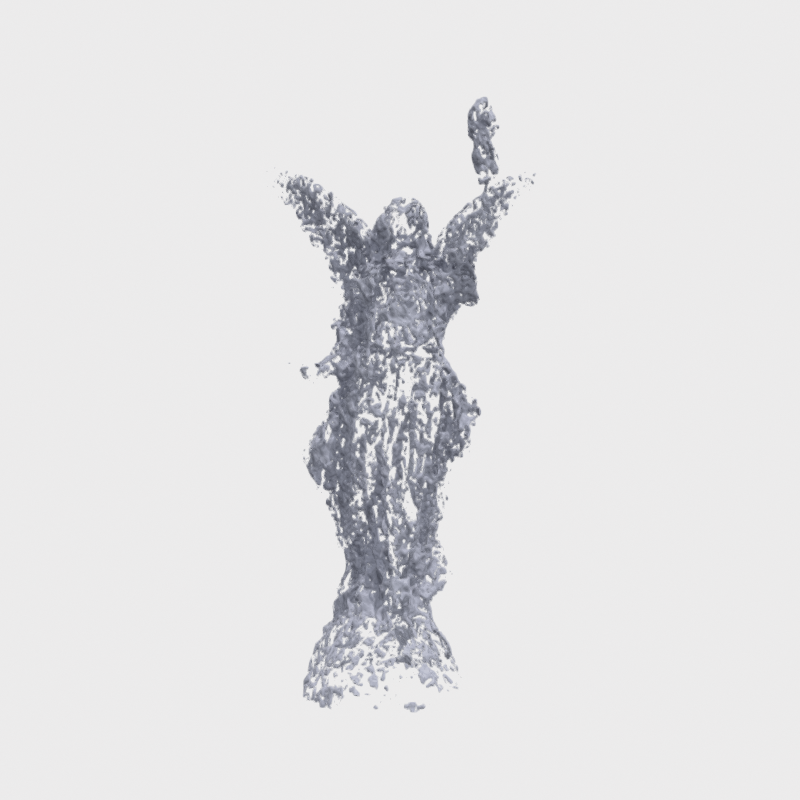} &

        \rotatebox{90}{\tiny \hspace{3em} FR} &
        \includegraphics[width=0.16\linewidth]{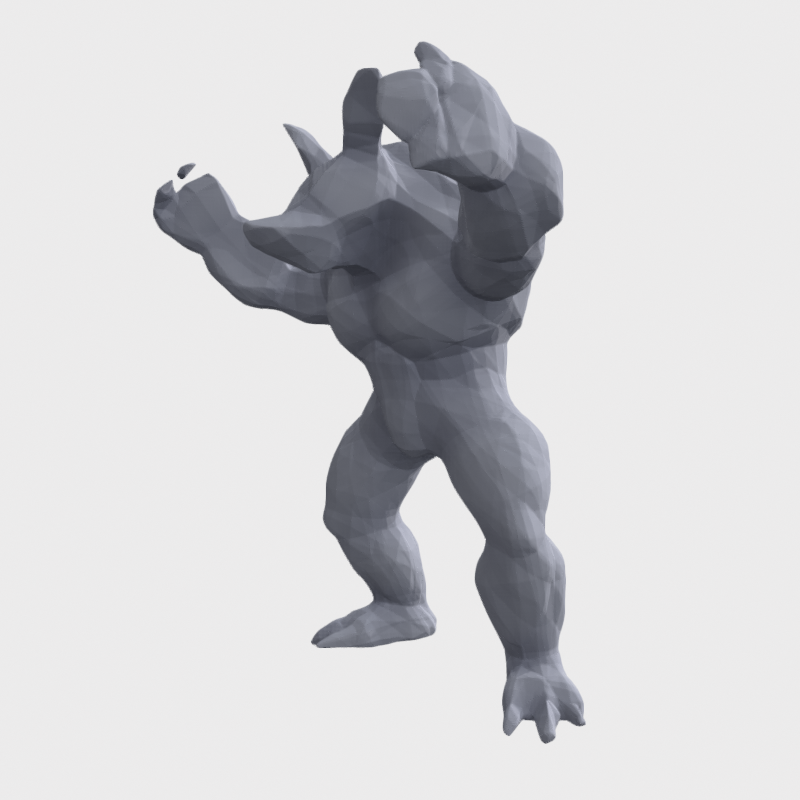} &
        \includegraphics[width=0.16\linewidth]{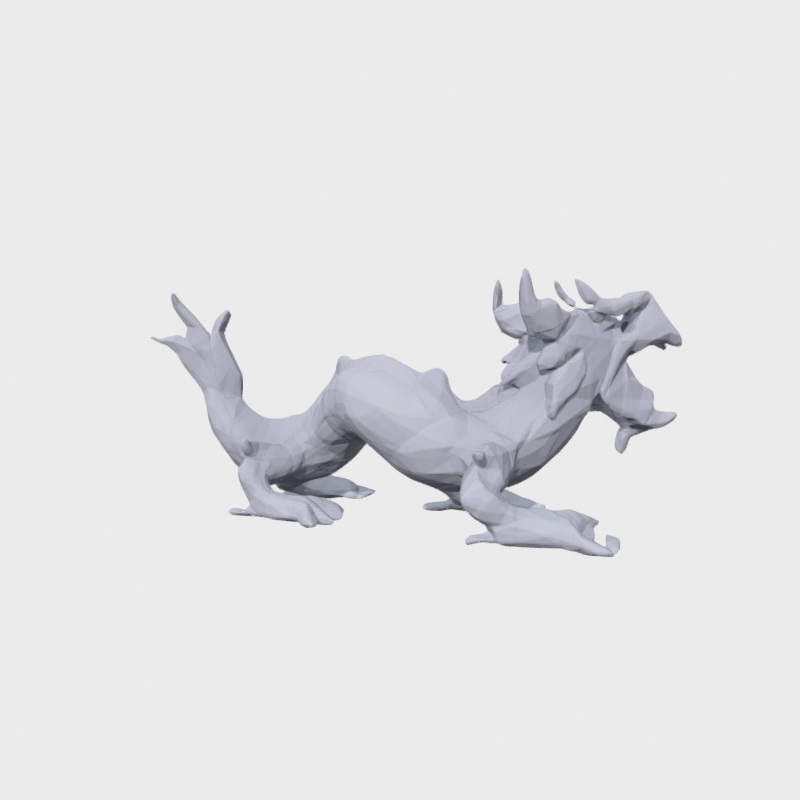} & 
        \includegraphics[width=0.16\linewidth]{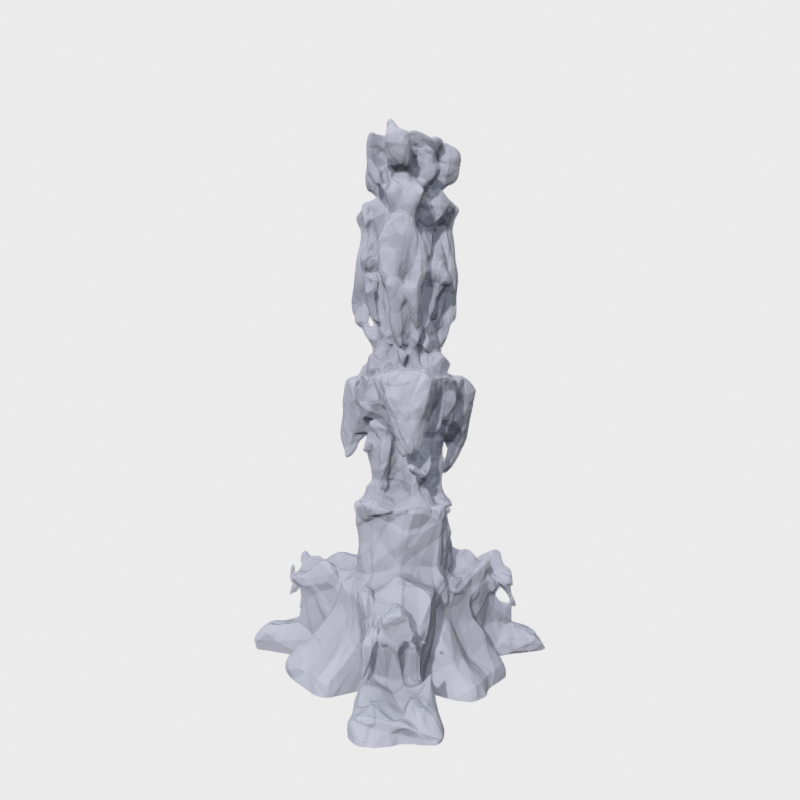} &
        \includegraphics[width=0.16\linewidth]{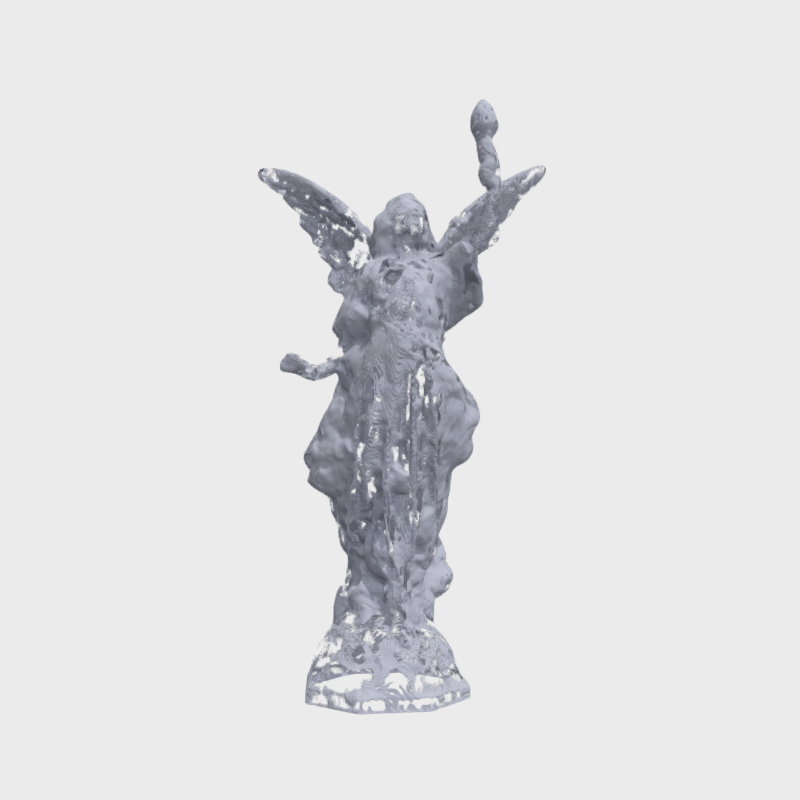} \\

    \end{tabular}
    }
    \caption{
    \textbf{Qualitative results on Signed Distance Function (SDF) Fitting.}
    }
    \label{fig:qual_sdf_fitting}
\end{figure}

            On the simpler meshes (\textbf{Armadillo}, \textbf{Dragon}), most periodic activations perform comparably: CD values fall within a narrow range and NC exceeds 0.96 for the top methods. 
            The differences become more pronounced on the geometrically complex shapes. 
            On \textbf{Thai Statue}, FINER achieves the lowest CD ($0.046 \times 10^{-3}$) with \algoname{} close behind ($0.049 \times 10^{-3}$), while SIREN ($0.061 \times 10^{-3}$) and the remaining baselines degrade more substantially. 
            On \textbf{Lucy}, \algoname{} leads ($0.025 \times 10^{-3}$) followed by FINER ($0.027 \times 10^{-3}$), with SIREN dropping to $0.113 \times 10^{-3}$.
            
            Overall, \algoname{} and FINER emerge as the two strongest methods, with \algoname{} performing well across all four meshes without task-specific tuning. 
            MFN, which dominated the Poisson task, struggles here (CD $0.298 \times 10^{-3}$ on \textbf{Thai Statue}), consistent with its limited representational capacity when fine geometric details must be resolved.
            
        \paragraph{Poisson Image Reconstruction.}
            We solve the Poisson equation $\nabla^2 f = g$ by supervising the network's Laplacian against the given source term $g$, recovering the image $f$ up to boundary conditions.
            This task is distinctive because the network never observes pixel values directly. 
            Indeed, the supervision is applied to second-order derivatives, requiring the activation to propagate gradient information through two levels of differentiation faithfully.
            The smoothness and Lipschitz properties established in \cref{sec:properties} are essential here, as they guarantee well-behaved higher-order derivatives.
            We follow the protocol described in~\cite{sitzmann2019siren} and report results in \cref{tab:poisson_solver}.
            
            \begin{table}[t]
    \centering
    \caption{
        \textbf{Poisson Image Reconstruction.} 
        Best results in \textbf{bold}, runner-ups \underline{underlined}. 
        }
    \resizebox{\linewidth}{!}{
        \begin{tabular}{l ccccc}

            \toprule
            
            \textbf{INR} & 
            \textbf{Tiger} & 
            \textbf{Tiles} & 
            \textbf{Bikers} & 
            \textbf{Butterfly} & 
            \textbf{Knot} \\ %

            \cmidrule(lr){2-6}
            
            \algoname{}                          & \underline{26.13 $\pm$ 0.31} & 14.26 $\pm$ 1.91 & \underline{26.93 $\pm$ 0.25} & \textbf{33.76 $\pm$ 0.02} & \underline{25.83 $\pm$ 0.91} \\
            SIREN \cite{sitzmann2019siren}       & 25.89 $\pm$ 1.45 & \underline{19.13 $\pm$ 1.27} & 23.39 $\pm$ 1.22 & 30.69 $\pm$ 0.11 & 24.74 $\pm$ 0.26 \\
            Gauss \cite{ramasinghe2022beyond}    & 22.21 $\pm$ 0.28 & 13.47 $\pm$ 0.13 & 17.37 $\pm$ 0.46 & 20.83 $\pm$ 0.49 & 22.03 $\pm$ 0.43 \\
            WIRE \cite{saragadam2023wire}        & 12.94 $\pm$ 0.18 & 14.23 $\pm$ 0.19 & 14.20 $\pm$ 0.28 & 10.95 $\pm$ 0.10 & 13.22 $\pm$ 0.06 \\
            BACON \cite{lindell2021bacon}        & 12.80 $\pm$ 0.00 & 14.29 $\pm$ 0.01 & 14.14 $\pm$ 0.01 & 10.83 $\pm$ 0.01 & 14.07 $\pm$ 0.05 \\
            FINER \cite{liu2024finer}            & 19.85 $\pm$ 0.42 & 16.85 $\pm$ 0.97 & 15.78 $\pm$ 0.82 & 18.73 $\pm$ 0.20 & 17.68 $\pm$ 0.78 \\
            MFN \cite{fathony2021multiplicative} & \textbf{36.76 $\pm$ 0.13} & \textbf{27.21 $\pm$ 0.78} & \textbf{31.52 $\pm$ 0.22} & \underline{33.11 $\pm$ 0.57} & \textbf{29.85 $\pm$ 0.32} \\
            Fourier \cite{tancik2020fourfeat}    & 17.98 $\pm$ 6.43 & 21.03 $\pm$ 2.51 & 24.61 $\pm$ 0.70 & 19.13 $\pm$ 1.52 & 17.32 $\pm$ 0.87 \\
            FR \cite{shi2024improved}            & 15.73 $\pm$ 1.57 & 15.38 $\pm$ 1.21 & 15.50 $\pm$ 1.35 & 12.46 $\pm$ 0.77 & 14.13 $\pm$ 0.29 \\
            
            \bottomrule
            
        \end{tabular}
        }
    \label{tab:poisson_solver}
\end{table}

            In this task, MFN leads on four of five images by substantial margins (up to 10 dB on \textbf{Tiger}), with \algoname{} leading on \textbf{Butterfly} (33.76 dB vs. 33.11 dB) and ranking second on \textbf{Tiger}, \textbf{Bikers}, and \textbf{Knot}.
            On Tiles, \algoname{} drops to 14.26 dB, well below both MFN (27.21 dB) and SIREN (19.13 dB).
            Nonetheless, \algoname{} remains the only method other than MFN to rank consistently in the top two: it places first or second on four of five images, being the runner-up method on average.
            This is notable because, across all previous tasks, the identity of the runner-up changes frequently: MFN on denoising and CT, SIREN on inpainting, FINER on super-resolution. 
            \algoname{} is the only activation that remains competitive throughout.

            The explanation lies in how Poisson supervision alters the spectral gating dynamics.
            \cref{cor:gating} was derived under a pixel-level MSE loss, where the signal term $T_{\mathrm{signal}}$ depends on the inner product $\langle y, \sin(\omega \cdot + \varphi)\rangle_N$.
            Under Poisson supervision, the loss is computed on the Laplacian $\nabla^2 f$ rather than on $f$ itself.
            Differentiating a sinusoidal component at frequency $\omega$ twice introduces a factor of $\omega^2$, so the effective target seen by the optimiser amplifies high-frequency content by $\omega^2$ relative to pixel-level supervision.
            Within the MSE loss, this amplification is squared again, meaning the effective signal energy at frequency $\omega$ scales as $\omega^4$ compared to the pixel-level case.
            This fundamentally shifts the balance in the gating condition of \cref{eq:gating_condition}: $T_{\mathrm{signal}}$ grows with $\omega^4$ while $T_{\mathrm{self}}$ remains governed by $A$, which starts small due to the stopband initialisation.
            The coarse-to-fine curriculum that benefits noise rejection and progressive frequency recovery under pixel-level losses now becomes a liability, as low-amplitude but high-frequency Laplacian components carry disproportionate energy in the loss landscape, yet the conservative initial bandwidth delays their recovery.
            MFN's multiplicative Gabor structure does not face this issue: its filters compose through element-wise products whose derivatives remain well-conditioned through multiple levels of differentiation, making it inherently suited to derivative-based supervision.
            Adapting \algoname{}'s initialisation to account for the $\omega^2$ scaling, for instance, by starting with a wider initial bandwidth when the supervision is applied to differential operators, is a natural direction for future work.
            As an example, running \algoname{} with a wider initial bandwidth (from $\omega_0 = 45, \omega_{n,0} = 50$ to $\omega_0 = 15, \omega_{n,0} = 50$) improves the performance on \textbf{Tiger} from $26.13 \pm 0.31$ dB to $30.11 \pm 0.64$ dB.
            
        \paragraph{Compatibility with Other Works.}
            \algoname{} can be seamlessly integrated with other established techniques, such as faster initialisation methods like FreSh \cite{kania2024freshfrequencyshiftingaccelerated}, as well as be extended to temporal signals via ResField \cite{mihajlovic2024ResFields}.
            Indeed, we run both FreSh and ResField by replacing SIREN's activation function with \algoname{}'s activation function.
            With FreSh, the test PSNR for Image Fitting on \emph{Kodak Image 07} goes from $30.32$ dB (SIREN) to $34.39$ dB (\algoname{}).
            Likewise for ResFields, the test PSNR for Video Fitting on \emph{carphone} goes from $28.4$ dB (SIREN) to $35.47$ dB (\algoname{}).
            
        \paragraph{Additional Baseline Comparisons.}
            We run TUNER \cite{novello2025tuner}, SASNet \cite{feng2026sasnet}, and SAPE \cite{hertz2021sape} for the Image Fitting task on Tiger, following our protocol and show the results in \cref{tab:additional_baselines}. 
            \algoname{} provides the best results.
            \begin{table}[ht]
                \centering
                \caption{\textbf{Image Fitting on Tiger across Additional Baselines.}}
                \label{tab:additional_baselines}
                \resizebox{\linewidth}{!}{
                \begin{tabular}{l cccc}
                    \toprule
                    
                    \textbf{INR} & \textbf{FDHO} & TUNER & SASNet & SAPE \\
                    \midrule
                    
                    Final PSNR & \textbf{63.79 $\pm$ 0.23} & 59.03 $\pm$ 2.15 & 46.60 $\pm$ 16.79 & 38.71 $\pm$ 0.65 \\
                    Peak PSNR  & \textbf{63.79 $\pm$ 0.23} & 61.51 $\pm$ 0.17 & 48.93 $\pm$ 0.31 & 40.22 $\pm$ 0.03 \\
                    
                    \bottomrule
                    
                \end{tabular}}
            \end{table}
            
    \subsection{Additional Qualitative Results}
        We report in \cref{fig:qual_signal_fitting} and \cref{fig:qual_audio_fitting} all the qualitative results for the signal and audio fitting tasks.
        \begin{figure}[t]
    \centering
    \resizebox{\linewidth}{!}{
    \begin{tabular}{c cccccc}        

        & \multicolumn{3}{c}{\textbf{Square Wave}} & \multicolumn{3}{c}{\textbf{Chirp}} \\
        
        & {\tiny 100 Hz} & {\tiny 250 Hz} & {\tiny 500 Hz} & {\tiny 250 Hz} & {\tiny 500 Hz} & {\tiny 1000 Hz} \\
                
        {\tiny \algoname{}} &
        \includegraphics[width=0.16\linewidth, valign=c]{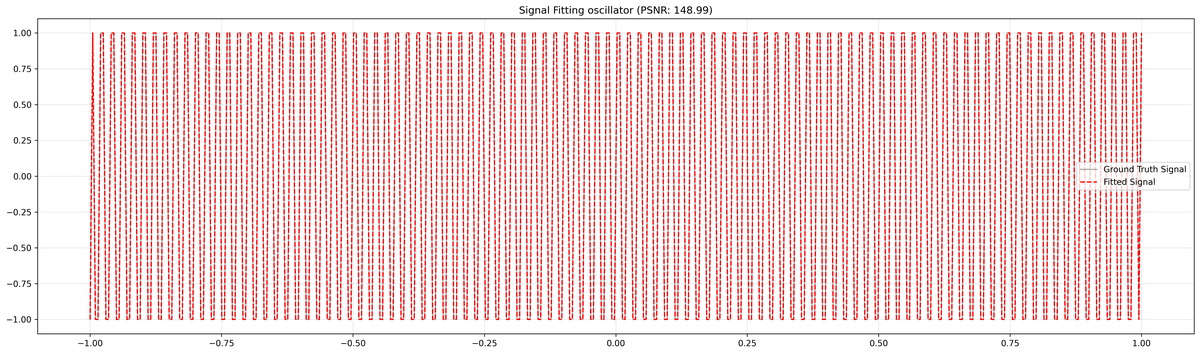} & 
        \includegraphics[width=0.16\linewidth, valign=c]{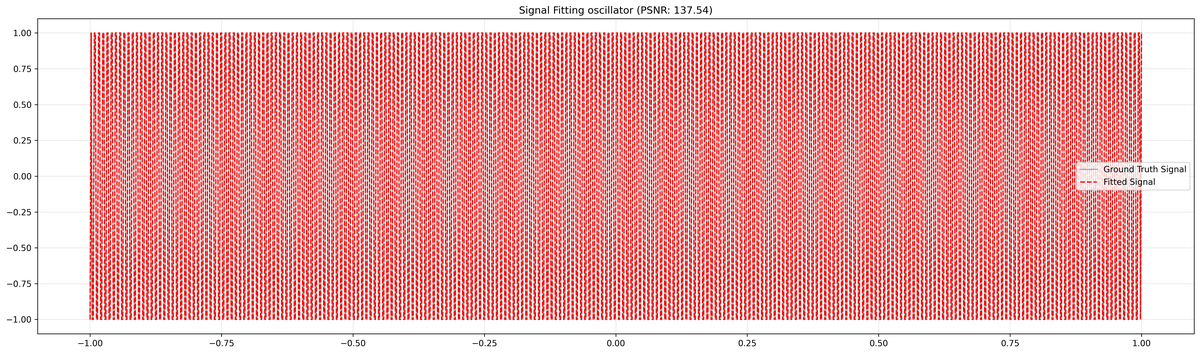} & 
        \includegraphics[width=0.16\linewidth, valign=c]{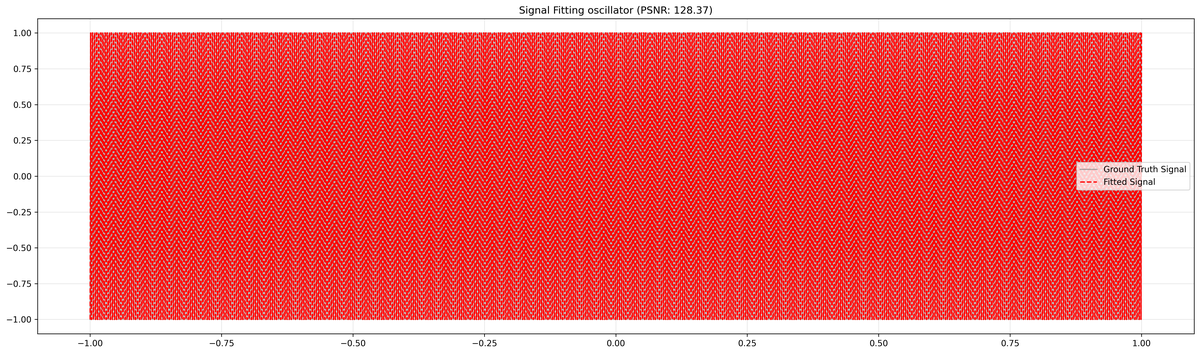} & 
        \includegraphics[width=0.16\linewidth, valign=c]{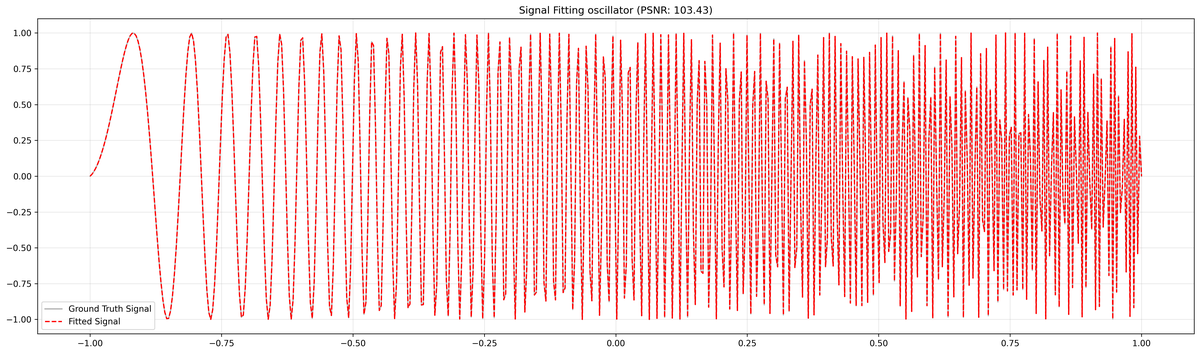} & 
        \includegraphics[width=0.16\linewidth, valign=c]{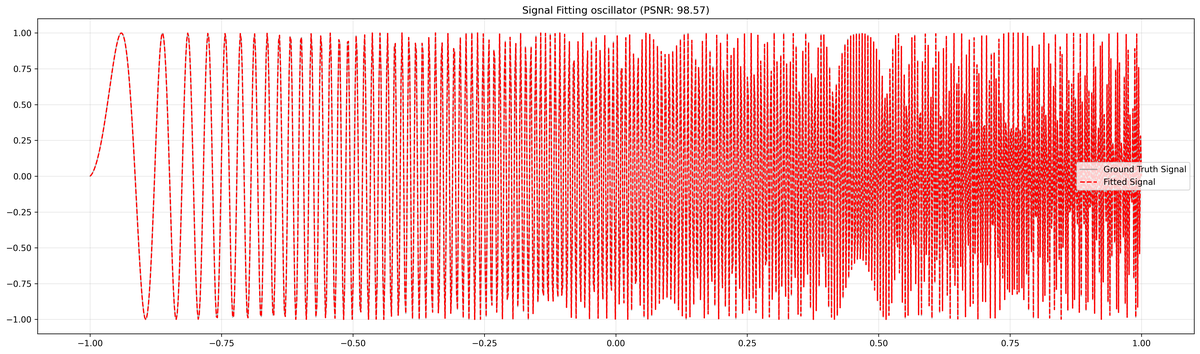} & 
        \includegraphics[width=0.16\linewidth, valign=c]{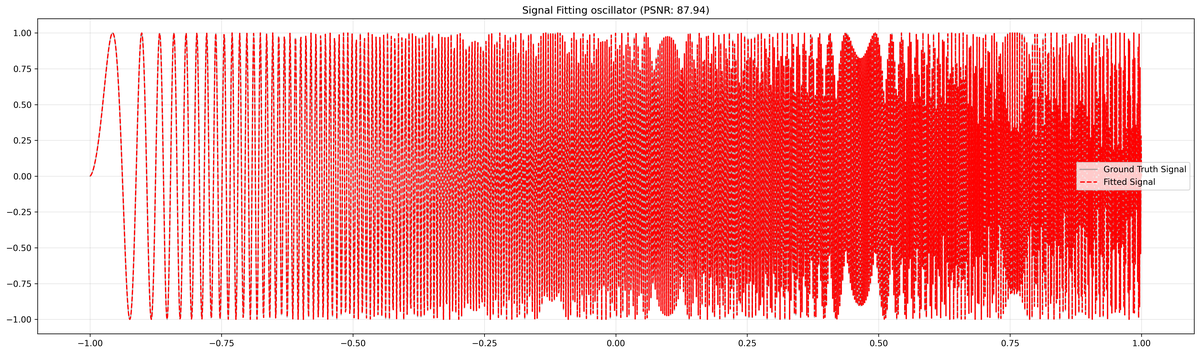} \\

        {\tiny SIREN} &
        \includegraphics[width=0.16\linewidth, valign=c]{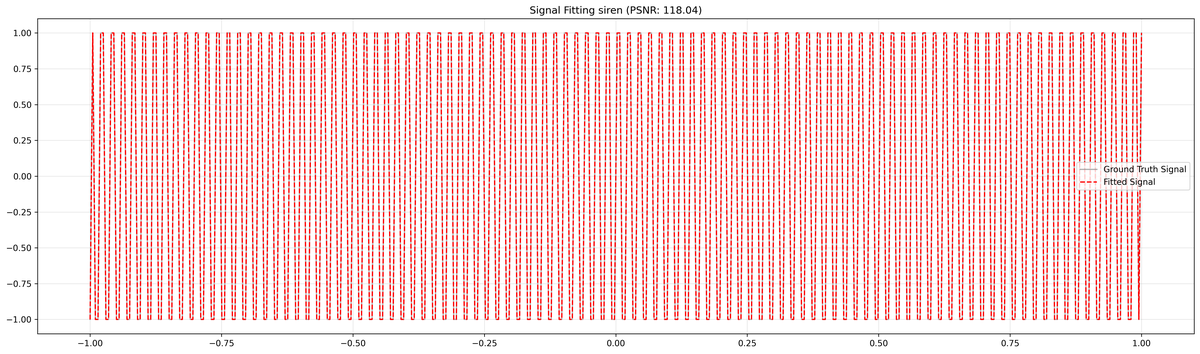} & 
        \includegraphics[width=0.16\linewidth, valign=c]{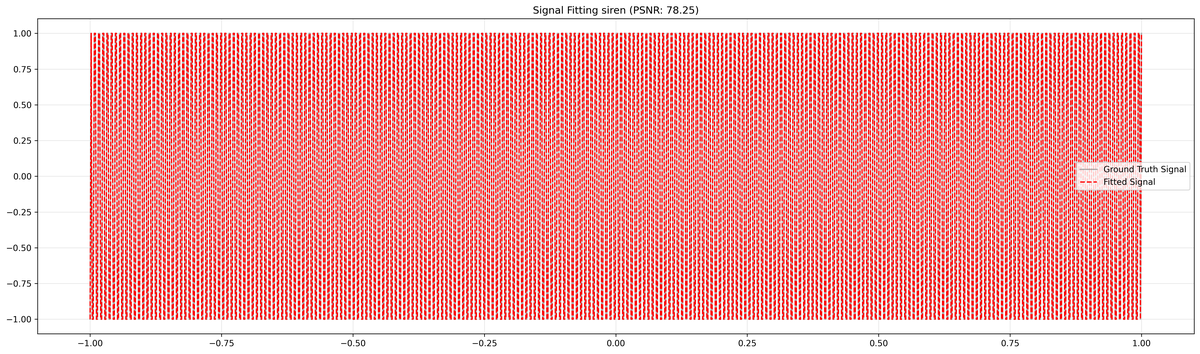} & 
        \includegraphics[width=0.16\linewidth, valign=c]{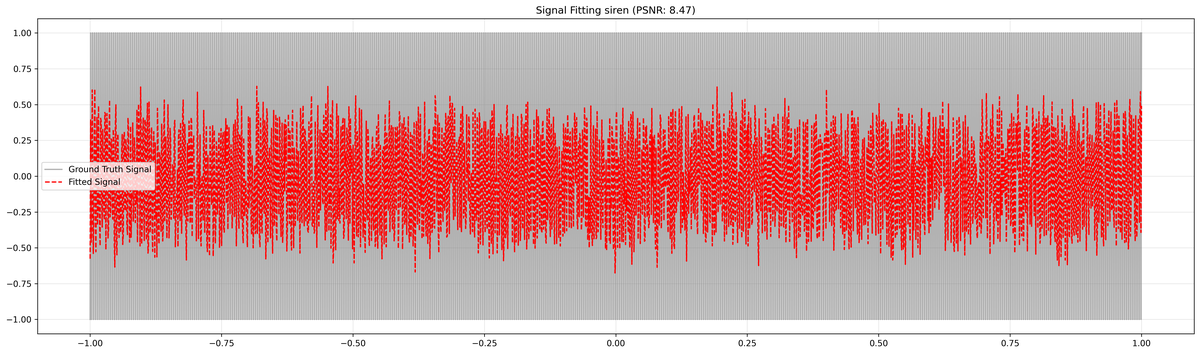} & 
        \includegraphics[width=0.16\linewidth, valign=c]{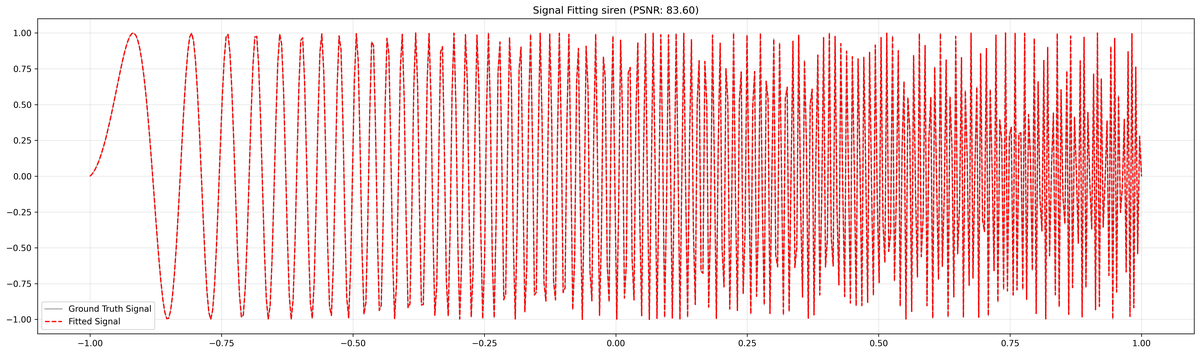} & 
        \includegraphics[width=0.16\linewidth, valign=c]{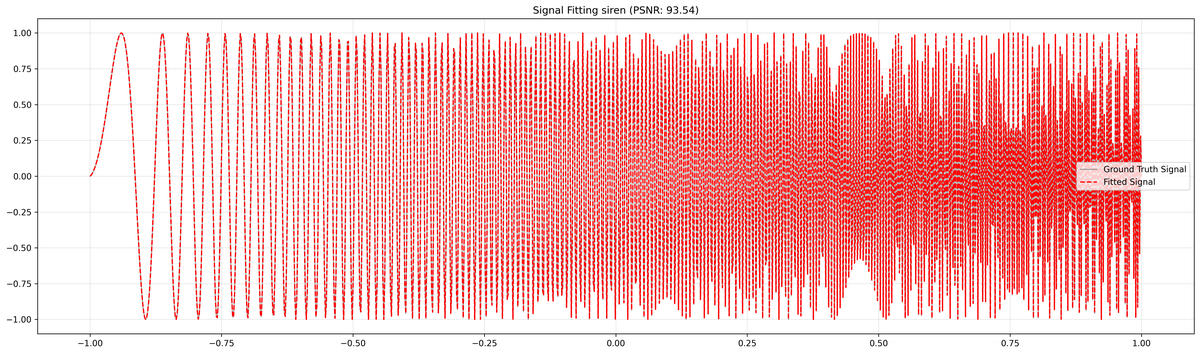} & 
        \includegraphics[width=0.16\linewidth, valign=c]{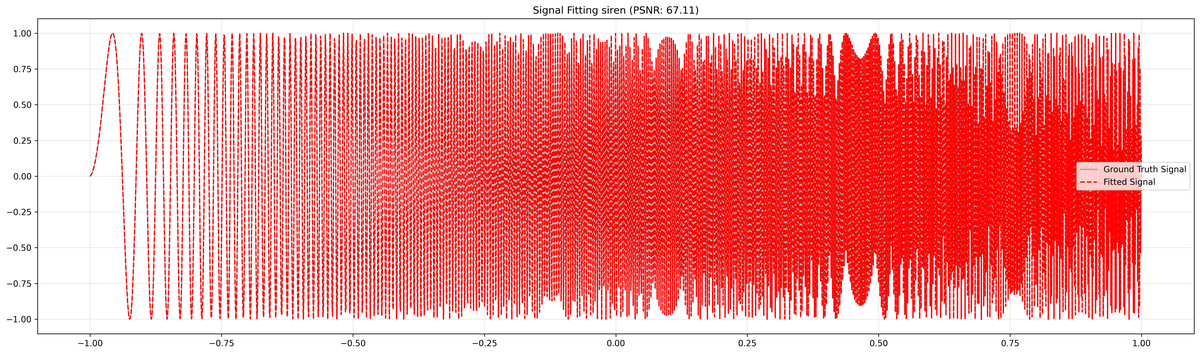} \\

        {\tiny Gauss} &
        \includegraphics[width=0.16\linewidth, valign=c]{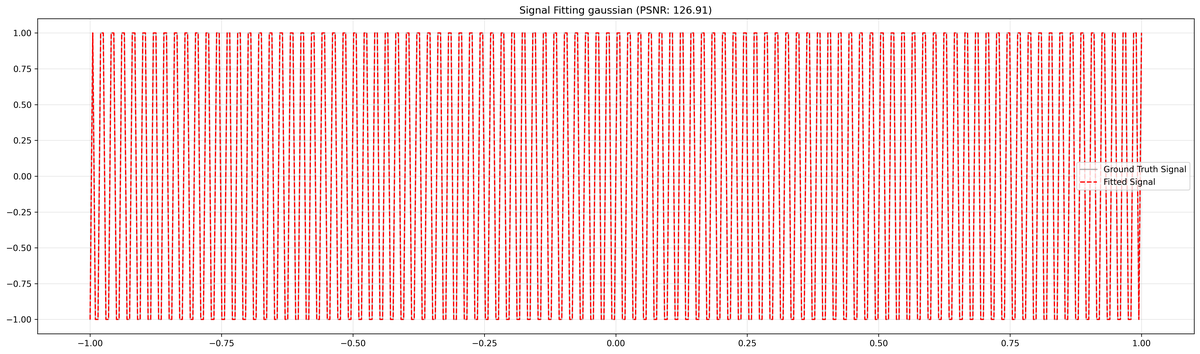} & 
        \includegraphics[width=0.16\linewidth, valign=c]{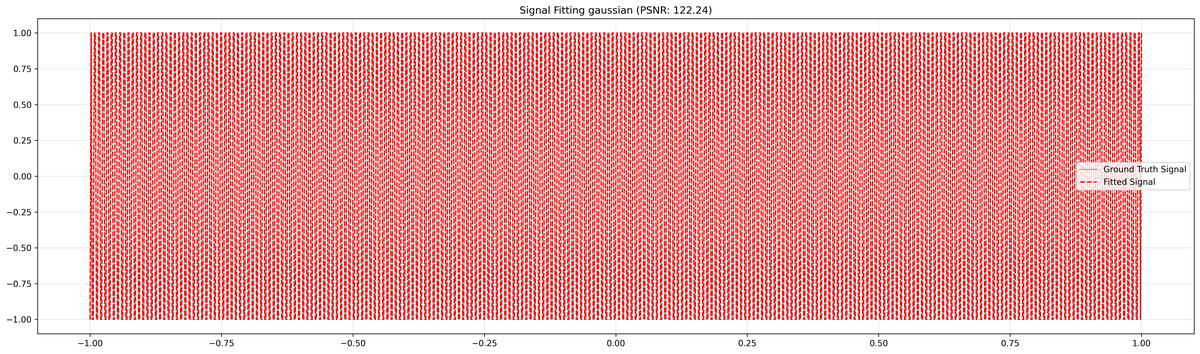} & 
        \includegraphics[width=0.16\linewidth, valign=c]{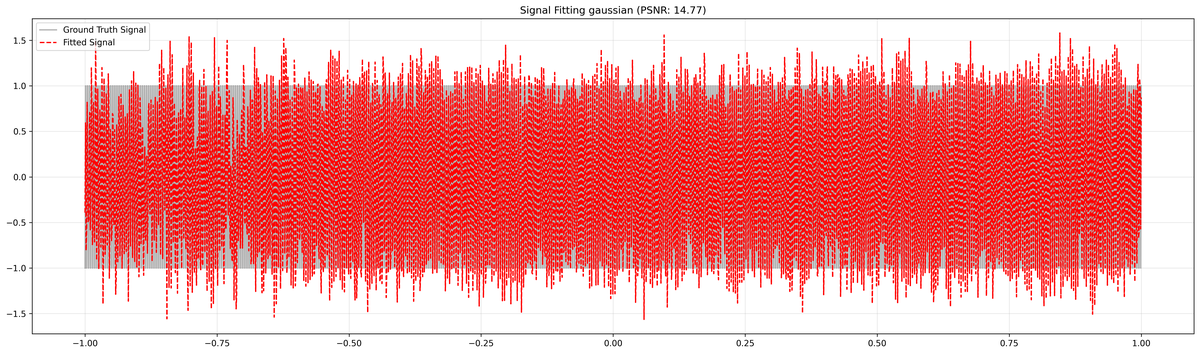} & 
        \includegraphics[width=0.16\linewidth, valign=c]{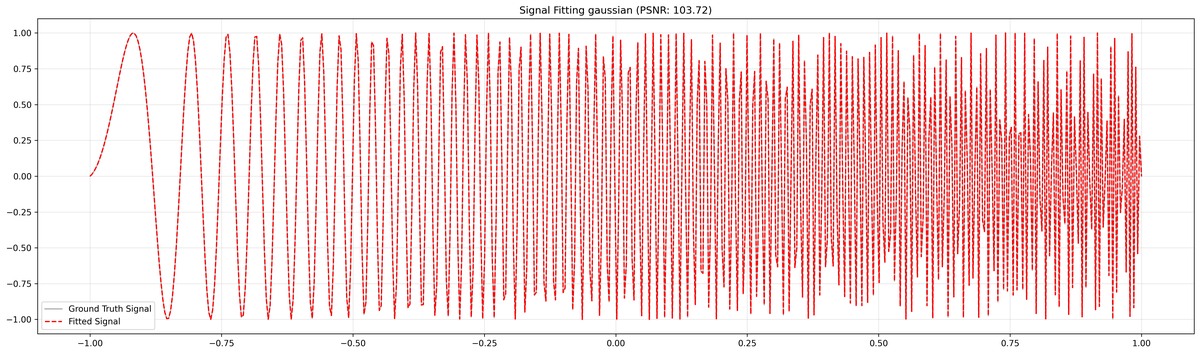} & 
        \includegraphics[width=0.16\linewidth, valign=c]{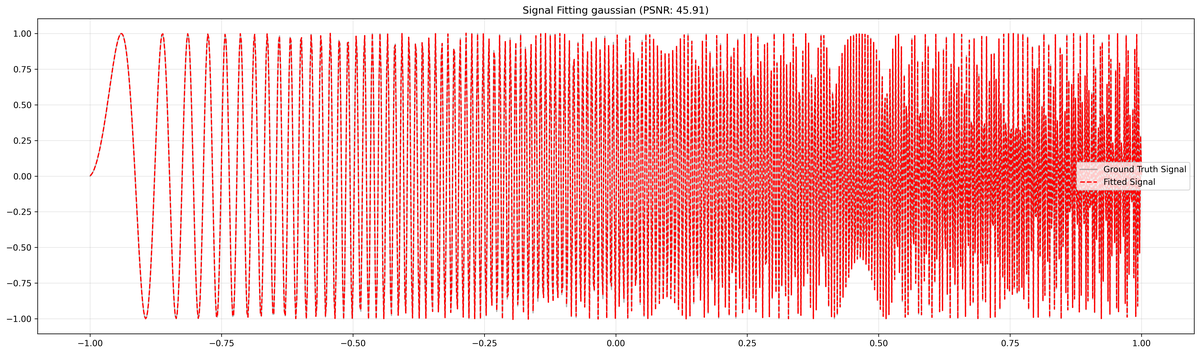} & 
        \includegraphics[width=0.16\linewidth, valign=c]{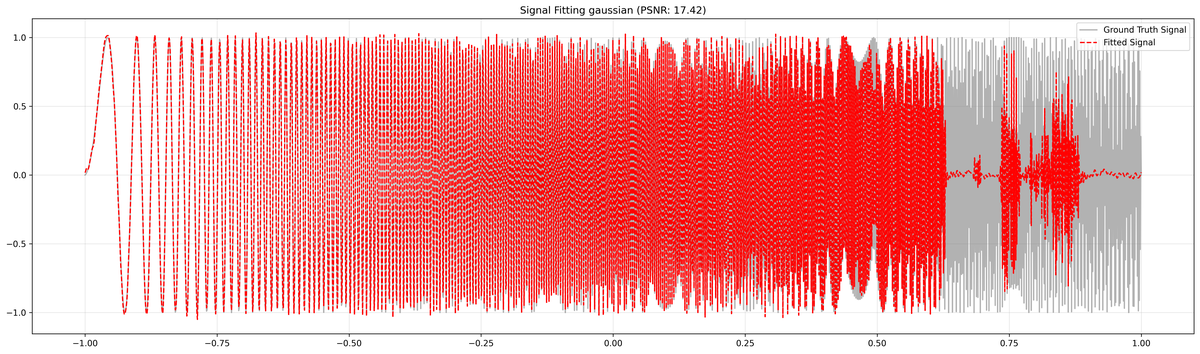} \\

        {\tiny WIRE} &
        \includegraphics[width=0.16\linewidth, valign=c]{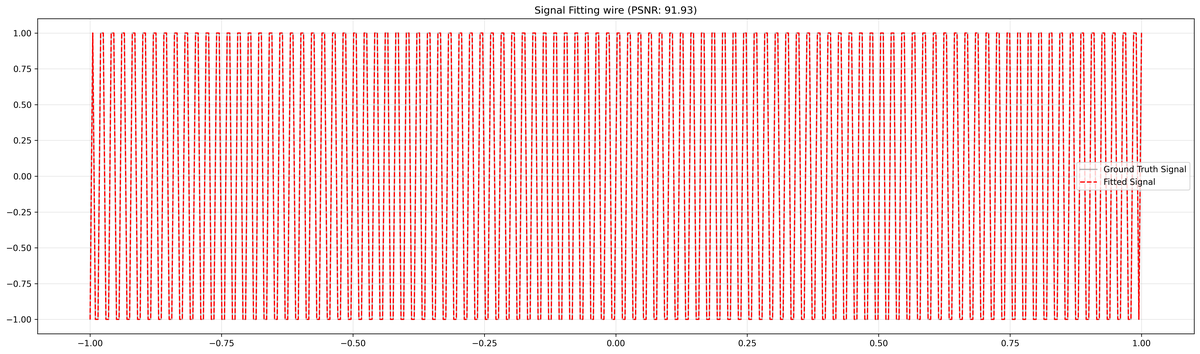} & 
        \includegraphics[width=0.16\linewidth, valign=c]{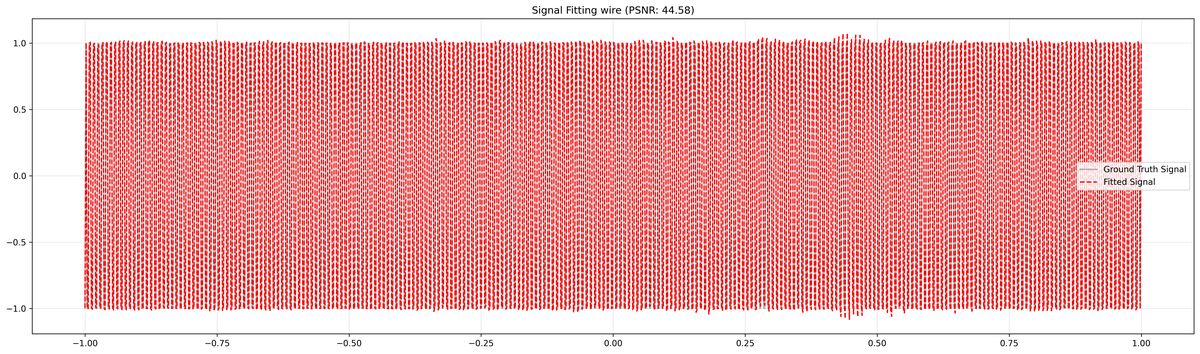} & 
        \includegraphics[width=0.16\linewidth, valign=c]{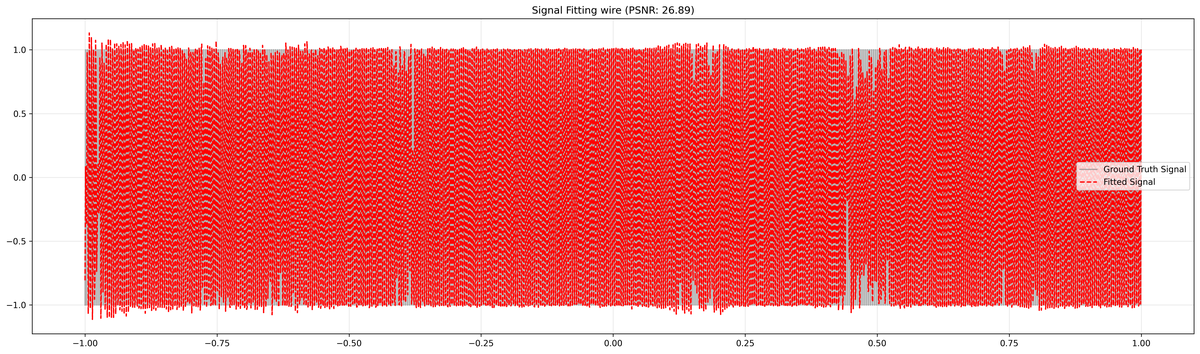} & 
        \includegraphics[width=0.16\linewidth, valign=c]{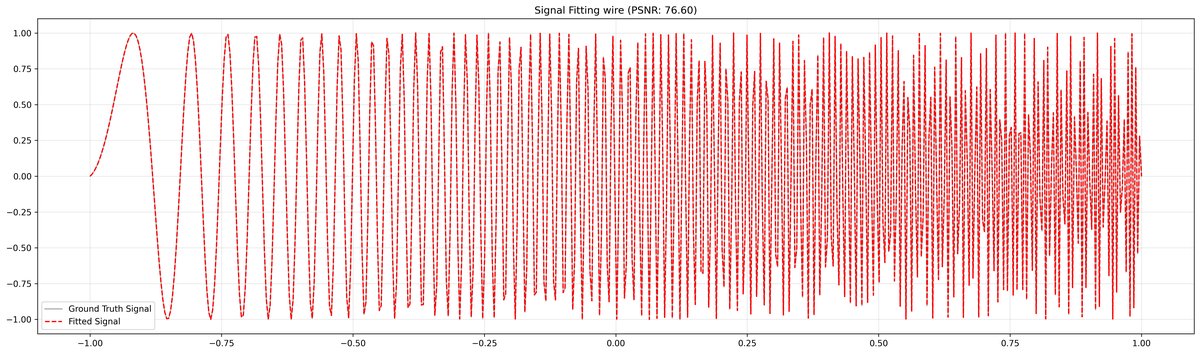} & 
        \includegraphics[width=0.16\linewidth, valign=c]{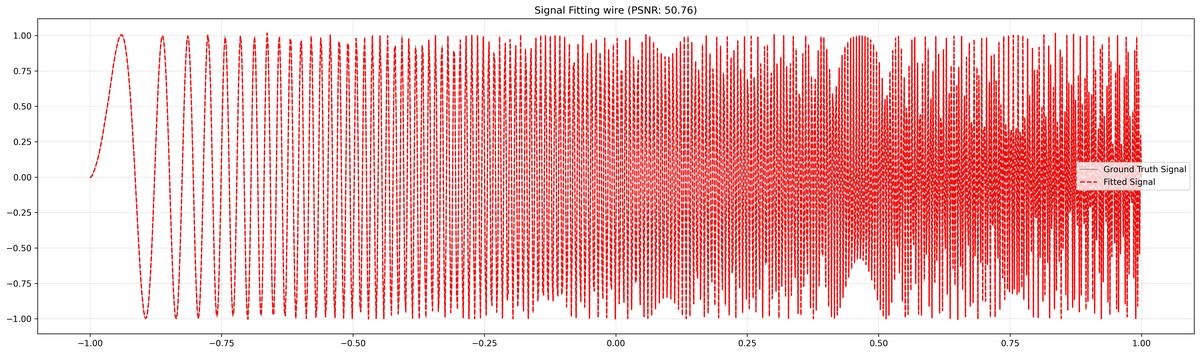} & 
        \includegraphics[width=0.16\linewidth, valign=c]{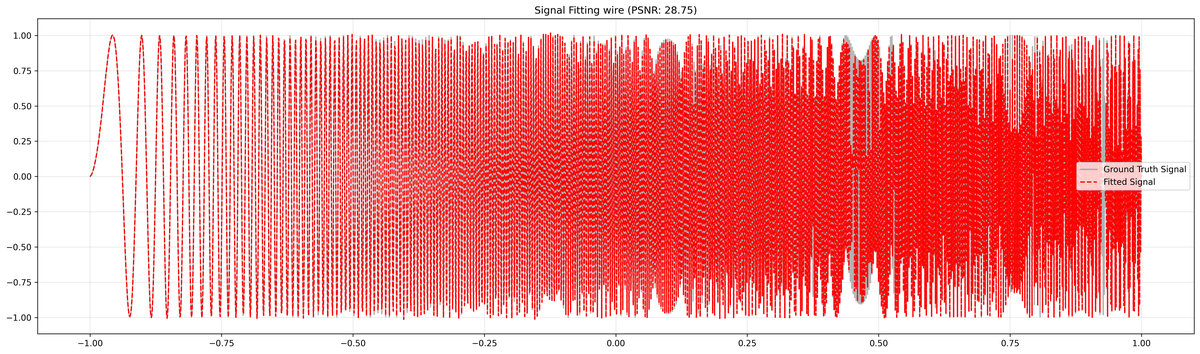} \\

        {\tiny BACON} &
        \includegraphics[width=0.16\linewidth, valign=c]{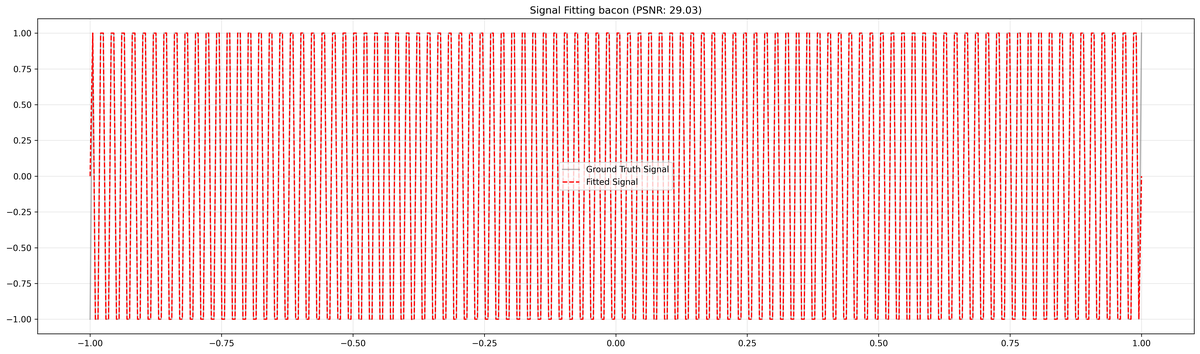} & 
        \includegraphics[width=0.16\linewidth, valign=c]{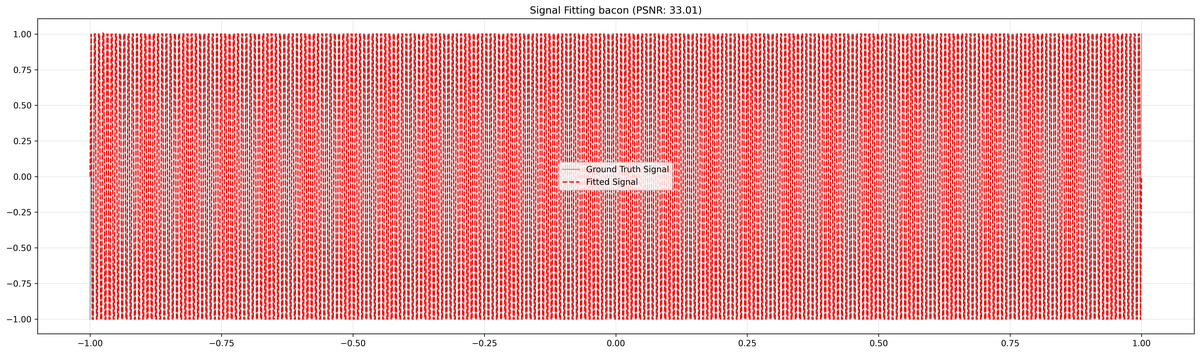} & 
        \includegraphics[width=0.16\linewidth, valign=c]{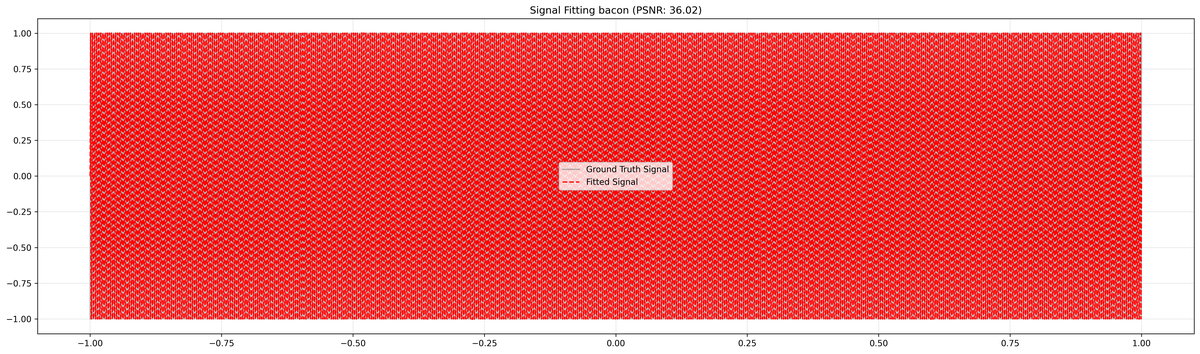} & 
        \includegraphics[width=0.16\linewidth, valign=c]{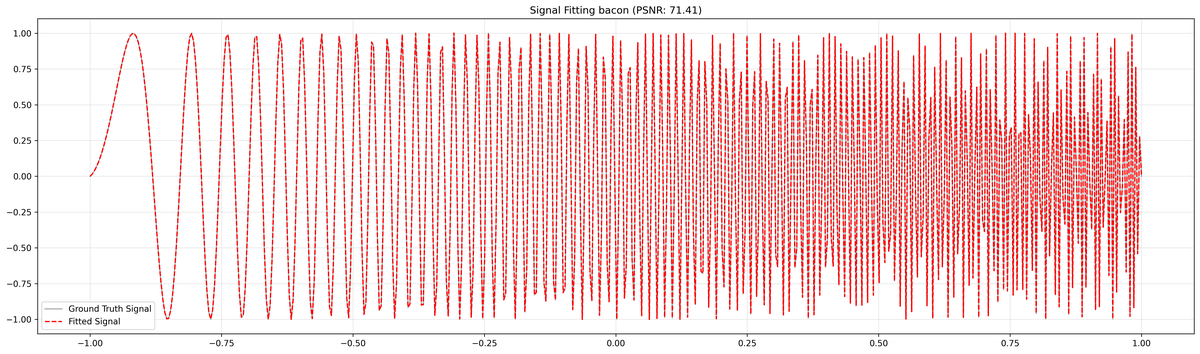} & 
        \includegraphics[width=0.16\linewidth, valign=c]{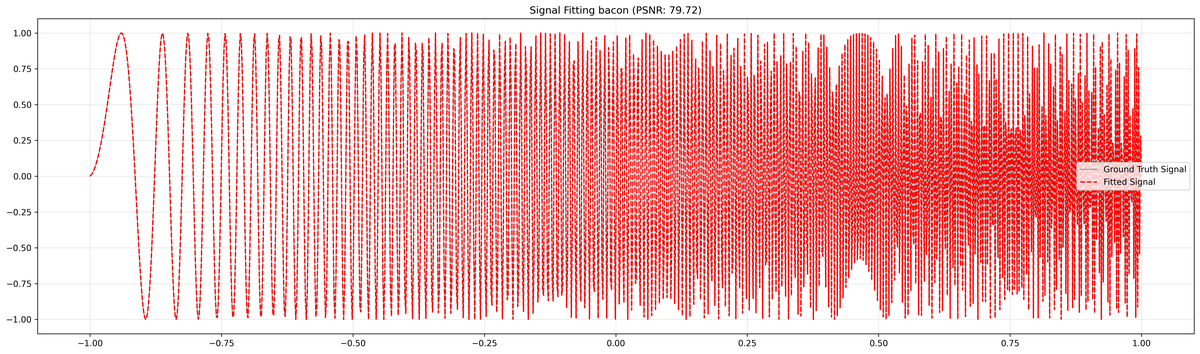} & 
        \includegraphics[width=0.16\linewidth, valign=c]{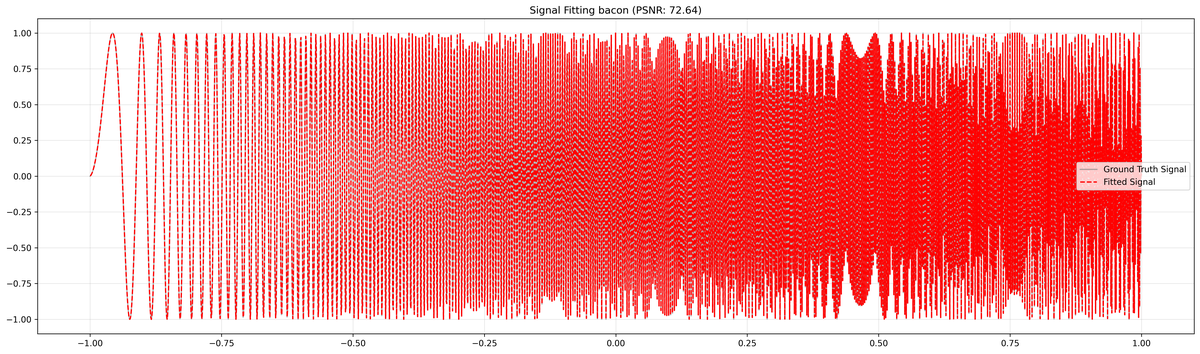} \\

        {\tiny FINER} &
        \includegraphics[width=0.16\linewidth, valign=c]{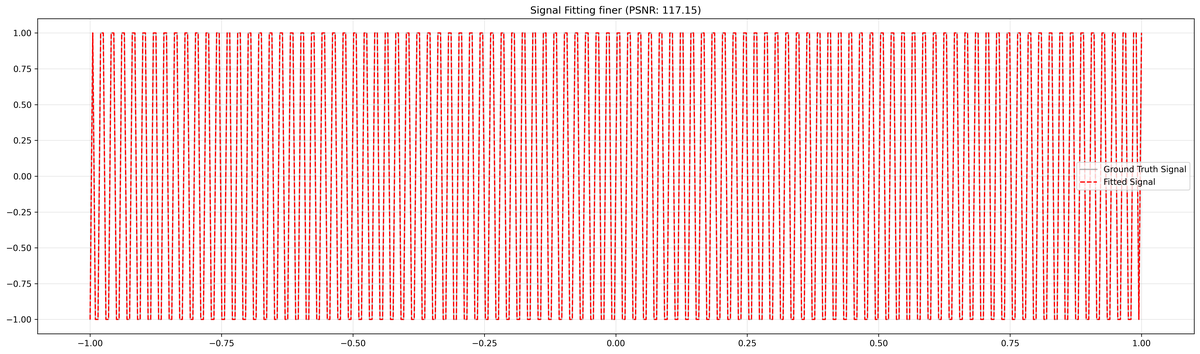} & 
        \includegraphics[width=0.16\linewidth, valign=c]{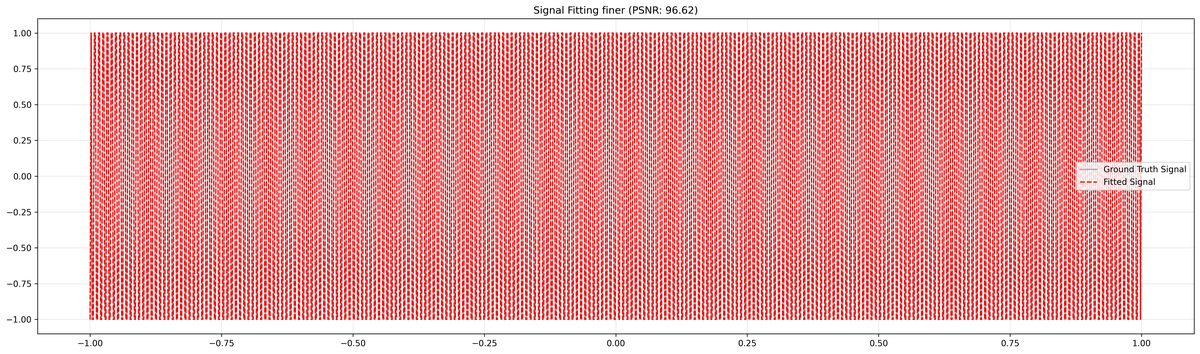} & 
        \includegraphics[width=0.16\linewidth, valign=c]{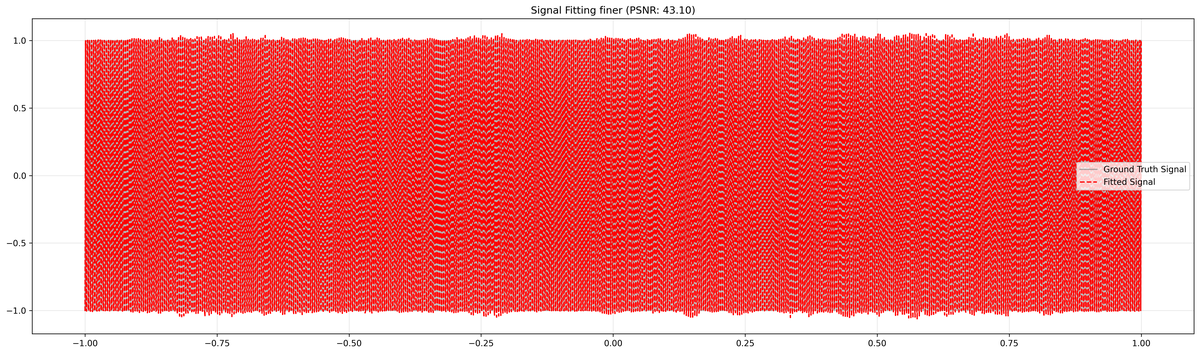} & 
        \includegraphics[width=0.16\linewidth, valign=c]{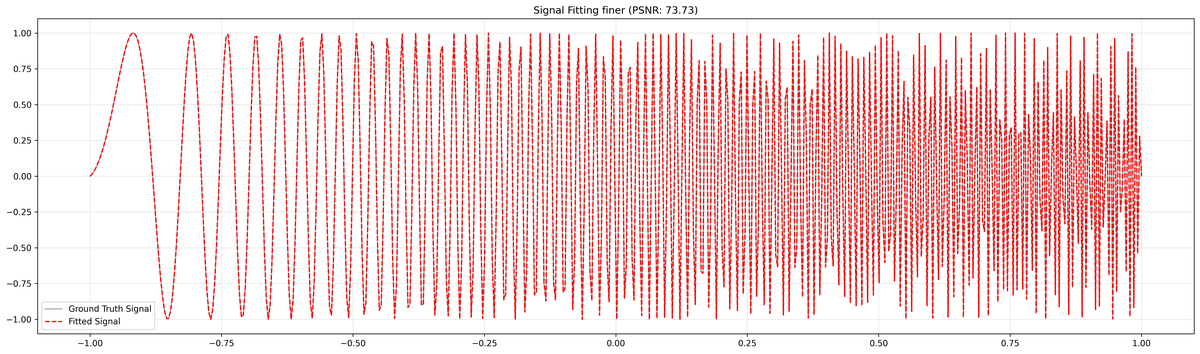} & 
        \includegraphics[width=0.16\linewidth, valign=c]{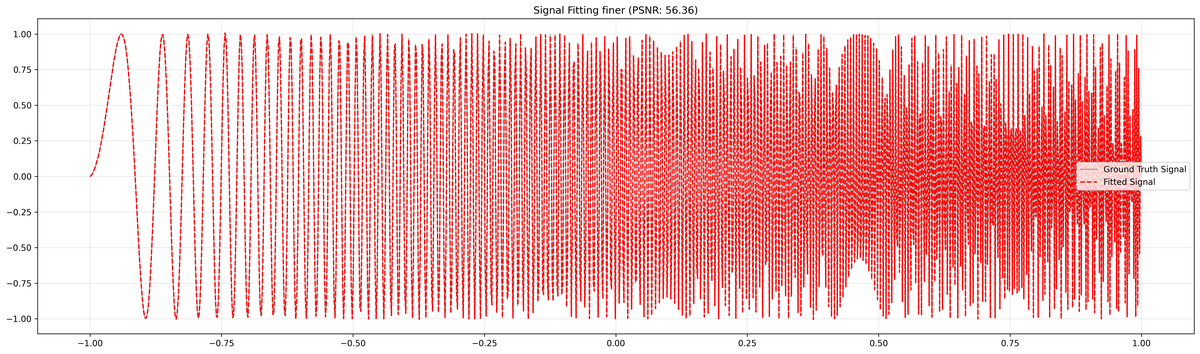} & 
        \includegraphics[width=0.16\linewidth, valign=c]{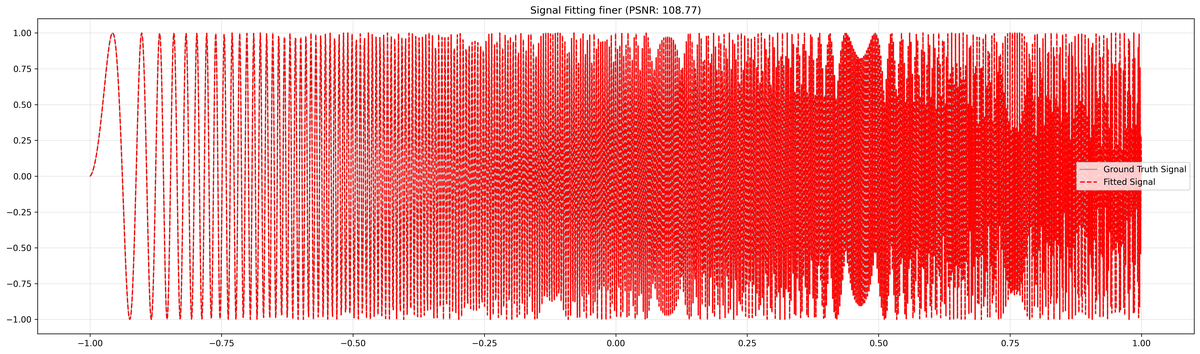} \\

        {\tiny MFN} &
        \includegraphics[width=0.16\linewidth, valign=c]{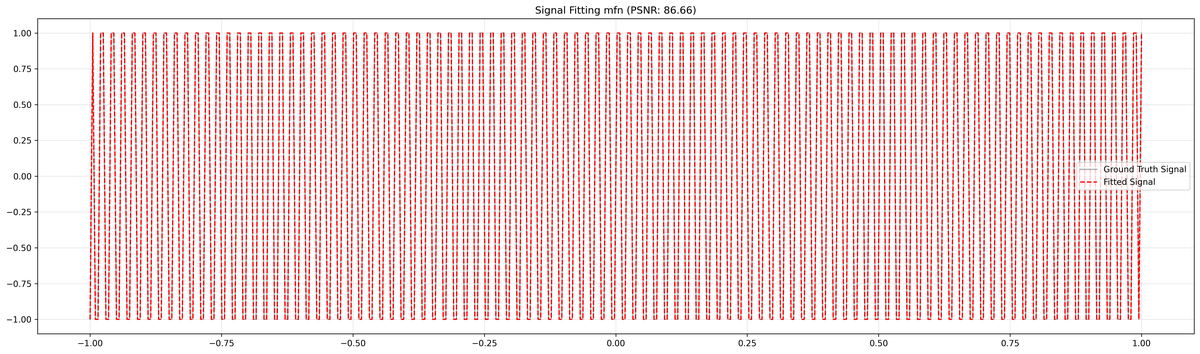} & 
        \includegraphics[width=0.16\linewidth, valign=c]{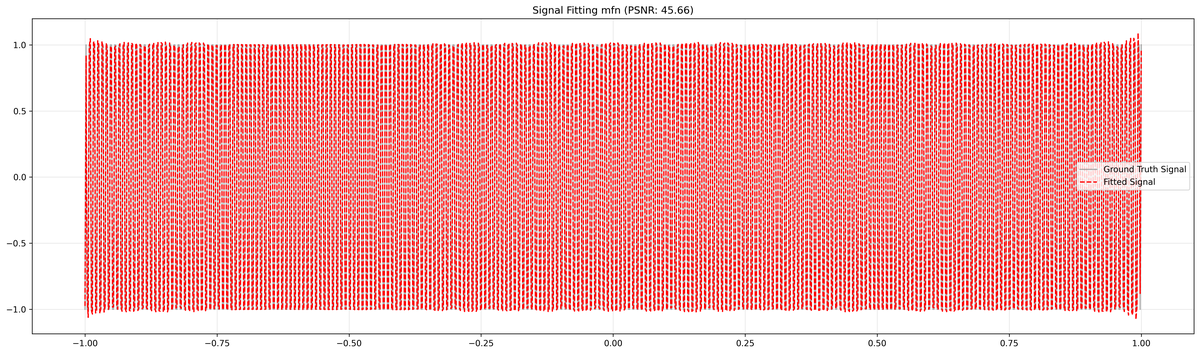} & 
        \includegraphics[width=0.16\linewidth, valign=c]{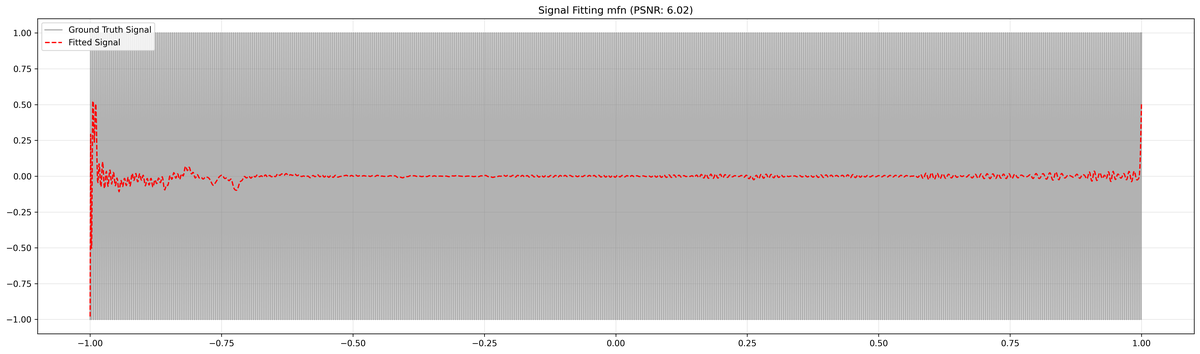} & 
        \includegraphics[width=0.16\linewidth, valign=c]{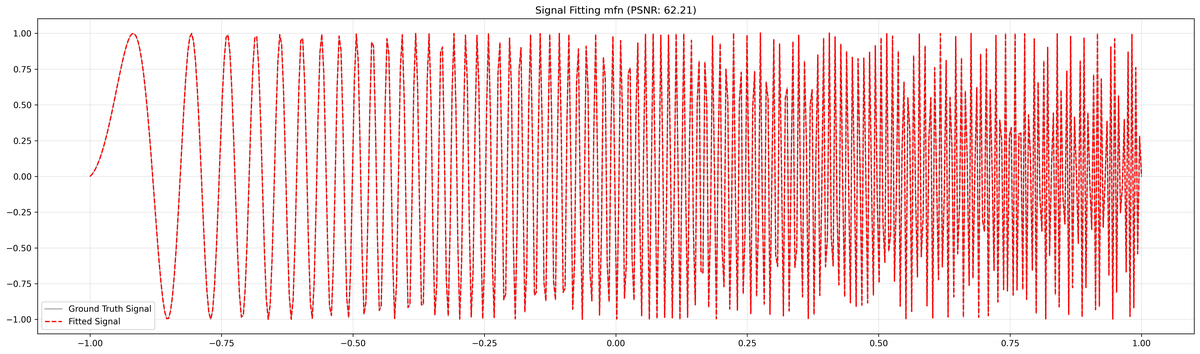} & 
        \includegraphics[width=0.16\linewidth, valign=c]{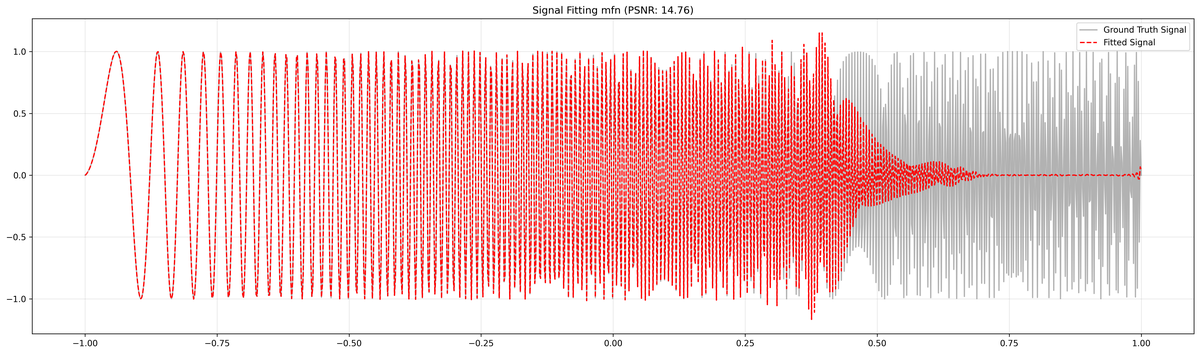} & 
        \includegraphics[width=0.16\linewidth, valign=c]{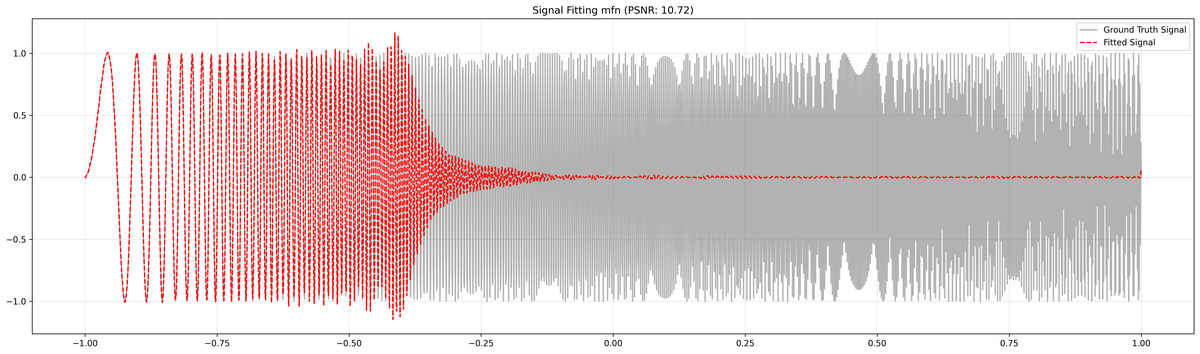} \\

        {\tiny Fourier} &
        \includegraphics[width=0.16\linewidth, valign=c]{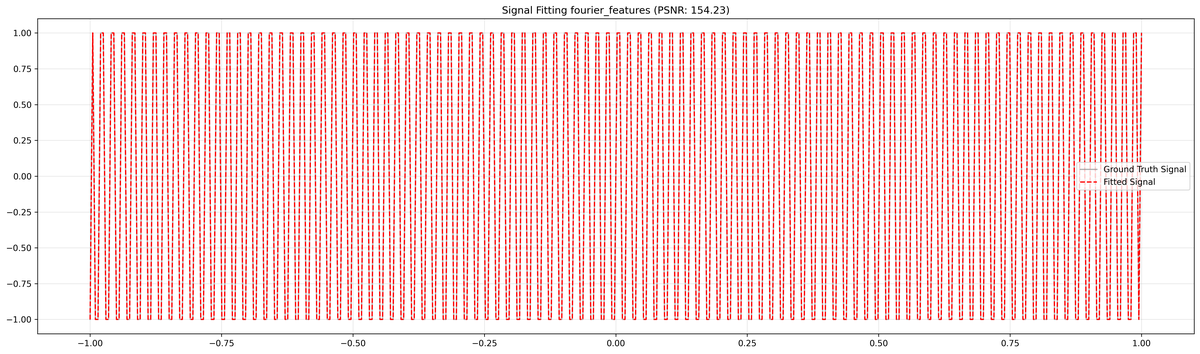} & 
        \includegraphics[width=0.16\linewidth, valign=c]{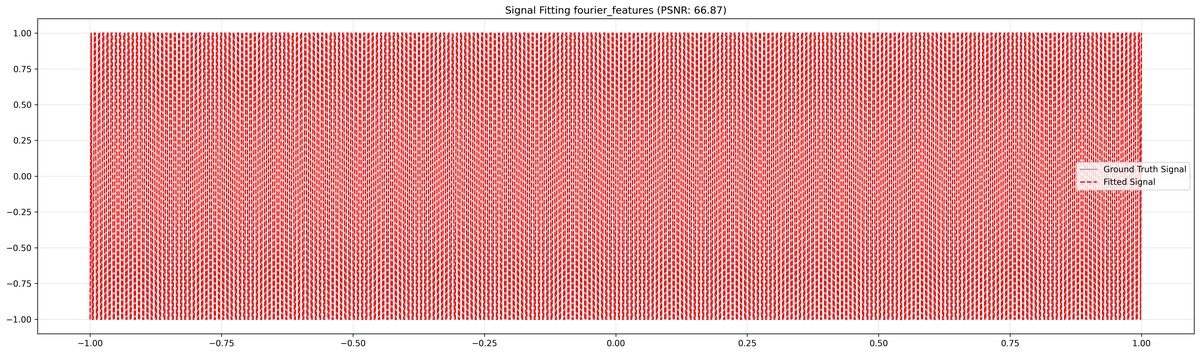} & 
        \includegraphics[width=0.16\linewidth, valign=c]{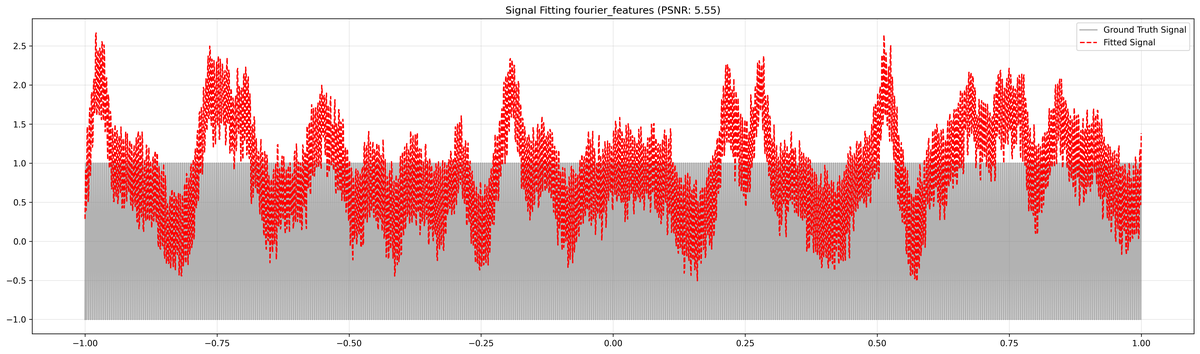} & 
        \includegraphics[width=0.16\linewidth, valign=c]{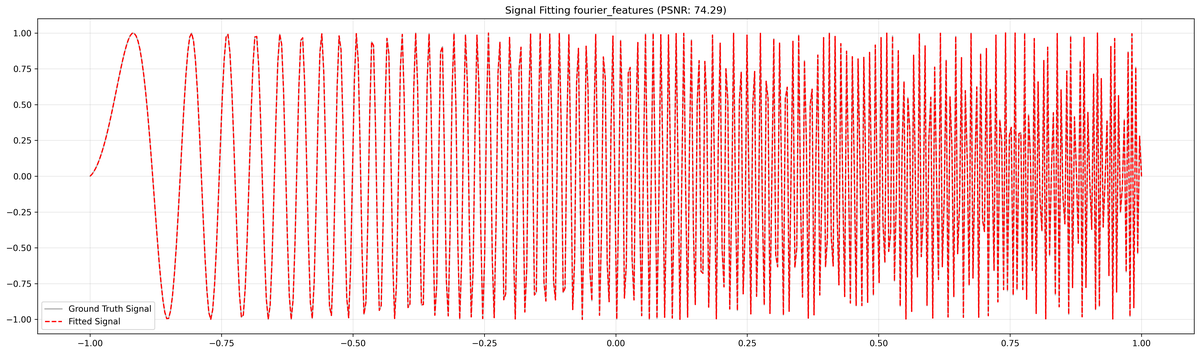} & 
        \includegraphics[width=0.16\linewidth, valign=c]{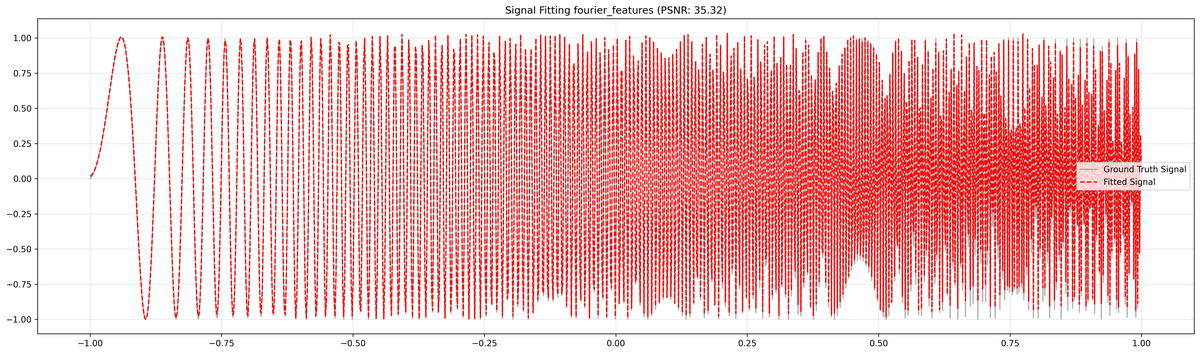} & 
        \includegraphics[width=0.16\linewidth, valign=c]{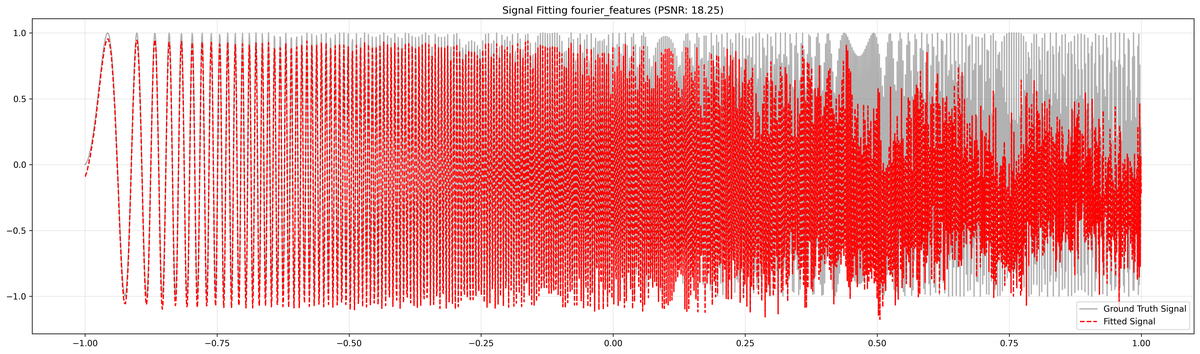} \\

        {\tiny FR} &
        \includegraphics[width=0.16\linewidth, valign=c]{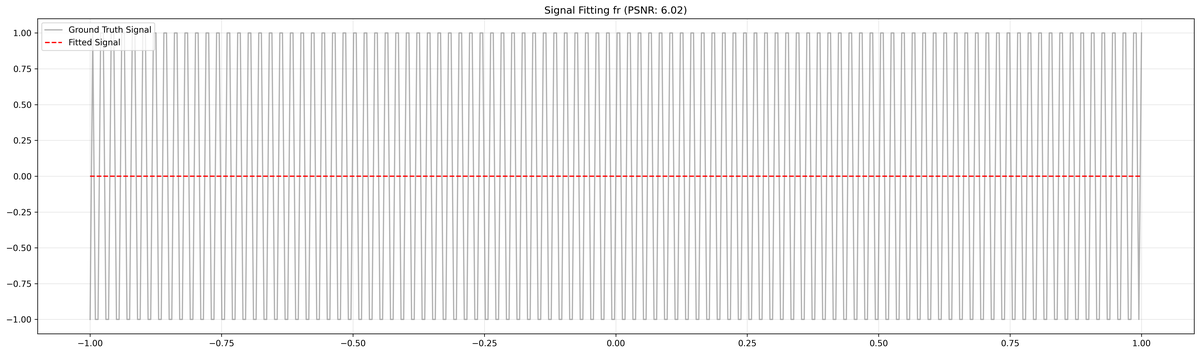} & 
        \includegraphics[width=0.16\linewidth, valign=c]{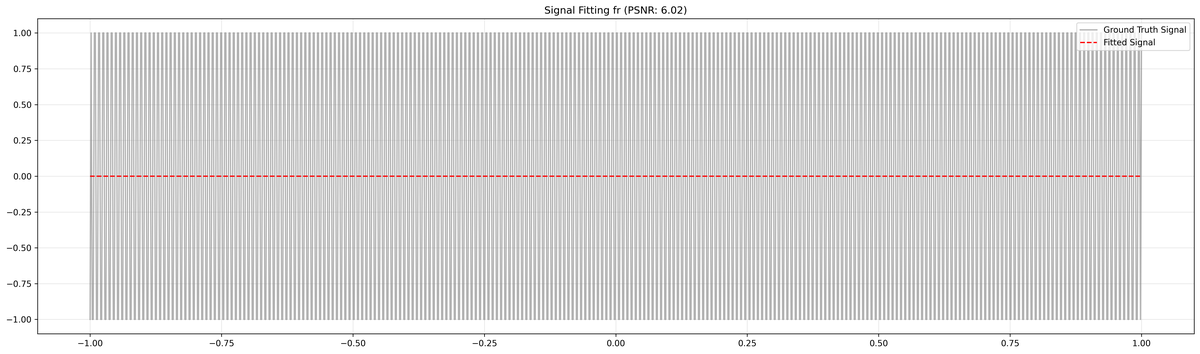} & 
        \includegraphics[width=0.16\linewidth, valign=c]{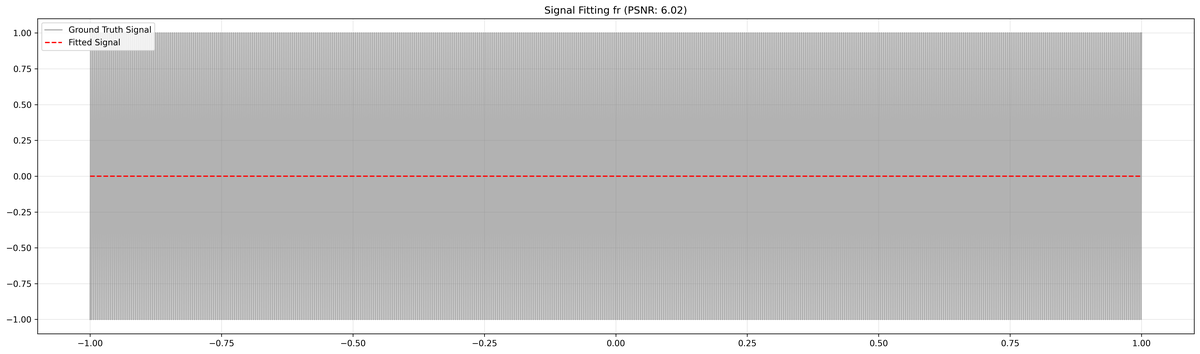} & 
        \includegraphics[width=0.16\linewidth, valign=c]{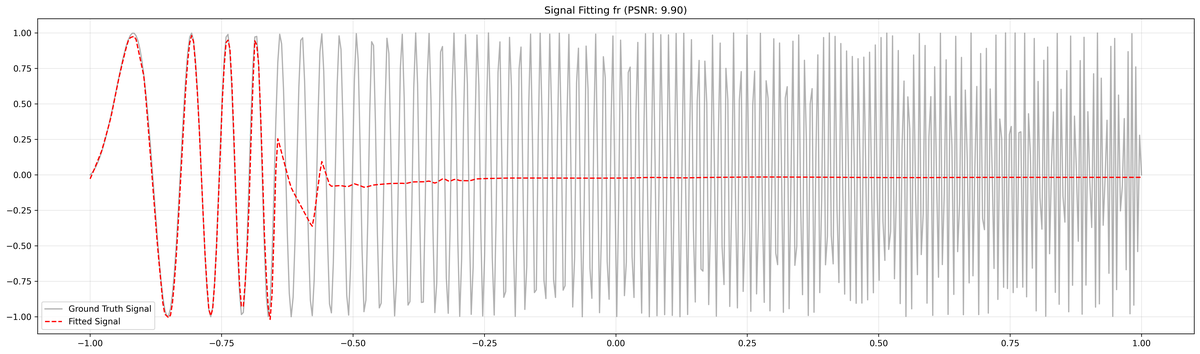} & 
        \includegraphics[width=0.16\linewidth, valign=c]{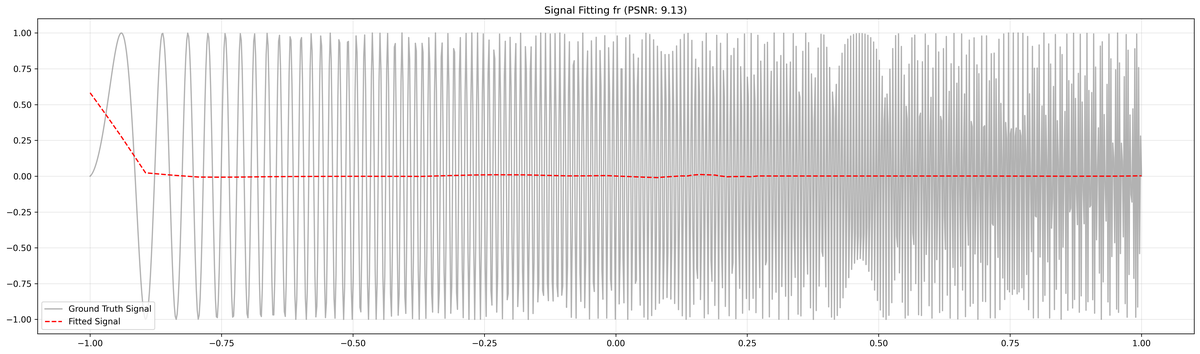} & 
        \includegraphics[width=0.16\linewidth, valign=c]{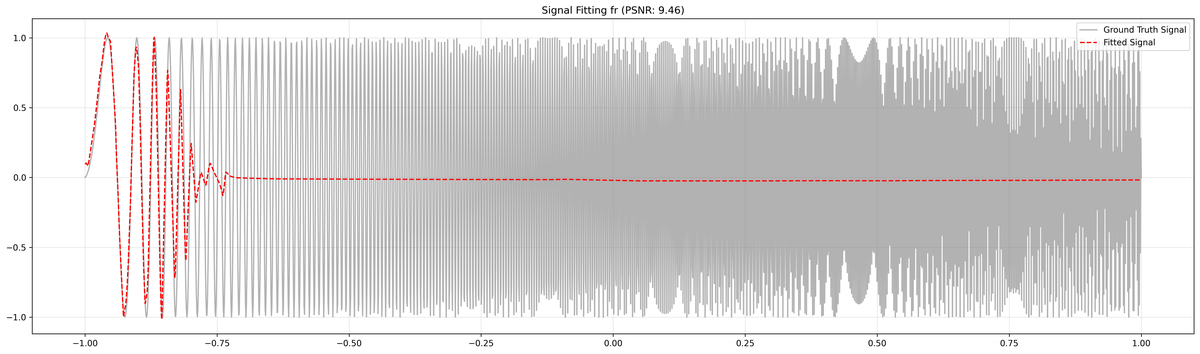} \\

    \end{tabular}
    }
    \caption{
    \textbf{Qualitative results on Signal Fitting.}
    The red line represents the fitted signal, while the grey line represents the ground-truth signal.
    Please zoom to appreciate the differences.
    }
    \label{fig:qual_signal_fitting}
\end{figure}

        \begin{figure}[t]
    \centering
    \resizebox{\linewidth}{!}{
    \begin{tabular}{ccc}        

        & \textbf{Bach} & \textbf{Counting} \\
                
        {\tiny \algoname{}} &
        \includegraphics[width=0.475\linewidth, valign=c]{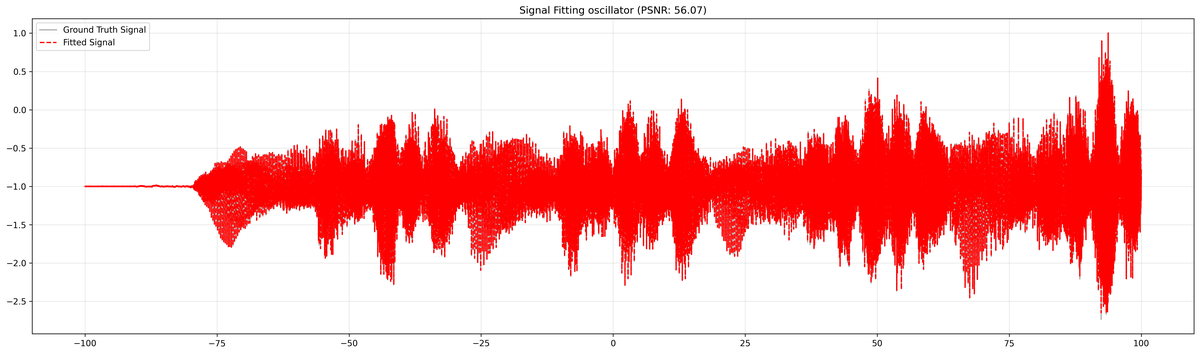} & 
        \includegraphics[width=0.475\linewidth, valign=c]{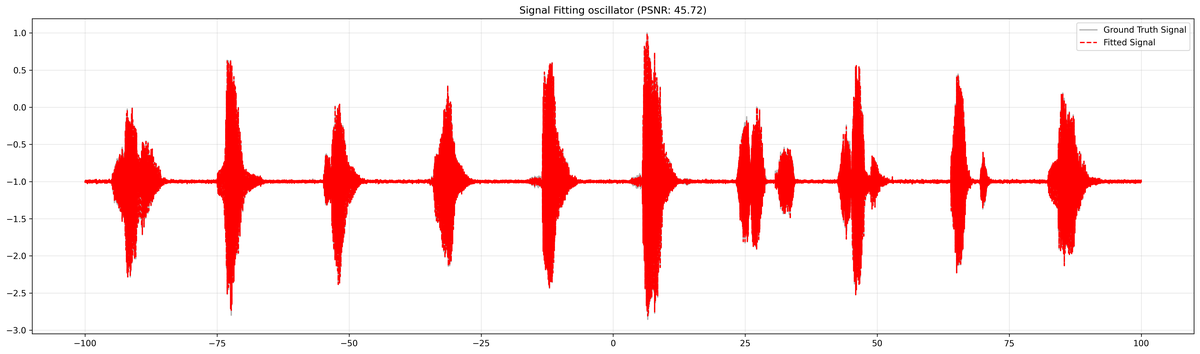} \\

        {\tiny SIREN} &
        \includegraphics[width=0.475\linewidth, valign=c]{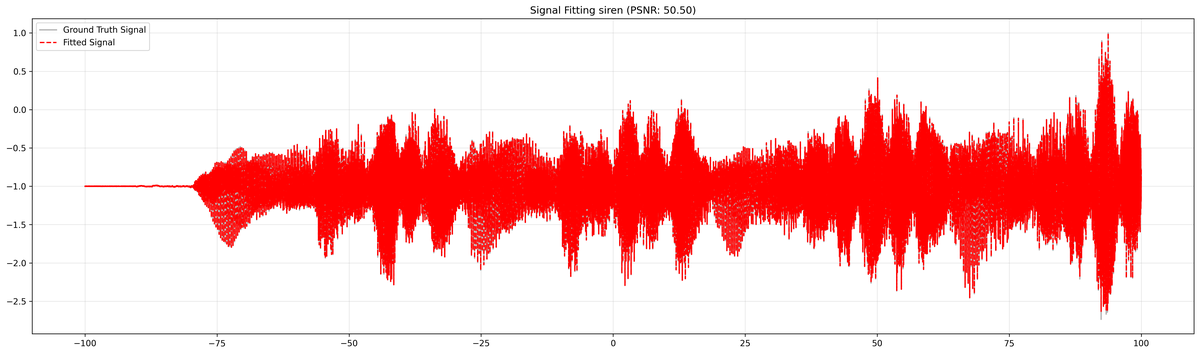} & 
        \includegraphics[width=0.475\linewidth, valign=c]{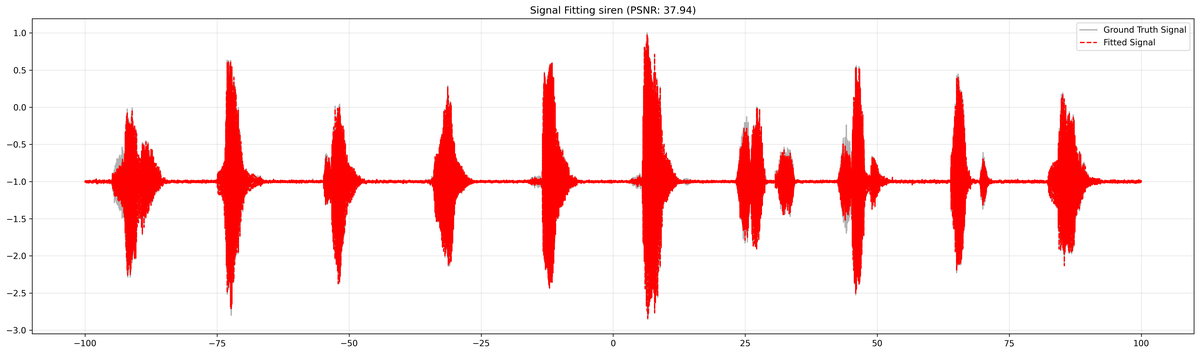} \\

        {\tiny Gauss} &
        \includegraphics[width=0.475\linewidth, valign=c]{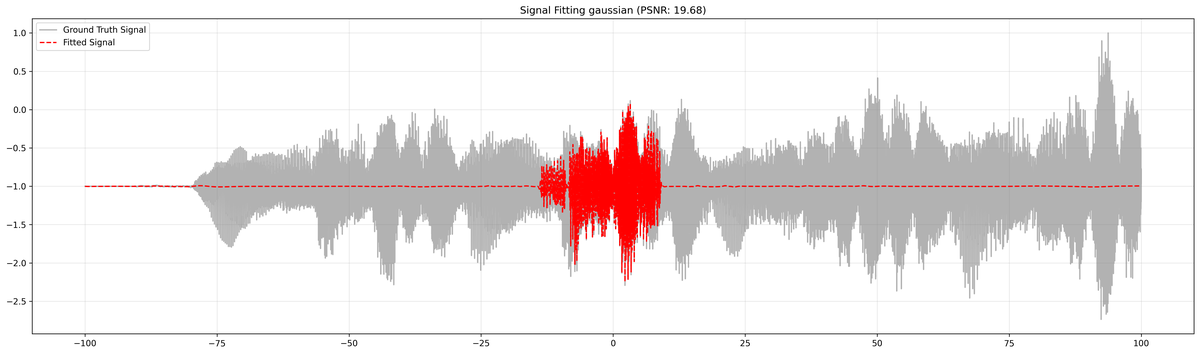} & 
        \includegraphics[width=0.475\linewidth, valign=c]{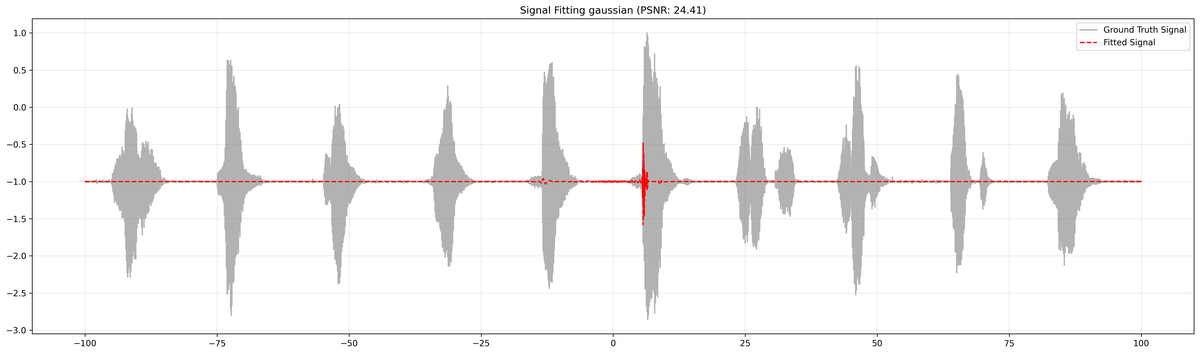} \\

        {\tiny WIRE} &
        \includegraphics[width=0.475\linewidth, valign=c]{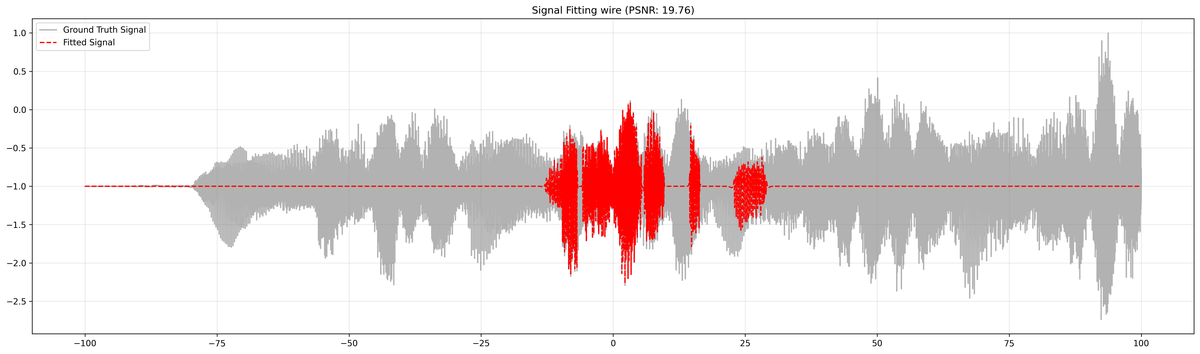} & 
        \includegraphics[width=0.475\linewidth, valign=c]{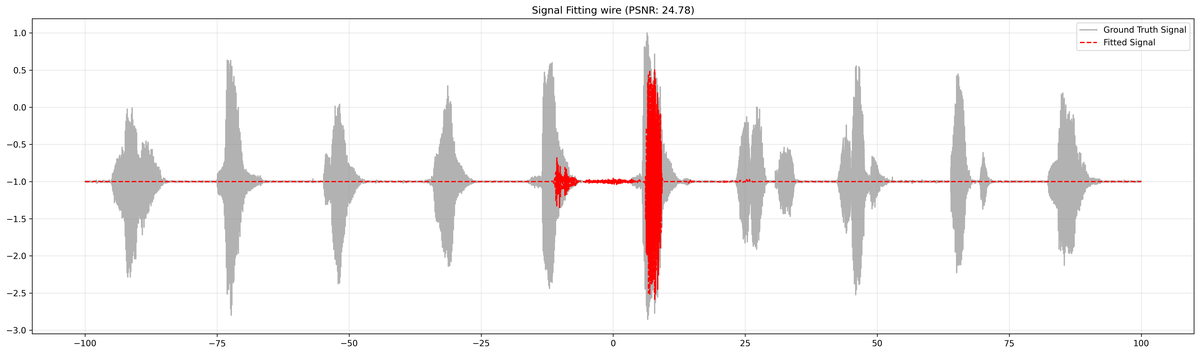} \\

        {\tiny BACON} &
        \includegraphics[width=0.475\linewidth, valign=c]{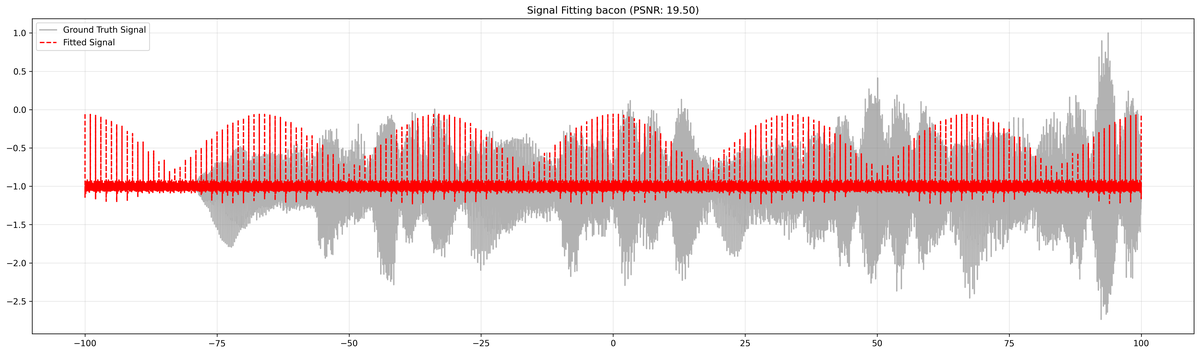} & 
        \includegraphics[width=0.475\linewidth, valign=c]{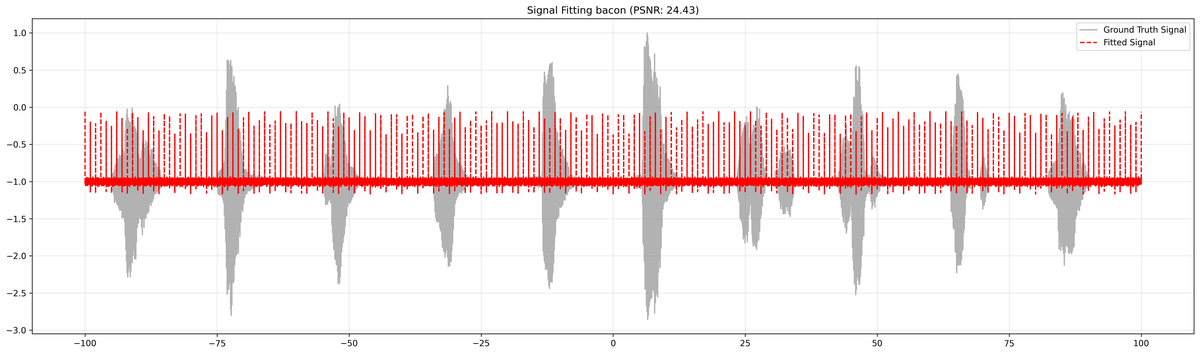} \\

        {\tiny FINER} &
        \includegraphics[width=0.475\linewidth, valign=c]{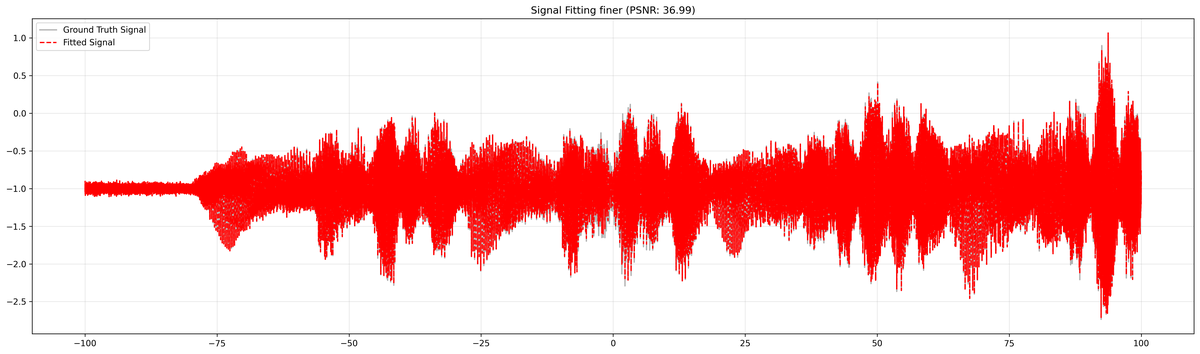} & 
        \includegraphics[width=0.475\linewidth, valign=c]{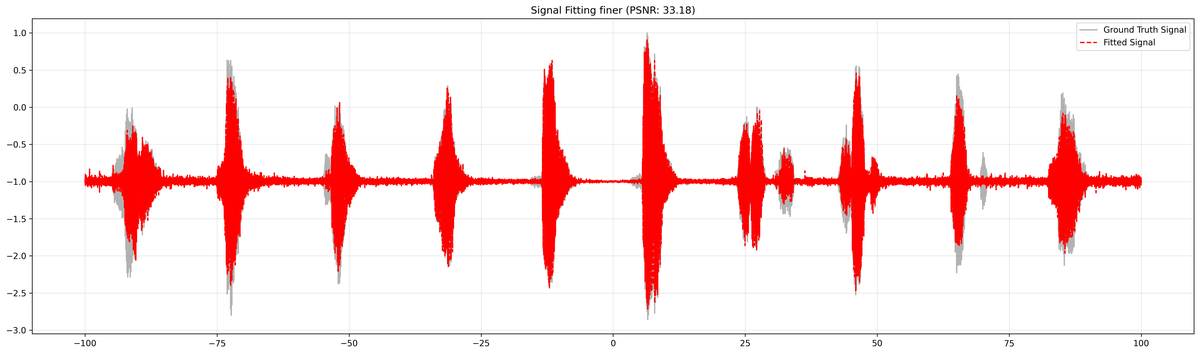} \\

        {\tiny MFN} &
        \includegraphics[width=0.475\linewidth, valign=c]{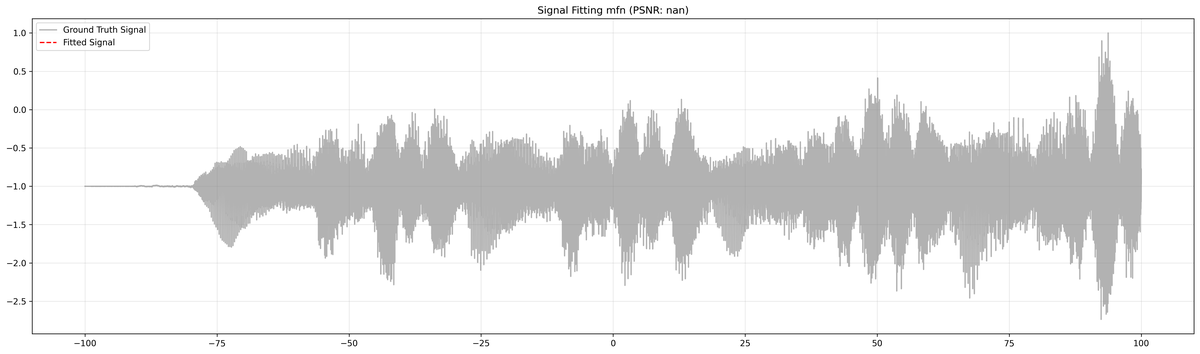} & 
        \includegraphics[width=0.475\linewidth, valign=c]{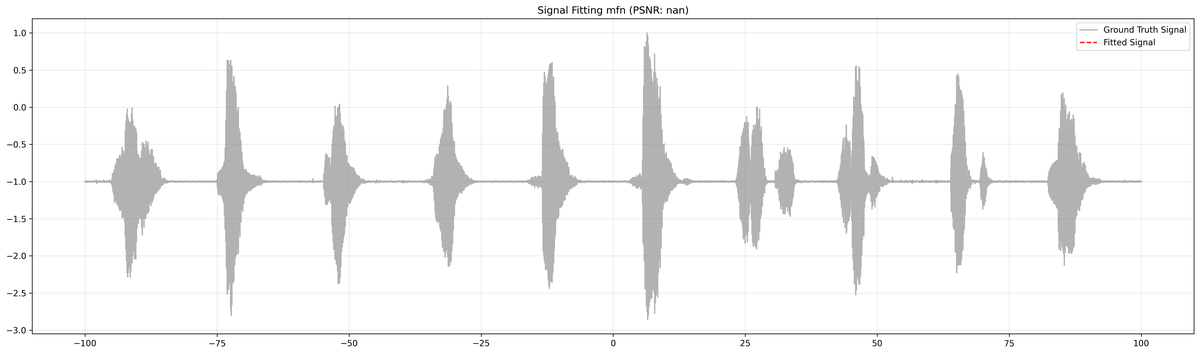} \\

        {\tiny Fourier} &
        \includegraphics[width=0.475\linewidth, valign=c]{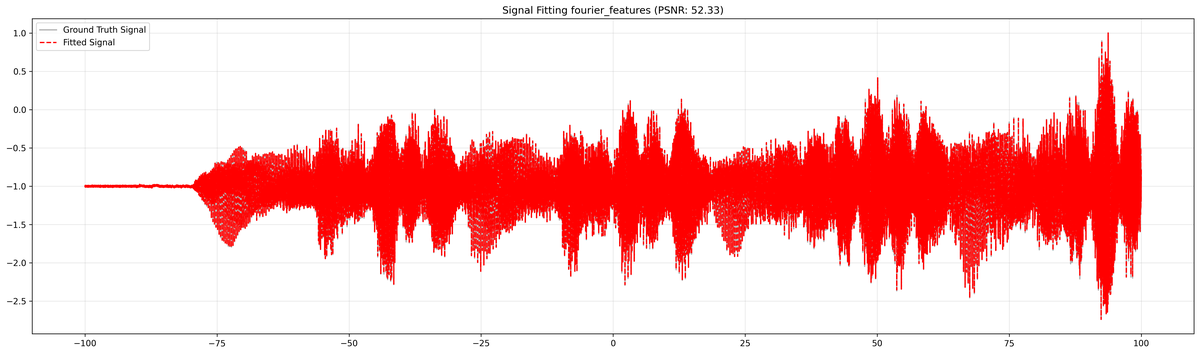} & 
        \includegraphics[width=0.475\linewidth, valign=c]{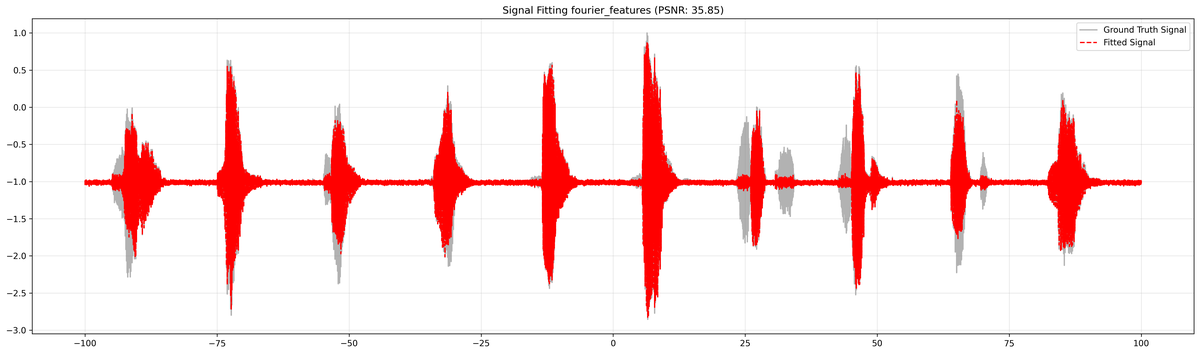} \\

        {\tiny FR} &
        \includegraphics[width=0.475\linewidth, valign=c]{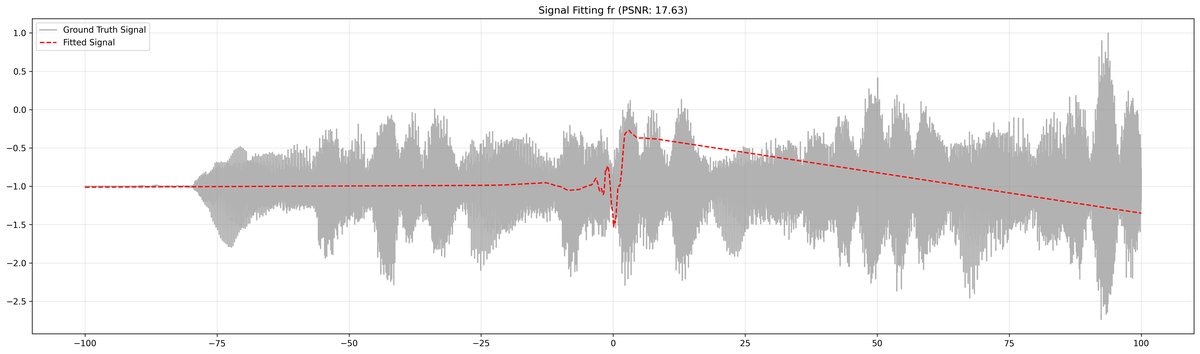} & 
        \includegraphics[width=0.475\linewidth, valign=c]{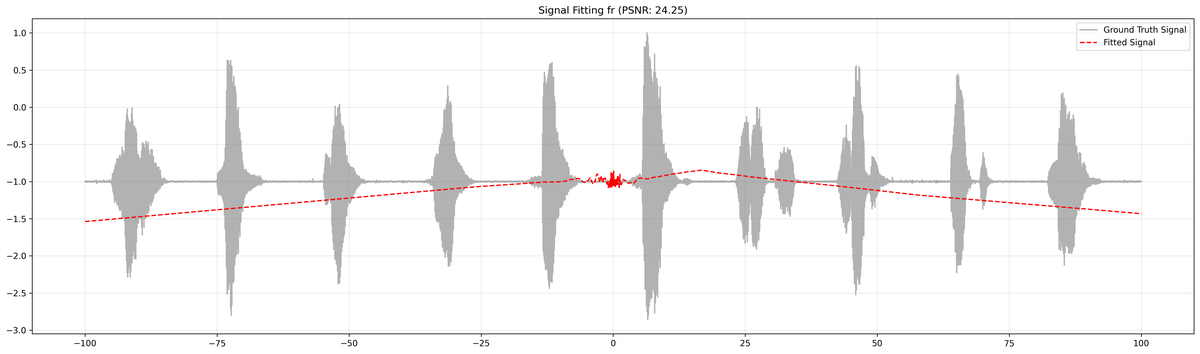} \\

    \end{tabular}
    }
    \caption{
    \textbf{Qualitative results on Audio Fitting.}
    The red line represents the fitted signal, while the grey line represents the ground-truth signal.
    Please zoom to appreciate the differences.
    }
    \label{fig:qual_audio_fitting}
\end{figure}

        We provide in \cref{fig:qual_tiger}, \cref{fig:qual_tiles}, \cref{fig:qual_bikers}, \cref{fig:qual_butterfly} and \cref{fig:qual_knot} all the qualitative results on the image-related tasks, already partially provided in \cref{fig:qualitatives_transposed} of \cref{sec:experiments}.
        
        \begin{figure}[t]
    \centering
    \resizebox{\linewidth}{!}{
    \begin{tabular}{lcc @{\hspace{0.75em}} ccccccccc}        

        & {\tiny Supervision} 
        & {\tiny GT} 
        & {\tiny \algoname{}} 
        & {\tiny SIREN} 
        & {\tiny Gauss} 
        & {\tiny WIRE} 
        & {\tiny BACON} 
        & {\tiny FINER} 
        & {\tiny MFN} 
        & {\tiny Fourier} 
        & {\tiny FR} \\
        
        \rotatebox{90}{\tiny \hspace{0.025em} \emph{Fitting}} &
        \includegraphics[width=0.09\linewidth]{figures/qualitatives/supervision/tiger_fitting.png} & 
        \includegraphics[width=0.09\linewidth]{figures/qualitatives/supervision/tiger_fitting.png} & 
        \includegraphics[width=0.09\linewidth]{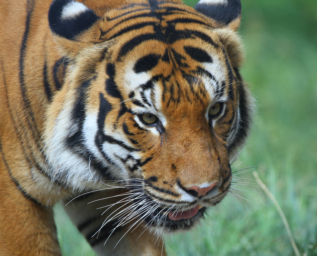} & 
        \includegraphics[width=0.09\linewidth]{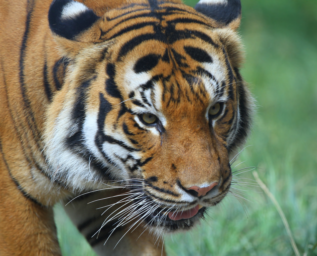} & 
        \includegraphics[width=0.09\linewidth]{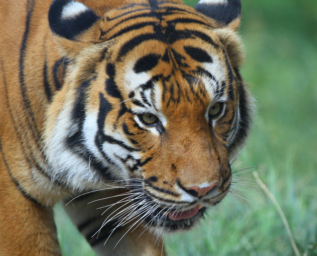} & 
        \includegraphics[width=0.09\linewidth]{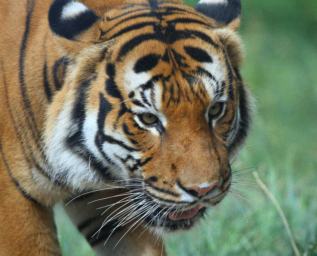} & 
        \includegraphics[width=0.09\linewidth]{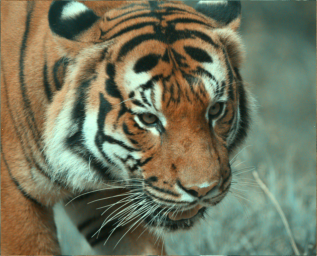} & 
        \includegraphics[width=0.09\linewidth]{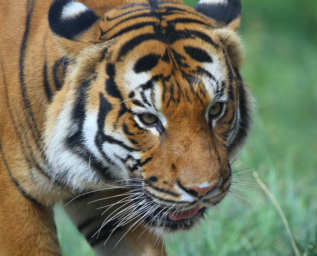} & 
        \includegraphics[width=0.09\linewidth]{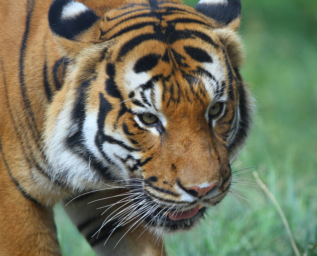} & 
        \includegraphics[width=0.09\linewidth]{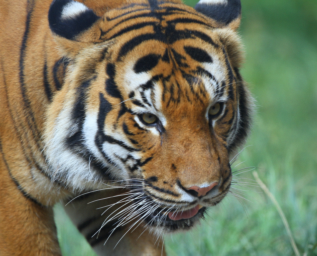} &
        \includegraphics[width=0.09\linewidth]{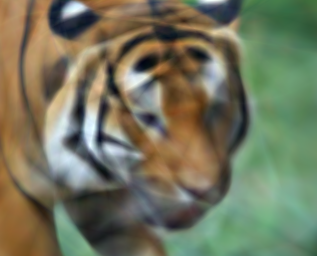} \\

        \rotatebox{90}{\tiny \hspace{0.25em} \emph{Den.}} &
        \includegraphics[width=0.09\linewidth]{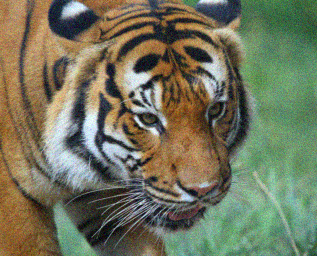} & 
        \includegraphics[width=0.09\linewidth]{figures/qualitatives/supervision/tiger_fitting.png} & 
        \includegraphics[width=0.09\linewidth]{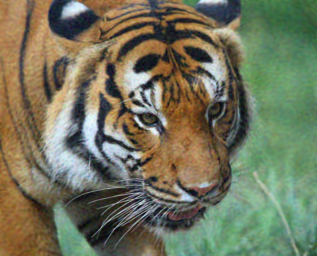} & 
        \includegraphics[width=0.09\linewidth]{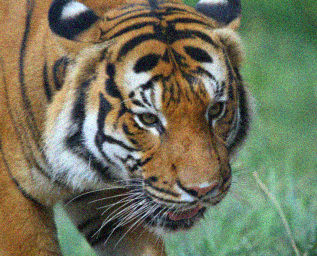} & 
        \includegraphics[width=0.09\linewidth]{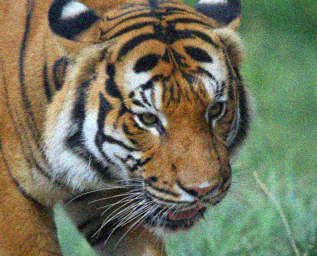} & 
        \includegraphics[width=0.09\linewidth]{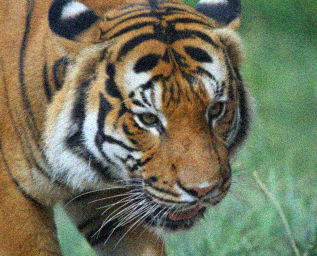} & 
        \includegraphics[width=0.09\linewidth]{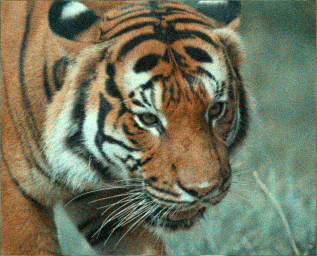} & 
        \includegraphics[width=0.09\linewidth]{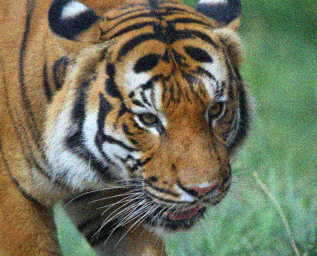} & 
        \includegraphics[width=0.09\linewidth]{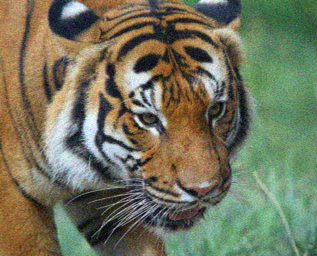} & 
        \includegraphics[width=0.09\linewidth]{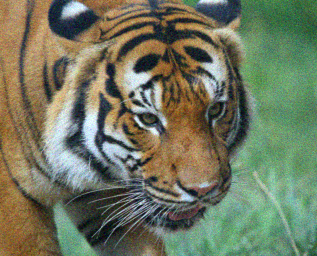} &
        \includegraphics[width=0.09\linewidth]{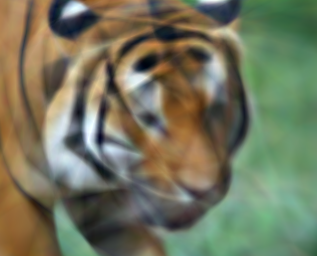} \\

        \rotatebox{90}{\tiny \hspace{0.4em} \emph{Inp.}} &
        \includegraphics[width=0.09\linewidth]{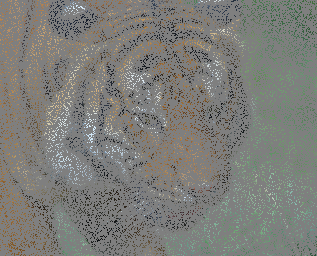} & 
        \includegraphics[width=0.09\linewidth]{figures/qualitatives/supervision/tiger_fitting.png} & 
        \includegraphics[width=0.09\linewidth]{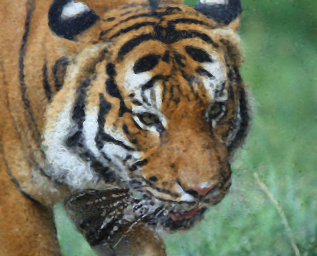} & 
        \includegraphics[width=0.09\linewidth]{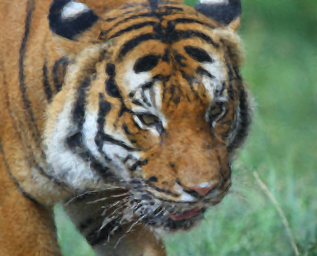} & 
        \includegraphics[width=0.09\linewidth]{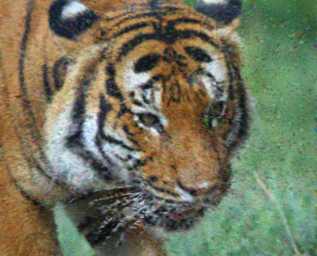} & 
        \includegraphics[width=0.09\linewidth]{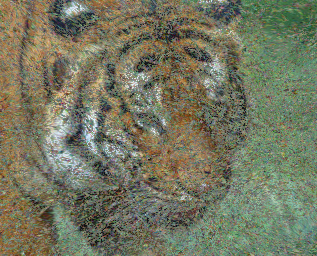} & 
        \includegraphics[width=0.09\linewidth]{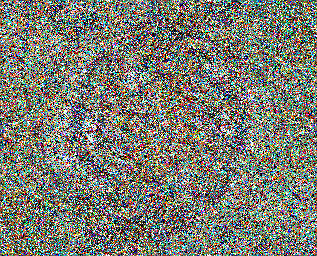} & 
        \includegraphics[width=0.09\linewidth]{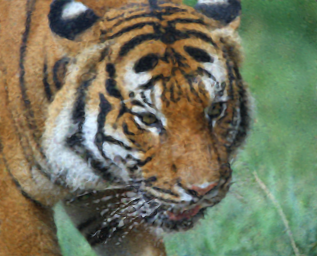} & 
        \includegraphics[width=0.09\linewidth]{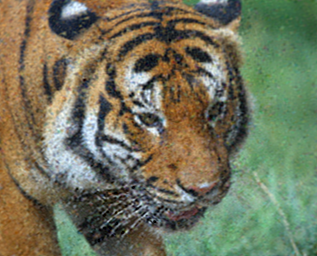} & 
        \includegraphics[width=0.09\linewidth]{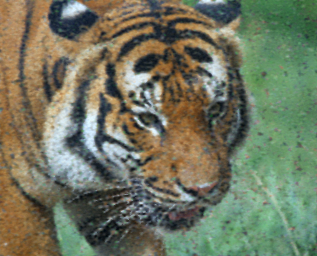} &
        \includegraphics[width=0.09\linewidth]{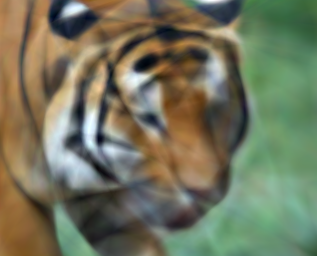} \\

        \rotatebox{90}{\tiny \emph{SR}} &
        \makebox[0.09\linewidth][c]{\includegraphics[width=0.05\linewidth, valign=m]{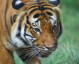}} &
        \includegraphics[width=0.09\linewidth, valign=m]{figures/qualitatives/supervision/tiger_fitting.png} & 
        \includegraphics[width=0.09\linewidth, valign=m]{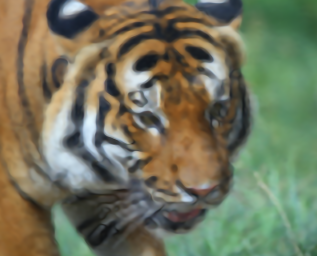} & 
        \includegraphics[width=0.09\linewidth, valign=m]{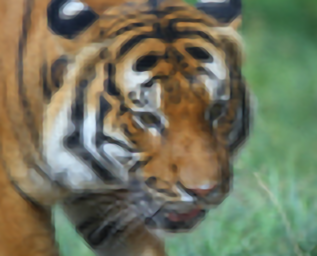} & 
        \includegraphics[width=0.09\linewidth, valign=m]{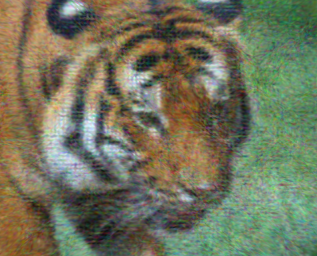} & 
        \includegraphics[width=0.09\linewidth, valign=m]{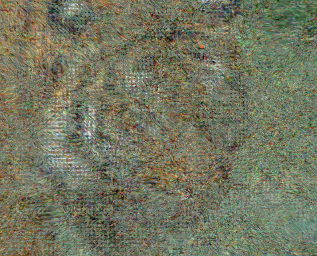} & 
        \includegraphics[width=0.09\linewidth, valign=m]{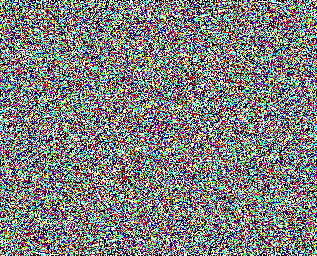} & 
        \includegraphics[width=0.09\linewidth, valign=m]{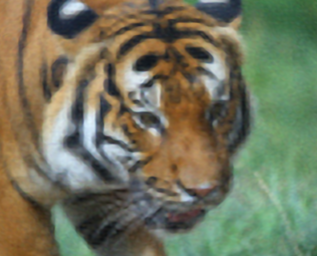} & 
        \includegraphics[width=0.09\linewidth, valign=m]{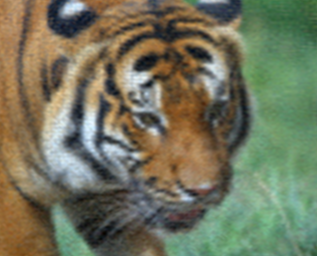} & 
        \includegraphics[width=0.09\linewidth, valign=m]{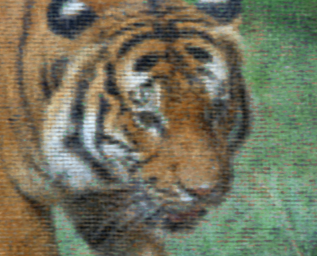} &
        \includegraphics[width=0.09\linewidth, valign=m]{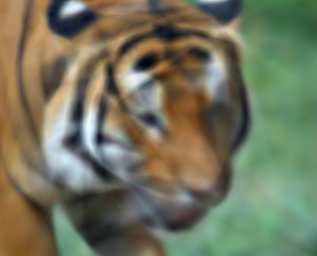} \\[2.25ex]
        
        \rotatebox{90}{\tiny \emph{Poisson}} &
        \includegraphics[width=0.09\linewidth]{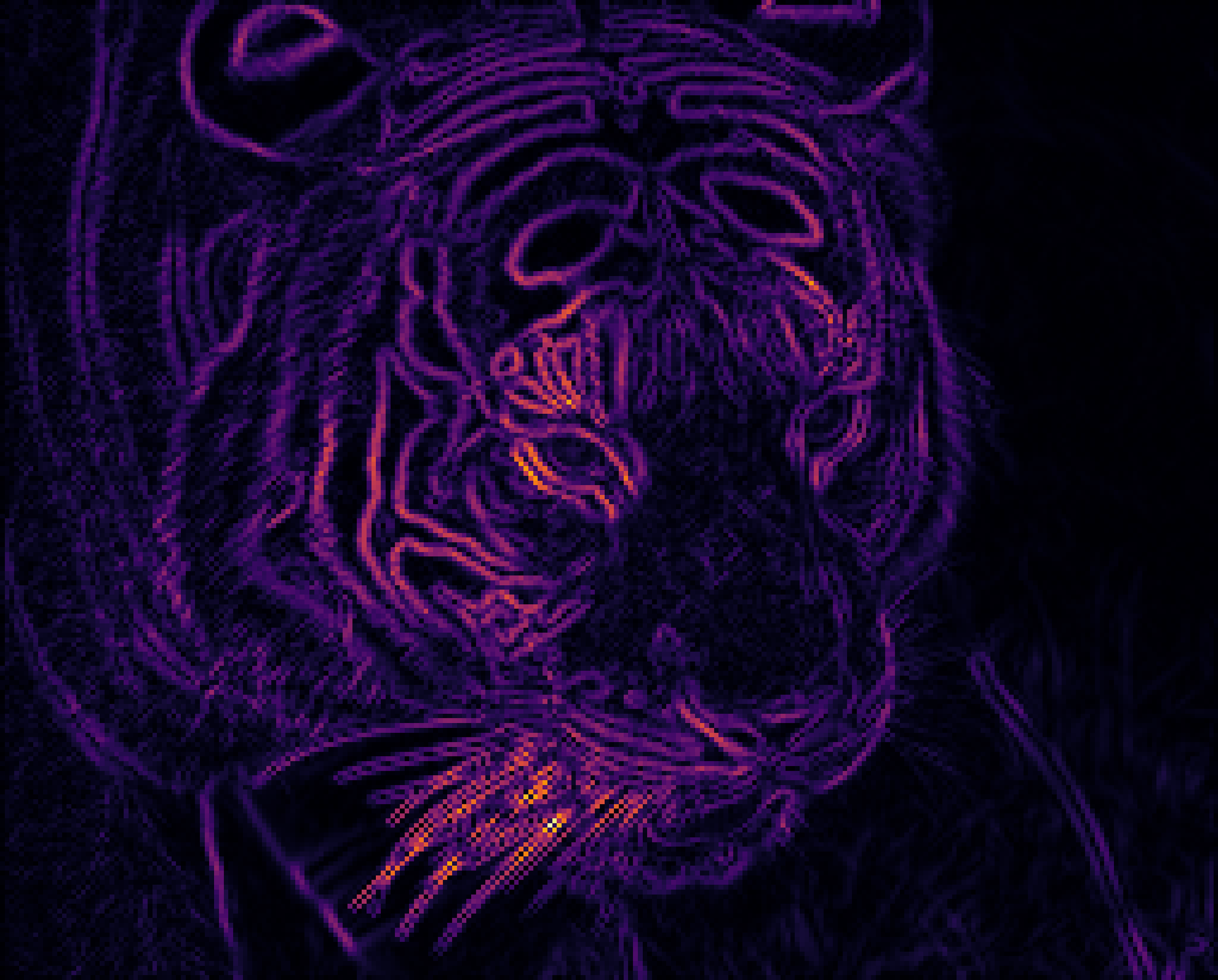} & 
        \includegraphics[width=0.09\linewidth]{figures/qualitatives/supervision/tiger_fitting.png} & 
        \includegraphics[width=0.09\linewidth]{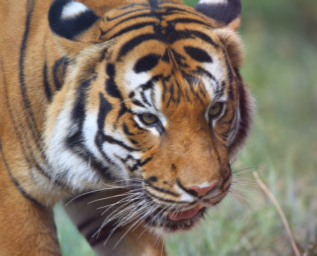} & 
        \includegraphics[width=0.09\linewidth]{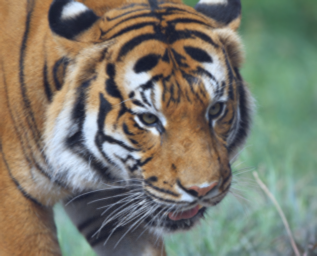} & 
        \includegraphics[width=0.09\linewidth]{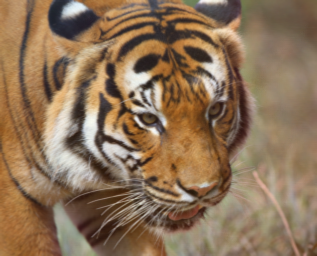} & 
        \includegraphics[width=0.09\linewidth]{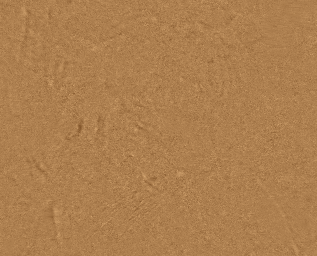} & 
        \includegraphics[width=0.09\linewidth]{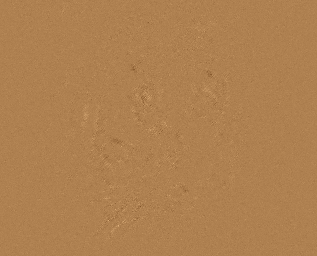} & 
        \includegraphics[width=0.09\linewidth]{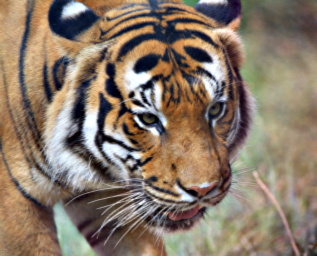} & 
        \includegraphics[width=0.09\linewidth]{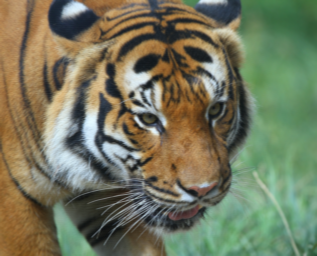} & 
        \includegraphics[width=0.09\linewidth]{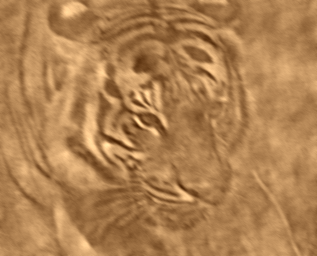} &
        \includegraphics[width=0.09\linewidth]{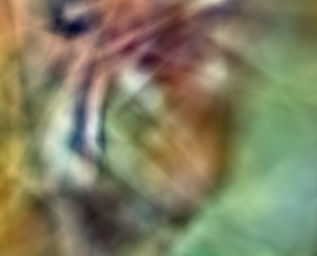} \\

    \end{tabular}
    }
    \caption{
    \textbf{Qualitative results on Tiger.}
    Please zoom to appreciate the differences.
    }
    \label{fig:qual_tiger}
\end{figure}

        \begin{figure}[t]
    \centering
    \resizebox{\linewidth}{!}{
    \begin{tabular}{lcc @{\hspace{0.75em}} ccccccccc}        

        & {\tiny Supervision} 
        & {\tiny GT} 
        & {\tiny \algoname{}} 
        & {\tiny SIREN} 
        & {\tiny Gauss} 
        & {\tiny WIRE} 
        & {\tiny BACON} 
        & {\tiny FINER} 
        & {\tiny MFN} 
        & {\tiny Fourier} 
        & {\tiny FR} \\
        
        \rotatebox{90}{\tiny \hspace{1.0em} \emph{Fitting}} &
        \includegraphics[width=0.09\linewidth]{figures/qualitatives/supervision/tiles_fitting.png} & 
        \includegraphics[width=0.09\linewidth]{figures/qualitatives/supervision/tiles_fitting.png} & 
        \includegraphics[width=0.09\linewidth]{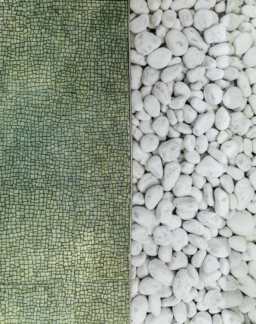} & 
        \includegraphics[width=0.09\linewidth]{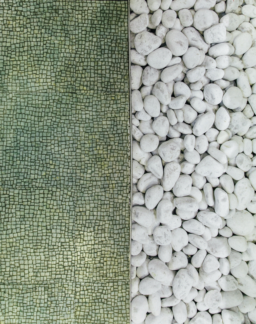} & 
        \includegraphics[width=0.09\linewidth]{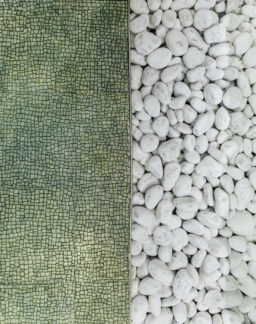} & 
        \includegraphics[width=0.09\linewidth]{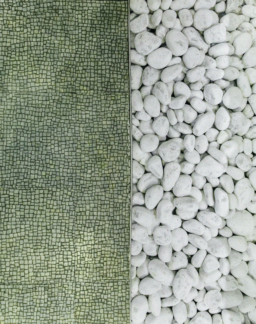} & 
        \includegraphics[width=0.09\linewidth]{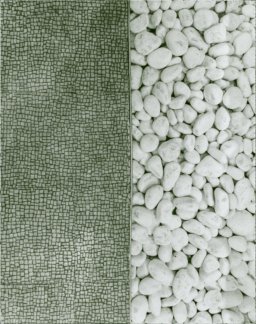} & 
        \includegraphics[width=0.09\linewidth]{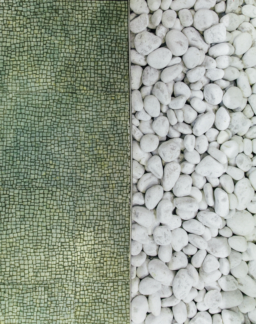} & 
        \includegraphics[width=0.09\linewidth]{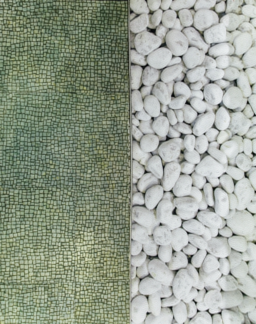} & 
        \includegraphics[width=0.09\linewidth]{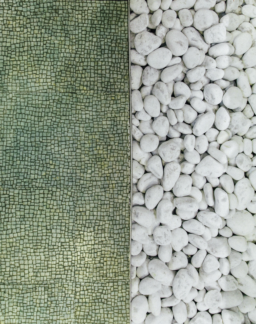} &
        \includegraphics[width=0.09\linewidth]{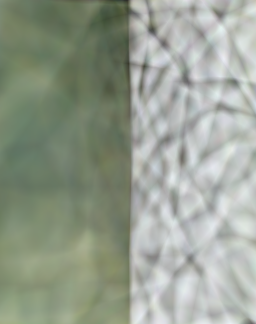} \\

        \rotatebox{90}{\tiny \hspace{0.25em} \emph{Denoising}} &
        \includegraphics[width=0.09\linewidth]{figures/qualitatives/supervision/tiles_denoising.png} & 
        \includegraphics[width=0.09\linewidth]{figures/qualitatives/supervision/tiles_fitting.png} & 
        \includegraphics[width=0.09\linewidth]{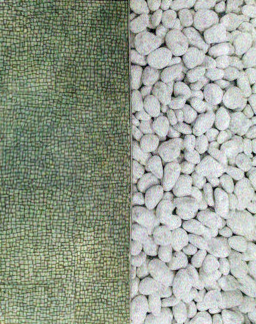} & 
        \includegraphics[width=0.09\linewidth]{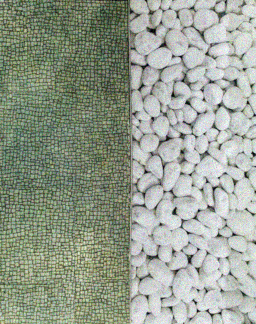} & 
        \includegraphics[width=0.09\linewidth]{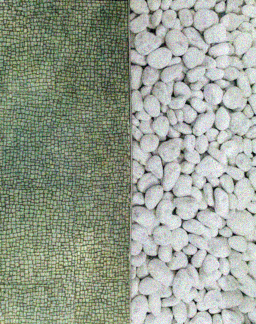} & 
        \includegraphics[width=0.09\linewidth]{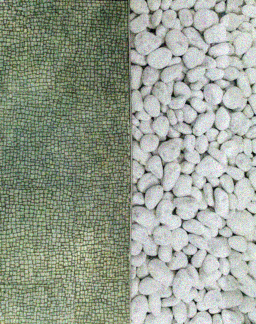} & 
        \includegraphics[width=0.09\linewidth]{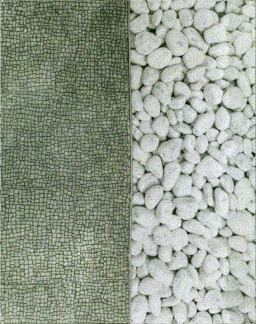} & 
        \includegraphics[width=0.09\linewidth]{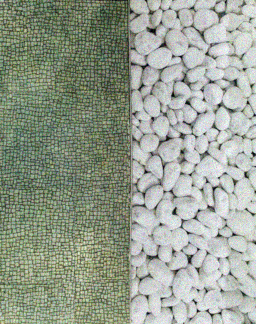} & 
        \includegraphics[width=0.09\linewidth]{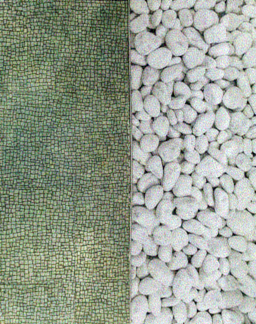} & 
        \includegraphics[width=0.09\linewidth]{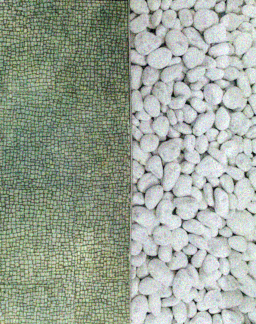} &
        \includegraphics[width=0.09\linewidth]{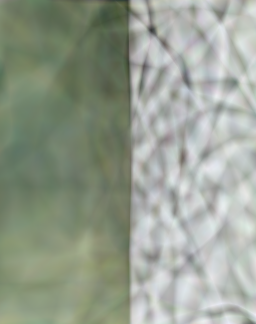} \\

        \rotatebox{90}{\tiny \hspace{0.1em} \emph{Inpainting}} &
        \includegraphics[width=0.09\linewidth]{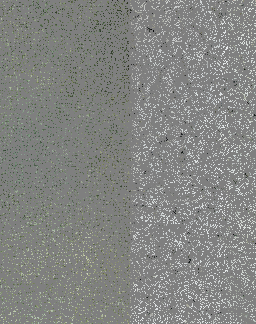} & 
        \includegraphics[width=0.09\linewidth]{figures/qualitatives/supervision/tiles_fitting.png} & 
        \includegraphics[width=0.09\linewidth]{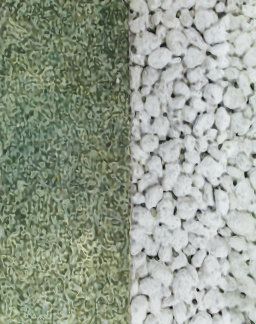} & 
        \includegraphics[width=0.09\linewidth]{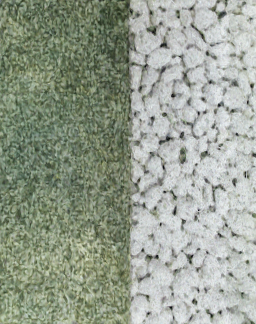} & 
        \includegraphics[width=0.09\linewidth]{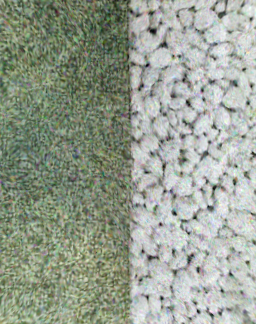} & 
        \includegraphics[width=0.09\linewidth]{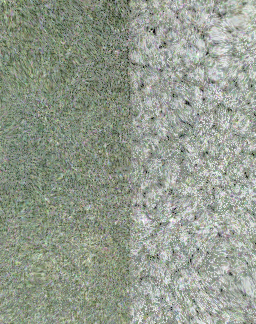} & 
        \includegraphics[width=0.09\linewidth]{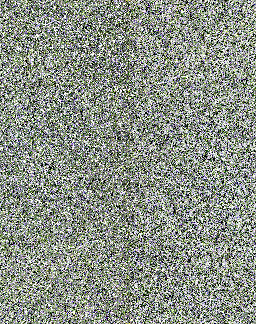} & 
        \includegraphics[width=0.09\linewidth]{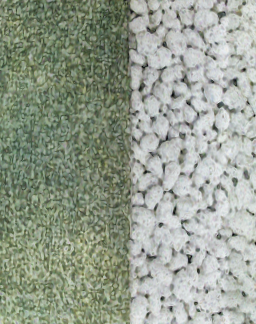} & 
        \includegraphics[width=0.09\linewidth]{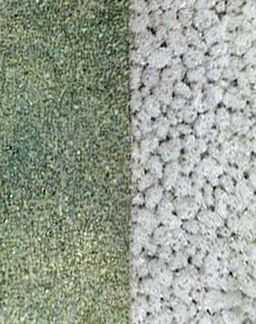} & 
        \includegraphics[width=0.09\linewidth]{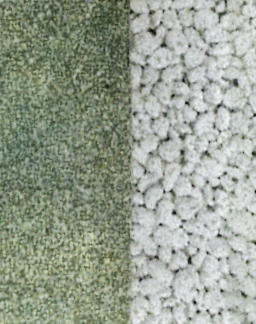} &
        \includegraphics[width=0.09\linewidth]{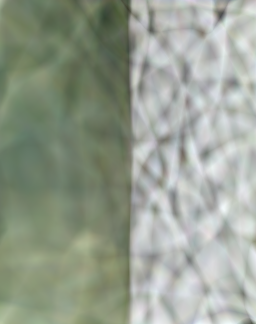} \\

        \rotatebox{90}{\tiny \emph{SR}} &
        \makebox[0.09\linewidth][c]{\includegraphics[width=0.05\linewidth, valign=m]{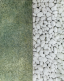}} &
        \includegraphics[width=0.09\linewidth, valign=m]{figures/qualitatives/supervision/tiles_fitting.png} & 
        \includegraphics[width=0.09\linewidth, valign=m]{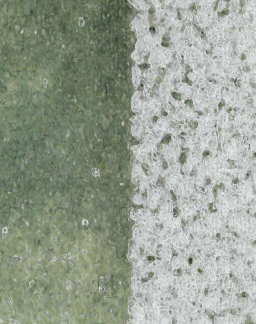} & 
        \includegraphics[width=0.09\linewidth, valign=m]{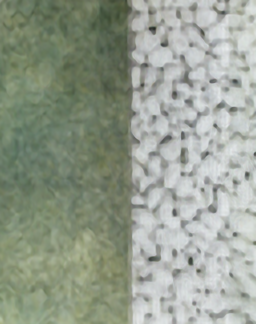} & 
        \includegraphics[width=0.09\linewidth, valign=m]{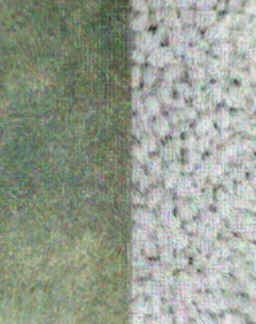} & 
        \includegraphics[width=0.09\linewidth, valign=m]{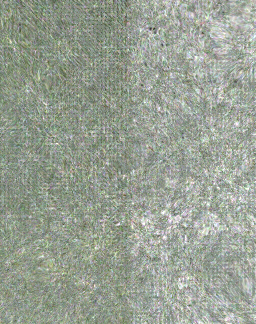} & 
        \includegraphics[width=0.09\linewidth, valign=m]{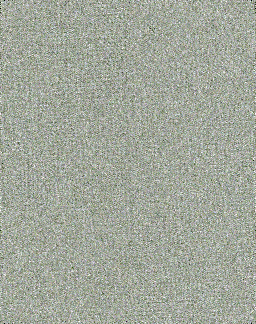} & 
        \includegraphics[width=0.09\linewidth, valign=m]{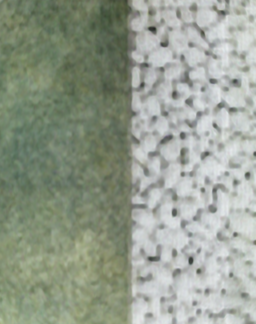} & 
        \includegraphics[width=0.09\linewidth, valign=m]{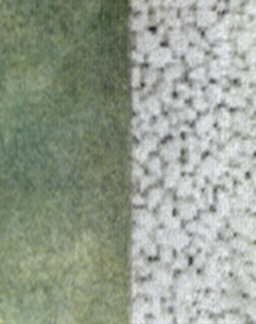} & 
        \includegraphics[width=0.09\linewidth, valign=m]{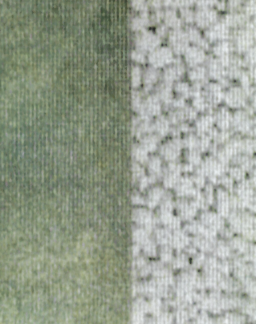} &
        \includegraphics[width=0.09\linewidth, valign=m]{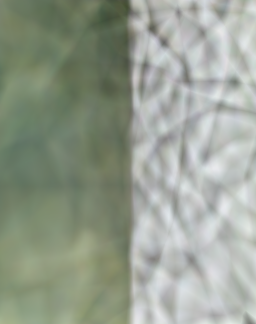} \\[4ex]
        
        \rotatebox{90}{\tiny \hspace{0.3em} \emph{Poisson}} &
        \includegraphics[width=0.09\linewidth]{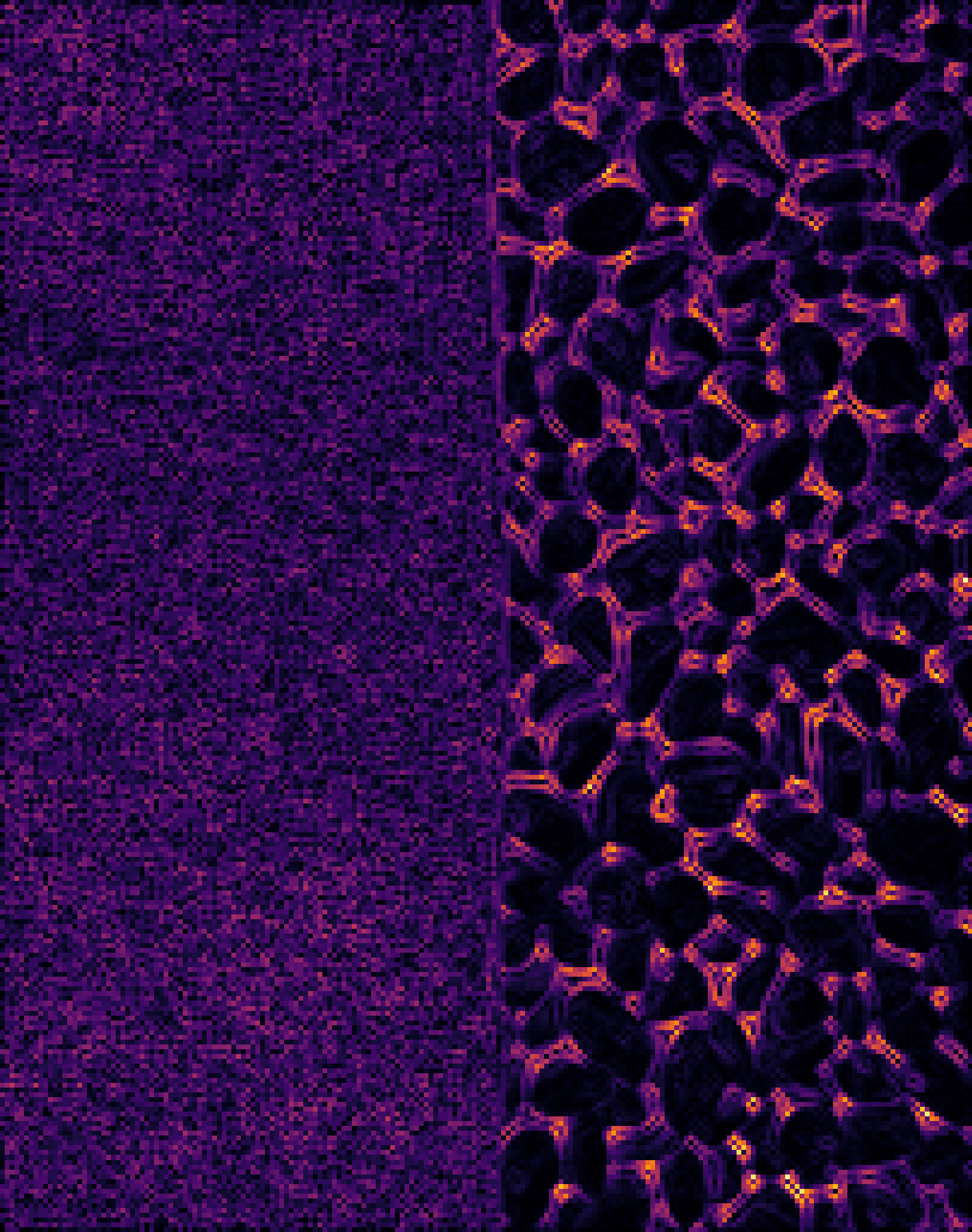} & 
        \includegraphics[width=0.09\linewidth]{figures/qualitatives/supervision/tiles_fitting.png} & 
        \includegraphics[width=0.09\linewidth]{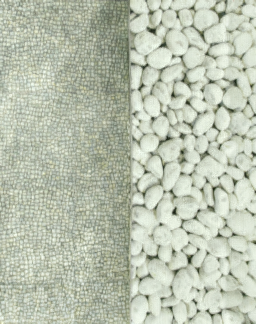} & 
        \includegraphics[width=0.09\linewidth]{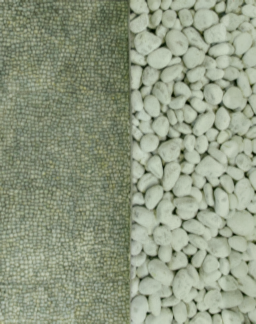} & 
        \includegraphics[width=0.09\linewidth]{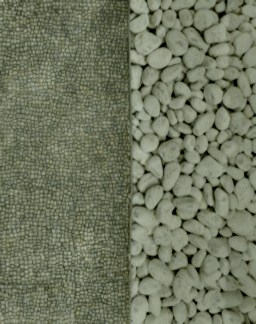} & 
        \includegraphics[width=0.09\linewidth]{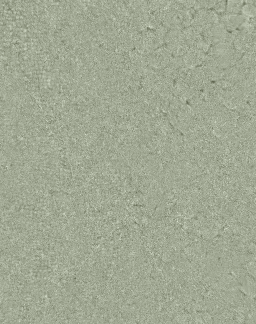} & 
        \includegraphics[width=0.09\linewidth]{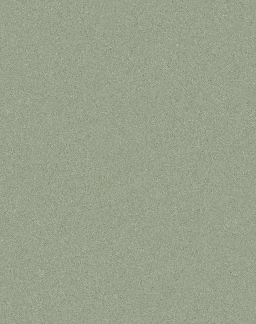} & 
        \includegraphics[width=0.09\linewidth]{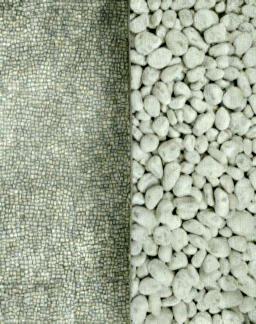} & 
        \includegraphics[width=0.09\linewidth]{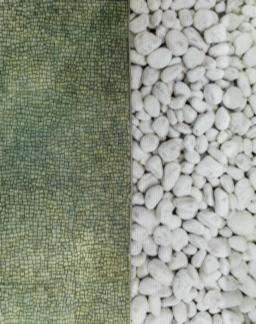} & 
        \includegraphics[width=0.09\linewidth]{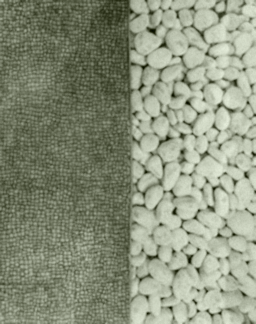} &
        \includegraphics[width=0.09\linewidth]{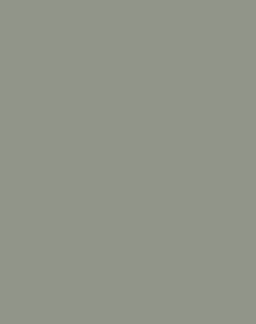} \\

    \end{tabular}
    }
    \caption{
    \textbf{Qualitative results on Tiles.}
    Please zoom to appreciate the differences.
    }
    \label{fig:qual_tiles}
\end{figure}

        \begin{figure}[t]
    \centering
    \resizebox{\linewidth}{!}{
    \begin{tabular}{lcc @{\hspace{0.75em}} ccccccccc}        

        & {\tiny Supervision} 
        & {\tiny GT} 
        & {\tiny \algoname{}} 
        & {\tiny SIREN} 
        & {\tiny Gauss} 
        & {\tiny WIRE} 
        & {\tiny BACON} 
        & {\tiny FINER} 
        & {\tiny MFN} 
        & {\tiny Fourier} 
        & {\tiny FR} \\
        
        \rotatebox{90}{\tiny \hspace{0.25em} \emph{Fit.}} &
        \includegraphics[width=0.09\linewidth]{figures/qualitatives/supervision/bikers_fitting.png} & 
        \includegraphics[width=0.09\linewidth]{figures/qualitatives/supervision/bikers_fitting.png} & 
        \includegraphics[width=0.09\linewidth]{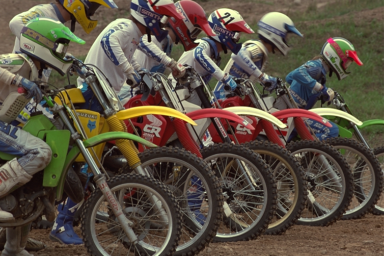} & 
        \includegraphics[width=0.09\linewidth]{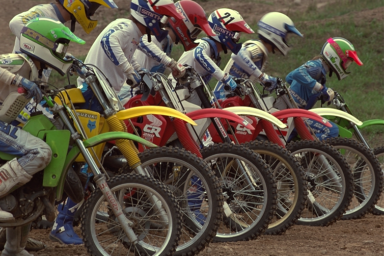} & 
        \includegraphics[width=0.09\linewidth]{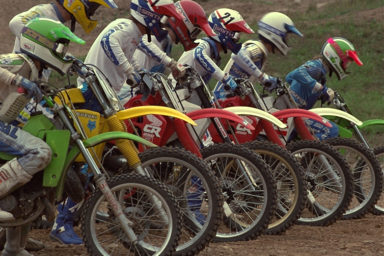} & 
        \includegraphics[width=0.09\linewidth]{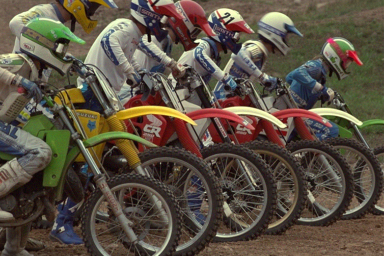} & 
        \includegraphics[width=0.09\linewidth]{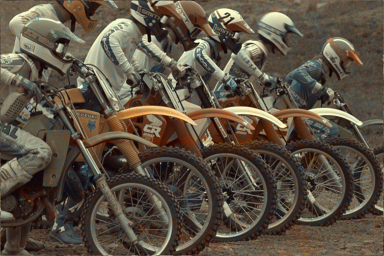} & 
        \includegraphics[width=0.09\linewidth]{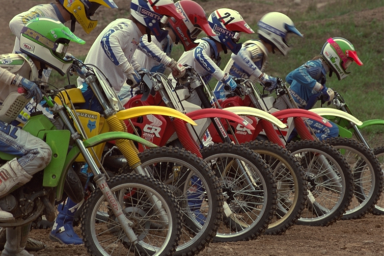} & 
        \includegraphics[width=0.09\linewidth]{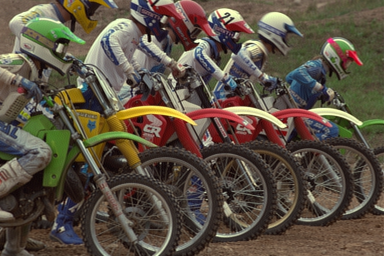} & 
        \includegraphics[width=0.09\linewidth]{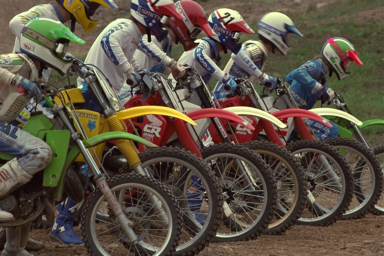} &
        \includegraphics[width=0.09\linewidth]{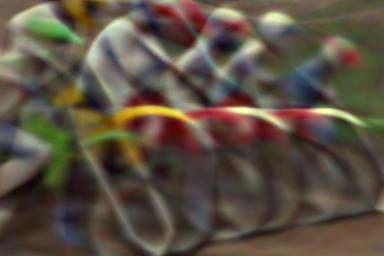} \\

        \rotatebox{90}{\tiny \hspace{0.25em} \emph{Den.}} &
        \includegraphics[width=0.09\linewidth]{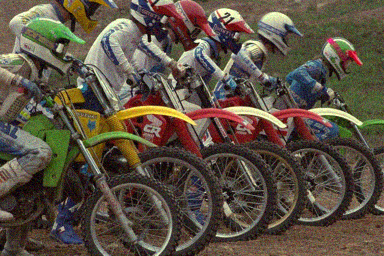} & 
        \includegraphics[width=0.09\linewidth]{figures/qualitatives/supervision/bikers_fitting.png} & 
        \includegraphics[width=0.09\linewidth]{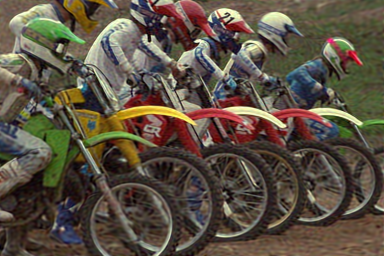} & 
        \includegraphics[width=0.09\linewidth]{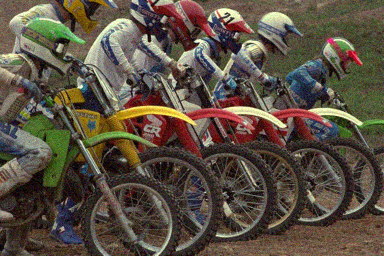} & 
        \includegraphics[width=0.09\linewidth]{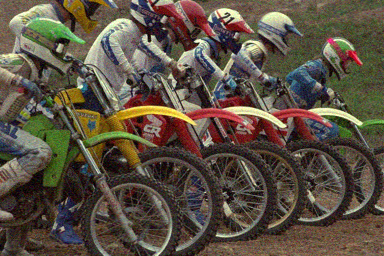} & 
        \includegraphics[width=0.09\linewidth]{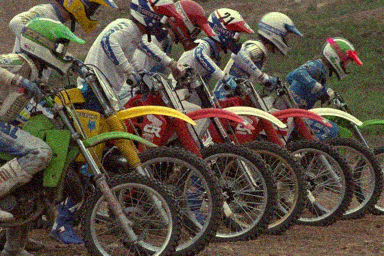} & 
        \includegraphics[width=0.09\linewidth]{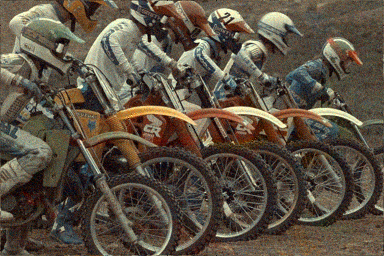} & 
        \includegraphics[width=0.09\linewidth]{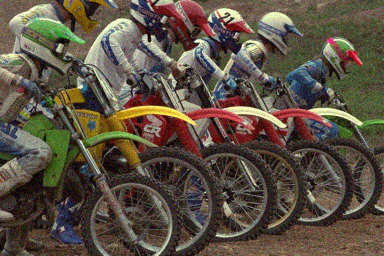} & 
        \includegraphics[width=0.09\linewidth]{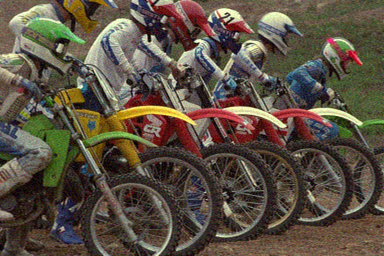} & 
        \includegraphics[width=0.09\linewidth]{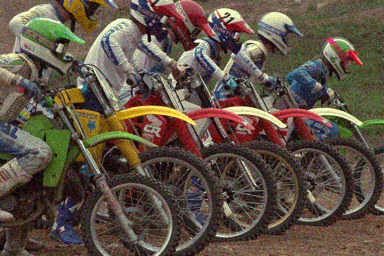} &
        \includegraphics[width=0.09\linewidth]{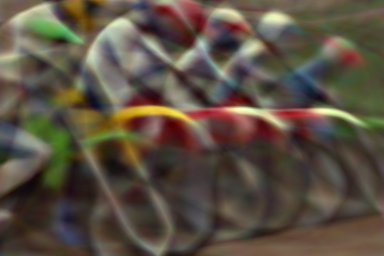} \\

        \rotatebox{90}{\tiny \hspace{0.4em} \emph{Inp.}} &
        \includegraphics[width=0.09\linewidth]{figures/qualitatives/supervision/bikers_inpainting.png} & 
        \includegraphics[width=0.09\linewidth]{figures/qualitatives/supervision/bikers_fitting.png} & 
        \includegraphics[width=0.09\linewidth]{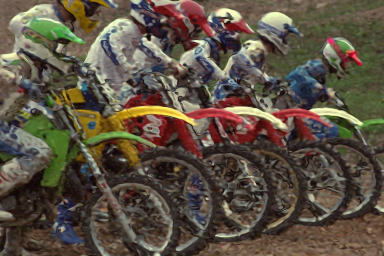} & 
        \includegraphics[width=0.09\linewidth]{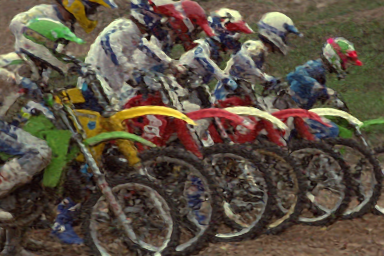} & 
        \includegraphics[width=0.09\linewidth]{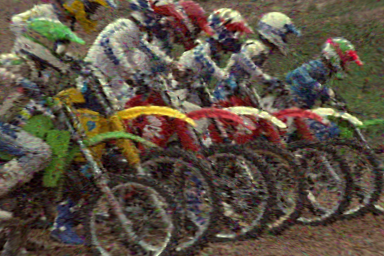} & 
        \includegraphics[width=0.09\linewidth]{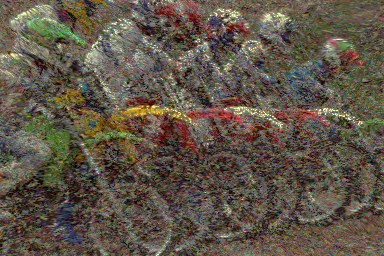} & 
        \includegraphics[width=0.09\linewidth]{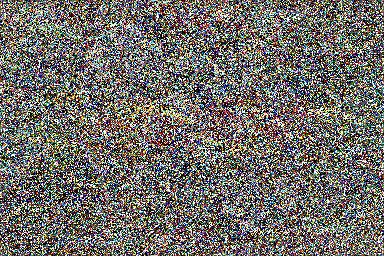} & 
        \includegraphics[width=0.09\linewidth]{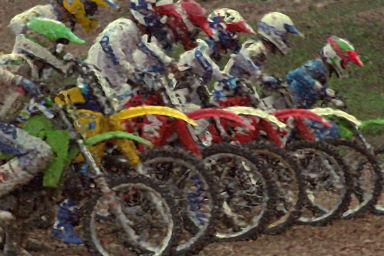} & 
        \includegraphics[width=0.09\linewidth]{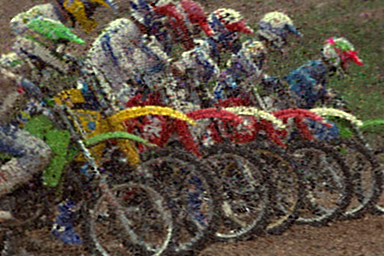} & 
        \includegraphics[width=0.09\linewidth]{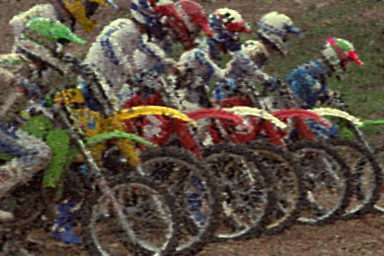} &
        \includegraphics[width=0.09\linewidth]{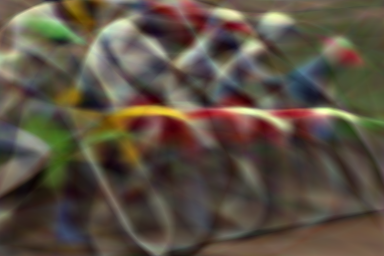} \\

        \rotatebox{90}{\tiny \emph{SR}} &
        \makebox[0.09\linewidth][c]{\includegraphics[width=0.05\linewidth, valign=m]{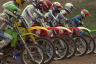}} &
        \includegraphics[width=0.09\linewidth, valign=m]{figures/qualitatives/supervision/bikers_fitting.png} & 
        \includegraphics[width=0.09\linewidth, valign=m]{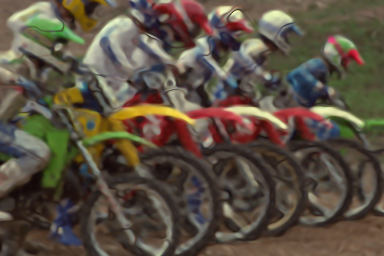} & 
        \includegraphics[width=0.09\linewidth, valign=m]{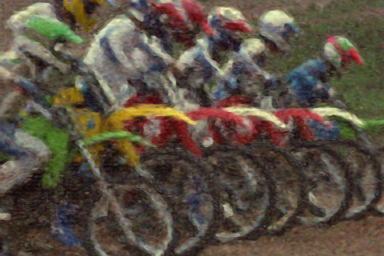} & 
        \includegraphics[width=0.09\linewidth, valign=m]{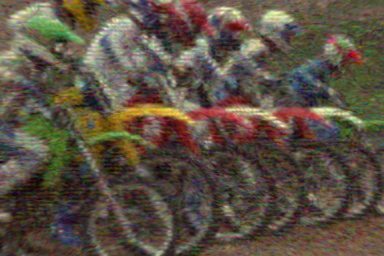} & 
        \includegraphics[width=0.09\linewidth, valign=m]{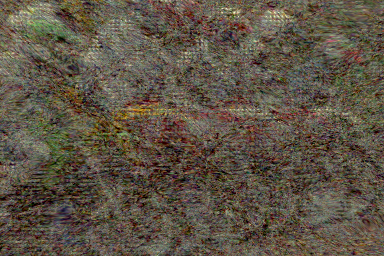} & 
        \includegraphics[width=0.09\linewidth, valign=m]{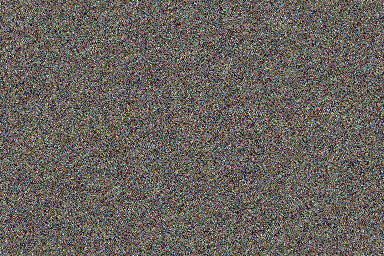} & 
        \includegraphics[width=0.09\linewidth, valign=m]{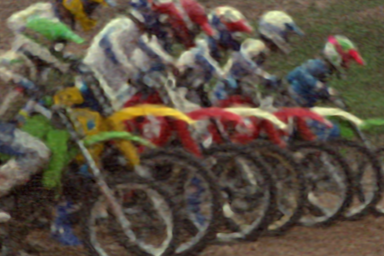} & 
        \includegraphics[width=0.09\linewidth, valign=m]{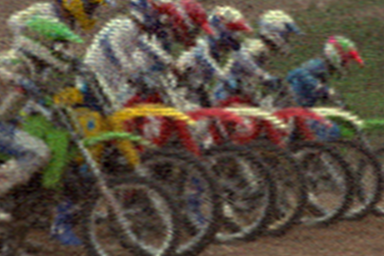} & 
        \includegraphics[width=0.09\linewidth, valign=m]{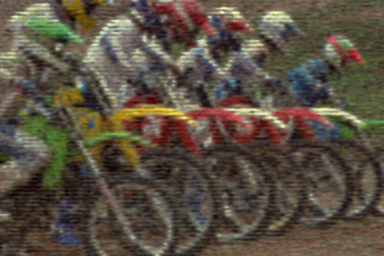} &
        \includegraphics[width=0.09\linewidth, valign=m]{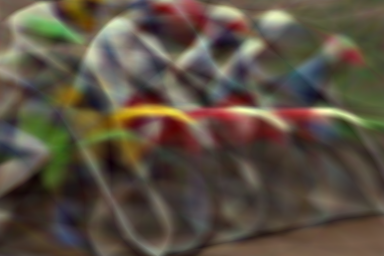} \\[1.75ex]
        
        \rotatebox{90}{\tiny \emph{Poiss.}} &
        \includegraphics[width=0.09\linewidth]{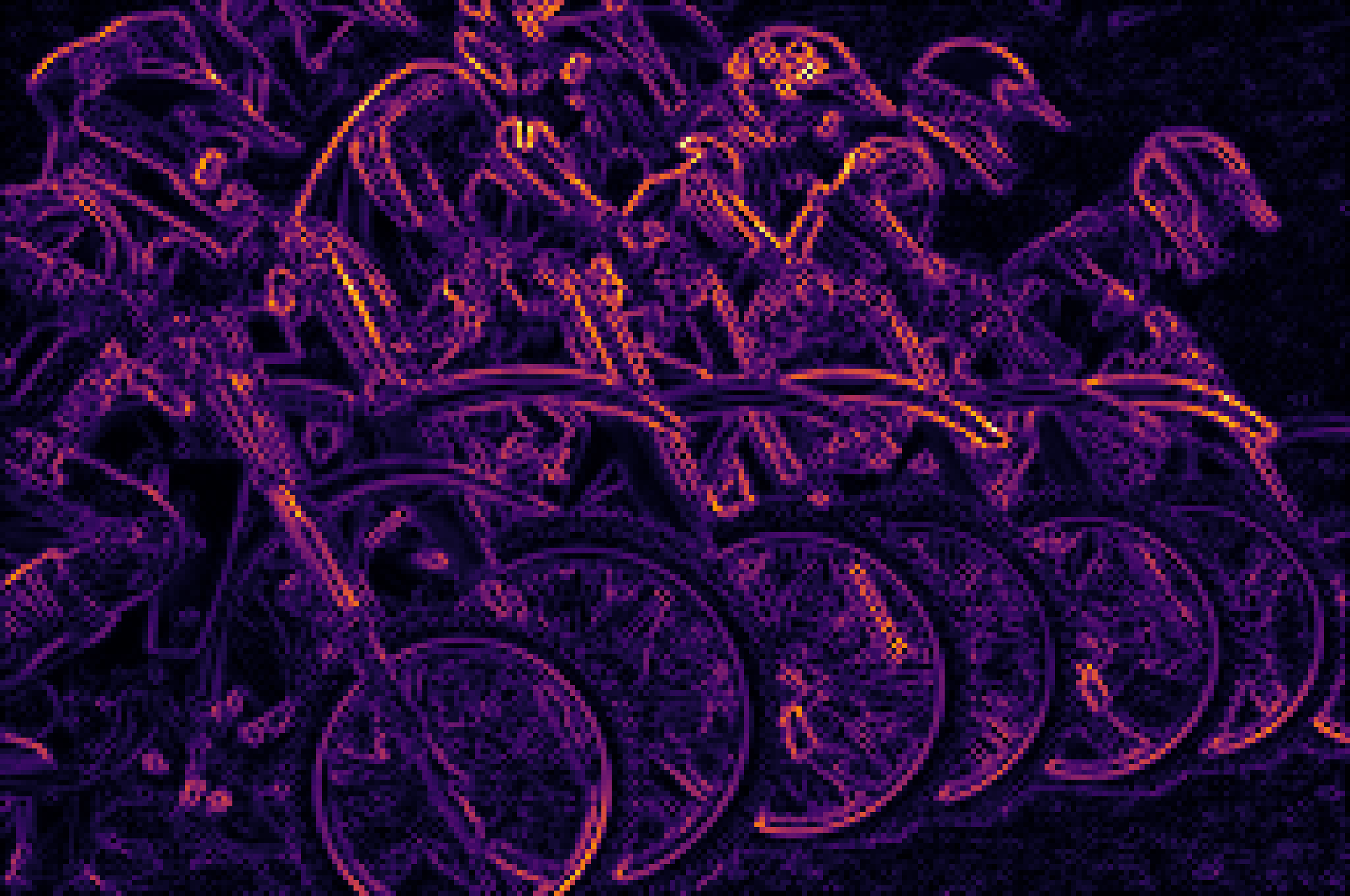} & 
        \includegraphics[width=0.09\linewidth]{figures/qualitatives/supervision/bikers_fitting.png} & 
        \includegraphics[width=0.09\linewidth]{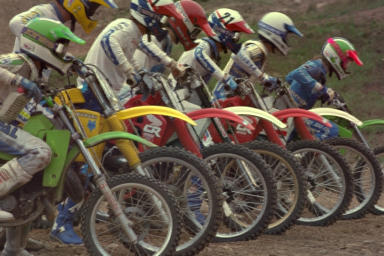} & 
        \includegraphics[width=0.09\linewidth]{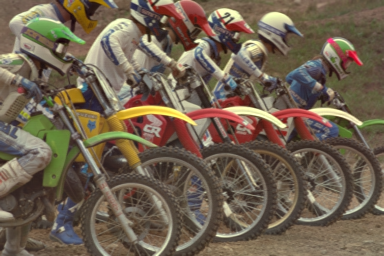} & 
        \includegraphics[width=0.09\linewidth]{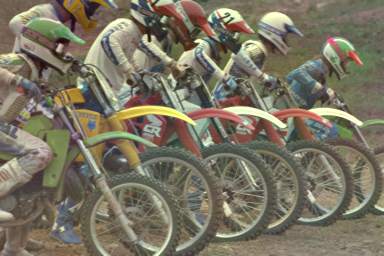} & 
        \includegraphics[width=0.09\linewidth]{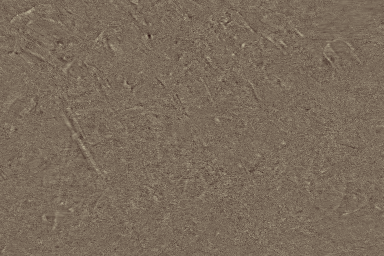} & 
        \includegraphics[width=0.09\linewidth]{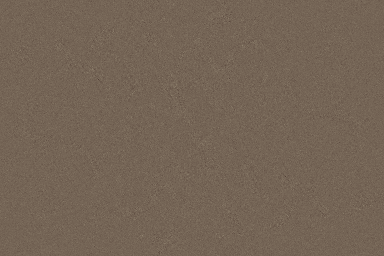} & 
        \includegraphics[width=0.09\linewidth]{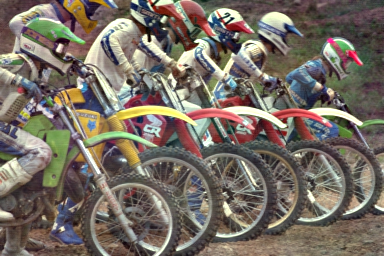} & 
        \includegraphics[width=0.09\linewidth]{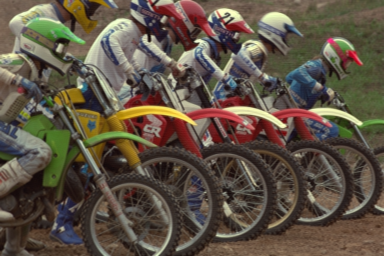} & 
        \includegraphics[width=0.09\linewidth]{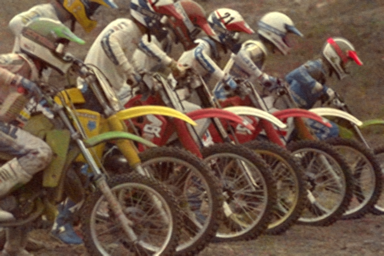} &
        \includegraphics[width=0.09\linewidth]{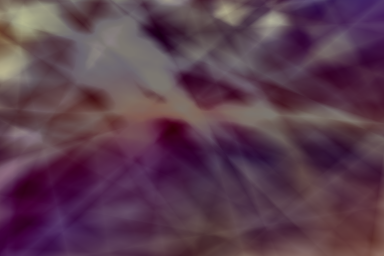} \\

    \end{tabular}
    }
    \caption{
    \textbf{Qualitative results on Bikers.}
    Please zoom to appreciate the differences.
    }
    \label{fig:qual_bikers}
\end{figure}

        \begin{figure}[t]
    \centering
    \resizebox{\linewidth}{!}{
    \begin{tabular}{lcc @{\hspace{0.75em}} ccccccccc}        

        & {\tiny Supervision} 
        & {\tiny GT} 
        & {\tiny \algoname{}} 
        & {\tiny SIREN} 
        & {\tiny Gauss} 
        & {\tiny WIRE} 
        & {\tiny BACON} 
        & {\tiny FINER} 
        & {\tiny MFN} 
        & {\tiny Fourier} 
        & {\tiny FR} \\
        
        \rotatebox{90}{\tiny \hspace{0.25em} \emph{Fit.}} &
        \includegraphics[width=0.09\linewidth]{figures/qualitatives/supervision/butterfly_fitting.png} & 
        \includegraphics[width=0.09\linewidth]{figures/qualitatives/supervision/butterfly_fitting.png} & 
        \includegraphics[width=0.09\linewidth]{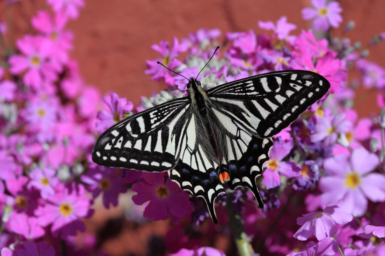} & 
        \includegraphics[width=0.09\linewidth]{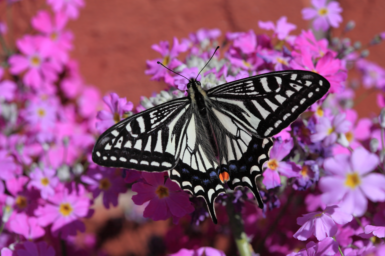} & 
        \includegraphics[width=0.09\linewidth]{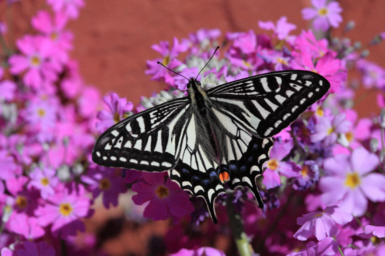} & 
        \includegraphics[width=0.09\linewidth]{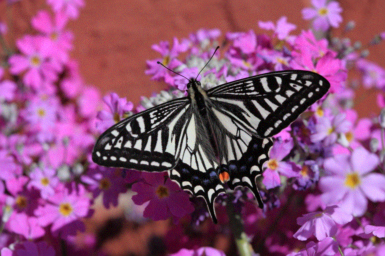} & 
        \includegraphics[width=0.09\linewidth]{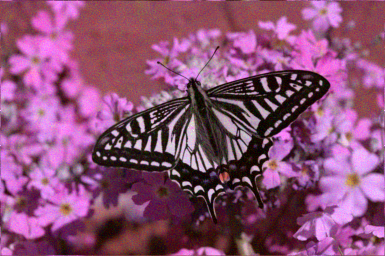} & 
        \includegraphics[width=0.09\linewidth]{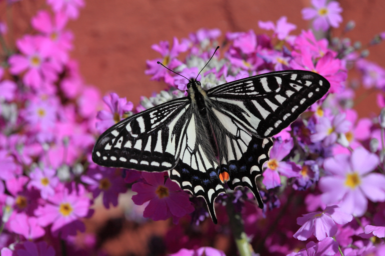} & 
        \includegraphics[width=0.09\linewidth]{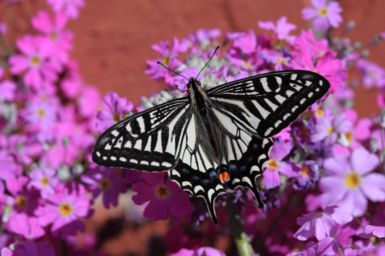} & 
        \includegraphics[width=0.09\linewidth]{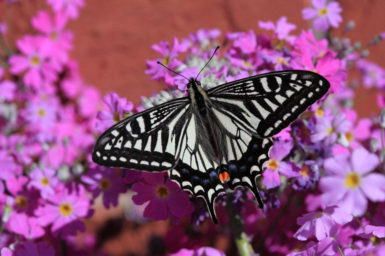} &
        \includegraphics[width=0.09\linewidth]{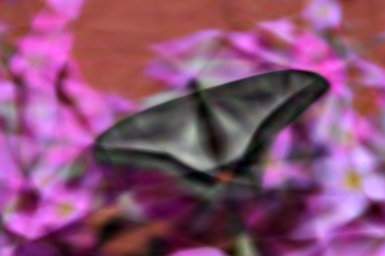} \\

        \rotatebox{90}{\tiny \hspace{0.25em} \emph{Den.}} &
        \includegraphics[width=0.09\linewidth]{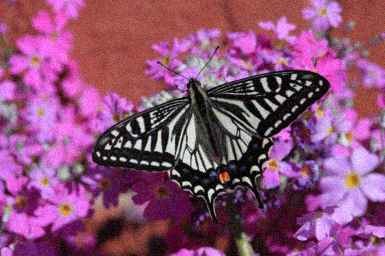} & 
        \includegraphics[width=0.09\linewidth]{figures/qualitatives/supervision/butterfly_fitting.png} & 
        \includegraphics[width=0.09\linewidth]{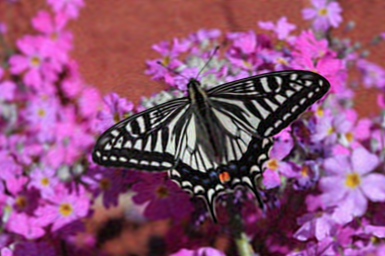} & 
        \includegraphics[width=0.09\linewidth]{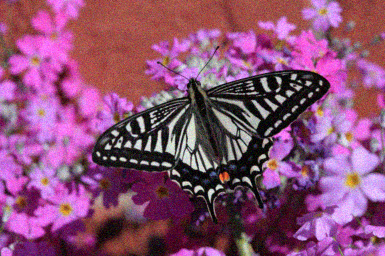} & 
        \includegraphics[width=0.09\linewidth]{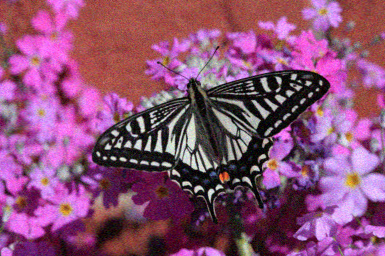} & 
        \includegraphics[width=0.09\linewidth]{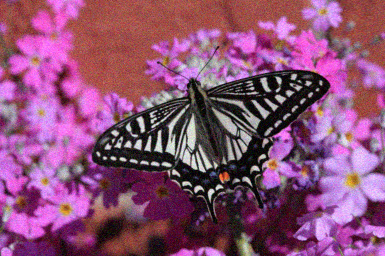} & 
        \includegraphics[width=0.09\linewidth]{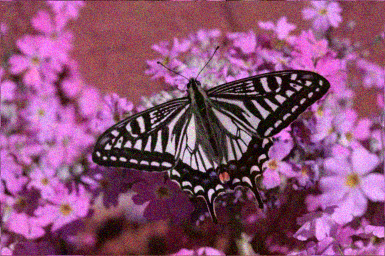} & 
        \includegraphics[width=0.09\linewidth]{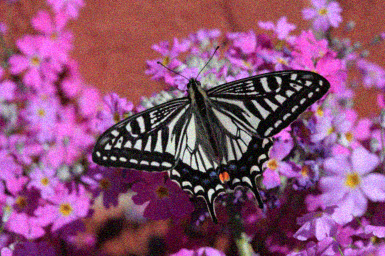} & 
        \includegraphics[width=0.09\linewidth]{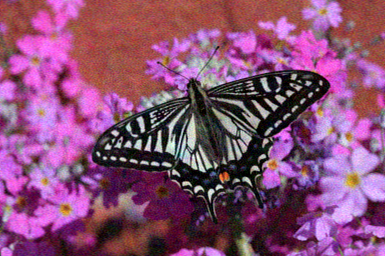} & 
        \includegraphics[width=0.09\linewidth]{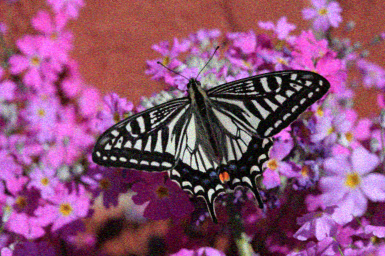} &
        \includegraphics[width=0.09\linewidth]{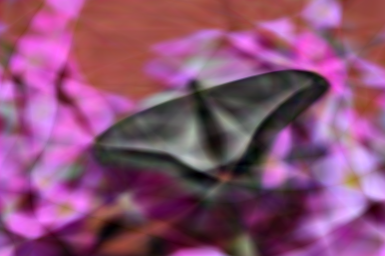} \\

        \rotatebox{90}{\tiny \hspace{0.4em} \emph{Inp.}} &
        \includegraphics[width=0.09\linewidth]{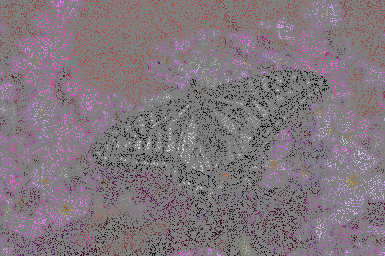} & 
        \includegraphics[width=0.09\linewidth]{figures/qualitatives/supervision/butterfly_fitting.png} & 
        \includegraphics[width=0.09\linewidth]{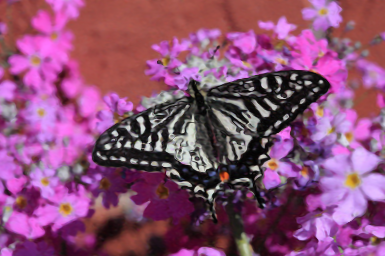} & 
        \includegraphics[width=0.09\linewidth]{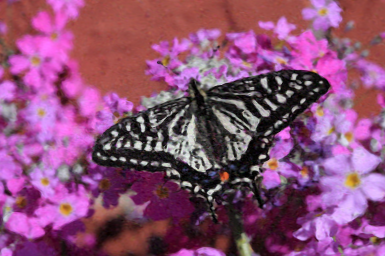} & 
        \includegraphics[width=0.09\linewidth]{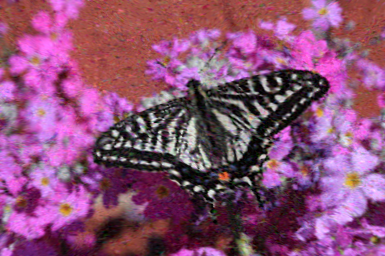} & 
        \includegraphics[width=0.09\linewidth]{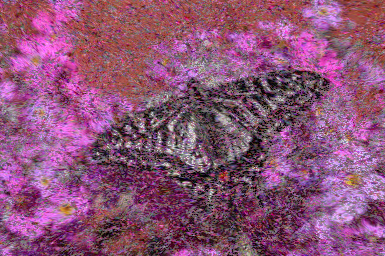} & 
        \includegraphics[width=0.09\linewidth]{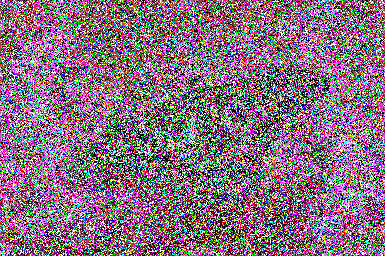} & 
        \includegraphics[width=0.09\linewidth]{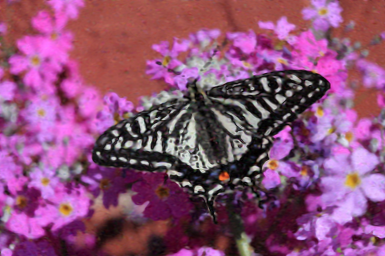} & 
        \includegraphics[width=0.09\linewidth]{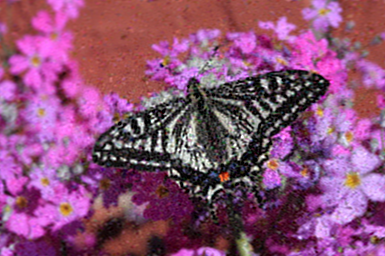} & 
        \includegraphics[width=0.09\linewidth]{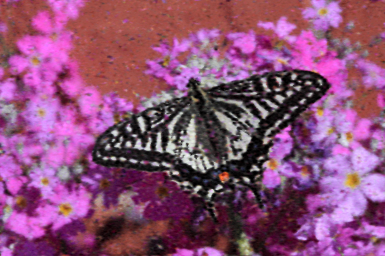} &
        \includegraphics[width=0.09\linewidth]{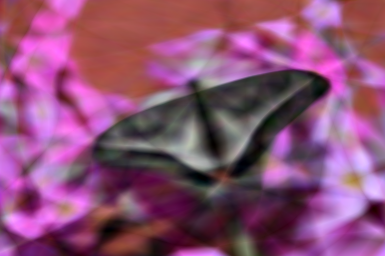} \\

        \rotatebox{90}{\tiny \emph{SR}} &
        \makebox[0.09\linewidth][c]{\includegraphics[width=0.05\linewidth, valign=m]{figures/qualitatives/supervision/butterfly_super_resolution.png}} &
        \includegraphics[width=0.09\linewidth, valign=m]{figures/qualitatives/supervision/butterfly_fitting.png} & 
        \includegraphics[width=0.09\linewidth, valign=m]{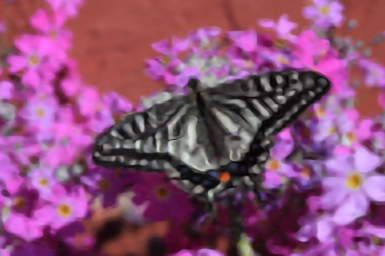} & 
        \includegraphics[width=0.09\linewidth, valign=m]{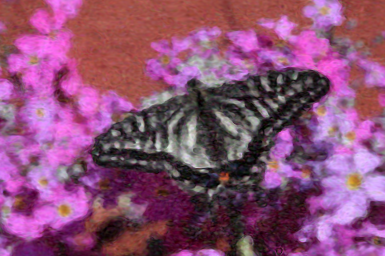} & 
        \includegraphics[width=0.09\linewidth, valign=m]{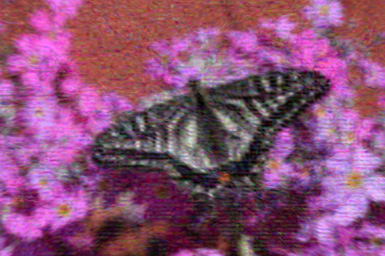} & 
        \includegraphics[width=0.09\linewidth, valign=m]{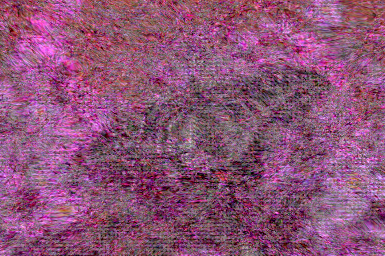} & 
        \includegraphics[width=0.09\linewidth, valign=m]{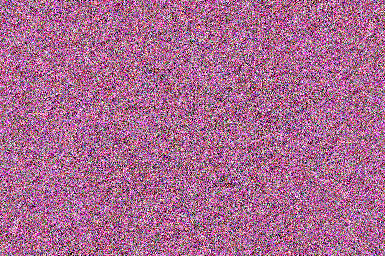} & 
        \includegraphics[width=0.09\linewidth, valign=m]{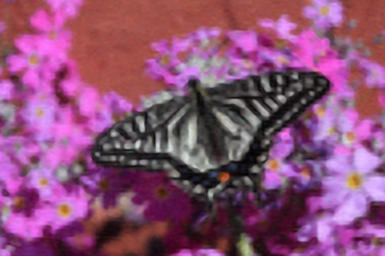} & 
        \includegraphics[width=0.09\linewidth, valign=m]{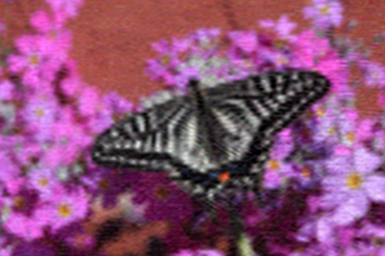} & 
        \includegraphics[width=0.09\linewidth, valign=m]{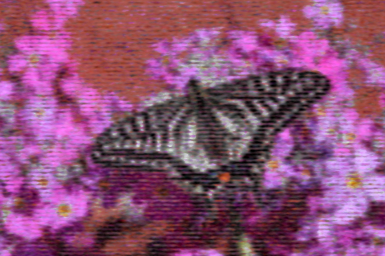} &
        \includegraphics[width=0.09\linewidth, valign=m]{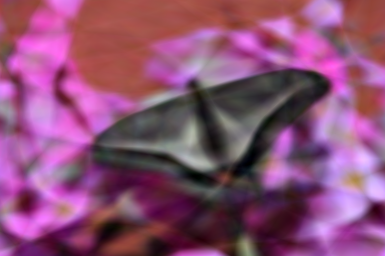} \\[1.75ex]
        
        \rotatebox{90}{\tiny \emph{Poiss.}} &
        \includegraphics[width=0.09\linewidth]{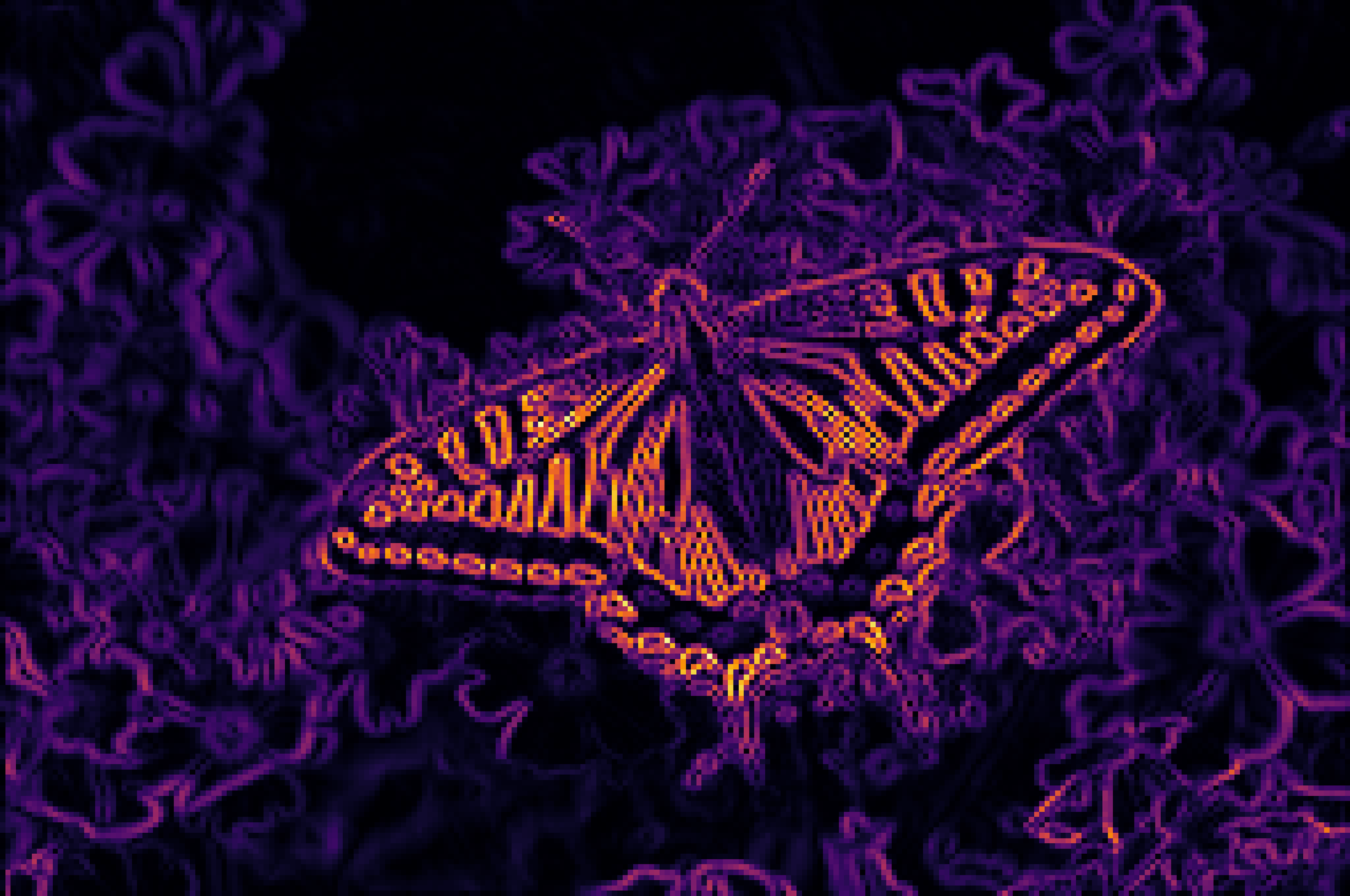} & 
        \includegraphics[width=0.09\linewidth]{figures/qualitatives/supervision/butterfly_fitting.png} & 
        \includegraphics[width=0.09\linewidth]{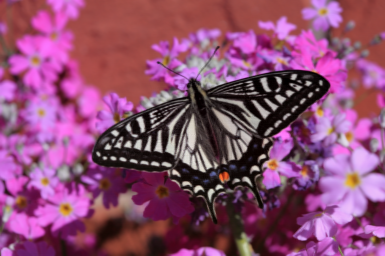} & 
        \includegraphics[width=0.09\linewidth]{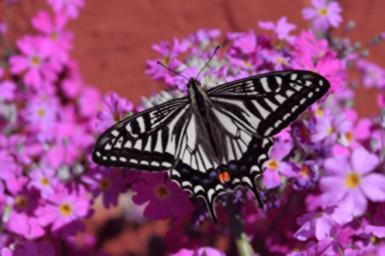} & 
        \includegraphics[width=0.09\linewidth]{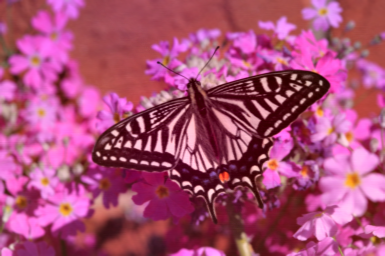} & 
        \includegraphics[width=0.09\linewidth]{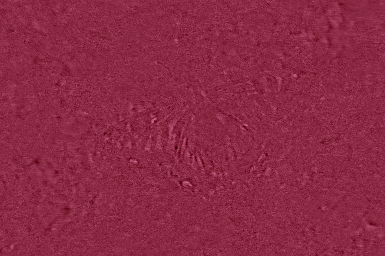} & 
        \includegraphics[width=0.09\linewidth]{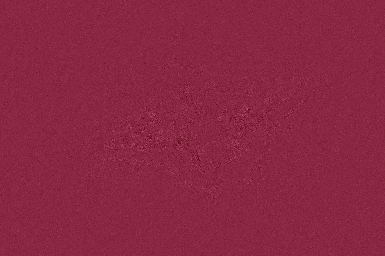} & 
        \includegraphics[width=0.09\linewidth]{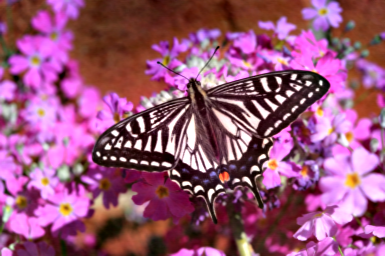} & 
        \includegraphics[width=0.09\linewidth]{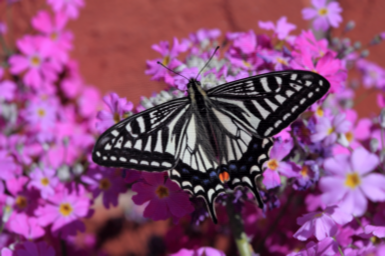} & 
        \includegraphics[width=0.09\linewidth]{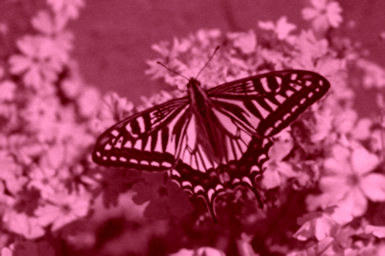} &
        \includegraphics[width=0.09\linewidth]{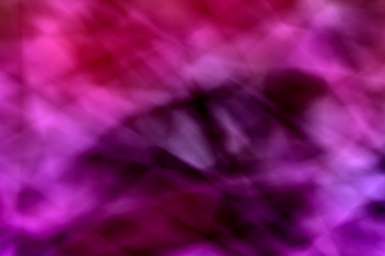} \\

    \end{tabular}
    }
    \caption{
    \textbf{Qualitative results on Butterfly.}
    Please zoom to appreciate the differences.
    }
    \label{fig:qual_butterfly}
\end{figure}

        \begin{figure}[t]
    \centering
    \resizebox{\linewidth}{!}{
    \begin{tabular}{lcc @{\hspace{0.75em}} ccccccccc}        

        & {\tiny Supervision} 
        & {\tiny GT} 
        & {\tiny \algoname{}} 
        & {\tiny SIREN} 
        & {\tiny Gauss} 
        & {\tiny WIRE} 
        & {\tiny BACON} 
        & {\tiny FINER} 
        & {\tiny MFN} 
        & {\tiny Fourier} 
        & {\tiny FR} \\
        
        \rotatebox{90}{\tiny \hspace{0.25em} \emph{Fitting}} &
        \includegraphics[width=0.09\linewidth]{figures/qualitatives/supervision/knot_fitting.jpg} & 
        \includegraphics[width=0.09\linewidth]{figures/qualitatives/supervision/knot_fitting.jpg} & 
        \includegraphics[width=0.09\linewidth]{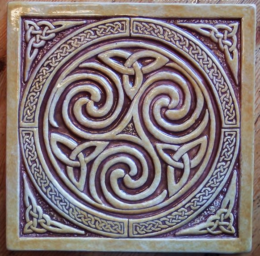} & 
        \includegraphics[width=0.09\linewidth]{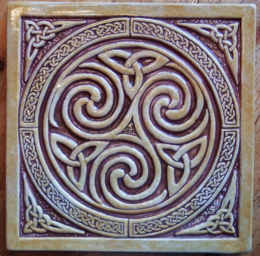} & 
        \includegraphics[width=0.09\linewidth]{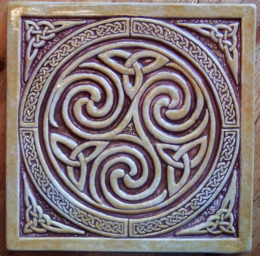} & 
        \includegraphics[width=0.09\linewidth]{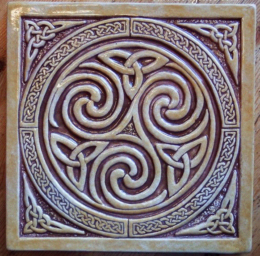} & 
        \includegraphics[width=0.09\linewidth]{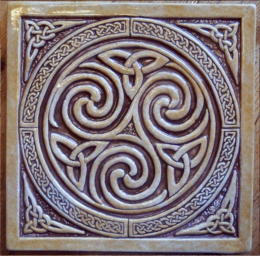} & 
        \includegraphics[width=0.09\linewidth]{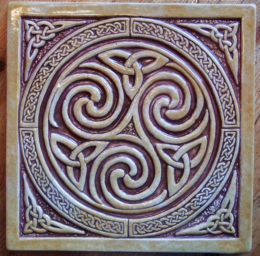} & 
        \includegraphics[width=0.09\linewidth]{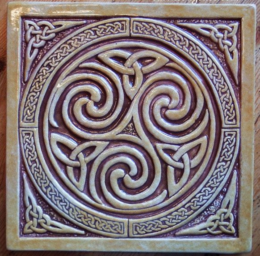} & 
        \includegraphics[width=0.09\linewidth]{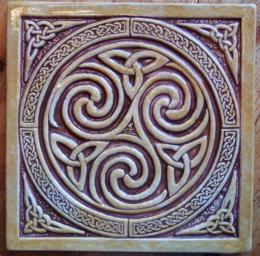} &
        \includegraphics[width=0.09\linewidth]{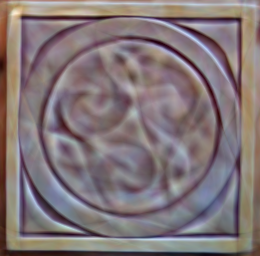} \\

        \rotatebox{90}{\tiny \emph{Denoising}} &
        \includegraphics[width=0.09\linewidth]{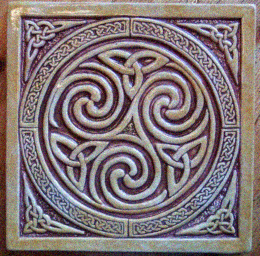} & 
        \includegraphics[width=0.09\linewidth]{figures/qualitatives/supervision/knot_fitting.jpg} & 
        \includegraphics[width=0.09\linewidth]{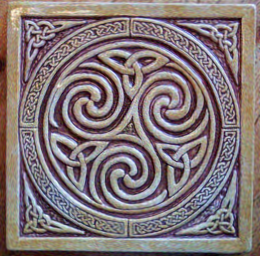} & 
        \includegraphics[width=0.09\linewidth]{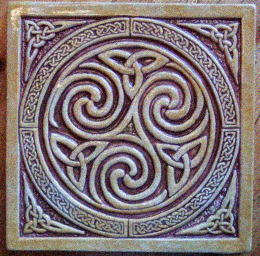} & 
        \includegraphics[width=0.09\linewidth]{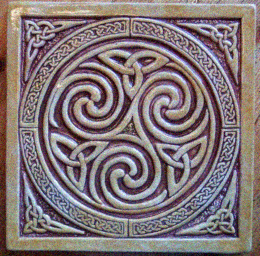} & 
        \includegraphics[width=0.09\linewidth]{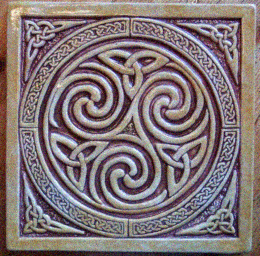} & 
        \includegraphics[width=0.09\linewidth]{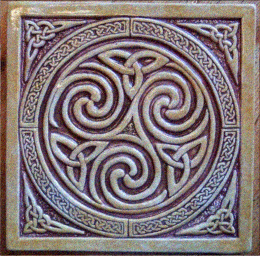} & 
        \includegraphics[width=0.09\linewidth]{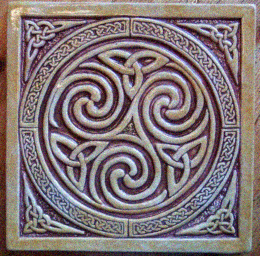} & 
        \includegraphics[width=0.09\linewidth]{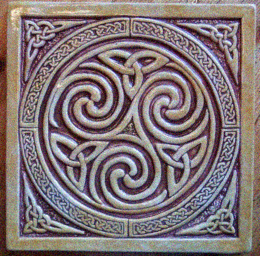} & 
        \includegraphics[width=0.09\linewidth]{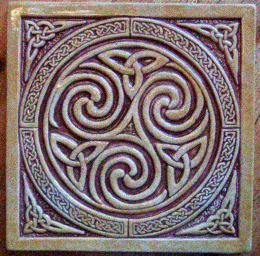} &
        \includegraphics[width=0.09\linewidth]{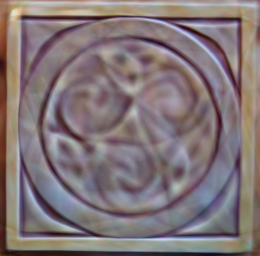} \\

        \rotatebox{90}{\tiny \emph{Inpainting}} &
        \includegraphics[width=0.09\linewidth]{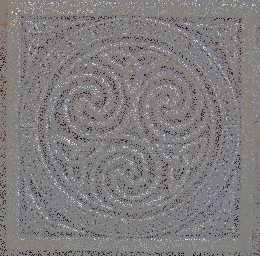} & 
        \includegraphics[width=0.09\linewidth]{figures/qualitatives/supervision/knot_fitting.jpg} & 
        \includegraphics[width=0.09\linewidth]{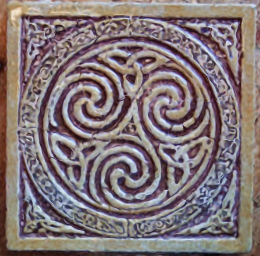} & 
        \includegraphics[width=0.09\linewidth]{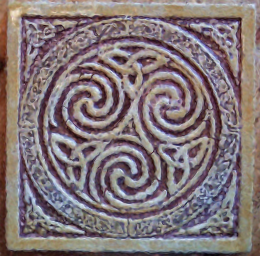} & 
        \includegraphics[width=0.09\linewidth]{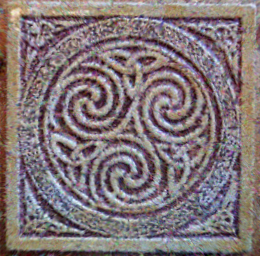} & 
        \includegraphics[width=0.09\linewidth]{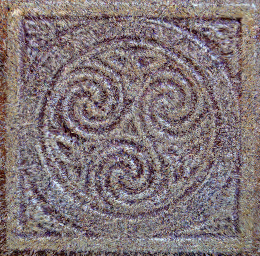} & 
        \includegraphics[width=0.09\linewidth]{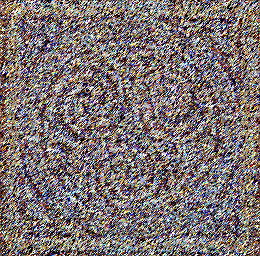} & 
        \includegraphics[width=0.09\linewidth]{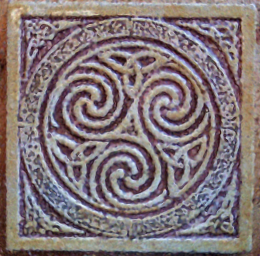} & 
        \includegraphics[width=0.09\linewidth]{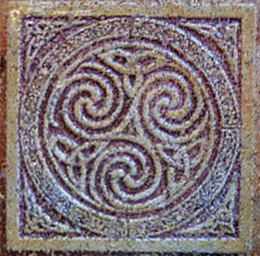} & 
        \includegraphics[width=0.09\linewidth]{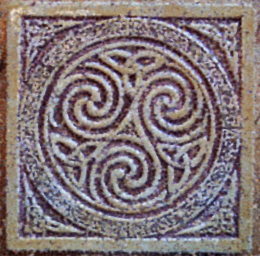} &
        \includegraphics[width=0.09\linewidth]{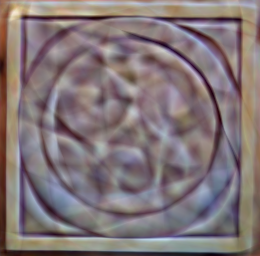} \\

        \rotatebox{90}{\tiny \emph{SR}} &
        \makebox[0.09\linewidth][c]{\includegraphics[width=0.05\linewidth, valign=m]{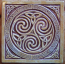}} &
        \includegraphics[width=0.09\linewidth, valign=m]{figures/qualitatives/supervision/knot_fitting.jpg} & 
        \includegraphics[width=0.09\linewidth, valign=m]{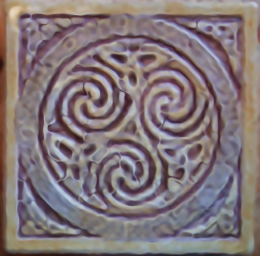} & 
        \includegraphics[width=0.09\linewidth, valign=m]{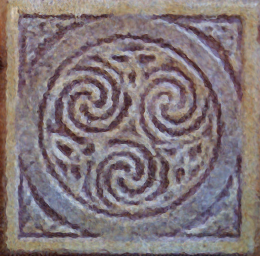} & 
        \includegraphics[width=0.09\linewidth, valign=m]{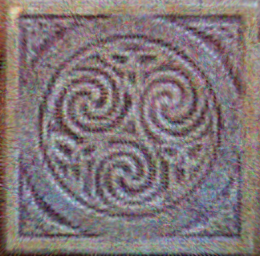} & 
        \includegraphics[width=0.09\linewidth, valign=m]{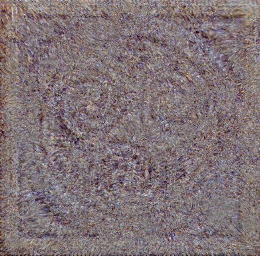} & 
        \includegraphics[width=0.09\linewidth, valign=m]{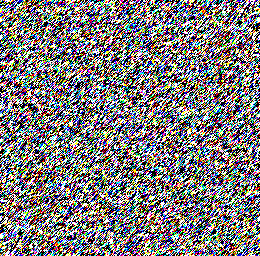} & 
        \includegraphics[width=0.09\linewidth, valign=m]{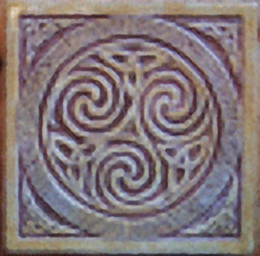} & 
        \includegraphics[width=0.09\linewidth, valign=m]{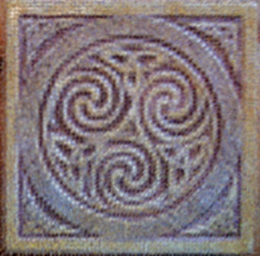} & 
        \includegraphics[width=0.09\linewidth, valign=m]{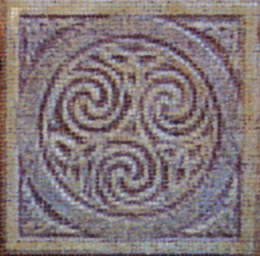} &
        \includegraphics[width=0.09\linewidth, valign=m]{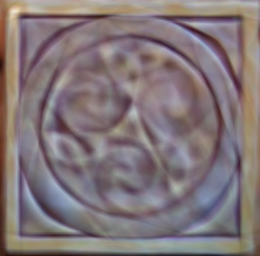} \\[3ex]
        
        \rotatebox{90}{\tiny \hspace{0.1em} \emph{Poisson}} &
        \includegraphics[width=0.09\linewidth]{figures/qualitatives/supervision/knot_poisson.png} & 
        \includegraphics[width=0.09\linewidth]{figures/qualitatives/supervision/knot_fitting.jpg} & 
        \includegraphics[width=0.09\linewidth]{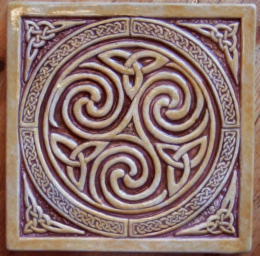} & 
        \includegraphics[width=0.09\linewidth]{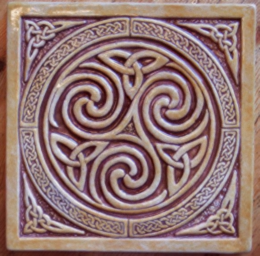} & 
        \includegraphics[width=0.09\linewidth]{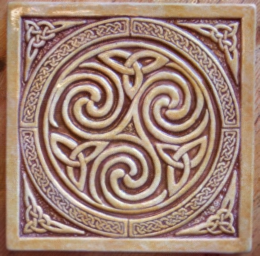} & 
        \includegraphics[width=0.09\linewidth]{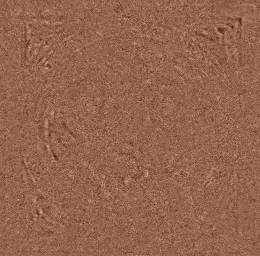} & 
        \includegraphics[width=0.09\linewidth]{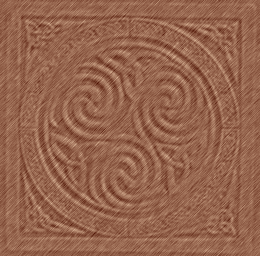} & 
        \includegraphics[width=0.09\linewidth]{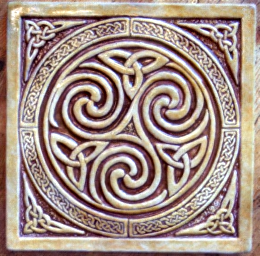} & 
        \includegraphics[width=0.09\linewidth]{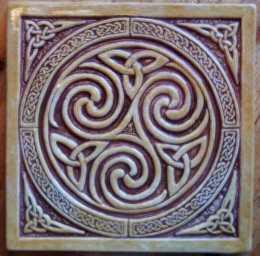} & 
        \includegraphics[width=0.09\linewidth]{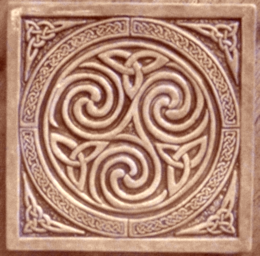} &
        \includegraphics[width=0.09\linewidth]{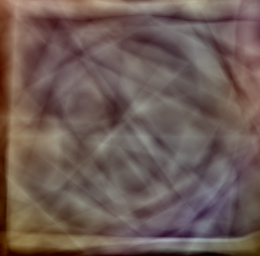} \\

    \end{tabular}
    }
    \caption{
    \textbf{Qualitative results on Knot.}
    Please zoom to appreciate the differences.
    }
    \label{fig:qual_knot}
\end{figure}

\end{document}